\newtheorem{theorem}{Theorem}
\newtheorem{definition}[theorem]{\bf{Definition}}
\newtheorem{proposition}[theorem]{\bf{Proposition}}
\newtheorem{lemma}[theorem]{\bf{Lemma}}
\newtheorem{procedure}[theorem]{\bf{Procedure}}
\newcommand{\Prob}{\mathbb{P}}
\newcommand{\vect}[1]{\mathbf{#1}}
\newcommand{\set}[1]{\mathbb{#1}}
\newcommand{\sett}[1]{\mathcal{#1}}
\newcommand{\matr}[1]{\mathbf{#1}}
\newcommand{\vectsymb}[1]{\boldsymbol{#1}}
\newcommand{\tup}[1]{\mathscr{#1}}
\newcommand{\Y}{{\mathcal{Y}}}
\newcommand{\Score}{\vectsymb{\theta}}
\newcommand{\score}{{\theta}}
\newcommand{\DP}[2]{{#1}^{\top}{#2}}
\newcommand{\MARG}{\mathrm{MARG}}
\newcommand{\LOCAL}{\mathrm{LOCAL}}
\DeclareMathOperator*{\argmax}{argmax}
\DeclareMathOperator{\conv}{conv}
\newcommand{\nascomment}[1]{\textcolor{blue}{\bf [Noah: #1]}}
\begin{document}

\title{\bf Alternating Directions Dual Decomposition}

\author{Andr\'{e} F. T. Martins$^\ast$$^\dagger$\\
       $^\ast$Priberam Labs\\
       Lisboa, Portugal\\
       \url{andre.t.martins@gmail.com}
       \and
       M\'{a}rio A. T. Figueiredo$^\dagger$ \\
       $^\dagger$Instituto de Telecomunica\c{c}\~{o}es\\
       Instituto Superior T\'{e}cnico\\
       Lisboa, Portugal\\
       \url{mtf@lx.it.pt}
       \and
       Pedro M. Q. Aguiar$^\ddagger$\\
       $^\ddagger$Instituto de Sistemas e Rob\'{o}tica\\
       Instituto Superior T\'{e}cnico\\
       Lisboa, Portugal\\
       \url{aguiar@isr.ist.utl.pt}
       \and
       Noah A. Smith$^\natural$ \qquad
       Eric P. Xing$^\natural$ \\
       $^\natural$School of Computer Science\\
       Carnegie Mellon University\\
       Pittsburgh, PA 15213, USA\\
       \url{{nasmith, epxing}@cs.cmu.edu}}      
       
\date{}

\maketitle

\begin{abstract}%   <- trailing '%' for backward compatibility of .sty file
We propose AD$^3$, a new algorithm for approximate {\it maximum a
  posteriori} (MAP)
inference on factor graphs based on the alternating
directions method of multipliers. 
Like dual decomposition algorithms,
AD$^3$  
uses worker nodes to iteratively solve local subproblems and a
controller node to combine these local solutions into a global update.
The key characteristic of AD$^3$ is that 
each local subproblem has a quadratic regularizer, leading to a faster
consensus than subgradient-based dual decomposition, 
both theoretically and in practice. 
%We analyze the convergence properties of AD$^3$, and establish its convergence rate. 
%AD$^3$ is provably
%convergent, fully parallelizable, suitable for fine decompositions
%of the factor graph, and able to efficiently handle problems
We provide closed-form solutions for these AD$^3$ subproblems 
for binary pairwise factors and factors imposing first-order logic 
constraints. 
%is suitable 
%with (possibly global) structural constraints via simple sort operations.
For arbitrary factors (large or combinatorial), 
we introduce an active set method which requires
only an oracle for computing a local MAP configuration, 
making AD$^3$ applicable to a wide range of problems.
Experiments on synthetic and real-world problems show that AD$^3$
compares favorably with the state-of-the-art.
\end{abstract}

%\begin{keywords}
%MAP inference, graphical models, dual decomposition, alternating
%directions method of multipliers.
%\end{keywords}

\section{Introduction}

Graphical models enable compact
representations of probability distributions, being widely used in
computer vision, natural language processing (NLP),
and computational biology \citep{Pearl1988,Lauritzen1996,Koller2009}.
When using these models, a central problem is that of inferring the
most probable (a.k.a. \emph{maximum a posteriori} -- MAP) configuration.
Unfortunately, exact MAP inference is an intractable problem 
for many graphical models of interest in 
applications, such as those involving non-local features and/or
structural constraints.
This fact has motivated a significant research effort on \emph{approximate} MAP inference.

%% Unfortunately, this problem is intractable except for a very limited class of models,
%% with a crucial dependence on the treewidth or other properties of the graph.
%% It leaves out many important tasks, \emph{e.g.}, in NLP and vision,
%% which involve \emph{non-local} features,
%% or require \emph{structural constraints} to ensure valid predictions.
%This has stimulated a significant body of research on approximate methods.
%Often, graphs have an intricate structure of the graph;
%containing loops and combinatorial components,
%out of the scope of the (very limited) class of models for
%which exact inference is tractable. This fact has
%stimulated a significant body of research on approximate methods.
%We are concerned with the problem of finding the most probable configuration
%in a discrete graphical model (the so-called MAP inference problem),
%which is in general NP-hard.

A class of methods that proved effective for approximate inference 
is based on linear programming relaxations of the MAP problem (LP-MAP) \citep{Schlesinger1976}.
Several message-passing algorithms have been proposed to address the resulting LP problems,
taking advantage of the underlying graph structure \citep{Wainwright2005b,Kolmogorov2006,Werner2007,Globerson2008,Ravikumar2010}.
In the same line, \citet{Komodakis2007} proposed a method
based on the classical \emph{dual decomposition} \citep{Dantzig1960,Everett1963,Shor1985},
which breaks the original problem into a set of smaller subproblems; this set of subproblems,
together with the constraints that they should agree on the shared variables, yields a
constrained optimization problem, the Lagrange dual of which is solved
using a \emph{projected subgradient} method. Initially proposed for computer vision
\citep{Komodakis2007,Komodakis2009}, this technique has also been shown effective in
several NLP tasks that involve combinatorial constraints,
such as parsing and machine translation \citep{Koo2010EMNLP,Rush2010,Auli2011,Rush2011,Chang2011,DeNero2011}.
The major drawback is that the subgradient algorithm is very slow to
achieve consensus in problems with a large number of overlapping components,
%exhibiting the modest convergence rate $O(1/\epsilon^2)$.
requiring $O(1/\epsilon^2)$ iterations for an $\epsilon$-accurate solution.
\citet{Jojic2010} addressed this drawback by introducing an accelerated method that
enjoys a faster convergence rate $O(1/\epsilon)$; however, that method is sensitive
to previous knowledge of $\epsilon$ and the prescription of a temperature parameter,
which may yield numerical instabilities.

% in some structured factors.
% due to the underlying non-smoothness of the problem.
%often becomes an issue and performance degrades,
%This makes accelerated methods worthy of study  \citep{Johnson2007,Jojic2010}.
%\citet{Johnson2007} and \citet{Jojic2010} have proposed accelerated methods that smoothen each slave with an
%entropic term.

In this paper, we ally the simplicity of \emph{dual decomposition} (DD) with
the effectiveness of \emph{augmented Lagrangian methods}, which have a
long-standing history in optimization \citep{Hestenes1969,Powell1969}.
In particular, we employ the \emph{alternating directions method of multipliers}\footnote{Also
known as ``method of alternating directions'' and closely related to 
 ``Douglas-Rachford splitting."} (ADMM), a method introduced in the optimization
community in the 1970s \citep{Glowinski75,Gabay1976} which
has seen a recent surge of interest in machine learning and signal
processing \citep{Afonso2010,Mota2010,Goldfarb2010,Boyd2011}.
% recently been shown to
%be very competitive for some large-scale problems
%\citep{Afonso2010,Goldfarb2010}.
%a novel algorithm for approximate MAP inference:
%\emph{DD-ADMM}
%(\emph{Dual Decomposition with the Alternating Direction Method of Multipliers}).
%Rather than placing all efforts in attempting progress in the dual,
%DD-ADMM looks for a saddle point of the Lagrangian function,
%which is augmented with a quadratic term to penalize slave disagreements.
%%to penalize disagreements among the slaves. %, and
%%show that the resulting method has state-of-the-art performance.
%%Some features of DD-ADMM are:
%%Some of its features are:
%Key features of DD-ADMM are: %Among its features:
%%\vspace{-0.15cm}
%\begin{itemizesquish}
%%\item It is parallelizable and suitable for handling many slaves;
%\item It is suitable for heavy parallelization (many slaves);
%\item it is provably convergent, even when each slave subproblem is only
%solved approximately;
%\item consensus among slaves is fast, by virtue of the quadratic penalty term,
%hence it exhibits faster convergence in the \emph{primal} than competing methods;
%\item in addition to providing an optimality certificate for the exact MAP,
%it also provides guarantees that the \emph{LP-relaxed} solution has been found.
%%(unlike most competing methods which do not have this ability).
%\end{itemizesquish}
%\vspace{-0.15cm}
%In this chapter, we introduce
The result is a new LP-MAP inference algorithm called AD$^3$ (\emph{alternating directions dual decomposition}),
which enjoys $O(1/\epsilon)$ convergence rate. We show that AD$^3$ is able to solve the LP-MAP
problem more efficiently than other methods. Like the projected subgradient algorithm of \citet{Komodakis2007},
AD$^3$ is  an iterative ``consensus algorithm,'' alternating between a \emph{broadcast} operation,
where subproblems are distributed across local workers, and a \emph{gather} operation, where the local
solutions are assembled by a controller. The key difference is that AD$^3$ also broadcasts
\emph{the current global solution}, allowing the local workers to regularize their subproblems toward
that solution. This has the effect of speeding up consensus, without sacrificing the modularity of DD
algorithms.

Our main contributions are:
\begin{itemize}
\item We derive AD$^3$ and establish its convergence properties,
blending classical and newer results about ADMM
\citep{Glowinski1989,Eckstein1992,Wang2012ICML}.
We show that the algorithm has the same form as
the DD method of \citet{Komodakis2007}, with the local MAP
subproblems replaced by quadratic programs.
\item We show that these AD$^3$ subproblems can be solved
exactly and efficiently in many cases of interest, including
Ising models and a wide range of hard factors representing arbitrary
constraints in first-order logic.
%We derive efficient procedures for projecting onto their marginal polytopes.
Up to a logarithmic term, the asymptotic cost is the same as
that of passing messages or doing local MAP inference.
%This paves the way for using AD$^3$ in problems with declarative constraints.
\item For factors lacking a closed-form solution of the AD$^3$
subproblems, we introduce a new \emph{active set method}.
Remarkably, our method requires
only a black box that returns local MAP configurations for each factor 
(the same requirement of the projected subgradient algorithm).
This paves the way for using AD$^3$ with large dense or structured
factors, based on off-the-shelf combinatorial algorithms
(\emph{e.g.}, Viterbi, Chu-Liu-Edmonds, Ford-Fulkerson).
\item We show that AD$^3$ can be wrapped into a branch-and-bound
procedure to retrieve the \emph{exact} MAP configuration.
%\item We show how important speed-ups can be achieved by caching and warm-starting the quadratic %subproblems. \afm{check}
\end{itemize}
%\ericx{For an less experienced reader, this list of contributions do not offer a logical path to steps of solving a problem, but a stack of arbitrary points one choose to attack in the vast domain of MAP inference. Can you rewrite it to make more clear why all these different aspects must be addressed together in this paper, why in the the sequence you take, and what overall picture you will built on MAP inference with all these new progress.}
%

AD$^3$ was originally introduced by \citet{Martins2010bNIPS,Martins2011ICML} (then
called  DD-ADMM). In addition to a considerably more detailed presentation, this paper
contains  contributions that substantially extend
that preliminary work in several directions: the $O(1/\epsilon)$ rate of
convergence, the active set method for general factors, and the
branch-and-bound procedure for exact MAP inference. It also reports
a wider set of experiments and the release of open-source code (available at
\url{http://www.ark.cs.cmu.edu/AD3}), which may be useful to other researchers in the field.

This paper is organized as follows. We start by providing background material:
MAP inference in graphical models and its LP-MAP relaxation (Section~\ref{sec:background});
the projected subgradient algorithm for the DD of \citet{Komodakis2007} (Section~\ref{sec:dualdecomposition}).
In Section~\ref{sec:ad3},  we derive AD$^3$ and analyze its convergence.
The AD$^3$ local subproblems are addressed in Section~\ref{sec:ad3_subproblems},
where closed-form solutions are derived for Ising models and
several structural constraint factors. In Section~\ref{sec:ad3_generalfactors},
we introduce an active set method to solve the AD$^3$ subproblems for
arbitrary factors. Experiments with synthetic models, as well as in protein design
and dependency parsing (Section~\ref{sec:experiments}) testify for the success of
our approach. Finally, related work in discussed in Section~\ref{sec:relatedwork},
and Section~\ref{sec:conclusions} concludes the paper.

\section{Background}\label{sec:background}
\subsection{Factor Graphs}

Let $\vectsymb{Y} := (Y_1,\ldots,Y_M)$ be a tuple of random variables,
where each $Y_i$ takes values in a finite set $\Y_i$ and $\vectsymb{Y}$
takes values in the corresponding Cartesian product set $\Y := \prod_{i=1}^M \Y_i$.
%Let $\alpha \subseteq \{1, \ldots, N\}$.
%For convenience, we use the subscript notation
%$\vectsymb{Y}_{\alpha}$ to denote the tuple of
%random variables $(Y_i)_{i \in \alpha}$.
Graphical models (\emph{e.g.}, Bayesian networks, Markov random fields)
are structural representations of joint probability distributions, which
conveniently model  statistical (in)dependencies among the variables.
In this paper, we focus on \emph{factor graphs},
introduced by \citet{Tanner1981} and \citet{Kschischang2001}.

\begin{definition}[Factor graph.]
A factor graph is a bipartite graph $\tup{G} := (\sett{V}, \sett{F}, \sett{E})$,
comprised of:
\begin{itemize}
\item a set of \emph{variable nodes}
$\sett{V} := \{1,\ldots,M\}$, corresponding to the variables $(Y_1,\ldots,Y_M)$;
\item a set of \emph{factor nodes} $\sett{F}$ (disjoint from $\sett{V}$);
\item a set of edges $\sett{E} \subseteq \sett{V} \times \sett{F}$ linking variable nodes to factor nodes.
\end{itemize}
\end{definition}
For notational convenience, we use Latin letters ($i,j,...$) and Greek letters ($\alpha,\beta,...$)
to refer to variable and factor nodes, respectively. We denote by
$\sett{N}(\cdot )$ the \emph{neighborhood set} of its node argument, whose cardinality is
called the \emph{degree} of the node. Formally, $\sett{N}(i) := \{\alpha \in \sett{F} \,\,|\,\, (i,\alpha) \in \sett{E}\}$,
for variable nodes, and $\sett{N}(\alpha) := \{i \in \sett{V} \,\,|\,\, (i,\alpha) \in \sett{E}\}$ for factor nodes.
%$|\sett{N}(i)| = |\sett{N}(i)|$ and
%$|\sett{N}(\alpha)| = |\sett{N}(\alpha)|$.
We use the short notation $\vectsymb{Y}_{\!\!\alpha}$ to denote the tuple of variables linked to factor $\alpha$,
\emph{i.e.}, $\vectsymb{Y}_{\!\!\alpha} := (Y_i)_{i \in\sett{N}(\alpha)}$, and lowercase to denote instantiations
of random variables, \emph{e.g.}, $Y_i = y_i$ and $\vectsymb{Y}_{\!\!\alpha} = \vectsymb{y}_{\alpha}$.

The joint probability distribution of $(Y_1,\ldots,Y_M)$ is said to factor according to the factor graph
$\tup{G}=(\sett{V}, \sett{F}, \sett{E})$ if it can be written as
\begin{equation}\label{eq:probfactgraph01}
\Prob(Y_1=y_1,\ldots,Y_M=y_M) \,\,\propto\,\, \exp\left(
\sum_{i \in \sett{V}} \score_i(y_i) + \sum_{\alpha \in \sett{F}} \score_{\alpha}(\vectsymb{y}_{\alpha}) \right),
\end{equation}
where $\score_i(\cdot)$ and $\score_{\alpha}(\cdot)$ are called, respectively,
the \emph{unary} and \emph{higher-order} log-potential functions.
To accommodate hard constraints, we allow these functions to
take values in $\set{R} \cup \{-\infty\}$. Fig.~\ref{fig:constrainedgraphmodels} shows examples
of factor graphs with hard constraint factors (to be studied in detail in
Section~\ref{sec:ddadmm_hardconstraintfactors}).
For notational convenience, we represent potential functions as
vectors, $\Score_i := (\score_i(y_i))_{y_i \in \Y_i}$
and $\Score_{\alpha} := (\score_{\alpha}(\vectsymb{y}_{\alpha}))_{\vectsymb{y}_{\alpha} \in \Y_{\alpha}}$,
where $\Y_{\alpha} = \prod_{i\in \sett{N}(\alpha) } \Y_{i}$.
We denote by $\Score$ the vector of dimension
$\sum_{i\in \sett{V}} |\Y_i| + \sum_{\alpha\in \sett{F}} |\Y_{\alpha}|$
obtained by stacking all these vectors.

\begin{figure}
\begin{center}
\includegraphics[width=0.9\textwidth]{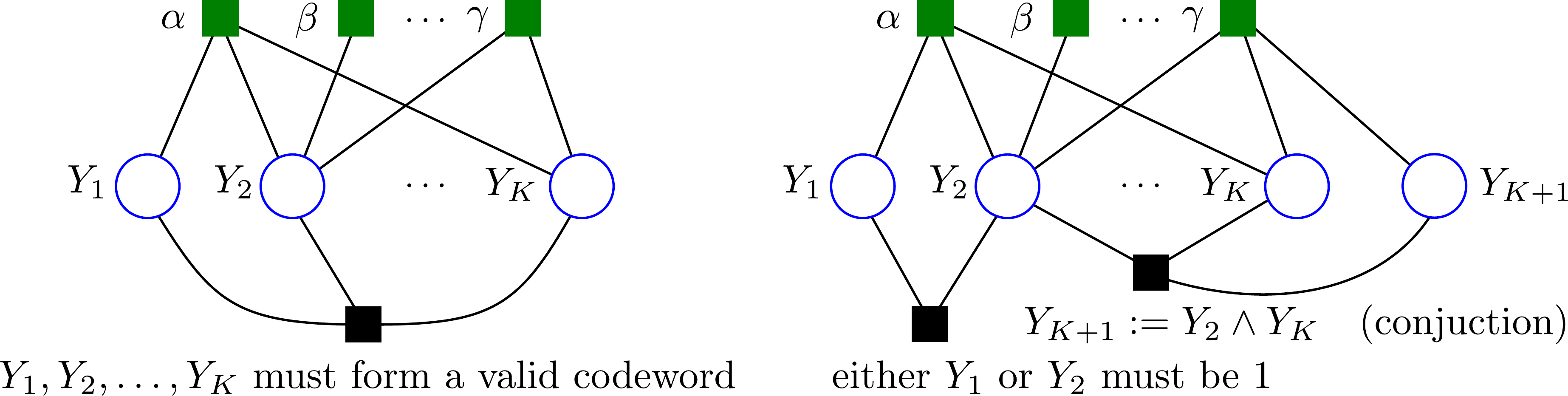}
\caption{Constrained factor graphs, with soft factors shown as \emph{green} squares
above the variable nodes (circles) and hard constraint factors as \emph{black} squares
below the variable nodes. Left: a global factor that constrains the set of admissible
outputs to a given codebook. Right: examples of declarative constraints; one of them is a factor
connecting existing variables to an extra variable, allows scores depending on a logical functions
of the former.}
\label{fig:constrainedgraphmodels}
\end{center}
\end{figure}

%\afm{not sure if this will be necessary}
%In the sequel, we will use the following definition.
%\begin{definition}[$\alpha$-subgraph.]\label{def:alphasubgraph}
%Let $\tup{G} = (\sett{V} \cup \sett{F}, \sett{E})$ be a factor graph and $\alpha \in \sett{F}$.
%The \emph{$\alpha$-subgraph} of $\tup{G}$, denoted $\tup{G}_{\alpha}$,
%is the restriction of $\tup{G}$ to the factor $\alpha$ and its neighborhood set,
%$\tup{G}_{\alpha} := (\sett{N}(\alpha) \cup \{\alpha\}, \sett{N}(\alpha) \times \{\alpha\})$.
%\end{definition}

\subsection{MAP Inference}

Given a factor graph $\tup{G}$, along with all the log-potential functions---which fully specify the joint
distribution of $Y_1,\ldots,Y_M$---we are interested in finding an assignment with maximal probability
(the so-called MAP assignment/configuration):
\begin{eqnarray}\label{eq:inferencetasks_map}
{\widehat{\vectsymb{y}}} &\in& \arg\max_{y_1,\ldots,y_M} \Prob(Y_1=y_1,\ldots,Y_M=y_M)\nonumber\\
&=& \arg\max_{y_1,\ldots,y_M} \sum_{\alpha \in \sett{F}} \score_{\alpha}(\vectsymb{y}_{\alpha}) + \sum_{i \in \sett{V}} \score_{i}(y_i).
\end{eqnarray}
This combinatorial problem, which is NP-hard in general \citep{Koller2009},
can be transformed into a linear program
by using the concept of \emph{marginal polytope} \citep{Wainwright2008},
as next described.
Let $\Delta^K$ denote the $K$-dimensional probability simplex,
\begin{equation}
\Delta^K := \{ \vectsymb{u} \in \set{R}^K \,\,|\,\,\vectsymb{u} \ge \vect{0}, \,\, \DP{\vect{1}}{\vectsymb{u}} = 1\} = \conv \{\vectsymb{e}_1,\ldots, \vectsymb{e}_K\},
\end{equation}
where $\vectsymb{e}_j$ has all entries equal to zero except for the $j$th one, which is equal to one.
We introduce ``local'' probability
distributions over the variables and factors, $\vectsymb{p}_i \in \Delta^{|\Y_i|}$
and $\vectsymb{q}_{\alpha} \in \Delta^{|\Y_{\alpha}|}$.
%\begin{eqnarray}
%\vectsymb{p}_i := (p_i(y_i))_{y_i \in \Y_i},
%\quad \forall i \in \sett{V}, \qquad \text{and} \qquad
%\vectsymb{q}_{\alpha} := (q_{\alpha}(\vectsymb{y}_{\alpha}))_{\vectsymb{y}_{\alpha} \in \Y_{\alpha}},
%\quad \forall \alpha \in \sett{F}.
%\end{eqnarray}
We stack these distributions into vectors $\vectsymb{p}$ and $\vectsymb{q}$,
with dimensions
$P := \sum_{i\in \sett{V}} |\Y_i|$ and
$Q := \sum_{\alpha\in \sett{F}} |\Y_{\alpha}|$,
respectively, and denote
by $(\vectsymb{p},\vectsymb{q}) \in \set{R}^{P+Q}$
their concatenation.
We say that $(\vectsymb{p},\vectsymb{q})$
is \emph{globally consistent} if 
the local distributions 
$\{\vectsymb{p}_i\}$ and $\{\vectsymb{q}_{\alpha}\}$
are the 
marginals 
of some global joint distribution.
The set of all globally consistent vectors $(\vectsymb{p},\vectsymb{q})$
is called the \emph{marginal polytope} of $\tup{G}$,
%This is the set defined as follows:
%\begin{eqnarray}\label{eq:marginalpolytope}
%\lefteqn{\MARG(\tup{G}) :=}\nonumber\\
%&& \Bigg\{(\vectsymb{p},\vectsymb{q}) \in \set{R}^{P+Q}\,\,\bigg|\,\,
%\exists \Prob(Y_1,\ldots,Y_M) \,\, \forall i,\alpha,y_i,\vectsymb{y}_{\alpha}: \,\,
%\left\{
%\begin{array}{l}
%p_i(y_i) = \Prob(Y_i=y_i)\\
%q_{\alpha}(\vectsymb{y}_{\alpha}) = \Prob(\vectsymb{Y}_{\alpha}=\vectsymb{y}_{\alpha})
%\end{array}
%\right.
%\Bigg\}.
%\end{eqnarray}
denoted as $\MARG(\tup{G})$.
There is a one-to-one mapping between the vertices
of $\MARG(\tup{G})$ and the
set of possible configurations $\sett{Y}$: each
$\vectsymb{y} \in \sett{Y}$ corresponds
to a vertex $(\vectsymb{p},\vectsymb{q})$ of $\MARG(\tup{G})$
consisting of ``degenerate'' distributions
$\vectsymb{p}_i = \vectsymb{e}_{y_i}$
and $\vectsymb{q}_{\alpha} = \vectsymb{e}_{\vectsymb{y}_{\alpha}}$,
for each $i \in \sett{V}$ and $\alpha \in \sett{F}$.
%(we denote by $e_{j}$ the canonical vector
%with the $j$th component set to one.)
%The convex
%
%There is a one-to-one mapping between the vertices
%of $\MARG(\tup{G})$ and the
%set of possible configurations $\sett{Y}$---each
%$\vectsymb{y} \in \sett{Y}$ corresponds
%to a vertex $(\vectsymb{p},\vectsymb{q})$ of $\MARG(\tup{G})$
%defined as $p_i(y_i') = \delta_{y_i,y_i'}$
%and $q_{\alpha}(\vectsymb{y}_{\alpha}') = \delta_{\vectsymb{y}_{\alpha},\vectsymb{y}_{\alpha}'}$,
%where $\delta_{i,j}$ denotes the Kronecker delta function, which is zero everywhere
%except when $i=j$, where it evaluates to one.
The MAP problem in \eqref{eq:inferencetasks_map} is then
equivalent to the following linear program (LP):
\begin{equation}\label{eq:map}
\boxed{%
\begin{array}{lll}
\textbf{MAP:} \qquad & \mathrm{maximize} & \displaystyle \sum_{\alpha \in \sett{F}} \DP{\Score_{\alpha}}{\vectsymb{q}_{\alpha}} + \sum_{i \in \sett{V}} \DP{\Score_{i}}{\vectsymb{p}_{i}}\\
& \text{with respect to} & (\vectsymb{p},\vectsymb{q}) \in \MARG(\tup{G}).
\end{array}}
\end{equation}
The equivalence between \eqref{eq:inferencetasks_map}
and \eqref{eq:map} stems from the fact that any linear program on
a bounded convex constraint polytope  attains a solution at a vertex
of that polytope (see, \emph{e.g.}, \citet{Rockafellar1970}).
Unfortunately, the linear program \eqref{eq:map} is not easier to solve than
\eqref{eq:inferencetasks_map}: for a graph $\tup{G}$ with cycles (which induce
\emph{global consistency constraints} that are hard to specify concisely),
the number of linear inequalities that characterize $\MARG(\tup{G})$
may grow superpolynomially with the size of $\tup{G}$. As a consequence,
approximations of $\MARG(\tup{G})$ have been actively investigated in recent years.

\subsection{LP-MAP Inference}\label{sec:lpmap}

Tractable approximations of $\MARG(\tup{G})$ can be built
by using weaker constraints that all realizable marginals must satisfy
to ensure \emph{local consistency}:
\begin{enumerate}
\item {\bf Non-negativity:} all local marginals $p_i(y_i)$ and $q_{\alpha}(\vectsymb{y}_{\alpha})$ must be non-negative.
\item {\bf Normalization:} local marginals must be properly normalized, \emph{i.e.},
$\sum_{y_i \in \Y_i} p_i(y_i) = 1$, for all  $i \in \sett{V}$,
and $\sum_{\vectsymb{y}_{\alpha} \in \Y_{\alpha}} q_{\alpha}(\vectsymb{y}_{\alpha}) = 1$, for all $\alpha \in \sett{F}$.
\item {\bf Marginalization:} a variable participating in a factor must have a marginal which is consistent
with the factor marginals, \emph{i.e.},  $p_i(y_i) = \sum_{\vectsymb{y}_{\alpha} \sim y_i} q_{\alpha}(\vectsymb{y}_{\alpha})$,
for all $(i,\alpha) \in \sett{E}$ and $y_i \in \Y_i$.%
\footnote{We use the short notation $\vectsymb{y}_{\alpha} \sim y_i$ to denote that
the assignments $\vectsymb{y}_{\alpha}$ and $y_i$ are consistent. A sum or
maximization with a subscript $\vectsymb{y}_{\alpha} \sim y_i$ means that the sum/maximization is over all the
$\vectsymb{y}_{\alpha}$ which are consistent with $y_i$.} %
\end{enumerate}
Note that some of these constraints are redundant: the non-negativity of
$q_{\alpha}(\vectsymb{y}_{\alpha})$ and the marginalization constraint
imply the non-negativity of $p_i(y_i)$; the normalization of
$\vectsymb{q}_{\alpha}$ and the marginalization constraints imply
$\sum_{y_i \in \Y_i} p_i(y_i) = 1$. For convenience, we express those constraints
in vector notation.
For each $(i,\alpha) \in \sett{E}$, we define a (consistency) matrix
$\matr{M}_{i\alpha} \in \set{R}^{|\sett{Y}_i| \times |\sett{Y}_{\alpha}|}$
as follows:
\begin{equation}
\matr{M}_{i\alpha}(y_i,\vectsymb{y}_{\alpha}) = \left\{
\begin{array}{ll}
1, & \text{if $\vectsymb{y}_{\alpha} \sim y_i$}\\
0, & \text{otherwise.}
\end{array}
\right.
\end{equation}
%where $y_i \sim \vectsymb{y}_{\alpha}$ is short notation
%for the assignments $y_i$ and $\vectsymb{y}_{\alpha}$ being consistent.
The local marginalization constraints can be expressed as $\matr{M}_{i\alpha}\vectsymb{q}_{\alpha} = \vectsymb{p}_i.$
Combining all the local consistency constraints, and dropping redundant ones, yields
the so-called \emph{local polytope}:
\begin{equation}\label{eq:localpolytope}
\LOCAL(\tup{G}) = \left\{
(\vectsymb{p},\vectsymb{q}) \in \set{R}^{P+Q} \,\, \Bigg| \,\,
\begin{array}{ll}
\vectsymb{q}_{\alpha} \in \Delta^{|\Y_{\alpha}|},
& \forall \alpha \in \sett{F}\\
\matr{M}_{i\alpha}\vectsymb{q}_{\alpha} = \vectsymb{p}_i, &
\forall (i,\alpha) \in \sett{E} \\
\end{array}
\right\}.
\end{equation}
%\begin{equation}\label{eq:localpolytope}
%\LOCAL(\tup{G}) = \left\{
%(\vectsymb{p},\vectsymb{q}) \in \set{R}^{P+Q} \,\, \Bigg| \,\,
%\begin{array}{ll}
%\vectsymb{q}_{\alpha} \ge \vect{0} \,\,\text{and}\,\,
%\DP{\vect{1}}{\vectsymb{q}_{\alpha}} = 1,
%& \forall \alpha \in \sett{F}\\
%\matr{M}_{i\alpha}\vectsymb{q}_{\alpha} = \vectsymb{p}_i, &
%\forall (i,\alpha) \in \sett{E} \\
%\end{array}
%\right\}.
%\end{equation}
The number of constraints that define $\LOCAL(\tup{G})$ grows
\emph{linearly} with $P+Q$,
rather than superpolynomially. %---this is scalable and manageable in many practical problems.
The elements of $\LOCAL(\tup{G})$ are called \emph{pseudo-marginals}.
Since any true marginal vector must satisfy the constraints above, $\LOCAL(\tup{G})$ is an \emph{outer approximation}, \emph{i.e.},
\begin{equation}
\MARG(\tup{G}) \subseteq \LOCAL(\tup{G}),
\end{equation}
as illustrated in Fig.~\ref{fig:localpolytope}.
\begin{figure}[t]
\begin{center}
\includegraphics[scale=0.4]{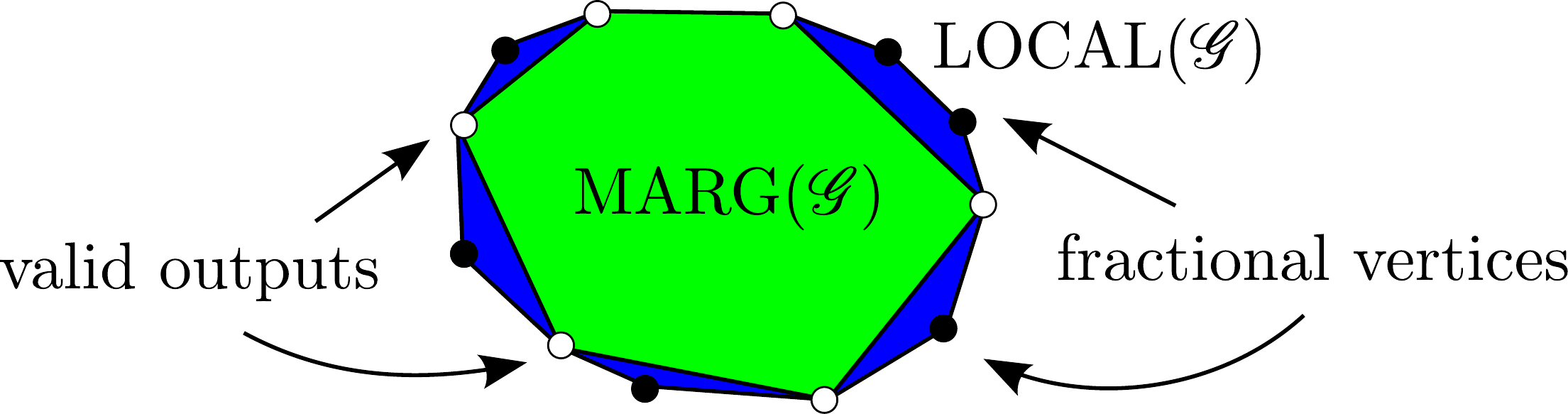}%
\caption{Marginal polytope (in green) and its outer aproximation,
the local polytope (in blue).
Each element of the marginal polytope
corresponds to a joint distribution of $Y_1,\ldots,Y_M$,
and each vertex corresponds to a configuration $\vectsymb{y} \in \sett{Y}$,
having coordinates in $\{0,1\}$.
The local polytope may have additional fractional vertices,
with coordinates in $[0,1]$.}
\label{fig:localpolytope}
\end{center}
\end{figure}
This approximation is tight for tree-structured graphs
and other special cases, but not in general (even for  small
graphs with cycles) \citep{Wainwright2008}. The \emph{LP-MAP problem}
results from replacing $\MARG(\tup{G})$ by $\LOCAL(\tup{G})$ in \eqref{eq:map} \citep{Schlesinger1976}:
\begin{equation}\label{eq:lpmap}
\boxed{%
\begin{array}{lll}
\textbf{LP-MAP:} \qquad & \mathrm{maximize} & \displaystyle \sum_{\alpha \in \sett{F}} \DP{\Score_{\alpha}}{\vectsymb{q}_{\alpha}} + \sum_{i \in \sett{V}} \DP{\Score_{i}}{\vectsymb{p}_{i}}\\
& \text{with respect to} & (\vectsymb{p},\vectsymb{q}) \in \LOCAL(\tup{G}).
\end{array}}
\end{equation}
It can be shown that the points of $\LOCAL(\tup{G})$ that are integer are
the vertices of $\MARG(\tup{G})$; therefore, the solution of LP-MAP provides
an upper bound on the optimal objective of MAP.

\subsection{LP-MAP Inference Algorithms}
While any off-the-shelf LP solver can be used for solving \eqref{eq:lpmap},
specialized algorithms have been designed to exploit the graph structure,
achieving superior performance on several benchmarks \citep{Yanover2006}.
%\afm{eliminate some redundancy below}
%Although the vanilla max-product belief propagation
%algorithm, even when it converges, is not guaranteed
%to produce a solution of the LP-MAP problem \citep{Wainwright2008},
%a number of message-passing algorithms have been proposed that
%slightly change the message updates, fixing this issue \citep{Wainwright2005b,Kolmogorov2006,Werner2007,Globerson2008,Ravikumar2010}.
%Some of these algorithms give certificates that they have found the true MAP assignment
%when the relaxation is tight (\emph{e.g.}, the ``weak tree agreement'' condition
%in tree-reweighted methods).
%We next describe approximate algorithms
%related with the local polytope approximation;
%tighter polyhedral and
%semi-definite approximations exist, which result in more expensive
%algorithms \citep{Sontag2010thesis,Wainwright2008}.
Most of these specialized algorithms belong to two classes:
block (dual) coordinate descent, which take the form of \emph{message-passing} algorithms, and projected
subgradient algorithms, based on \emph{dual decomposition}.

Block coordinate descent methods address the dual of \eqref{eq:lpmap}
by alternately optimizing over blocks of coordinates. Different choices of
blocks lead to different algorithms: \emph{max-sum diffusion}
\citep{Kovalevsky1975,Werner2007}; \emph{max-product sequential tree-reweighted belief propagation} (TRW-S)
\citep{Wainwright2005b,Kolmogorov2006};
\emph{max-product linear programming} (MPLP) \citep{Globerson2008}.
These algorithms work by passing messages (that require computing
\emph{max-marginals}) between factors and variables.
%%
%\footnote{Unlike other message-passing algorithms (such as the loopy BP algorithm),
%the messages in the some of these algorithms have a sequential order, \emph{i.e.},
%one must loop through each factor, compute the incoming and outgoing messages involving all the variables
%linked to that factor, and move to the next factor.} %
%Each step requires computing max-marginals
%and improves the dual objective by moving along a block of coordinates.
Under certain conditions, if the relaxation is tight, one may obtain optimality certificates.
If the relaxation is not tight, it is sometimes possible to reduce the problem size
or use a cutting-plane method to move toward the exact MAP \citep{Sontag2008}.
A disadvantage of block coordinate descent algorithms is that they
may stop at suboptimal solutions, since the objective is non-smooth
(\citealt[Section~6.3.4]{Bertsekas1999}).

Projected subgradient based on the dual decomposition is a classical
technique \citep{Dantzig1960,Everett1963,Shor1985},  which was first proposed
in the context of graphical models by \citet{Komodakis2007} and \citet{Johnson2007}.
Due to its strong relationship with the approach pursued in this paper, that
method is described in detail in the next section.

\section{Dual Decomposition with the Projected Subgradient Algorithm}\label{sec:dualdecomposition}

%\ericx{Again, this is like a tutorial, way longer than necessary. I think it can be presented together or right after 2.3 as background.}

%In this section, we review the subgradient-based dual decomposition
%method, which addresses the LP-MAP problem in Eq.~\ref{eq:lpmap}. This will pave
%the way for the introduction of the AD$^3$ algorithm in Section~\ref{sec:ad3}.

%\emph{Dual decomposition} is
%a classical optimization technique \citep{Dantzig1960,Everett1963,Shor1985},
%which has been proposed in the context of
%graphical models by \citet{Komodakis2007,Johnson2007},
%and has been shown quite effective in many NLP problems \citep{Koo2010EMNLP,Rush2010,Auli2011,Rush2011,Chang2011,DeNero2011}.
%\nascomment{can we give a characterization of the general technique
%  before applying it?}

The projected subgradient method for dual decomposition can be seen as an
iterative ``consensus algorithm,'' alternating between a \emph{broadcast} operation,
where local subproblems are distributed across \emph{worker nodes},
and a \emph{gather} operation, where the local solutions are assembled by a
\emph{master node} to update the current global solution.
%and adjusts multipliers to promote a consensus.
At each round, the worker nodes perform local MAP inference independently;
hence, the algorithm is completely modular and trivially parallelizable.

For each edge $(i,\alpha)\in \sett{E}$, define a potential function
$\Score_{i\alpha} := (\score_{i\alpha}(y_i))_{y_i \in \Y_i}$ that satisfies $\sum_{\alpha \in \sett{N}(i)}\Score_{i\alpha} = \Score_{i}$;
a trivial choice is  $\Score_{i\alpha} = |\sett{N}(i)|^{-1} \Score_{i}$.
The objective \eqref{eq:lpmap} can be rewritten as
\begin{eqnarray}
\sum_{\alpha \in \sett{F}}
\DP{\Score_{\alpha}}{\vectsymb{q}_{\alpha}} +
\sum_{i \in \sett{V}} \DP{\Score_{i}}{\vectsymb{p}_{i}}
%&=&
%\sum_{\alpha \in \sett{F}}
%\left(
%\DP{\Score_{\alpha}}{\vectsymb{q}_{\alpha}} +
%\sum_{i \in \sett{N}(\alpha)} \DP{\Score_{i\alpha}}{\matr{M}_{i\alpha}\vectsymb{q}_{\alpha}}
%\right)\nonumber\\
&=&
\sum_{\alpha \in \sett{F}}
\DP{\left({\Score}_{\alpha} + \sum_{i\in\sett{N}(\alpha)} \matr{M}_{i\alpha}^{\top}\Score_{i\alpha}\right)}{\vectsymb{q}_{\alpha}},
\end{eqnarray}
because $\vectsymb{p}_i = \matr{M}_{i\alpha}\vectsymb{q}_{\alpha}$.
%Let $\Delta^{|\Y_{\alpha}|}$ denote the probability simplex,
%$\Delta^{|\Y_{\alpha}|} := \{\vectsymb{q}_{\alpha}\in \set{R}^{|\sett{Y}_{\alpha}|} \,\,| \,\,\vectsymb{q}_{\alpha} \ge \vect{0}, \,\, \DP{\vect{1}}{\vectsymb{q}_{\alpha}} = 1\}$.
The LP-MAP problem \eqref{eq:lpmap} is then equivalent to the following
\emph{primal formulation}, which we call LP-MAP-P:
\begin{equation}
\boxed{%
\begin{aligned}\label{eq:lpmap-reformulation}
\textbf{LP-MAP-P:}\qquad \mathrm{maximize} \quad
&
\sum_{\alpha \in \sett{F}}
\DP{\left({\Score}_{\alpha} + \sum_{i\in\sett{N}(\alpha)} \matr{M}_{i\alpha}^{\top}\Score_{i\alpha}\right)}{\vectsymb{q}_{\alpha}}
\\
\text{with respect to}  \quad
%&
%\vectsymb{q} \in \prod_{\alpha \in \sett{F}} \Delta^{|\Y_{\alpha}|},
%\,\, \vectsymb{p} \in \set{R}^P,  \\
&
\vectsymb{q}_\alpha \in  \Delta^{|\Y_{\alpha}|}, \forall \alpha \in
\sett{F}, \\ &  \vectsymb{p} \in \set{R}^P,  \\
\text{subject to}  \quad &
 \matr{M}_{i\alpha}\vectsymb{q}_{\alpha} = \vectsymb{p}_i,\,\, \forall (i,\alpha) \in \sett{E}.
\end{aligned}}
\end{equation}

Note that, although the $\vectsymb{p}$-variables do not appear in the
objective of \eqref{eq:lpmap-reformulation},
they play a fundamental role through the constraints in the last line,
which are necessary to
ensure that the marginals encoded in the $\vectsymb{q}$-variables
%``agree''
are consistent on their overlaps.
Indeed, it is this set of constraints that complicate the optimization problem,
which would otherwise be separable into independent subproblems, one per factor.
%\afm{need to clarify this!! some people may get confused and think that the last constraint
%is not necessary}
%\begin{align}\label{eq:lpmap-reformulation}
%\textbf{LP-MAP-P:}\qquad \mathrm{maximize} \quad
%&
%\sum_{\alpha \in \sett{F}}
%\DP{\left({\Score}_{\alpha} + \sum_{i\in\sett{N}(\alpha)} \matr{M}_{i\alpha}^{\top}\Score_{i\alpha}\right)}{\vectsymb{q}_{\alpha}}
%\\
%\text{with respect to}  \quad
%&
%\vectsymb{q} \in \set{R}^Q, \,\, \vectsymb{p} \in \set{R}^P,  \nonumber\\
%%\vectsymb{q}, \,\, \vectsymb{b},  \nonumber\\
%\text{subject to}  \quad & \vectsymb{q}_{\alpha} \ge \vect{0}, \,\,
%\DP{\vect{1}}{\vectsymb{q}_{\alpha}} = 1,\,\, \forall \alpha \in \sett{F},\nonumber\\
% \quad &
% \matr{M}_{i\alpha}\vectsymb{q}_{\alpha} = \vectsymb{p}_i,\,\, \forall (i,\alpha) \in \sett{E}.\label{eq:lpmap-reformulation-equality}
%\end{align}
Introducing a Lagrange multiplier ${\lambda}_{i\alpha}(y_i)$ for each of these equality constraints leads
to the Lagrangian function
\begin{eqnarray}\label{eq:ddsubgrad_lagrangian}
L(\vectsymb{q},\vectsymb{p},\vectsymb{\lambda}) &=&
\sum_{\alpha \in \sett{F}}
\DP{\left({\Score}_{\alpha} + \sum_{i\in\sett{N}(\alpha)}\matr{M}_{i\alpha}^{\top}(\Score_{i\alpha} + \vectsymb{\lambda}_{i\alpha})\right)}{\vectsymb{q}_{\alpha}}
- \sum_{(i,\alpha) \in \sett{E}} \DP{\vectsymb{\lambda}_{i\alpha}}{\vectsymb{p}_i},
\end{eqnarray}
the maximization of which w.r.t. $\vectsymb{q}$ and $\vectsymb{p}$ will yield the (Lagrangian) dual objective.
Since the $\vectsymb{p}$-variables are unconstrained, we have
\begin{eqnarray}
\max_{\vectsymb{q},\vectsymb{p}} L(\vectsymb{q},\vectsymb{p},\vectsymb{\lambda}) =
\left\{
\begin{array}{ll}
\displaystyle  g(\vectsymb{\lambda}) & \text{if $\vectsymb{\lambda} \in \Lambda$,}\\
+\infty & \text{otherwise,}
\end{array}
\right.
\end{eqnarray}
where $g(\vectsymb{\lambda}) = \sum_{\alpha \in \sett{F}} g_{\alpha}(\vectsymb{\lambda})$ is the \emph{dual objective function},
\begin{equation}\label{eq:dualdecomplambdaset}
\Lambda := \left\{\vectsymb{\lambda} \,\, \bigg| \,\, \sum_{\alpha \in \sett{N}(i)} \vectsymb{\lambda}_{i\alpha} = \vect{0},
\,\, \forall i \in \sett{V} \right\},
\end{equation}
and each $g_{\alpha}(\vectsymb{\lambda})$ is the solution of a local problem
(called the \emph{$\alpha$-subproblem}):
\begin{eqnarray}
g_{\alpha}(\vectsymb{\lambda}) &:=&
\max_{\vectsymb{q}_{\alpha} \in \Delta^{|\Y_{\alpha}|}}
\DP{\left({\Score}_{\alpha} + \sum_{i\in\sett{N}(\alpha)}\matr{M}_{i\alpha}^{\top}(\Score_{i\alpha} + \vectsymb{\lambda}_{i\alpha})\right)}{\vectsymb{q}_{\alpha}}
\label{eq:dualdecompslave01}\\
&=&
\max_{\vectsymb{y}_{\alpha}\in \Y_{\alpha}}
\left(\score_{\alpha}(\vectsymb{y}_{\alpha}) +
\sum_{i \in \sett{N}(\alpha)} (\score_{i\alpha}(y_i) + \lambda_{i\alpha}(y_i))\right);
\label{eq:dualdecompslave02}
\end{eqnarray}
%% \begin{eqnarray}\label{eq:dualdecompslave}
%% g_{\alpha}(\vectsymb{\lambda}|_{\alpha}) &:=&
%% \max
%% \sum_{i \in \sett{N}(\alpha)}
%% ({\Score_i^{\alpha}} + {\vectsymb{\lambda}}_i^{\alpha}) \cdot {\vectsymb{\mu}}_i^{\alpha}
%% +
%% \Score_{\alpha} \cdot {\vectsymb{\mu}}_{\alpha}
%% \nonumber\\
%% && \text{with respect to} \quad {\vectsymb{\mu}|_{\alpha} \in \MARG(\tup{G}|_{\alpha})}.
%% \end{eqnarray}
the last equality is justified by the fact that maximizing a linear objective over the
probability simplex gives the largest component of the score vector.
Finally, the dual problem is
\begin{equation}\label{eq:dualdecomp}
\boxed{%
\begin{array}{lll}
\textbf{LP-MAP-D:} \qquad & \mathrm{minimize} & \displaystyle g(\vectsymb{\lambda})\\
& \text{with respect to} & \vectsymb{\lambda} \in \Lambda.
\end{array}}
\end{equation}

%\begin{align}\label{eq:dualdecomp}
%\textbf{LP-MAP:} \qquad &
%\mathrm{minimize} \quad &
%\sum_{\alpha \in \sett{F}} g_{\alpha}(\vectsymb{\lambda}) \\
%& \text{with respect to} \quad & \vectsymb{\lambda} \in \Lambda.\nonumber
%\end{align}
Problem \eqref{eq:dualdecomp} is often referred to as the \emph{master} or \emph{controller},
and each $\alpha$-subproblem \eqref{eq:dualdecompslave01}--\eqref{eq:dualdecompslave02} as a \emph{slave} or \emph{worker}.
Note that each of these $\alpha$-subproblems is itself a local MAP problem,
involving only factor $\alpha$.
%in the $\alpha$-subgraph $\tup{G}_{\alpha}$ (cf.~\ref{eq:MAPvariat_discrete}).
As a consequence, the solution $\widehat{\vectsymb{q}}_{\alpha}$
of \eqref{eq:dualdecompslave01}
is an indicator vector corresponding to a particular configuration
$\widehat{\vectsymb{y}}_{\alpha}$ (the solution of \eqref{eq:dualdecompslave02} ), that is, $\widehat{\vectsymb{q}}_{\alpha} = \vectsymb{e}_{\widehat{\vectsymb{y}}_{\alpha}}$.

The dual problem \eqref{eq:dualdecomp} can be solved with a \emph{projected subgradient algorithm}.%
\footnote{The same algorithm can be derived by applying Lagrangian relaxation to the original MAP.
A slightly different formulation is presented by \citet{Sontag2011OPT} which yields
a subgradient algorithm with no projection.} %
By Danskin's rule \citep[p.~717]{Bertsekas1999}, a subgradient of $g_{\alpha}$
is readily given by
\begin{equation}
\frac{\partial g_{\alpha}(\vectsymb{\lambda})}{\partial \vectsymb{\lambda}_{i\alpha}} = \matr{M}_{i\alpha}\widehat{\vectsymb{q}}_{\alpha} := \widehat{\vectsymb{q}}_{i\alpha},
\quad \forall (i,\alpha) \in \sett{E};
\end{equation}
and the projection onto $\Lambda$ amounts to a centering operation.
The $\alpha$-subproblems \eqref{eq:dualdecompslave01}--\eqref{eq:dualdecompslave02}
can be handled in parallel and then have their solutions gathered for
computing this projection and update the Lagrange variables.
%% The updates have the form:
%% \begin{equation}
%% {\lambda}_i^{\alpha}(y_i) \leftarrow  {\lambda}_i^{\alpha}(y_i) -
%% \eta_t \left( {\widehat{\mu}}_i^{\alpha}(y_i) - {{\zeta}}_i(y_i) \right),
%% \end{equation}
%% where $\eta_t$ is a stepsize, and
%% \begin{equation}
%% {{\zeta}}_i(y_i) := {|\sett{N}(i)|}^{-1} \sum_{\alpha \in \sett{N}(i)} {\widehat{\mu}}_i^{\alpha}(y_i).
%% \end{equation}
Putting these pieces together yields Algorithm~\ref{alg:ddsubgrad},
%For compactness, we use vector notation: we write
%$\Score_{\alpha} := (\score_{\alpha}(\vectsymb{y}_{\alpha}))_{\vectsymb{y}_{\alpha} \in \Y_{\alpha}}$,
%$\Score_i := (\score_i(y_i))_{y_i \in \Y_i}$,
%and similarly for $\widehat{\vectsymb{\mu}}_i^{\alpha}$, $\widehat{\vectsymb{\zeta}}_i$, $\vectsymb{\omega}_i^{\alpha}$
%and $\vectsymb{\lambda}_i^{\alpha}$.
which assumes a black-box procedure {\sc ComputeMap} that returns
a local MAP assignment (as an indicator vector), given log-potentials as input.
At each iteration, the algorithm broadcasts the current Lagrange multipliers
to all the factors.
Each factor adjusts its internal unary log-potentials (line~\ref{algline:ddsubgrad-unary}) and
invokes the {\sc ComputeMap} procedure (line~\ref{algline:ddsubgrad-localmap}).%
\footnote{Note that, if the factor log-potentials $\Score_{\alpha}$
have special structure (\emph{e.g.}, if the factor
is itself combinatorial, such as a sequence or a tree model), then this structure is preserved since only the
internal unary log-potentials are changed.
Therefore, if evaluating $\text{\sc ComputeMap}(\Score_{\alpha})$
is tractable, so is evaluating
$\text{\sc ComputeMap}({\Score}_{\alpha} + \sum_{i\in \sett{N}(\alpha)} \matr{M}_{i\alpha}^{\top}{\vectsymb{\xi}}_{i\alpha})$.} %
The solutions achieved by each factor are then gathered
and averaged (line~\ref{algline:ddsubgrad-average}),
and the Lagrange multipliers are updated with step size $\eta_t$ (line~\ref{algline:ddsubgrad-lambdaupd}).
The two following propositions establish the convergence
properties of Algorithm~\ref{alg:ddsubgrad}.

\begin{algorithm}[t]
   \caption{Dual Decomposition with Projected Subgradient \citep{Komodakis2007} \label{alg:ddsubgrad}}
\begin{algorithmic}[1]
\STATE {\bfseries input:} graph $\tup{G}$, parameters $\Score$, maximum number of iterations $T$,
stepsizes $(\eta_t)_{t = 1}^T$
   \STATE for each $(i,\alpha)\in \sett{E}$,
   choose $\Score_{i\alpha}$ such that
   $\sum_{\alpha \in \sett{N}(i)}\Score_{i\alpha} = \Score_{i}$
   \STATE initialize $\vectsymb{\lambda} = \vect{0}$
	\FOR{$t=1$ {\bfseries to} $T$}
	\FOR{ {\bfseries each} factor $\alpha \in \sett{F}$}
	\STATE set unary log-potentials ${\vectsymb{\xi}}_{i\alpha} := {\Score}_{i\alpha} + \vectsymb{\lambda}_{i\alpha}$, for $i\in \sett{N}(\alpha)$
	\label{algline:ddsubgrad-unary}
	%\STATE set log-potential vector
	%$\tilde{\Score}_{\alpha} :=
	%{\Score}_{\alpha} + \matr{M}_{i\alpha}^{\top}(\Score_{i\alpha} + \vectsymb{\lambda}_{i\alpha})$
%	\STATE set log-potential vector $\vectsymb{\omega}|_{\alpha} := ((\vectsymb{\omega}_i^{\alpha})_{i \in \sett{N}(\alpha)}, \Score_{\alpha})$	
	\STATE set $\widehat{\vectsymb{q}}_{\alpha} :=
	\text{\sc ComputeMap}({\Score}_{\alpha} + \sum_{i\in
          \sett{N}(\alpha)}\matr{M}_{i\alpha}^{\top}\vectsymb{\xi}_{i\alpha})$ \label{algline:ddsubgrad-localmap}
	\STATE set $\widehat{\vectsymb{q}}_{i\alpha} :=
	\matr{M}_{i\alpha}\widehat{\vectsymb{q}}_{\alpha}$, for $i\in\sett{N}(\alpha)$
%    \STATE \afm{maybe replace what's above by:}
%	\STATE set unary log-potentials $\vectsymb{\xi}_{i\alpha} := {\Score}_{i\alpha} + \vectsymb{\lambda}_{i\alpha}$, for $i\in \sett{N}(\alpha)$
%	\label{algline:ddsubgrad-unary}
%	\STATE set $(\widehat{\vectsymb{q}}_{\alpha}, (\widehat{\vectsymb{q}}_{i\alpha})_{i \in \sett{N}(\alpha)}) :=
%	\text{\sc ComputeMap}({\Score}_{\alpha}, (\vectsymb{\xi}_{i\alpha})_{i \in \sett{N}(\alpha)})$ \label{algline:ddsubgrad-localmap}
	\ENDFOR
	\STATE compute average ${\vectsymb{p}}_i := |\sett{N}(i)|^{-1}\sum_{\alpha \in \sett{N}(i)} \widehat{\vectsymb{q}}_{i\alpha}$ for each $i \in \sett{V}$\label{algline:ddsubgrad-average}
	\STATE update $\vectsymb{\lambda}_{i\alpha} := \vectsymb{\lambda}_{i\alpha} -
	\eta_t \left( \widehat{\vectsymb{q}}_{i\alpha} - {\vectsymb{p}}_i \right)$ for each $(i,\alpha) \in \sett{E}$\label{algline:ddsubgrad-lambdaupd}
	\ENDFOR	
   \STATE \textbf{output:} dual variable $\vectsymb{\lambda}$ and upper bound $g(\vectsymb{\lambda})$
\end{algorithmic}
\end{algorithm}

%\begin{algorithm}[t]
%   \caption{Dual Decomposition with Projected Subgradient \citep{Komodakis2007} \label{alg:ddsubgrad}}
%\begin{algorithmic}[1]
%\STATE {\bfseries input:} graph $\tup{G}$, parameters $\Score$, maximum number of iterations $T$,
%stepsizes $(\eta_t)_{t = 1}^T$
%   \STATE initialize $\vectsymb{\lambda} = \vect{0}$
%	\FOR{$t=1$ {\bfseries to} $T$}
%	\FOR{ {\bfseries each} factor $\alpha \in \sett{F}$}
%	\STATE set unary log-potentials $\vectsymb{\omega}_i^{\alpha} := |\sett{N}(i)|^{-1}\Score_i + \vectsymb{\lambda}_i^\alpha$, for $i\in \sett{N}(\alpha)$
%	\STATE set log-potential vector $\vectsymb{\omega}|_{\alpha} := ((\vectsymb{\omega}_i^{\alpha})_{i \in \sett{N}(\alpha)}, \Score_{\alpha})$	
%	\STATE set $\widehat{\vectsymb{\mu}}|_{\alpha} :=
%	\text{\sc ComputeMap}(\vectsymb{\omega}|_{\alpha})$
%	\ENDFOR
%	\STATE compute average ${\vectsymb{\zeta}}_i := |\sett{N}(i)|^{-1}\sum_{\alpha \in \sett{N}(i)} \widehat{\vectsymb{\mu}}_i^{\alpha}$
%	\STATE update $\vectsymb{\lambda}_i^{\alpha} := \vectsymb{\lambda}_i^{\alpha} -
%	\eta_t \left( \widehat{\vectsymb{\mu}}_i^{\alpha} - {\vectsymb{\zeta}}_i \right)$
%	\ENDFOR	
%   \STATE \textbf{output:} dual variable $\vectsymb{\lambda}$
%\end{algorithmic}
%\end{algorithm}

\begin{proposition}[Convergence rate]\label{prop:dd-convergence}
Let the non-negative step size sequence $(\eta_t)_{t\in\set{N}}$ be diminishing and non\-sum\-ma\-ble:
$\lim \eta_t = 0$ and $\sum_{t=1}^\infty \eta_t = \infty$.
Then, Algorithm~\ref{alg:ddsubgrad}
converges to the solution of LP-MAP-D \eqref{eq:dualdecomp}.
Furthermore,
$T = O(1/\epsilon^2)$ iterations of
Algorithm~\ref{alg:ddsubgrad}
are sufficient to achieve a
dual objective
which differs by $\epsilon$
from the optimal value.
\end{proposition}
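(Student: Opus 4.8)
The plan is to recognize Algorithm~\ref{alg:ddsubgrad} as a textbook instance of the \emph{projected subgradient method} applied to the convex minimization problem LP-MAP-D~\eqref{eq:dualdecomp}, and then invoke the classical convergence and rate theory for such methods (e.g., \citet{Shor1985}; \citealt[Sec.~6.3]{Bertsekas1999}). So the first step is to collect the three structural facts that make this identification work. (i) The dual objective $g=\sum_{\alpha\in\sett{F}}g_\alpha$ is convex and finite-valued on all of $\Lambda$, being a sum of pointwise maxima of affine functions of $\vectsymb{\lambda}$ (cf.~\eqref{eq:dualdecompslave02}), and $\Lambda$ in~\eqref{eq:dualdecomplambdaset} is a linear subspace. (ii) By Danskin's rule, the vector with blocks $\matr{M}_{i\alpha}\widehat{\vectsymb{q}}_\alpha=\widehat{\vectsymb{q}}_{i\alpha}$ (with $\widehat{\vectsymb{q}}_\alpha$ solving the $\alpha$-subproblem) is a subgradient of $g$ at $\vectsymb{\lambda}$; since each $\widehat{\vectsymb{q}}_{i\alpha}$ is an indicator vector, every such subgradient $\vectsymb{s}$ satisfies $\|\vectsymb{s}\|^2=\sum_{(i,\alpha)\in\sett{E}}\|\widehat{\vectsymb{q}}_{i\alpha}\|^2=|\sett{E}|$, giving a bound $G:=\sqrt{|\sett{E}|}$ that is uniform in $t$ and in $\vectsymb{\lambda}$. (iii) A dual optimum $\vectsymb{\lambda}^\star$ exists, because $g$ is a finite convex piecewise-affine function bounded below (weak duality against the primal LP-MAP~\eqref{eq:lpmap}), whose infimum over the polyhedron $\Lambda$ is attained.

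Next I would check that lines~\ref{algline:ddsubgrad-average}--\ref{algline:ddsubgrad-lambdaupd} implement exactly $\vectsymb{\lambda}^{t+1}=\proj_{\Lambda}\!\big(\vectsymb{\lambda}^{t}-\eta_t\,\vectsymb{s}^{t}\big)$. Indeed, orthogonal projection onto the subspace $\Lambda$ subtracts from each block $i\alpha$ the average $\vectsymb{p}_i:=|\sett{N}(i)|^{-1}\sum_{\beta\in\sett{N}(i)}\widehat{\vectsymb{q}}_{i\beta}$ over the blocks sharing index $i$ (the ``centering operation'' mentioned after~\eqref{eq:dualdecomp}), which is precisely the increment $\eta_t(\widehat{\vectsymb{q}}_{i\alpha}-\vectsymb{p}_i)$ applied to $\vectsymb{\lambda}_{i\alpha}$; and since $\vectsymb{\lambda}^1=\vect{0}\in\Lambda$ and every step stays in $\Lambda$, the iterates remain feasible. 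With this in hand, the workhorse inequality follows by expanding the square and using $\vectsymb{\lambda}^t-\vectsymb{\lambda}^\star\in\Lambda$ (so that $\langle\proj_\Lambda\vectsymb{s}^t,\vectsymb{\lambda}^t-\vectsymb{\lambda}^\star\rangle=\langle\vectsymb{s}^t,\vectsymb{\lambda}^t-\vectsymb{\lambda}^\star\rangle$), the subgradient inequality $g(\vectsymb{\lambda}^\star)\ge g(\vectsymb{\lambda}^t)+\langle\vectsymb{s}^t,\vectsymb{\lambda}^\star-\vectsymb{\lambda}^t\rangle$, and $\|\proj_\Lambda\vectsymb{s}^t\|\le G$:
\[
\|\vectsymb{\lambda}^{t+1}-\vectsymb{\lambda}^\star\|^2 \;\le\; \|\vectsymb{\lambda}^{t}-\vectsymb{\lambda}^\star\|^2 \;-\; 2\eta_t\big(g(\vectsymb{\lambda}^t)-g(\vectsymb{\lambda}^\star)\big)\;+\;\eta_t^2 G^2 .
\]

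Finally I would telescope this bound from $t=1$ to $T$ and rearrange:
\[
\min_{1\le t\le T}\big(g(\vectsymb{\lambda}^t)-g^\star\big)\;\le\;\frac{\|\vectsymb{\lambda}^1-\vectsymb{\lambda}^\star\|^2+G^2\sum_{t=1}^T\eta_t^2}{2\sum_{t=1}^T\eta_t}.
\]
For the first claim, $\eta_t\to 0$ together with $\sum_t\eta_t=\infty$ forces the right-hand side to $0$ (the ratio $\sum_{t\le T}\eta_t^2/\sum_{t\le T}\eta_t$ is a weighted average of the $\eta_t$ and hence vanishes), so $\liminf_t g(\vectsymb{\lambda}^t)=g^\star$; the monotone-up-to-error behavior of $\|\vectsymb{\lambda}^t-\vectsymb{\lambda}^\star\|$ furnished by the displayed inequality upgrades this to $g(\vectsymb{\lambda}^t)\to g^\star$ (and, along a subsequence, to convergence of the iterates to a dual optimum) by standard arguments \citep[Sec.~6.3]{Bertsekas1999}. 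For the rate, choosing $\eta_t$ to decay like $t^{-1/2}$ (or a constant step tuned to a fixed horizon $T$) makes the bound $O\!\big(G\,\|\vectsymb{\lambda}^1-\vectsymb{\lambda}^\star\|/\sqrt{T}\big)$ up to at most a logarithmic factor, i.e.\ $O(1/\sqrt{T})$; hence $T=O(1/\epsilon^2)$ iterations suffice to bring the dual value within $\epsilon$ of the optimum.

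I expect the only genuine subtlety to be the bookkeeping in the second paragraph: verifying that ``gather-and-average'' is literally Euclidean projection onto the subspace $\Lambda$, so that the iterates stay feasible and the standard projected-subgradient estimate applies with the subgradient replaced by its $\Lambda$-component. Everything downstream of that identification—the bounded-subgradient estimate, the telescoping inequality, and the step-size arithmetic—is routine.
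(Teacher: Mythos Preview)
Your proposal is correct and follows exactly the approach the paper intends: the paper's own proof is a one-line citation (``This is a property of projected subgradient algorithms; see, \emph{e.g.}, \citealt{Bertsekas1999}''), and what you have written is precisely the verification that Algorithm~\ref{alg:ddsubgrad} is an instance of that method (convex piecewise-affine $g$, bounded subgradients, centering $=$ projection onto $\Lambda$) followed by the standard telescoping estimate. There is nothing to add or correct.
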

\begin{proof}
This is a property of projected subgradient algorithms (see, \emph{e.g.}, \citealt{Bertsekas1999}).
\end{proof}

\begin{proposition}[Certificate of optimality]\label{prop:dd-certificate}
If, at some iteration of Algorithm~\ref{alg:ddsubgrad},
all the local subproblems are in agreement
(\emph{i.e.}, if  $\widehat{\vectsymb{q}}_{i\alpha} = \vectsymb{p}_i$ after line
\ref{algline:ddsubgrad-average}, for all $i\in\sett{V}$),
then:
\begin{itemize}
\item $\vectsymb{\lambda}$ is
a solution of LP-MAP-D \eqref{eq:dualdecomp};
\item $\vectsymb{p}$ is binary-valued and a solution of both LP-MAP-P and MAP.
\end{itemize}
\end{proposition}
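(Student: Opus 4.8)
The plan is to run a weak-duality argument: under the agreement hypothesis we exhibit a point that is feasible for LP-MAP-P \eqref{eq:lpmap-reformulation} and whose objective equals the dual value $g(\vectsymb{\lambda})$, so both points must be optimal, and then we observe that this primal point is integral.

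First I would unpack the agreement condition. Each $\widehat{\vectsymb{q}}_\alpha$ produced by {\sc ComputeMap} in line~\ref{algline:ddsubgrad-localmap} is, by construction, an indicator vector $\vectsymb{e}_{\widehat{\vectsymb{y}}_\alpha}$ attaining the maximum in the $\alpha$-subproblem \eqref{eq:dualdecompslave01}, so $g_\alpha(\vectsymb{\lambda}) = \langle \Score_\alpha + \sum_{i\in\sett{N}(\alpha)}\matr{M}_{i\alpha}^\top(\Score_{i\alpha}+\vectsymb{\lambda}_{i\alpha}),\, \widehat{\vectsymb{q}}_\alpha\rangle$; consequently every $\widehat{\vectsymb{q}}_{i\alpha} = \matr{M}_{i\alpha}\widehat{\vectsymb{q}}_\alpha$ is again an indicator vector. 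The hypothesis $\widehat{\vectsymb{q}}_{i\alpha} = \vectsymb{p}_i$ for all $(i,\alpha)\in\sett{E}$ then forces $\vectsymb{p}_i$ to be a single indicator vector $\vectsymb{e}_{\widehat{y}_i}$ --- in particular $\vectsymb{p}$ is binary-valued --- and forces all the local configurations $\widehat{\vectsymb{y}}_\alpha$ to agree on their shared variables, thereby defining a global assignment $\widehat{\vectsymb{y}}\in\sett{Y}$. Taking $\vectsymb{q}_\alpha := \widehat{\vectsymb{q}}_\alpha$ for all $\alpha$, the pair $(\vectsymb{p},\vectsymb{q})$ satisfies $\vectsymb{q}_\alpha\in\Delta^{|\Y_\alpha|}$ and $\matr{M}_{i\alpha}\vectsymb{q}_\alpha = \vectsymb{p}_i$, hence is feasible for LP-MAP-P.

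Next I would show the LP-MAP-P objective at $(\vectsymb{p},\vectsymb{q})$ equals $g(\vectsymb{\lambda}) = \sum_{\alpha\in\sett{F}} g_\alpha(\vectsymb{\lambda})$. Subtracting the two, the $\Score_\alpha$ and $\Score_{i\alpha}$ terms cancel and what remains is $\sum_{\alpha\in\sett{F}}\sum_{i\in\sett{N}(\alpha)}\langle\matr{M}_{i\alpha}^\top\vectsymb{\lambda}_{i\alpha},\widehat{\vectsymb{q}}_\alpha\rangle = \sum_{(i,\alpha)\in\sett{E}}\langle\vectsymb{\lambda}_{i\alpha},\widehat{\vectsymb{q}}_{i\alpha}\rangle = \sum_{(i,\alpha)\in\sett{E}}\langle\vectsymb{\lambda}_{i\alpha},\vectsymb{p}_i\rangle = \sum_{i\in\sett{V}}\big\langle\sum_{\alpha\in\sett{N}(i)}\vectsymb{\lambda}_{i\alpha},\,\vectsymb{p}_i\big\rangle$, where I used the transpose identity, the re-indexing of the edge sum, and the agreement hypothesis. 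This vanishes because $\vectsymb{\lambda}\in\Lambda$, and here I would include the small observation that $\vectsymb{\lambda}$ stays in $\Lambda$ throughout Algorithm~\ref{alg:ddsubgrad}: it is initialized at $\vect{0}\in\Lambda$, and line~\ref{algline:ddsubgrad-lambdaupd} adds to $\vectsymb{\lambda}_{i\alpha}$ the increment $-\eta_t(\widehat{\vectsymb{q}}_{i\alpha}-\vectsymb{p}_i)$, whose sum over $\alpha\in\sett{N}(i)$ is zero by the averaging in line~\ref{algline:ddsubgrad-average}. Hence the LP-MAP-P value at $(\vectsymb{p},\vectsymb{q})$ equals $g(\vectsymb{\lambda})$.

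Finally I would close with weak LP duality: the optimum of LP-MAP-P is at most the optimum of LP-MAP-D \eqref{eq:dualdecomp}, which is at most $g(\vectsymb{\lambda})$; but $g(\vectsymb{\lambda})$ is attained by the feasible primal point just constructed, so all these quantities coincide, proving that $\vectsymb{\lambda}$ solves LP-MAP-D and $(\vectsymb{p},\vectsymb{q})$ solves LP-MAP-P (equivalently LP-MAP). Since $(\vectsymb{p},\vectsymb{q})$ is integral, it is a vertex of $\MARG(\tup{G})$ corresponding to the configuration $\widehat{\vectsymb{y}}$, and the optimal value equals $\sum_{\alpha\in\sett{F}}\score_\alpha(\widehat{\vectsymb{y}}_\alpha)+\sum_{i\in\sett{V}}\score_i(\widehat{y}_i)$; as the LP-MAP optimum upper-bounds the MAP optimum and this value is realized by an actual assignment, $\widehat{\vectsymb{y}}$ must be a MAP solution. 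The only non-mechanical part is the middle paragraph --- keeping the $\matr{M}_{i\alpha}$ transposes straight, re-indexing over $\sett{E}$, and invoking both the agreement hypothesis and $\vectsymb{\lambda}\in\Lambda$ (together with the auxiliary fact that the iterates never leave $\Lambda$); everything else is bookkeeping.
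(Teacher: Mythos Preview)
Your proof is correct and self-contained, but it differs substantially from the paper's argument. The paper observes that agreement makes the $\vectsymb{\lambda}$-update in line~\ref{algline:ddsubgrad-lambdaupd} vacuous, so the iterates freeze; it then invokes Proposition~\ref{prop:dd-convergence} (convergence of the projected subgradient method to the dual optimum) to conclude that the frozen $\vectsymb{\lambda}$ must already be optimal. The binarity of $\vectsymb{p}$ and the MAP claim are dispatched in one line by noting that any integral LP-MAP solution is a MAP solution.

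Your route is a direct weak-duality argument: you build a primal-feasible $(\vectsymb{p},\vectsymb{q})$, verify its objective equals $g(\vectsymb{\lambda})$ via the cancellation $\sum_{\alpha\in\sett{N}(i)}\vectsymb{\lambda}_{i\alpha}=\vect{0}$, and close the duality gap explicitly. This buys independence from Proposition~\ref{prop:dd-convergence} (and hence from any stepsize assumption), and it makes the LP-MAP-P optimality of $(\vectsymb{p},\vectsymb{q})$ explicit rather than implicit. The paper's argument is shorter but leans on a nontrivial external convergence result; yours is longer but entirely elementary. Both are valid.
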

\begin{proof}
If all local subproblems are in agreement,
then a vacuous update will occur in line~\ref{algline:ddsubgrad-lambdaupd},
and no further changes will occur. Since the algorithm is guaranteed to converge, the current
$\vectsymb{\lambda}$ is optimal.
Also, if all local subproblems are in agreement,
the averaging in line
\ref{algline:ddsubgrad-average}
necessarily yields a binary vector $\vectsymb{p}$.
Since any binary solution of LP-MAP is
also a solution of MAP, the result follows.
\end{proof}

\bigskip

Propositions~\ref{prop:dd-convergence}--\ref{prop:dd-certificate} imply that, if the parameters
$\Score$ are such that LP-MAP is a tight relaxation, then Algorithm~\ref{alg:ddsubgrad} yields
the exact MAP configuration along with a certificate of optimality.
%after a finite number of iterations.
According to Proposition~\ref{prop:dd-convergence}, even if the relaxation is not tight,
Algorithm~\ref{alg:ddsubgrad} still converges to a solution of LP-MAP.
In some problem domains, the LP-MAP is often tight \citep{Koo2010,Rush2010};
for problems with a relaxation gap, techniques to tighten the relaxation have been developed
\citep{Rush2011}. However, in large graphs with many overlapping factors,
it has been observed that Algorithm~\ref{alg:ddsubgrad} can converge quite slowly in practice
\citep{Martins2011bEMNLP}. This is not surprising, given that it attempts to reach a consensus
among all overlapping components; the larger this number, the harder it is to achieve consensus.
%We will show some evidence of this
%in Section~\ref{sec:experiments}. \afm{check}
%This is because it may be hard to reach a consensus
%on variables with many replicas.
%An underlying limitation is the  non-smoothness of the dual program,
%which prevents the direct use of faster optimization algorithms.
%This led \citet{Johnson2007} and \citet{Jojic2010}
%to perturb the dual objective function by adding a
%small entropic term, enabling
%gradient-based optimization.
%A limitation of these approaches is that they require
%operating at the near-zero temperature limit, which
%leads to numerical instability in some problems.
%\afm{this may sound cryptic at this point; should probably talk about temperature earlier in this section}
In the next section, we propose a new LP-MAP inference algorithm that
is more suitable for this class of problems.
%\afm{should have a figure that illustrate the splits}

%often becomes an issue and performance degrades,
%This makes accelerated methods worthy of study  \citep{Johnson2007,Jojic2010}.
%\citet{Johnson2007} and \citet{Jojic2010} have proposed accelerated methods that smoothen each slave with an
%entropic term.

\section{Alternating Directions Dual Decomposition (AD$^3$)}\label{sec:ad3}

%In this section, we introduce
%a new LP-MAP inference algorithm called AD$^3$ (for \emph{Alternating Directions Dual Decomposition}).
%At its heart lies a method
%well-known in the optimization community called the ``alternating directions method of multipliers'' (ADMM),%
%\footnote{Also known by other names, such as ``method of alternating directions'' and ``Douglas-Rachford  splitting.''} %
%an augmented Lagrangian method which was first proposed in the 1970s \citep{Glowinski75,Gabay1976}
%and has seen a recent surge of interest in machine learning
%and signal processing \citep{Afonso2010,Mota2010,Goldfarb2010,Boyd2011}.
%We show how AD$^3$ is able to solve the LP-MAP problem more efficiently than other methods,
%by allying the effectiveness of ADMM with the modularity of dual decomposition methods.
%AD$^3$ can be regarded as a \emph{dual decomposition method},
%a class of methods of which the projected subgradient algorithm of  \citet{Komodakis2007},
%described in Section~\ref{sec:dualdecomposition},
%is a paradigmatic example.
%Algorithmically speaking, the departure point between AD$^3$ and the
%subgradient method is in
%the local subproblems that need to be solved at the worker nodes.
%We first discuss the intrinsic limitations of that method
%and suggest ways of overcoming them, pointing to the literature.
%This will pave the way for the introduction of our AD$^3$ algorithm,
%which is the subject of the following sections.

\subsection{Addressing the Weaknesses of Dual Decomposition with Projected Subgradient}
\label{sec:weak}
The main weaknesses of Algorithm~\ref{alg:ddsubgrad} reside in the following two aspects.
\begin{enumerate}
\item The dual objective function $g(\vectsymb{\lambda})$ is \emph{non-smooth},
this being why ``subgradients'' are used instead of ``gradients.''
It is well-known that non-smooth optimization lacks some of the good properties
of its smooth counterpart. Namely, there is no guarantee of monotonic improvement in
the objective (see \citealt[p.~611]{Bertsekas1999}). Ensuring convergence requires
using a diminishing step size sequence, which leads to slow convergence rates.
In fact, as stated in Proposition~\ref{prop:dd-convergence}, $O(1/\epsilon^2)$ iterations
are required to guarantee $\epsilon$-accuracy.
\item A close look at Algorithm~\ref{alg:ddsubgrad} reveals that the consensus is
promoted solely by the Lagrange multipliers (line~\ref{algline:ddsubgrad-unary}).
In a economic interpretation,  these represent ``price adjustments''
that lead to a reallocation of resources. However, no ``memory'' exists about past
allocations or adjustments, so the workers never know how far they are from consensus.
It is thus conceivable that a
smart use of these quantities could accelerate convergence.
\end{enumerate}

To obviate the first of these problems, \citet{Johnson2007} proposed smoothing the objective
function through an ``entropic'' perturbation (controlled by a ``temperature" parameter),
which boils down to replacing the $\max$ in \eqref{eq:dualdecomp} by a ``soft-max.''
As a result, all the local subproblems become marginal (rather than MAP) computations.
A related and asymptotically faster method was proposed later by
\citet{Jojic2010}, who address the resulting smooth optimization problem with an
\emph{accelerated gradient method} \citep{Nesterov1983,Nesterov2005}.
That approach guarantees an $\epsilon$-accurate solution after
$O(1/\epsilon)$ iterations, an improvement over the $O(1/\epsilon^2)$ bound of
Algorithm~\ref{alg:ddsubgrad}. However, those methods have some drawbacks.
First, they need to operate at near-zero temperatures; \emph{e.g.}, the $O(1/\epsilon)$
iteration bound of \citet{Jojic2010} requires setting the temperature to $O(\epsilon)$,
which scales the potentials by $O(1/\epsilon)$ and may lead to numerical instabilities
in some problems.
%---%
%this is so in problems in which marginal computation requires
%inverting a matrix which becomes ill-conditioned at low
%temperatures
%\afm{adjust this}
%(as in non-projective dependency parsing,
%cf.~the matrix-tree formula in Eq.~\ref{eq:marginals_matrixtree});
%and  in the OR and OR-with-output factors defined in Section~\ref{sec:hardconstraintfactors},
%whose marginal computation requires
%evaluating the product of $K$ large numbers (cf.~\ref{eq:Z_or_main}).
%Typical strategies to
%deal with numerical overflow and underflow which involve approximating and thresholding
%may affect the global convergence rate.
Second, the solution of the local subproblems are always \emph{dense}; although
some marginal values may be low, they are never exactly zero. This contrasts with
the projected subgradient algorithm, for which the solutions of
the local subproblems are MAP configurations. These configurations can be
cached across iterations, leading to important speedups \citep{Koo2010EMNLP}.

We will show that AD$^3$ also yields a
$O(1/\epsilon)$ iteration bound without suffering from the two drawbacks above.
Unlike Algorithm~\ref{alg:ddsubgrad}, AD$^3$ broadcasts \emph{the current global solution}
to the workers, thus allowing them to regularize their subproblems toward that solution.
This promotes a faster consensus, without sacrificing the modularity of dual decomposition.
Another advantage of AD$^3$ is that it keeps track of primal and dual residuals, allowing
monitoring the LP-MAP optimization process and stopping when a desired accuracy level is attained.

\subsection{Augmented Lagrangians and the Alternating Directions Method of Multipliers}\label{sec:augmentedlagrangian}

%The LP-MAP problem stated in Eq.~\ref{eq:lpmap-reformulation}
%has the following general form:

Let us start with a brief overview of augmented Lagrangian methods.
Consider the following general convex optimization problem with equality constraints:
\begin{equation}\label{eq:convexproblemequalityconstraints}
\begin{array}{ll}
\mathrm{maximize} & f_1(\vectsymb{q}) + f_2(\vectsymb{p}) \\
\text{with respect to} & \vectsymb{q} \in \sett{Q}, \vectsymb{p} \in \sett{P}\\
\text{subject to} & \matr{A}\vectsymb{q} + \matr{B}\vectsymb{p} = \vectsymb{c},
\end{array}
\end{equation}
where $\sett{Q} \subseteq \set{R}^P$ and $\sett{P} \subseteq \set{R}^Q$
are convex sets and
$f_1:\sett{Q} \rightarrow \set{R}\cup \{-\infty\}$ and $f_2:\sett{P} \rightarrow \set{R}\cup \{-\infty\}$ are concave functions.
Note that the LP-MAP problem stated in \eqref{eq:lpmap-reformulation} has this form.
%In the absence of the equality constraints in the last line, problem \eqref{eq:convexproblemequalityconstraints}
%would be \emph{separable} into the two independent problems
%$\max \{f_1(\vectsymb{q})\,\, |\,\, \vectsymb{q} \in \sett{Q}\}$ and
%$\max \{f_2(\vectsymb{p})\,\, |\,\, \vectsymb{p}~\in~\sett{P}\}$,
%\emph{i.e.},
%its optimal value
%would be the sum of those two maxima.
For any $\eta \ge 0$, consider the problem
\begin{equation}\label{eq:convexproblemequalityconstraintschanges}
\begin{array}{ll}
\mathrm{maximize} & f_1(\vectsymb{q}) + f_2(\vectsymb{p}) - \frac{\eta}{2} \| \matr{A}\vectsymb{q} + \matr{B}\vectsymb{p} - \vectsymb{c} \|^2\\
\text{with respect to} & \vectsymb{q} \in \sett{Q}, \vectsymb{p} \in \sett{P}\\
\text{subject to} & \matr{A}\vectsymb{q} + \matr{B}\vectsymb{p} = \vectsymb{c},
\end{array}
\end{equation}
which differs from \eqref{eq:convexproblemequalityconstraints} in the extra term penalizing violations
of the equality constraints; since this term vanishes at feasibility, the two problems have the same solution.
The reason to consider \eqref{eq:convexproblemequalityconstraintschanges} is that its objective is
$\eta$-strongly concave, even if $f_1 + f_2$ is not.
The Lagrangian of problem \eqref{eq:convexproblemequalityconstraintschanges},
\begin{eqnarray}\label{eq:augmented-lagrangian}
L_{\eta}(\vectsymb{q},\vectsymb{p},\vectsymb{\lambda}) = f_1(\vectsymb{q}) + f_2(\vectsymb{p}) +
\DP{\vectsymb{\lambda}}{(\matr{A}\vectsymb{q} + \matr{B}\vectsymb{p} - \vectsymb{c})} - \frac{\eta}{2}
\| \matr{A}\vectsymb{q} + \matr{B}\vectsymb{p} - \vectsymb{c} \|^2,
\end{eqnarray}
is the \emph{$\eta$-augmented Lagrangian} of problem \eqref{eq:convexproblemequalityconstraints}.
The so-called \emph{augmented Lagrangian} (AL) methods \citep[Section~4.2]{Bertsekas1999} address problem
\eqref{eq:convexproblemequalityconstraints} by seeking a saddle point of $L_{\eta_t}$, for some sequence
$(\eta_t)_{t \in \set{N}}$. The earliest instance is the \emph{method of multipliers}  (MM)
\citep{Hestenes1969,Powell1969}, which alternates between a joint update of $\vectsymb{q}$ and $\vectsymb{p}$ through
\begin{eqnarray}\label{eq:almm_jointupdate}
(\vectsymb{q}^{t+1},\vectsymb{p}^{t+1}) &:=& \arg\max_{\vectsymb{q},\vectsymb{p}} \{ L_{\eta_t}(\vectsymb{q},\vectsymb{p},\vectsymb{\lambda}^{t}) \,\,|\,\, \vectsymb{q} \in \sett{Q}, \vectsymb{p} \in \sett{P}\}
\end{eqnarray}
and a gradient update of the Lagrange multiplier,
\begin{eqnarray}\label{eq:almm_lambda_update}
\vectsymb{\lambda}^{t+1} &:=& \vectsymb{\lambda}^{t} - \eta_t (\matr{A}\vectsymb{q}^{t+1} + \matr{B}\vectsymb{p}^{t+1} - \vectsymb{c}).
\end{eqnarray}
Under some conditions, this method is convergent, and even superlinear, if
the sequence $(\eta_t)_{t \in \set{N}}$ is increasing (\citealt[Section~4.2]{Bertsekas1999}).
A shortcoming of the MM is that problem \eqref{eq:almm_jointupdate} may be difficult,
since the penalty term of the augmented Lagrangian couples the variables $\vectsymb{p}$ and $\vectsymb{q}$.
The \emph{alternating directions method of multipliers} (ADMM) avoids this shortcoming, by
replacing the joint optimization \eqref{eq:almm_jointupdate}
by a single block Gauss-Seidel-type step:
\begin{equation}\label{eq:admm_u_update}
\vectsymb{q}^{t+1} := \arg\max_{\vectsymb{q} \in \sett{Q}} L_{\eta_t}(\vectsymb{q},\vectsymb{p}^{t},\vectsymb{\lambda}^{t})
= \arg\max_{\vectsymb{q} \in \sett{Q}} f_1(\vectsymb{q}) + \DP{(\matr{A}^{\top}\vectsymb{\lambda}^t)}{\vectsymb{q}} -
\frac{\eta_t}{2} \| \matr{A}\vectsymb{q} + \matr{B}\vectsymb{p}^t - \vectsymb{c} \|^2,
\end{equation}
\begin{equation}\label{eq:admm_v_update}
\vectsymb{p}^{t+1} := \arg\max_{\vectsymb{p} \in \sett{P}} L_{\eta_t}(\vectsymb{q}^{t+1},\vectsymb{p},\vectsymb{\lambda}^{t})
= \arg\max_{\vectsymb{p} \in \sett{P}} f_2(\vectsymb{p}) + \DP{(\matr{B}^{\top}\vectsymb{\lambda}^t)}{\vectsymb{p}} -
\frac{\eta_t}{2} \| \matr{A}\vectsymb{q}^{t+1} + \matr{B}\vectsymb{p} - \vectsymb{c} \|^2.
\end{equation}
In general, problems \eqref{eq:admm_u_update}--\eqref{eq:admm_v_update} are
simpler than the joint maximization in \eqref{eq:almm_jointupdate}.
ADMM was proposed by \citet{Glowinski75} and \citet{Gabay1976} and
is related to other optimization methods, such as Douglas-Rachford splitting
\citep{Eckstein1992} and proximal point methods (see \citealt{Boyd2011} for an
historical overview).

\subsection{Derivation of AD$^3$}\label{sec:ad3-derivation}

The LP-MAP-P problem \eqref{eq:lpmap-reformulation} can be cast into the form
\eqref{eq:convexproblemequalityconstraints} by proceeding as follows:
\begin{itemize}
%\item replacing ``$\max$'' with ``$\min$'' and considering the negative of the
%objective function;
\item let $\sett{Q}$ in \eqref{eq:convexproblemequalityconstraints} be
the Cartesian product of simplices, $\sett{Q} := \prod_{\alpha \in \sett{F}} \Delta^{|\Y_{\alpha}|}$,
and $\sett{P} := \set{R}^P$;
\item let $f_1(\vectsymb{q}) := \sum_{\alpha \in \sett{F}}
\DP{\left({\Score}_{\alpha} + \sum_{i\in\sett{N}(\alpha)}
  \matr{M}_{i\alpha}^{\top}\Score_{i\alpha}\right)}{\vectsymb{q}_{\alpha}}$  and $f_2 :\equiv 0$;
\item let $\matr{A}$ in \eqref{eq:convexproblemequalityconstraints}
be a $R\times Q$ block-diagonal matrix, where
$R = \sum_{(i,\alpha)\in \sett{E}} |\Y_i|$,
with one block per factor, which is a vertical concatenation
of the matrices $\{\matr{M}_{i\alpha}\}_{i \in \sett{N}(\alpha)}$;
\item let $-\matr{B}$ be a $R\times P$ matrix of grid-structured blocks, where
the block in the $(i,\alpha)$th row and
the $i$th column is a negative identity matrix
of size $|\Y_i|$, and all the other blocks are
zero;
\item let $\vectsymb{c} := 0$.
\end{itemize}
%This falls into the class of ``general consensus optimization'' problems,
%as described by \citet[Section~7.2]{Boyd2011}.
%
The $\eta$-augmented Lagrangian associated with \eqref{eq:lpmap-reformulation} is
\begin{equation}\label{eq:ddsubgrad_augmlagrangian}
L_{\eta}(\vectsymb{q},\vectsymb{p},\vectsymb{\lambda}) =
\sum_{\alpha \in \sett{F}}
\DP{({\Score}_{\alpha} + \matr{M}_{i\alpha}^{\top}(\Score_{i\alpha} + \vectsymb{\lambda}_{i\alpha}))}{\vectsymb{q}_{\alpha}}
- \sum_{(i,\alpha) \in \sett{E}} \DP{\vectsymb{\lambda}_{i\alpha}}{\vectsymb{p}_i}
-\frac{\eta}{2} \sum_{(i,\alpha) \in \sett{E}}
\|\matr{M}_{i\alpha}\vectsymb{q}_{\alpha}-\vectsymb{p}_i\|^2.
\end{equation}
This is the standard Lagrangian in \eqref{eq:ddsubgrad_lagrangian} plus the Euclidean
penalty term. The ADMM updates are
\begin{align}
\textbf{Broadcast:} \quad & \vectsymb{q}^{(t)} := \arg\max_{\vectsymb{q}\in \sett{Q}}
L_{\eta_{t}}(\vectsymb{q}, \vectsymb{p}^{(t-1)}, \vectsymb{\lambda}^{(t-1)}),\label{eq:ddadmm_mu} \\
\textbf{Gather:} \quad & \vectsymb{p}^{(t)} := \arg\max_{\vectsymb{p}\in \set{R}^P} L_{\eta_{t}}(\vectsymb{q}^{(t)}, \vectsymb{p}, \vectsymb{\lambda}^{(t-1)}),\label{eq:ddadmm_zeta}\\
\textbf{Multiplier update:} \quad & \vectsymb{\lambda}_{i\alpha}^{(t)} := \vectsymb{\lambda}_{i\alpha}^{(t-1)}
- \eta_t \left( \matr{M}_{i\alpha}\vectsymb{q}_{\alpha}^{(t)} - \vectsymb{p}_i^{(t)} \right), \forall (i,\alpha) \in \sett{E}. \label{eq:ddadmm_lambda}
\end{align}
We next analyze the broadcast and gather steps, and prove a proposition about the multiplier update.

%\paragraph{$\vectsymb{\mu}$-Maximization.}
\paragraph{Broadcast step.}
The maximization \eqref{eq:ddadmm_mu} can be carried out in parallel at the factors,
as in Algorithm~\ref{alg:ddsubgrad}. The only difference is that, instead of a local MAP computation,
each soft-factor worker now needs to solve a \emph{quadratic program} of the form:
\begin{equation}\label{eq:ddadmm_mu2}
\boxed{%
\max_{\vectsymb{q}_{\alpha} \in \Delta^{|\Y_{\alpha}|}} \DP{\left({\Score}_{\alpha} + \sum_{i\in\sett{N}(\alpha)}\matr{M}_{i\alpha}^{\top}(\Score_{i\alpha} + \vectsymb{\lambda}_{i\alpha})\right)}{\vectsymb{q}_{\alpha}}
-\frac{\eta}{2} \sum_{i\in \sett{N}(\alpha)}
\|\matr{M}_{i\alpha}\vectsymb{q}_{\alpha} - \vectsymb{p}_i\|^2.}
\end{equation}
The subproblem \eqref{eq:ddadmm_mu2} differs from the linear
subproblem \eqref{eq:dualdecompslave01}--\eqref{eq:dualdecompslave02}
in the projected subgradient algorithm by including an Euclidean penalty term,
which penalizes deviations from the global consensus. Sections~\ref{sec:ad3_subproblems} and
\ref{sec:ad3_generalfactors} propose procedures to solve the local subproblems \eqref{eq:ddadmm_mu2}.

\paragraph{Gather step.}
The solution of \eqref{eq:ddadmm_zeta} has a closed form.
Indeed, this problem is separable into independent optimizations, one for each $i\in \sett{V}$;
defining $\vectsymb{q}_{i\alpha} := \matr{M}_{i\alpha} \vectsymb{q}_{\alpha}$,
\begin{equation}\label{eq:ddadmm_zeta2}
\vectsymb{p}_i^{(t)} := \arg \min_{\vectsymb{p}_i \in \set{R}^{|\Y_i|}}  \sum_{\alpha \in \sett{N}(i)}
\left(
\vectsymb{p}_i  - \left( \vectsymb{q}_{i\alpha} - \eta_t^{-1}{\vectsymb{\lambda}}_{i\alpha} \right)
\right)^2 =
 {|\sett{N}(i)|}^{-1} \sum_{\alpha \in \sett{N}(i)}
\left( \vectsymb{q}_{i\alpha} - \eta_t^{-1}\vectsymb{\lambda}_{i\alpha} \right).
\end{equation}
\begin{proposition}\label{prop:ddadmm_dualfeasible}
Let $\vectsymb{\lambda}^{0} = \vect{0}$ and  $\eta_t := \eta$. Then, the sequence produced by
\eqref{eq:ddadmm_mu}--\eqref{eq:ddadmm_lambda} satisfies
\begin{align}\label{eq:ddadmm_dualfeasible}
\sum_{\alpha \in \sett{N}(i)}\vectsymb{\lambda}_{i\alpha}^{(t)} = 0, \\
\label{eq:ddadmm_zeta_update2}
\vectsymb{p}_i^{(t)} = \frac{1}{|\sett{N}(i)|} \sum_{\alpha \in \sett{N}(i)} \vectsymb{q}_{i\alpha}.
\end{align}
\end{proposition}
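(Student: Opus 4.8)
The plan is to prove the two identities simultaneously by induction on $t$, exploiting the fact that they reinforce each other: the ``zero-sum'' property of the multipliers makes the $\vectsymb{\lambda}$-term in the gather formula \eqref{eq:ddadmm_zeta2} drop out, and the resulting averaging expression for $\vectsymb{p}_i^{(t)}$ in turn forces the multiplier update \eqref{eq:ddadmm_lambda} to preserve the zero-sum property at the next iteration. For the base case, identity \eqref{eq:ddadmm_dualfeasible} holds trivially because $\vectsymb{\lambda}^{0} = \vect{0}$ by assumption; the averaging identity \eqref{eq:ddadmm_zeta_update2} then follows from the first inductive step below applied with $t-1 = 0$.

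For the inductive step I would assume $\sum_{\alpha \in \sett{N}(i)} \vectsymb{\lambda}_{i\alpha}^{(t-1)} = \vect{0}$ for every $i \in \sett{V}$ and proceed in two sub-steps, in this order. First, substitute the hypothesis into the closed-form gather solution \eqref{eq:ddadmm_zeta2}: since $\eta_t = \eta$ is constant, that expression reads $\vectsymb{p}_i^{(t)} = |\sett{N}(i)|^{-1} \sum_{\alpha \in \sett{N}(i)} ( \vectsymb{q}_{i\alpha}^{(t)} - \eta^{-1} \vectsymb{\lambda}_{i\alpha}^{(t-1)} )$, and the hypothesis kills the second summand, yielding exactly \eqref{eq:ddadmm_zeta_update2}. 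Second, sum the multiplier update \eqref{eq:ddadmm_lambda} over $\alpha \in \sett{N}(i)$: the leading term $\sum_{\alpha \in \sett{N}(i)} \vectsymb{\lambda}_{i\alpha}^{(t-1)}$ vanishes by the hypothesis, and what remains is $-\eta \bigl( \sum_{\alpha \in \sett{N}(i)} \vectsymb{q}_{i\alpha}^{(t)} - |\sett{N}(i)|\, \vectsymb{p}_i^{(t)} \bigr)$, which is zero precisely by the averaging identity just established. This gives $\sum_{\alpha \in \sett{N}(i)} \vectsymb{\lambda}_{i\alpha}^{(t)} = \vect{0}$ and closes the induction.

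I do not anticipate any real obstacle: the argument is a short bookkeeping induction, and the only point requiring care is the order of operations within the inductive step — one must derive the averaging identity for $\vectsymb{p}_i^{(t)}$ from the hypothesis on $\vectsymb{\lambda}^{(t-1)}$ before using it to re-establish the hypothesis at level $t$. It is worth noting that this is the ADMM counterpart of the fact that Algorithm~\ref{alg:ddsubgrad} keeps $\vectsymb{\lambda}$ inside the set $\Lambda$ of \eqref{eq:dualdecomplambdaset}, and that the constant-step-size assumption $\eta_t \equiv \eta$ is exactly what lets $\eta$ factor cleanly out of the averages; with a varying $\eta_t$ the cancellation in the gather step would no longer be exact.
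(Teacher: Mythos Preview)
Your proposal is correct and follows essentially the same induction as the paper's proof: assume $\sum_{\alpha \in \sett{N}(i)} \vectsymb{\lambda}_{i\alpha}^{(t-1)} = \vect{0}$, plug the gather formula \eqref{eq:ddadmm_zeta2} into the summed multiplier update \eqref{eq:ddadmm_lambda}, and watch everything cancel. The paper compresses your two sub-steps into a single displayed computation and does not spell out \eqref{eq:ddadmm_zeta_update2} separately, but the logic is identical; one small aside---your closing remark that a varying $\eta_t$ would break the cancellation is not quite right, since in the paper's computation the $\eta_t$ from the multiplier update and the $\eta_t^{-1}$ from the gather step at the same iteration still cancel exactly.
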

\begin{proof}
The proof is by induction: $\vectsymb{\lambda}^{(0)} = \vect{0}$ satisfies \eqref{eq:ddadmm_dualfeasible};  for $t>0$,
if $\vectsymb{\lambda}^{(t-1)}$ satisfies \eqref{eq:ddadmm_dualfeasible}, then,
%\begin{align}
%\sum_{\alpha \in \sett{N}(i)} \vectsymb{\lambda}_{i\alpha}^{(t)} &= \sum_{\alpha \in \sett{N}(i)} \vectsymb{\lambda}_{i\alpha}^{(t-1)}
% - \tau \eta_t \left( \sum_{\alpha \in \sett{N}(i)} \vectsymb{q}_{i\alpha}^{(t)} - |\sett{N}(i)|\vectsymb{p}_i^{(t)} \right) \nonumber\\
% &= \sum_{\alpha \in \sett{N}(i)} \vectsymb{\lambda}_{i\alpha}^{(t-1)}
% - \tau \eta_t \left( \sum_{\alpha \in \sett{N}(i)} \vectsymb{q}_{i\alpha}^{(t)} -
% \sum_{\alpha \in \sett{N}(i)} \left( \vectsymb{q}_{i\alpha}^{(t)} - \eta_t^{-1} \vectsymb{\lambda}_{i\alpha}^{(t-1)}\right) \right) \nonumber\\
%& = (1-\tau) \sum_{\alpha \in \sett{N}(i)}\vectsymb{\lambda}_{i\alpha}^{(t-1)} = \vect{0}.
%\end{align}
\begin{align}
\sum_{\alpha \in \sett{N}(i)} \vectsymb{\lambda}_{i\alpha}^{(t)} &= \sum_{\alpha \in \sett{N}(i)} \vectsymb{\lambda}_{i\alpha}^{(t-1)}
 - \eta_t \left( \sum_{\alpha \in \sett{N}(i)} \vectsymb{q}_{i\alpha}^{(t)} - |\sett{N}(i)|\vectsymb{p}_i^{(t)} \right) \nonumber\\
 &= \sum_{\alpha \in \sett{N}(i)} \vectsymb{\lambda}_{i\alpha}^{(t-1)}
 - \eta_t \left( \sum_{\alpha \in \sett{N}(i)} \vectsymb{q}_{i\alpha}^{(t)} -
 \sum_{\alpha \in \sett{N}(i)} \left( \vectsymb{q}_{i\alpha}^{(t)} - \eta_t^{-1} \vectsymb{\lambda}_{i\alpha}^{(t-1)}\right) \right)  = \vect{0},\nonumber
\end{align}
\emph{i.e.}, $\vectsymb{\lambda}^{(t)}$ also satisfies \eqref{eq:ddadmm_dualfeasible},
simply by applying the update step \eqref{eq:ddadmm_lambda}. \end{proof}

\vspace{0.4cm}

Assembling all these pieces together leads to the AD$^3$  (Algorithm~\ref{alg:ad3}). Notice
that AD$^3$ retains the modular structure of Algorithm~\ref{alg:ddsubgrad}: both are
iterative consensus algorithms, alternating between a \emph{broadcast} operation,
where subproblems are distributed across local workers
(lines~\ref{algline:ad3_broadcast_begins}--\ref{algline:ad3_broadcast_ends} in Algorithm~\ref{alg:ad3}),
and a \emph{gather} operation, where the local solutions are assembled by a master node, which
updates the global solution (line~\ref{algline:ad3_gather_01})
and adjusts multipliers to promote a consensus (line~\ref{algline:ad3_gather_02}).
The key difference is that AD$^3$ also broadcasts the current global solution to the workers,
allowing them to regularize their subproblems toward that solution, thus
speeding up the consensus.
This is embodied in the procedure {\sc SolveQP} (line \ref{algline:ad3_solveqp}),
which replaces {\sc ComputeMAP} of Algorithm~\ref{alg:ddsubgrad}.

\begin{algorithm}[t]
   \caption{Alternating Directions Dual Decomposition (AD$^3$)\label{alg:ad3}}
\begin{algorithmic}[1]
\STATE {\bfseries input:} graph $\tup{G}$, parameters $\Score$, penalty constant $\eta$
   \STATE initialize $\vectsymb{p}$ uniformly (\emph{i.e.}, $p_i(y_i) = 1/|\Y_i|$, $\forall i \in \sett{V}, y_i \in \Y_i$)
   \STATE initialize $\vectsymb{\lambda} = \vect{0}$
	\REPEAT
	\FOR{{\bfseries each} factor $\alpha \in \sett{F}$} \label{algline:ad3_broadcast_begins}
	\STATE set unary log-potentials ${\vectsymb{\xi}}_{i\alpha} := {\Score}_{i\alpha} + \vectsymb{\lambda}_{i\alpha}$, for $i\in \sett{N}(\alpha)$
	\label{algline:ad3-unary}
	\STATE set $\widehat{\vectsymb{q}}_{\alpha} :=
	\text{\sc SolveQP}\left({\Score}_{\alpha} + \sum_{i\in
          \sett{N}(\alpha)}\matr{M}_{i\alpha}^{\top}\vectsymb{\xi}_{i\alpha},\,\,
        (\vectsymb{p}_i)_{i \in \sett{N}(\alpha)}\right)$ \label{algline:ad3_solveqp}
	\STATE set $\widehat{\vectsymb{q}}_{i\alpha} :=
	\matr{M}_{i\alpha}\widehat{\vectsymb{q}}_{\alpha}$, for $i\in\sett{N}(\alpha)$
%    \STATE \afm{maybe replace what's above by:}
%	\STATE set unary log-potentials $\vectsymb{\xi}_{i\alpha} := {\Score}_{i\alpha} + \vectsymb{\lambda}_{i\alpha}$, for $i\in \sett{N}(\alpha)$
%	\label{algline:ad3-unary}
%	\STATE set $(\widehat{\vectsymb{q}}_{\alpha}, (\widehat{\vectsymb{q}}_{i\alpha})_{i \in \sett{N}(\alpha)}) :=
%	\text{\sc SolveQP}({\Score}_{\alpha}, (\vectsymb{\xi}_{i\alpha})_{i \in \sett{N}(\alpha)}, (\vectsymb{p}_i)_{i \in \sett{N}(\alpha)})$ \label{algline:ad3_solveqp}	
	\ENDFOR\label{algline:ad3_broadcast_ends}
	\STATE compute average ${\vectsymb{p}}_i := |\sett{N}(i)|^{-1}\sum_{\alpha \in \sett{N}(i)} \widehat{\vectsymb{q}}_{i\alpha}$ for each $i \in \sett{V}$\label{algline:ad3_gather_01}
	\STATE update $\vectsymb{\lambda}_{i\alpha} := \vectsymb{\lambda}_{i\alpha} -
	\eta \left( \widehat{\vectsymb{q}}_{i\alpha} - {\vectsymb{p}}_i \right)$ for each $(i,\alpha) \in \sett{E}$\label{algline:ad3_gather_02}
	\UNTIL{convergence} %\ENDFOR
   \STATE \textbf{output:} primal variables $\vectsymb{p}$ and $\vectsymb{q}$, dual variable $\vectsymb{\lambda}$, upper bound $g(\vectsymb{\lambda})$
\end{algorithmic}
\end{algorithm}

\subsection{Convergence Analysis}\label{sec:ad3_convergence}
\subsubsection{Proof of Convergence}
Convergence of AD$^3$ follows directly from the general convergence
properties of ADMM. Remarkably, unlike in Algorithm~\ref{alg:ddsubgrad} (projected subgradient),
convergence is ensured with a fixed step size.
\begin{proposition}[Convergence]\label{prop:ddadmm1}
Let $(\vectsymb{q}^{(t)}, \vectsymb{p}^{(t)}, \vectsymb{\lambda}^{(t)})_t$
be the sequence of iterates produced by Algorithm~\ref{alg:ad3} with a fixed $\eta_t = \eta$.
%and $0 < \tau \leq (\sqrt{5}+1)/2 \simeq 1.61$.
Then the following holds:
\begin{enumerate}
\item primal feasibility of LP-MAP-P \eqref{eq:lpmap-reformulation} is achieved in the limit,
\emph{i.e.},
\begin{equation}
\|\matr{M}_{i\alpha}\vectsymb{q}_{\alpha}^{(t)} - \vectsymb{p}_i^{(t)}\| \rightarrow \vect{0}, \quad \forall (i,\alpha) \in \sett{E};
\end{equation}
\item the sequence $(\vectsymb{p}^{(t)}, \vectsymb{q}^{(t)})_{t \in \set{N}}$
converges to a solution of the LP-MAP-P \eqref{eq:lpmap-reformulation};
\item the sequence $(\vectsymb{\lambda}^{(t)})_{t \in \set{N}}$ converges to a solution of the dual LP-MAP-D \eqref{eq:dualdecomp};
\item $\vectsymb{\lambda}^{(t)}$ is dual feasible, for any $t$, thus $g(\vectsymb{\lambda}^{(t)})$ in  \eqref{eq:dualdecomp} approaches the optimum from above.
\end{enumerate}
\end{proposition}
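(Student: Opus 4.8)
The plan is to recognize Algorithm~\ref{alg:ad3} as the ADMM iteration~\eqref{eq:admm_u_update}--\eqref{eq:admm_v_update} applied to LP-MAP-P~\eqref{eq:lpmap-reformulation} under the identifications of Section~\ref{sec:ad3-derivation} ($f_1$ affine, $f_2\equiv 0$, $\sett{Q}=\prod_{\alpha\in\sett{F}}\Delta^{|\Y_{\alpha}|}$, $\sett{P}=\set{R}^P$, $\matr{A}$ block-diagonal with the vertically stacked $\matr{M}_{i\alpha}$, $-\matr{B}$ the grid of negative identities, $\vectsymb{c}=\vect{0}$), and then invoke the convergence theory for ADMM. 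First I would check that the broadcast, gather, and multiplier steps of Algorithm~\ref{alg:ad3} indeed coincide with \eqref{eq:ddadmm_mu}--\eqref{eq:ddadmm_lambda}: block-diagonality of $\matr{A}$ decouples the $\vectsymb{q}$-update into the per-factor quadratic programs~\eqref{eq:ddadmm_mu2}; the $\vectsymb{p}$-update is the closed form~\eqref{eq:ddadmm_zeta2}, which by Proposition~\ref{prop:ddadmm_dualfeasible} (and the initialization $\vectsymb{\lambda}^{0}=\vect{0}$) collapses to the plain average of line~\ref{algline:ad3_gather_01}; and \eqref{eq:ddadmm_lambda} is line~\ref{algline:ad3_gather_02}.

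Next I would verify the hypotheses of the classical ADMM convergence theorem (\citealp{Glowinski1989,Eckstein1992}; see also \citealp{Boyd2011}) in the fixed-penalty setting: that $-f_1,-f_2$ are closed proper convex---immediate, since $f_1$ is affine (possibly $-\infty$ where hard constraints in $\Score_{\alpha}$ forbid configurations) and $f_2\equiv 0$---and that the unaugmented Lagrangian~\eqref{eq:ddsubgrad_lagrangian} has a saddle point. For the latter: LP-MAP-P is feasible---taking every $\vectsymb{q}_{\alpha}$ uniform on $\Y_{\alpha}$ and every $\vectsymb{p}_i$ uniform on $\Y_i$ satisfies $\matr{M}_{i\alpha}\vectsymb{q}_{\alpha}=\vectsymb{p}_i$, which is exactly the initialization of Algorithm~\ref{alg:ad3}---and its feasible set is compact (the simplex constraints bound $\vectsymb{q}$, hence $\vectsymb{p}$ through the equalities), so the primal optimum is attained; since the constraints are polyhedral, strong duality holds with the dual optimum attained, and any such primal--dual pair is a saddle point.

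With the hypotheses in place, the ADMM theorem delivers parts~1, 3, and the objective half of part~2: the primal residual $\matr{A}\vectsymb{q}^{(t)}+\matr{B}\vectsymb{p}^{(t)}=(\matr{M}_{i\alpha}\vectsymb{q}_{\alpha}^{(t)}-\vectsymb{p}_i^{(t)})_{(i,\alpha)\in\sett{E}}\to\vect{0}$ (part~1), $f_1(\vectsymb{q}^{(t)})$ converges to the optimal value of LP-MAP-P, and $\vectsymb{\lambda}^{(t)}\to\vectsymb{\lambda}^{\star}$ for some dual-optimal $\vectsymb{\lambda}^{\star}$ (part~3). For the iterate claim in part~2, compactness of $\sett{Q}$ forces convergent subsequences of $(\vectsymb{q}^{(t)})$, and combining this with residual and objective convergence shows that every limit point of $(\vectsymb{p}^{(t)},\vectsymb{q}^{(t)})$ is primal-feasible and optimal; promoting this to convergence of the whole sequence is the delicate point, since $f_1$ is merely affine (not strictly concave) and $\matr{A}$ is not of full column rank, so the generic theorem does not by itself pin down the primal limit---here one must exploit the piecewise-linear structure of the objective, appealing to the more recent ADMM analyses \citep{Wang2012ICML}. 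I expect this step to be the main obstacle.

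Finally, part~4 uses only the structure already recorded and is independent of the generic theory: by Proposition~\ref{prop:ddadmm_dualfeasible}, $\sum_{\alpha\in\sett{N}(i)}\vectsymb{\lambda}_{i\alpha}^{(t)}=\vect{0}$ for every $i$ and every $t$, i.e., $\vectsymb{\lambda}^{(t)}\in\Lambda$ with $\Lambda$ as in~\eqref{eq:dualdecomplambdaset}, so $\vectsymb{\lambda}^{(t)}$ is feasible for LP-MAP-D~\eqref{eq:dualdecomp} at every iteration. By weak duality $g(\vectsymb{\lambda}^{(t)})$ is then an upper bound on the optimal value of LP-MAP (hence of MAP) for all $t$; by part~3 together with continuity of $g$ it converges to that value; therefore $g(\vectsymb{\lambda}^{(t)})$ approaches the optimum from above.
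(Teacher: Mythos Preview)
Your approach is essentially the paper's: invoke the classical ADMM convergence theorem for parts~1--3 and deduce part~4 from Proposition~\ref{prop:ddadmm_dualfeasible}, exactly as the paper does (it simply cites \citealp[Theorem~4.2]{Glowinski1989} for 1--3 and points to \eqref{eq:ddadmm_dualfeasible} for 4).

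The one place where you diverge is your hesitation about primal \emph{iterate} convergence in part~2. That caution is misplaced here: the theorems of \citet{Glowinski1989} and \citet{Eckstein1992} (the latter via the Douglas--Rachford/firmly-nonexpansive-operator viewpoint) yield convergence of the full sequence $(\vectsymb{q}^{(t)},\vectsymb{p}^{(t)},\vectsymb{\lambda}^{(t)})$, not merely objective and residual convergence as in the weaker statements of \citet{Boyd2011}. So you need not appeal to piecewise-linearity or to \citet{Wang2012ICML} for this step; the paper simply absorbs it into the cited ADMM theorem, and you may do the same.
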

\begin{proof}
1, 2, and 3 are general properties of ADMM \citep[Theorem~4.2]{Glowinski1989}. %All necessary conditions are met:
%the problem in Eq.~\ref{eq:lpmap-reformulation-copy} is convex
%and the coupling constraints are such that
%the corresponding matrix $\matr{B}$ in
%Eq.~\ref{eq:convexproblemequalityconstraints} is full column rank.
Statement 4 stems from Proposition~\ref{prop:ddadmm_dualfeasible} (simply compare
\eqref{eq:ddadmm_dualfeasible} with \eqref{eq:dualdecomplambdaset}).
\end{proof}

\bigskip
Although Proposition~\ref{prop:ddadmm1} guarantees convergence for any choice of $\eta$,
this parameter may strongly impact the behavior of the algorithm, as
illustrated in Fig.~\ref{fig:ad3_oscillations}. In our experiments, we dynamically adjust
$\eta$ in earlier iterations using the heuristic described in \citet{Boyd2011}, and freeze
it afterwards, not to compromise convergence.

\begin{figure}[h]
\centering
 \includegraphics[scale=0.6]{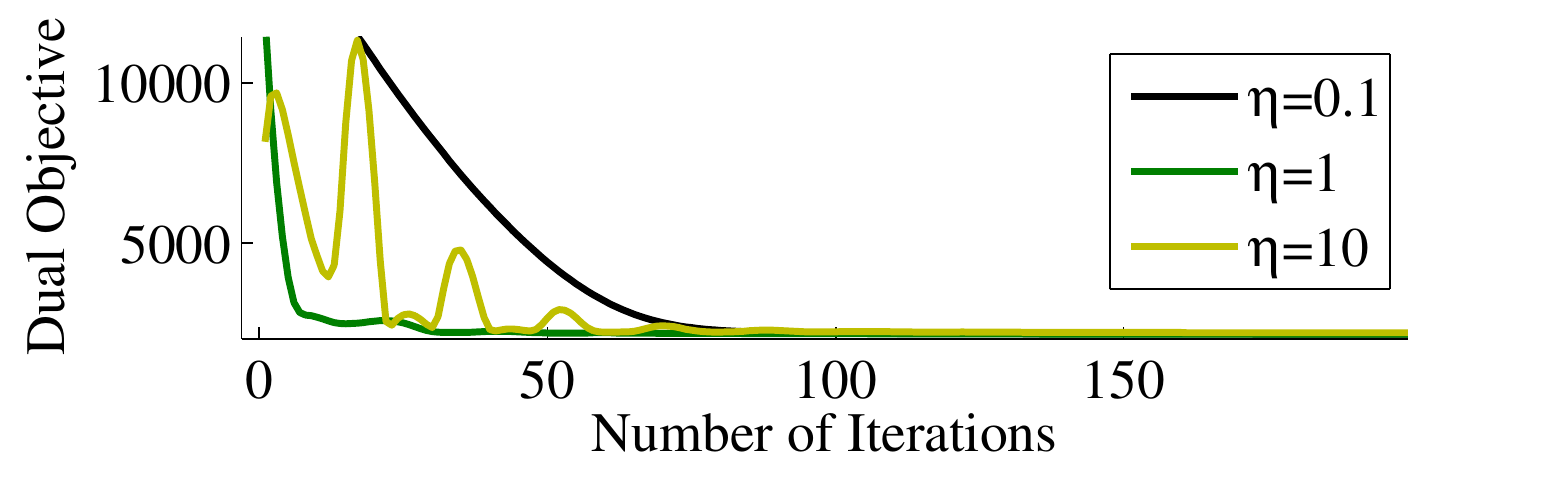}
\caption{Progress in the dual objective for different values of $\eta$
(the factor graph is a binarized $20\times 20$ Potts grid with $8$ states; see
Section~\ref{sec:ddadmm_multivalfact} for details). For $\eta=0.1$,
progress is too slow; for $\eta=10$ there are strong oscillations,
which makes the algorithm slow to take off. In this example, convergence is faster with an
intermediate value, $\eta=1$.}
 \label{fig:ad3_oscillations}
 \end{figure}

\subsubsection{Primal and dual residuals}
Recall from Proposition~\ref{prop:dd-certificate} that the projected subgradient algorithm yields
optimality certificates, if the relaxation is tight (\emph{i.e.}, if the solution of the LP-MAP
problem is the exact MAP), whereas in problems with a relaxation gap, it is harder to decide when to stop.
It would be desirable to have similar guarantees concerning the relaxed primal.%
\footnote{This is particularly important if decoding is embedded in learning, where it is
more useful to obtain a \emph{fractional} solution of the relaxed primal than an approximate
integer one \citep{Kulesza2007,Martins2009ICML}.} %
A general weakness of subgradient algorithms is that they do not have this capacity, being
usually stopped rather arbitrarily after a given number of iterations.
In contrast, ADMM allows keeping track of primal and dual residuals \citep{Boyd2011},
based on which it is possible to obtain certificates, not only for the primal solution
(if the relaxation is tight), but also to terminate when a near optimal relaxed primal
solution has been found.
The \emph{primal residual} $r_P^{(t)}$ is the amount by which the agreement constraints are violated,
\begin{eqnarray}\label{eq:admm_residualprimal}
r_P^{(t)} = \frac{\sum_{(i,\alpha) \in \sett{E}} \|\matr{M}_{i\alpha}\vectsymb{q}_{\alpha}^{(t)} - \vectsymb{p}_i^{(t)}\|^2}{\sum_{(i,\alpha) \in \sett{E}} |\Y_i|} \in [0,1],
\end{eqnarray}
where the constant in the denominator ensures that $r_P^{(t)} \in [0,1]$.
The \emph{dual residual} $r_D^{(t)}$,
\begin{eqnarray}\label{eq:admm_residualdual}
r_D^{(t)} = \frac{\sum_{(i,\alpha) \in \sett{E}} \|\vectsymb{p}_{i}^{(t)} - \vectsymb{p}_i^{(t-1)}\|^2}{\sum_{(i,\alpha) \in \sett{E}} |\Y_i|} \in [0,1],
\end{eqnarray}
is the amount by which a dual optimality condition is violated \citep{Boyd2011}. We adopt as stopping criterion that these two residuals fall
below a threshold, \emph{e.g.}, $10^{-6}$.

\subsubsection{Approximate solutions of the local subproblems}
The next proposition states that convergence may still  hold, even
if the local subproblems are solved approximately, provided the sequence of 
errors is summable. The importance of this result will be
clear in Section~\ref{sec:ad3_generalfactors}, where we describe
a general iterative algorithm for solving the local quadratic subproblems.
Essentially, Proposition~\ref{prop:ddadmm2} allows these subproblems to
be solved numerically up to some accuracy without compromising global convergence,
as long as the accuracy of the solutions improves sufficiently fast over ADMM iterations.
\begin{proposition}[\citealp{Eckstein1992}]\label{prop:ddadmm2}
Let $\eta_t=\eta$, $\widehat{\vectsymb{q}}$ contain the exact solutions of
\eqref{eq:ddadmm_mu2}, and $\tilde{\vectsymb{q}}$ those produced by an approximate algorithm. Then
Proposition~\ref{prop:ddadmm1} still holds, provided that
\begin{equation}
\sum_{t=1}^{\infty} \|\widehat{\vectsymb{q}} - \tilde{\vectsymb{q}}   \| < \infty.
\end{equation}
\end{proposition}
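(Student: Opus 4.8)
The plan is to reduce the statement to the inexact operator-splitting theory of \citet{Eckstein1992}, which is the source cited. Recall first that, as established in that reference, ADMM for \eqref{eq:convexproblemequalityconstraints} with a fixed penalty parameter $\eta$ is an instance of the Douglas--Rachford splitting method --- equivalently, of the proximal point algorithm --- applied to a maximal monotone operator $T$ acting on the multiplier $\vectsymb{\lambda}$, whose set of zeros coincides with the set of optimal solutions of LP-MAP-D \eqref{eq:dualdecomp}. Under this correspondence, one iteration \eqref{eq:ddadmm_mu}--\eqref{eq:ddadmm_lambda} carried out with the \emph{exact} maximizer $\widehat{\vectsymb{q}}^{(t)}$ of \eqref{eq:ddadmm_mu2} amounts to evaluating a firmly nonexpansive (resolvent-type) map at $\vectsymb{\lambda}^{(t-1)}$; iterating it yields convergence of $\vectsymb{\lambda}^{(t)}$ to a zero of $T$, and, by continuity of the gather step \eqref{eq:ddadmm_zeta2}, convergence of $(\vectsymb{q}^{(t)},\vectsymb{p}^{(t)})$ --- which is exactly items 1--3 of Proposition~\ref{prop:ddadmm1}. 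As an alternative, more self-contained route, one can argue directly with the standard ADMM Lyapunov function $V^{(t)} := \eta^{-1}\|\vectsymb{\lambda}^{(t)} - \vectsymb{\lambda}^\star\|^2 + \eta\|\matr{B}(\vectsymb{p}^{(t)} - \vectsymb{p}^\star)\|^2$ for an optimal pair $(\vectsymb{\lambda}^\star,\vectsymb{p}^\star)$, which in the exact case is nonincreasing and drives the primal/dual residuals to zero.

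The key extra ingredient is to quantify how replacing $\widehat{\vectsymb{q}}^{(t)}$ by an approximate solution $\tilde{\vectsymb{q}}^{(t)}$ perturbs the iteration. The gather step \eqref{eq:ddadmm_zeta2} and the multiplier update \eqref{eq:ddadmm_lambda} depend on $\vectsymb{q}^{(t)}$ only through the quantities $\matr{M}_{i\alpha}\vectsymb{q}_\alpha^{(t)}$, i.e., through $\matr{A}\vectsymb{q}^{(t)}$. Since $\matr{A}$ is a fixed matrix and the averaging in \eqref{eq:ddadmm_zeta2} is nonexpansive, the pair $(\vectsymb{p}^{(t)},\vectsymb{\lambda}^{(t)})$ produced from $\tilde{\vectsymb{q}}^{(t)}$ differs from the one that would be produced from $\widehat{\vectsymb{q}}^{(t)}$ by at most $c\,\|\widehat{\vectsymb{q}}^{(t)} - \tilde{\vectsymb{q}}^{(t)}\|$, with $c$ depending only on $\eta$ and $\matr{A}$. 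Hence the hypothesis $\sum_{t} \|\widehat{\vectsymb{q}} - \tilde{\vectsymb{q}}\| < \infty$ implies that the sequence of errors injected into the proximal-point recursion --- equivalently, the extra additive terms appearing in the recursion for $V^{(t)}$ --- is absolutely summable.

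It then remains to invoke the inexact-resolvent convergence theorem of \citet{Eckstein1992} (their generalization of Rockafellar's criterion for the proximal point algorithm): for a maximal monotone operator possessing a zero, absolutely summable errors in evaluating the resolvents still yield convergence of the iterates to a zero of the operator. Combined with the error bound above this gives convergence of $\vectsymb{\lambda}^{(t)}$ to a dual-optimal point, whence items 1--3 of Proposition~\ref{prop:ddadmm1} follow verbatim; in the Lyapunov-function route one instead applies a Robbins--Siegmund / quasi-Fej\'er argument to the perturbed inequality for $V^{(t)}$. Item 4 --- dual feasibility $\sum_{\alpha \in \sett{N}(i)}\vectsymb{\lambda}_{i\alpha}^{(t)} = \vect{0}$ for all $t$ --- is untouched by inexactness, since its proof (Proposition~\ref{prop:ddadmm_dualfeasible}) is a purely algebraic induction on the update rule \eqref{eq:ddadmm_lambda} together with the averaging \eqref{eq:ddadmm_zeta2}, using nowhere that $\vectsymb{q}^{(t)}$ solves \eqref{eq:ddadmm_mu2} exactly.

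The main obstacle is the bookkeeping in the reduction: one must make the ADMM-as-proximal-point correspondence explicit enough that ``approximately maximizing $L_\eta$ in $\vectsymb{q}$'' maps cleanly onto ``approximately evaluating a resolvent with a controlled additive error,'' and, crucially, verify that the error term in the Eckstein--Bertsekas criterion is precisely the distance from the computed point to the true resolvent image --- so that it can be bounded in terms of $\|\widehat{\vectsymb{q}} - \tilde{\vectsymb{q}}\|$, rather than, say, in terms of the suboptimality gap in the subproblem objective. Once this translation is in place, the convergence conclusion is a direct citation.
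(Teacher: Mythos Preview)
The paper does not supply its own proof of this proposition: it is stated with attribution to \citet{Eckstein1992} and nothing more. Your proposal is therefore not competing against an argument in the paper but rather filling in the derivation that the citation stands in for, and it does so correctly. The reduction of ADMM with fixed $\eta$ to Douglas--Rachford / proximal-point on a maximal monotone operator is precisely the content of Eckstein--Bertsekas, and their inexact-resolvent theorem (summable errors suffice for convergence) is exactly the result being invoked. Your observation that item~4 (dual feasibility, Proposition~\ref{prop:ddadmm_dualfeasible}) survives inexactness because its proof is purely algebraic in the update rules is also correct and worth stating, since the paper does not spell this out.

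The one place where you rightly flag caution --- translating $\|\widehat{\vectsymb{q}}-\tilde{\vectsymb{q}}\|$ into the error quantity appearing in the Eckstein--Bertsekas criterion --- is indeed the only real work, but your Lipschitz argument through $\matr{A}$ and the affine gather/multiplier updates is the right mechanism. In short: your sketch is sound and is the intended route; the paper simply outsources it to the citation.
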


\subsubsection{Convergence rate}

Although ADMM was invented in the 1970s, its convergence rate was unknown until recently.
The next proposition states the $O(1/\epsilon)$ iteration bound of AD$^3$, asymptotically equivalent
to that of the algorithm of \citet{Jojic2010}, therefore better than the $O(1/\epsilon^2)$
bound of Algorithm~\ref{alg:ddsubgrad}.

\begin{proposition} Assume the conditions of Proposition~\ref{prop:ddadmm1}.
Let $\vectsymb{\lambda}^{*}$ be a solution of the dual problem \eqref{eq:dualdecomp},
$\bar{\vectsymb{\lambda}}_T := \sum_{t=1}^T \vectsymb{\lambda}^{(t)}$ be the ``averaged''
Lagrange multipliers after $T$ iterations of AD$^3$, and $g(\bar{\vectsymb{\lambda}}_T)$ the
corresponding estimate of the dual objective. Then, $g(\bar{\vectsymb{\lambda}}_T) - g(\vectsymb{\lambda}^{*}) \leq \epsilon$
after $T \le O(C/\epsilon)$ iterations, where
$C$ satisfies
\begin{equation}\label{eq:ad3_constant_C_finalbound}
C \le \frac{5\eta}{2}
\sum_{i\in\sett{V}} |\sett{N}(i)|\times (1 - |\Y_i|^{-1}) +
\frac{5}{2\eta} \|\vectsymb{\lambda}^{*}\|^2.
\end{equation}
\end{proposition}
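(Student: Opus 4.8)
The plan is to recognize AD$^{3}$ as ADMM applied to the LP-MAP-P problem \eqref{eq:lpmap-reformulation} under the correspondence spelled out in Section~\ref{sec:ad3-derivation} ($f_{1}$ linear, $f_{2}\equiv 0$, $\sett{Q}=\prod_{\alpha}\Delta^{|\Y_{\alpha}|}$ compact, $\sett{P}=\set{R}^{P}$, $\vectsymb{c}=\vect{0}$, and $\matr{A},\matr{B}$ the stated structured matrices), and then to import the recently-established $O(1/T)$ ergodic convergence rate of ADMM (the blend of \citealp{Glowinski1989,Eckstein1992,Wang2012ICML}), reading off its constant for this particular instance. The fact that makes the dual objective $g$ manageable is Proposition~\ref{prop:ddadmm_dualfeasible}: every $\vectsymb{\lambda}^{(t)}$, hence also the ergodic average $\bar{\vectsymb{\lambda}}_{T}$ (read as $\tfrac{1}{T}\sum_{t=1}^{T}\vectsymb{\lambda}^{(t)}$), lies in $\Lambda$, i.e. $\matr{B}^{\top}\vectsymb{\lambda}^{(t)}=\vect{0}$, so $g(\vectsymb{\lambda}^{(t)})=\max_{\vectsymb{q}\in\sett{Q}}\big(f_{1}(\vectsymb{q})+\langle\vectsymb{\lambda}^{(t)},\matr{A}\vectsymb{q}\rangle\big)$, the maximum being attained at a vertex $\check{\vectsymb{q}}^{(t)}\in\sett{Q}$ --- a local MAP configuration, in general distinct from the quadratic-program iterate $\vectsymb{q}^{(t)}$ returned by \textsc{SolveQP}.

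First I would write the three first-order optimality conditions of ADMM step $t$: the $\vectsymb{q}$-update gives a variational inequality over $\sett{Q}$, which I rewrite --- using $\matr{B}\vectsymb{p}^{(t-1)}=\matr{B}\vectsymb{p}^{(t)}-\matr{B}(\vectsymb{p}^{(t)}-\vectsymb{p}^{(t-1)})$ together with the multiplier update \eqref{eq:ddadmm_lambda} --- so that $\vectsymb{\lambda}^{(t-1)}$ is replaced by $\vectsymb{\lambda}^{(t)}$ at the price of a mixed term $\eta\langle\matr{B}(\vectsymb{p}^{(t)}-\vectsymb{p}^{(t-1)}),\matr{A}(\vectsymb{q}-\vectsymb{q}^{(t)})\rangle$; the $\vectsymb{p}$-update (unconstrained, $f_{2}\equiv 0$) gives $\matr{B}^{\top}\vectsymb{\lambda}^{(t)}=\vect{0}$; and the multiplier update gives $\matr{A}\vectsymb{q}^{(t)}+\matr{B}\vectsymb{p}^{(t)}=\eta^{-1}(\vectsymb{\lambda}^{(t-1)}-\vectsymb{\lambda}^{(t)})$. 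Adding these and disposing of the mixed term --- splitting $\matr{A}(\vectsymb{q}-\vectsymb{q}^{(t)})$ and substituting $\matr{A}\vectsymb{q}^{(t)}=\eta^{-1}(\vectsymb{\lambda}^{(t-1)}-\vectsymb{\lambda}^{(t)})-\matr{B}\vectsymb{p}^{(t)}$ so the resulting pieces telescope or are bounded over the compact $\sett{Q}$ --- one obtains (this is the substance of the cited $O(1/T)$ ADMM analysis, e.g. \citealp{Wang2012ICML}), after summing $t=1,\dots,T$ and using that $f_{1},f_{2}$ are linear (so averaging commutes with them) and $\sett{Q}$ is convex: for every $\vectsymb{q}\in\sett{Q}$, $\vectsymb{p}\in\set{R}^{P}$, $\vectsymb{\lambda}$,
\begin{equation*}
\big[f_{1}(\vectsymb{q})+f_{2}(\vectsymb{p})+\langle\bar{\vectsymb{\lambda}}_{T},\matr{A}\vectsymb{q}+\matr{B}\vectsymb{p}\rangle\big]-\big[f_{1}(\bar{\vectsymb{q}}_{T})+f_{2}(\bar{\vectsymb{p}}_{T})+\langle\vectsymb{\lambda},\matr{A}\bar{\vectsymb{q}}_{T}+\matr{B}\bar{\vectsymb{p}}_{T}\rangle\big]\;\le\;\frac{C(\vectsymb{q},\vectsymb{p},\vectsymb{\lambda})}{T},
\end{equation*}
where $\bar{\vectsymb{q}}_{T},\bar{\vectsymb{p}}_{T}$ are the ergodic primal averages and $C(\vectsymb{q},\vectsymb{p},\vectsymb{\lambda})$ is a bounded multiple of $\|(\vectsymb{p},\vectsymb{\lambda})-(\vectsymb{p}^{(0)},\vectsymb{\lambda}^{(0)})\|_{\matr{H}}^{2}$ with $\matr{H}=\mathrm{diag}(\eta\matr{B}^{\top}\matr{B},\eta^{-1}\matr{I})$.

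Next I would specialize to the comparison point $(\vectsymb{q},\vectsymb{p},\vectsymb{\lambda})=(\check{\vectsymb{q}},\vectsymb{p}^{*},\vectsymb{\lambda}^{*})$, where $(\vectsymb{p}^{*},\vectsymb{q}^{*})$ solves LP-MAP-P, $\vectsymb{\lambda}^{*}$ solves LP-MAP-D \eqref{eq:dualdecomp}, and $\check{\vectsymb{q}}\in\sett{Q}$ maximizes $f_{1}(\vectsymb{q})+\langle\bar{\vectsymb{\lambda}}_{T},\matr{A}\vectsymb{q}\rangle$ (so that $f_{1}(\check{\vectsymb{q}})+\langle\bar{\vectsymb{\lambda}}_{T},\matr{A}\check{\vectsymb{q}}\rangle=g(\bar{\vectsymb{\lambda}}_{T})$). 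Using $\matr{B}^{\top}\bar{\vectsymb{\lambda}}_{T}=\vect{0}$, $f_{2}\equiv 0$, $\vectsymb{c}=\vect{0}$, and primal feasibility $\matr{A}\vectsymb{q}^{*}+\matr{B}\vectsymb{p}^{*}=\vect{0}$, the left-hand side collapses to $g(\bar{\vectsymb{\lambda}}_{T})-L(\bar{\vectsymb{q}}_{T},\bar{\vectsymb{p}}_{T},\vectsymb{\lambda}^{*})$ with $L$ the Lagrangian \eqref{eq:ddsubgrad_lagrangian}; since $\bar{\vectsymb{q}}_{T}\in\sett{Q}$ and $L(\bar{\vectsymb{q}}_{T},\bar{\vectsymb{p}}_{T},\vectsymb{\lambda}^{*})\le\max_{\vectsymb{q}\in\sett{Q},\vectsymb{p}}L(\vectsymb{q},\vectsymb{p},\vectsymb{\lambda}^{*})=g(\vectsymb{\lambda}^{*})$ by weak duality, we get $g(\bar{\vectsymb{\lambda}}_{T})-g(\vectsymb{\lambda}^{*})\le C/T$ with $C$ a bounded multiple of $\eta\|\matr{B}(\vectsymb{p}^{*}-\vectsymb{p}^{(0)})\|^{2}+\eta^{-1}\|\vectsymb{\lambda}^{*}\|^{2}$. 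Finally I would evaluate the first norm: since $\vectsymb{p}_{i}^{(0)}=|\Y_{i}|^{-1}\vect{1}$ and $\langle\vect{1},\vectsymb{p}_{i}^{*}\rangle=1$,
\begin{equation*}
\|\matr{B}(\vectsymb{p}^{*}-\vectsymb{p}^{(0)})\|^{2}=\sum_{(i,\alpha)\in\sett{E}}\|\vectsymb{p}_{i}^{*}-\vectsymb{p}_{i}^{(0)}\|^{2}=\sum_{(i,\alpha)\in\sett{E}}\big(\|\vectsymb{p}_{i}^{*}\|^{2}-|\Y_{i}|^{-1}\big)\;\le\;\sum_{i\in\sett{V}}|\sett{N}(i)|\,(1-|\Y_{i}|^{-1}),
\end{equation*}
using $\|\vectsymb{p}_{i}^{*}\|^{2}\le 1$ for $\vectsymb{p}_{i}^{*}\in\Delta^{|\Y_{i}|}$ and that $i$ appears in $|\sett{N}(i)|$ edges; together with $\|\vectsymb{\lambda}^{(0)}-\vectsymb{\lambda}^{*}\|^{2}=\|\vectsymb{\lambda}^{*}\|^{2}$ this yields \eqref{eq:ad3_constant_C_finalbound}, the numerical factor $5/2$ (rather than $1/2$) absorbing the bounded residual cross terms generated when the mixed term was disposed of.

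The hard part is the $O(1/T)$ ADMM rate itself --- specifically the clean handling of the mixed term $\eta\langle\matr{B}(\vectsymb{p}^{(t)}-\vectsymb{p}^{(t-1)}),\matr{A}(\vectsymb{q}-\vectsymb{q}^{(t)})\rangle$ produced by the $\vectsymb{q}$-subproblem, which is the technical core I would import from the cited literature rather than reprove. The instance-specific point worth flagging is the discrepancy between $g(\vectsymb{\lambda}^{(t)})$, evaluated at the \emph{linear}-subproblem (MAP) maximizer, and the algorithm's iterate $\vectsymb{q}^{(t)}$, which solves the \emph{quadratic} subproblem: this is what forces the argument to feed the \emph{fixed} maximizer $\check{\vectsymb{q}}$ for $\bar{\vectsymb{\lambda}}_{T}$ into the \emph{ergodic} inequality above, rather than comparing $g(\vectsymb{\lambda}^{(t)})$ with $\vectsymb{q}^{(t)}$ term by term --- the latter would leave an $O(\|\vectsymb{p}^{(t)}-\vectsymb{p}^{(t-1)}\|)$ residual and, after Cauchy--Schwarz, only yield the weaker $O(1/\sqrt{T})$ rate.
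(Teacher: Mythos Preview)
Your high-level strategy matches the paper's: cast AD$^{3}$ as ADMM on \eqref{eq:lpmap-reformulation}, invoke the ergodic $O(1/T)$ saddle-point bound of \citet{He2011,Wang2012ICML}, and read off the constant for the uniform initialization. The paper packages the rate as a variational inequality (its Proposition~\ref{prop:convergence_admm_vi}) and then extracts the dual gap from it (Proposition~\ref{prop:convergence_admm_dual}) before specializing to AD$^{3}$ (Proposition~\ref{prop:convergence_ad3_dual}).

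Where you diverge---and your route is in fact cleaner---is in how the dual gap $g(\bar{\vectsymb{\lambda}}_T)-g(\vectsymb{\lambda}^*)$ is extracted. The paper applies its variational bound \emph{twice}: once with $\vectsymb{w}=(\bar{\vectsymb{q}}_T,\bar{\vectsymb{p}}_T,\vectsymb{\lambda}^*)$ to control the residual term, and once with $\vectsymb{w}=(\widehat{\vectsymb{q}}_T,\widehat{\vectsymb{p}}_T,\bar{\vectsymb{\lambda}}_T)$ where $(\widehat{\vectsymb{q}}_T,\widehat{\vectsymb{p}}_T)$ maximizes $L(\cdot,\cdot,\bar{\vectsymb{\lambda}}_T)$. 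The second application leaves a $\|\bar{\vectsymb{\lambda}}_T\|^2$ term that is $T$-dependent, forcing a separate Lyapunov argument (via $V^t=\eta^{-1}\|\vectsymb{\lambda}^*-\vectsymb{\lambda}^t\|^2+\eta\|\matr{B}(\vectsymb{p}^*-\vectsymb{p}^t)\|^2$) to obtain $\|\bar{\vectsymb{\lambda}}_T\|^2\le 4\|\vectsymb{\lambda}^*\|^2+3\eta^2\|\matr{A}\vectsymb{q}^*+\matr{B}\vectsymb{p}^0\|^2$. The factor $5/2$ is literally $\tfrac{1}{2}(1+1+3)$ on the $\eta$-side and $\tfrac{1}{2}(1+4)$ on the $\eta^{-1}$-side. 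Your single application with comparison point $(\check{\vectsymb{q}},\vectsymb{p}^*,\vectsymb{\lambda}^*)$, together with $\matr{B}^\top\bar{\vectsymb{\lambda}}_T=\vect{0}$ and $L(\bar{\vectsymb{q}}_T,\bar{\vectsymb{p}}_T,\vectsymb{\lambda}^*)\le g(\vectsymb{\lambda}^*)$, sidesteps both the second application and the Lyapunov step.

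The soft spot is your accounting for the $5/2$. With either the He--Yuan $(\vectsymb{p},\vectsymb{\lambda})$-metric form you quote or the paper's own form $C=\tfrac{\eta}{2}\|\matr{A}\vectsymb{q}+\matr{B}\vectsymb{p}^{0}-\vectsymb{c}\|^2+\tfrac{1}{2\eta}\|\vectsymb{\lambda}\|^2$, your single-application route delivers the constant $\tfrac{1}{2}$, not $\tfrac{5}{2}$: once the telescoping is done cleanly there are no leftover ``residual cross terms'' to absorb (indeed $\check{\vectsymb{q}}$ is a vertex of $\sett{Q}$, so $\|\matr{A}\check{\vectsymb{q}}+\matr{B}\vectsymb{p}^{0}\|^2$ attains the maximum you compute). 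So either commit to your route and state the sharper $\tfrac{1}{2}$, or, if you want to reproduce the paper's $\tfrac{5}{2}$ verbatim, you need its two-application-plus-Lyapunov route---in which case the origin of the constant is the concrete bookkeeping above, not the vague mechanism you describe.
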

\begin{proof}
The proof (detailed in Appendix~\ref{chap:ad3rate})
uses recent results of \citet{He2011} and \citet{Wang2012ICML}, concerning convergence of ADMM in a variational inequality setting.
\end{proof}

\bigskip

As expected, the bound \eqref{eq:ad3_constant_C_finalbound} increases
with the number of overlapping variables,
the size of the sets $\Y_i$, and the magnitude of the optimal dual vector $\vectsymb{\lambda}^{*}$.
Note that if there is a good estimate of $\|\vectsymb{\lambda}^{*}\|$, then \eqref{eq:ad3_constant_C_finalbound}
can be used to choose a step size $\eta$ that minimizes the bound.
Moreover, unlike the $O(1/\epsilon)$ bound of the accelerated method of
\citet{Jojic2010}, which requires specifying $\epsilon$ beforehand,
AD$^3$ does not require pre-specifying a desired accuracy.

The bounds derived so far for all these algorithms are with respect to the \emph{dual} problem---an open
problem is to obtain bounds related to primal convergence.

%\ericx{Please also comment on differences (if any) to a number of other $O(1/\epsilon)$ algorithms such as smoothing proximal gradient, FESTA, etc.}

\subsubsection{Runtime and caching strategies}
%\paragraph{Runtime and caching strategies.}

In practice, considerable speed-ups can be achieved by caching the subproblems,
a strategy which has also been proposed for the projected subgradient algorithm
\citep{Koo2010EMNLP}.
After a few iterations, many variables $\vectsymb{p}_i$ reach a consensus
(\emph{i.e.},  $\vectsymb{p}_i^{(t)}=\vectsymb{q}_{i\alpha}^{(t+1)}, \forall \alpha \in \sett{N}(i)$)
and enter an idle state: they are left unchanged by the $\vectsymb{p}$-update \eqref{eq:ddadmm_zeta_update2},
and so do the Lagrange variables $\vectsymb{\lambda}^{(t+1)}_{i\alpha}$ \eqref{eq:ddadmm_lambda}).
If at iteration $t$ all variables in a subproblem at factor $\alpha$ are idle,
then $\vectsymb{q}_{\alpha}^{(t+1)}=\vectsymb{q}_{\alpha}^{(t)}$, hence the corresponding subproblem
does not need to be solved.  Typically, many variables and subproblems enter this idle state
after the first few rounds.
We will show the practical benefits of caching in the experimental section (Section~\ref{sec:exp_dependency_parsing}).

\subsection{Exact Inference with Branch-and-Bound}\label{sec:ad3_bnb}

%\afm{Only include this section if we give experimental results. Otherwise just cite the
%starsem paper: \citep{Das2012StarSEM}}

Recall that
AD$^3$, as just described, solves the LP-MAP \emph{relaxation} of the actual problem.
In some problems, this relaxation is
tight (in which case a certificate of optimality will be obtained), but this is not always the case.
When a fractional solution is obtained, it is desirable to have a strategy to recover the
exact MAP solution.

Two observations are noteworthy.
First, as we saw in Section~\ref{sec:lpmap}, the optimal value of the relaxed problem LP-MAP provides an upper bound to the
original problem MAP.
In particular, any feasible dual point
provides an upper bound to the original problem's optimal value.
Second, during execution of the AD$^3$ algorithm, we always keep track
of a sequence of feasible dual points (as guaranteed by Proposition~\ref{prop:ddadmm1}, item 4).
Therefore, each iteration
constructs tighter and tighter upper bounds.
In recent work \citep{Das2012StarSEM}, we proposed a
\emph{branch-and-bound
search} procedure that finds the exact solution of the ILP.
The procedure works recursively as follows: %\nascomment{might be worth moving into an official ``algorithm''}
\begin{enumerate}
\item Initialize $L=-\infty$ (our best value so far).
\item Run Algorithm~\ref{alg:ad3}. If the solution $\vectsymb{p}^*$ is integer, return $\vectsymb{p}^*$
and set $L$ to the objective value.
If along the execution we obtain an upper bound less than $L$, then
Algorithm~\ref{alg:ad3}  can be safely stopped and return ``infeasible''---this is the \emph{bound} part.
Otherwise (if $\vectsymb{p}^*$ is fractional) go to step 3.
\item Find the ``most fractional'' component of $\vectsymb{p}^*$ (call it $p_j^*(.))$
and \emph{branch}: for every $y_j \in \Y_j$,
constrain $p_j(y_j)=1$ and go to step 2,
eventually obtaining an integer solution $\vectsymb{p}^*|_{y_j}$ or infeasibility.
Return the $\vectsymb{p}^* \in \{\vectsymb{p}^*|_{y_j}\}_{y_j \in \Y_j}$
that yields the largest objective value.
\end{enumerate}
Although this procedure has worst-case exponential runtime,
in many problems for which the relaxations are near-exact
it is found empirically very effective.
We will see one example in Section~\ref{sec:ad3_framenet}.

\section{Local Subproblems in AD$^3$}\label{sec:ad3_subproblems}
\subsection{Introduction}
This section shows how to solve the AD$^3$ local subproblems \eqref{eq:ddadmm_mu2} exactly and efficiently,
in several cases, including Ising models and logic constraint factors. These results will be complemented
in Section~\ref{sec:ad3_generalfactors}, where a new procedure to handle \emph{arbitrary} factors widens
the applicability of AD$^3$.

By subtracting a constant from the objective \eqref{eq:ddadmm_mu2}, re-scaling, and turning the
maximization into a minimization, the problem can be written more compactly as
\begin{align}\label{eq:ddadmm_quad}
\mathrm{minimize} \quad
&
\displaystyle
\frac{1}{2}
\sum_{i \in \sett{N}(\alpha)}
\|\vectsymb{q}_{i\alpha} - \vectsymb{a}_i\|^2
-
\DP{\vectsymb{b}_{\alpha}}{\vectsymb{q}_{\alpha}} \nonumber\\
\text{with respect to}  \quad
& \vectsymb{q}_{\alpha} \in \Delta^{|\Y_{\alpha}|},\,\,  \vectsymb{q}_{i\alpha} \in \set{R}^{|\Y_i|},\,\,\forall i \in \sett{N}(\alpha)\nonumber\\
\text{subject to} \quad
& \vectsymb{q}_{i\alpha} = \matr{M}_{i\alpha}\vectsymb{q}_{\alpha},\,\, \forall i \in \sett{N}(\alpha).
\end{align}
%\begin{align}\label{eq:ddadmm_quad}
%\mathrm{minimize} \quad
%&
%\displaystyle
%\frac{1}{2}
%\sum_{i \in \sett{N}(\alpha)}
%\|\matr{M}_{i\alpha}\vectsymb{q}_{\alpha} - \vectsymb{a}_i\|^2
%-
%\DP{\vectsymb{b}_{\alpha}}{\vectsymb{q}_{\alpha}} \nonumber\\
%\text{with respect to}  \quad
%& \vectsymb{q}_{\alpha} \in \Delta^{|\Y_{\alpha}|},
%\end{align}
where $\vectsymb{a}_i := \vectsymb{p}_i + \eta^{-1}(\Score_{i\alpha} + \vectsymb{\lambda}_{i\alpha})$ and $\vectsymb{b}_{\alpha} = \eta^{-1}\Score_{\alpha}$.

We show that \eqref{eq:ddadmm_quad} has a closed-form solution or
can be solved exactly and efficiently, in several cases; \emph{e.g.},
for Ising models, for factor graphs imposing first-order logic (FOL) constraints,
and for Potts models (after binarization).
In these cases, AD$^3$ and the projected subgradient algorithm have (asymptotically)
the same computational cost per iteration, up to a logarithmic factor.

%\subsection{Binary Pairwise Factors}\label{sec:ddadmm_binpairfact}
\subsection{Ising Models}\label{sec:ddadmm_binpairfact}

Ising models are factor graphs containing only binary pairwise factors.
A binary pairwise factor (say, $\alpha$) is one connecting two binary variables
(say, $Y_1$ and $Y_2$); thus $\Y_1 = \Y_2 = \{0,1\}$ and $\Y_{\alpha} = \{00,01,10,11\}$.
Given that $\vectsymb{q}_{1\alpha} , \vectsymb{q}_{2\alpha} \in \Delta^2$, we can write
$\vectsymb{q}_{1\alpha} = (1-z_1, z_1)$,  $\vectsymb{q}_{2\alpha} = (1-z_2, z_2)$.
Furthermore, since $\vectsymb{q}_{\alpha} \in \Delta^4$ and marginalization requires
that $q_{\alpha}(1,1) + q_{\alpha}(1,0) = z_1$ and $q_{\alpha}(0,1) + q_{\alpha}(1,1) = z_2$,
we can also write $\vectsymb{q}_{\alpha} = (1-z_1-z_2+z_{12}, z_1-z_{12}, z_2-z_{12}, z_{12})$.
Using this parametrization, problem \eqref{eq:ddadmm_quad} reduces to:
%\begin{align}\label{eq:pairwise00}
%\mathrm{minimize}\quad &
%\frac{1}{2} \left[
%\left(1-z_1 - a_1(0)\right)^2 +
%\left(z_1 - a_1(1)\right)^2 +
%\left(1-z_2 - a_2(0)\right)^2 +
%\left(z_2 -a_2(1)\right)^2
%\right]
%\nonumber\\
%\quad &{-\left[ b_{12}(00)(1-z_1-z_2+z_{12}) + b_{12}(10)(z_1-z_{12}) +
%b_{12}(01)(z_2-z_{12}) + b_{12}(11)z_{12}\right]} \nonumber\\
%\text{with respect to} \quad& z_{1},z_2,z_{12}\in [0,1]^3\nonumber\\
%\text{subject to} \quad& z_{12} \le z_1, \quad z_{12} \le z_2, \quad z_{12} \ge z_1+z_2-1,
%\end{align}
%or,
%multiplying the objective by the constant $\frac{1}{2}$:
\begin{align}\label{eq:pairwise01_main}
\mathrm{minimize} \quad & \frac{1}{2}(z_1 - c_1)^2 + \frac{1}{2}(z_2 - c_2)^2 - c_{12} z_{12} \nonumber\\
\text{with respect to} \quad & z_{1},z_2,z_{12}\in [0,1]^3\nonumber\\
\text{subject to} \quad & z_{12} \le z_1, \quad z_{12} \le z_2, \quad z_{12} \ge z_1+z_2-1,
\end{align}
%In Appendix~\ref{sec:ddadmm_binpairfact_appendix}, we rewrite the problem in Eq.~\ref{eq:ddadmm_quad2} as:
%\begin{align}\label{eq:pairwise01_main}
%\mathrm{minimize} \quad & \frac{1}{2}(z_1 - c_1)^2 + \frac{1}{2}(z_2 - c_2)^2 - c_{12} z_{12} \nonumber\\
%\text{with respect to} \quad & z_{1},z_2,z_{12}\in [0,1]^3\nonumber\\
%\text{subject to} \quad & z_{12} \le z_1, \quad z_{12} \le z_2, \quad z_{12} \ge z_1+z_2-1,
%\end{align}
where
\begin{eqnarray}
c_1 &=& \frac{a_{1\alpha}(1) + 1 - a_{1\alpha}(0) - b_{\alpha}(0,0) + b_{\alpha}(1,0)}{2}\\
c_2 &=& \frac{a_{2\alpha}(1) + 1 - a_{2\alpha}(0) - b_{\alpha}(0,0) + b_{\alpha}(0,1)}{2}\\
c_{12} &=& \frac{b_{\alpha}(0,0) - b_{\alpha}(1,0) - b_{\alpha}(0,1) + b_{\alpha}(1,1)}{2}.
\end{eqnarray}

\if 0
Observe that we can assume $c_{12}\ge 0$ without loss of generality---indeed, if $c_{12}<0$, we
recover this case by redefining $c_1'=c_1+c_{12}$, $c_2'=1-c_2$, $c_{12}'=-c_{12}$, $z'_2 = 1-z_2$,
$z'_{12} = z_1-z_{12}$.
Thus, assuming that $c_{12}\ge 0$, the lower bound constraints $z_{12} \ge z_1+z_2-1$ and $z_{12} \ge 0$
are always inactive and can be ignored. Hence, \eqref{eq:pairwise01_main} can be simplified to:
\begin{align}\label{eq:pairwise02_main}
\mathrm{minimize} \quad & \frac{1}{2}(z_1 - c_1)^2 + \frac{1}{2}(z_2 - c_2)^2 - c_{12} z_{12} \nonumber\\
\text{with respect to} \quad & z_1,z_2,z_{12} \nonumber\\
\text{subject to} \quad & z_{12} \le z_1, \quad z_{12} \le z_2, \quad z_1 \in [0,1], \quad z_2 \in [0,1].
\end{align}
\fi

The next proposition (proved in Appendix \ref{sec:ddadmm_binpairfact_appendix})
establishes a closed form solution for this problem,
which immediately translates into a procedure for {\sc SolveQP} for
binary pairwise factors.
%\begin{proposition}\label{prop:ad3_pairwisebinary_solution}
%The problem in Eq.~\ref{eq:pairwise02_main}, with $c_{12}\ge 0$, has the following closed form solution:
%\begin{eqnarray}\label{eq:ad3_slave_pairwise}
%(z_1^*, z_2^*) &=& \left\{
%\begin{array}{lll}
%([c_1]_{\set{U}}, & [c_2+c_{12}]_{\set{U}}) & \text{if $c_1 > c_2+c_{12}$}\\
%([c_1+c_{12}]_{\set{U}}, & [c_2]_{\set{U}}) & \text{if $c_2 > c_1+c_{12}$}\\
%([(c_1+c_2+c_{12})/2]_{\set{U}}, & [(c_1+c_2+c_{12})/2]_{\set{U}}) & \text{otherwise,}
%\end{array}
%\right.
%\end{eqnarray}
%and $z_{12}^* = \min\{z_1^*, z_2^*\}$,
%where
%$[x]_{\set{U}} := \min\{\max\{x,0\},1\}$ denotes projection (clipping) onto the unit interval $\set{U}:=[0,1]$.
%\end{proposition}
\begin{proposition}\label{prop:ad3_pairwisebinary_solution}
Let $[x]_{\set{U}} := \min\{\max\{x,0\},1\}$ denote projection (clipping) onto the unit interval $\set{U}:=[0,1]$. The solution
$(z_1^*,z_2^*,z_{12}^*)$ of problem \eqref{eq:pairwise01_main} is the following.

If $c_{12}\ge 0$,
\begin{eqnarray}\label{eq:ad3_slave_pairwise_01}
(z_1^*, z_2^*) &=& \left\{
\begin{array}{lll}
([c_1]_{\set{U}}, & [c_2+c_{12}]_{\set{U}}), & \text{if $c_1 > c_2+c_{12}$}\\
([c_1+c_{12}]_{\set{U}}, & [c_2]_{\set{U}}), & \text{if $c_2 > c_1+c_{12}$}\\
([(c_1+c_2+c_{12})/2]_{\set{U}}, & [(c_1+c_2+c_{12})/2]_{\set{U}}), & \text{otherwise,}
\end{array}
\right.\nonumber\\
z_{12}^* &=& \min\{z_1^*, z_2^*\};
\end{eqnarray}
otherwise~,
\begin{eqnarray}\label{eq:ad3_slave_pairwise_02}
(z_1^*, z_2^*) &=& \left\{
\begin{array}{lll}
([c_1+c_{12}]_{\set{U}}, & [c_2+c_{12}]_{\set{U}}), & \text{if $c_1 + c_2+2c_{12} > 1$}\\
([c_1]_{\set{U}}, & [c_2]_{\set{U}}), & \text{if $c_1+c_2 < 1$}\\
([(c_1+1-c_2)/2]_{\set{U}}, & [(c_2+1-c_1)/2]_{\set{U}}), & \text{otherwise,}
\end{array}
\right.\nonumber\\
z_{12}^* &=& \max\{0, z_1^*+z_2^*-1\}.
\end{eqnarray}
\end{proposition}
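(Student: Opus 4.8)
The plan is to exploit two structural features of \eqref{eq:pairwise01_main}: it is a convex quadratic program (its Hessian is $\mathrm{diag}(1,1,0)\succeq 0$), and its objective is \emph{linear} in $z_{12}$, so $z_{12}$ can be eliminated first. For a fixed feasible pair $(z_1,z_2)\in[0,1]^2$, the constraints $z_{12}\le z_1$, $z_{12}\le z_2$, $z_{12}\ge z_1+z_2-1$ together with $z_{12}\in[0,1]$ are equivalent to $z_{12}\in\big[\max\{0,z_1+z_2-1\},\,\min\{z_1,z_2\}\big]$, and this interval is nonempty precisely because $z_1,z_2\in[0,1]$. Since the coefficient of $z_{12}$ in the objective is $-c_{12}$, the optimal $z_{12}$ is the right endpoint $\min\{z_1,z_2\}$ when $c_{12}\ge0$ and the left endpoint $\max\{0,z_1+z_2-1\}$ when $c_{12}<0$; this already yields the stated formulas for $z_{12}^*$ once $(z_1^*,z_2^*)$ is known. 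Substituting back leaves a two-variable strictly convex problem over the square $[0,1]^2$: minimize $\tfrac12(z_1-c_1)^2+\tfrac12(z_2-c_2)^2-c_{12}\min\{z_1,z_2\}$ if $c_{12}\ge0$, and $\tfrac12(z_1-c_1)^2+\tfrac12(z_2-c_2)^2+|c_{12}|\max\{0,z_1+z_2-1\}$ if $c_{12}<0$.

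The second step is to solve each two-variable problem by partitioning $[0,1]^2$ along the kink locus of the coupling term. For $c_{12}\ge0$, split into $R_1=\{z_1\le z_2\}$ and $R_2=\{z_1\ge z_2\}$; on $R_1$ the objective becomes, up to a constant, the separable quadratic $\tfrac12(z_1-(c_1+c_{12}))^2+\tfrac12(z_2-c_2)^2$, and on $R_2$ it is $\tfrac12(z_1-c_1)^2+\tfrac12(z_2-(c_2+c_{12}))^2$. Minimizing a separable quadratic over a box intersected with the chain constraint $0\le z_1\le z_2\le1$ is bounded isotonic regression on a two-element chain, whose solution is the pool-adjacent-violators point clipped coordinatewise to $[0,1]$ --- that is, $\big([c_1+c_{12}]_{\set{U}},[c_2]_{\set{U}}\big)$ when $c_1+c_{12}\le c_2$ and $\big([m]_{\set{U}},[m]_{\set{U}}\big)$ otherwise, with $m=(c_1+c_2+c_{12})/2$ (symmetrically on $R_2$). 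By strict convexity the global minimizer is unique, and a short one-sided directional-derivative argument pins down which region contains it: the derivative of the two-variable objective at a diagonal point $\big([m]_{\set{U}},[m]_{\set{U}}\big)$ along a feasible direction pointing into $R_2$ is negative exactly when $c_1>c_2+c_{12}$ (and into $R_1$ exactly when $c_2>c_1+c_{12}$), so the minimizer lies strictly in the corresponding region in those cases and on the diagonal otherwise; the three candidate formulas coincide on the common boundaries and whenever a clipping bound is active, so the case split is consistent. The case $c_{12}<0$ is entirely symmetric under the change of variables $z_2\mapsto1-z_2$, $z_{12}\mapsto z_1-z_{12}$, which reduces it to the $c_{12}\ge0$ case: one splits $[0,1]^2$ along $z_1+z_2=1$, obtaining the shifted quadratic $\tfrac12(z_1-(c_1+c_{12}))^2+\tfrac12(z_2-(c_2+c_{12}))^2$ on $\{z_1+z_2\ge1\}$, the unshifted one on $\{z_1+z_2\le1\}$, and on the boundary line the projection onto $\{z_1+z_2=1\}$, namely $\big((1+c_1-c_2)/2,\,(1+c_2-c_1)/2\big)$ clipped; the conditions $c_1+c_2+2c_{12}>1$, $c_1+c_2<1$, and their complement then select the three branches.

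An equivalent and notationally shorter route, which is probably how the appendix proceeds, is: once the candidate triple $(z_1^*,z_2^*,z_{12}^*)$ of each branch is written down, verify the Karush--Kuhn--Tucker conditions of the convex QP \eqref{eq:pairwise01_main} directly --- primal feasibility is immediate from the clipping and the $\min/\max$ defining $z_{12}^*$, and one exhibits nonnegative multipliers for the active constraints that balance the objective gradient. The main obstacle in either approach is purely organizational rather than conceptual: the feasible polytope has up to seven facets (the three inequalities coupling $z_{12}$ to $z_1,z_2$ plus the six box facets), each of the six branches in the statement activates a different subset, and one must check that the stated branch conditions cover all of $(c_1,c_2,c_{12})$-space with no gaps (harmless overlaps being fine, since the formulas agree there). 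I expect no genuine difficulty beyond keeping this active-set bookkeeping straight.
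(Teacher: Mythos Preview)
Your proposal is correct and arrives at the same case structure as the paper, though your primary route is organized a bit differently. The appendix takes what you call the ``equivalent and notationally shorter route'': for $c_{12}\ge 0$ it first observes (as you do) that the lower-bound constraints on $z_{12}$ are inactive, then writes the full Lagrangian and KKT system and splits directly into the three cases $z_1^*>z_2^*$, $z_1^*<z_2^*$, $z_1^*=z_2^*$, deriving each branch from complementary slackness. Your elimination of $z_{12}$ followed by the region split $\{z_1\le z_2\}$ versus $\{z_1\ge z_2\}$ is the same case analysis in disguise, and your isotonic-regression/PAV reading of the constrained separable quadratic is a nice conceptual shortcut that the paper does not make explicit. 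For $c_{12}<0$ the paper uses exactly the change of variables you name ($z_2'=1-z_2$, $z_{12}'=z_1-z_{12}$, $c_1'=c_1+c_{12}$, $c_2'=1-c_2$, $c_{12}'=-c_{12}$) to reduce to the first case, so on that half you and the paper coincide verbatim.
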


\subsection{Factor Graphs with First-Order Logic Constraints}
\label{sec:ddadmm_hardconstraintfactors}

Hard constraint factors allow specifying ``forbidden'' configurations,
and have been used in error-correcting decoders \citep{Richardson2008},
bipartite graph matching \citep{Duchi2007}, computer vision
\citep{Nowozin2009}, and natural language processing \citep{Sutton2004,DSmith2008}.
In many applications, \emph{declarative constraints} are useful
for injecting domain knowledge, and first-order logic (FOL) provides a
natural language to express such constraints. This is particularly useful in
learning from scarce annotated data \citep{Roth2004,Punyakanok2005,Richardson2006,Chang2008,Poon2009}.

In this section, we consider hard constraint factors linked to
binary variables, with log-potential functions of the form
\begin{equation}
\score_{\alpha}(\vectsymb{y}_{\alpha}) = \left\{
\begin{array}{ll}
0, & \text{if $\vectsymb{y}_{\alpha} \in \sett{S}_{\alpha}$}\\
-\infty, & \text{otherwise,}
\end{array}
\right.
\end{equation}
where $\sett{S}_{\alpha} \subseteq \{0,1\}^{|\sett{N}(\alpha)|}$ is an \emph{acceptance set}.
These factors can be used for imposing FOL constraints,
as we describe next. We define the \emph{marginal polytope} $\sett{Z}_{\alpha}$ of a hard constraint factor $\alpha$ as the convex hull of its acceptance set,
\begin{equation}\label{eq:marginal_polytope_hard_factor}
\sett{Z}_{\alpha} = \conv \sett{S}_{\alpha}.
\end{equation}

As shown in Appendix~\ref{sec:hard_constraint_factors_intro},
the AD$^3$ subproblem \eqref{eq:ddadmm_quad} associated with a
hard constraint factor is equivalent to that of computing an Euclidean projection
onto its marginal polytope:
\begin{align}\label{eq:quadproj2}
\mathrm{minimize} \quad
&
\|\vectsymb{z} - \vectsymb{z}_0\|^2\nonumber\\
\text{with respect to}  \quad
&
\vectsymb{z} \in \sett{Z}_{\alpha},
\end{align}
where $z_{0i} := (a_i(1) +1 - a_i(0))/2$, for  $i \in \sett{N}(\alpha)$.
We now show how to compute this projection for several
hard constraint factors that are building blocks for writing FOL constraints.
Each of these factors performs a logical function, and hence we represent them
graphically as \emph{logic gates} (Fig.~\ref{fig:marginalpolytopes_logicfactors}).

\begin{figure}[h]
\begin{center}
\includegraphics[width=0.7\textwidth]{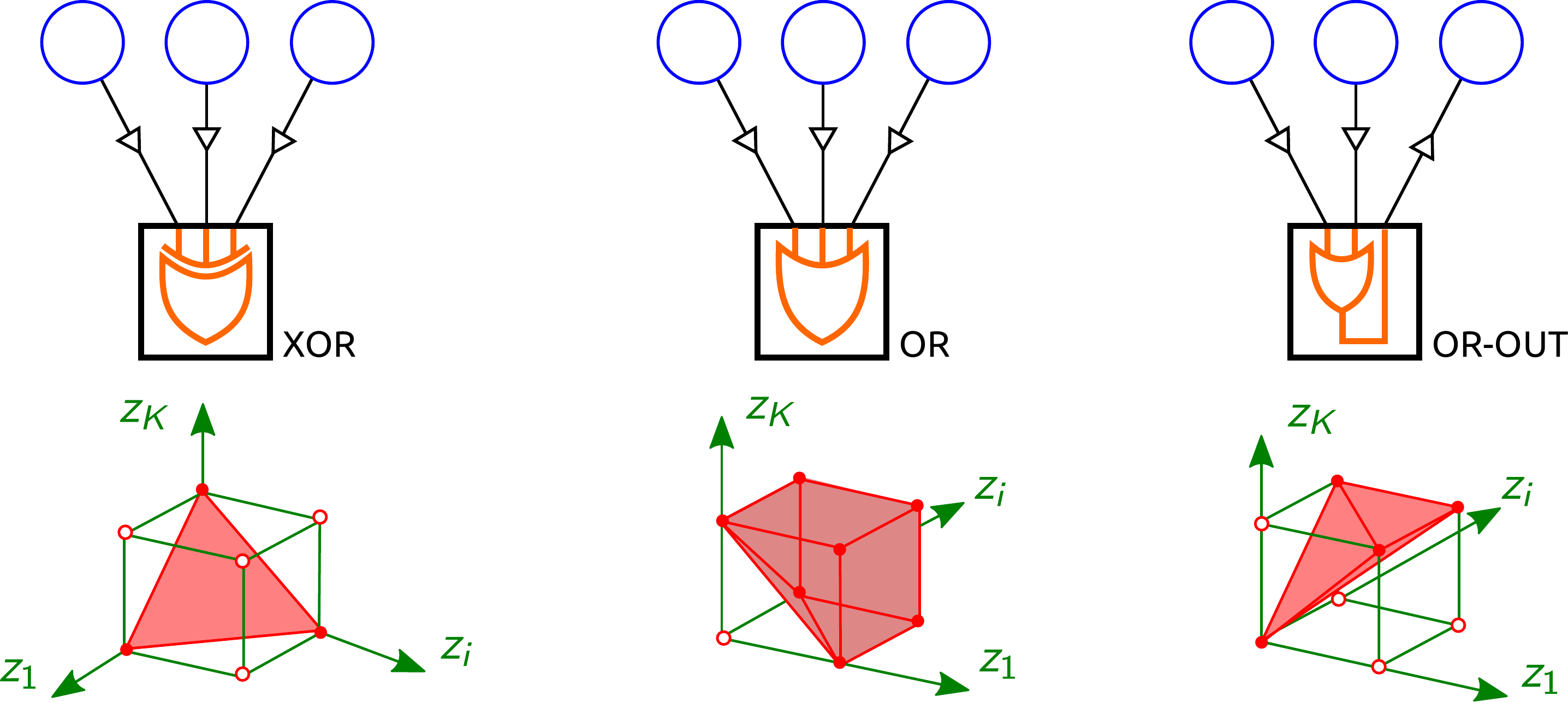}
\caption{Logic factors and their marginal polytopes; the AD$^3$ subproblems \eqref{eq:quadproj2}
are projections onto these polytopes.
Left: the one-hot XOR factor (its marginal polytope is
the probability simplex). Middle: the OR factor. Right: the OR-with-output factor.
}
\label{fig:marginalpolytopes_logicfactors}
\end{center}
\end{figure}

%\subsubsection{One-hot XOR and Uniqueness Quantification}\label{sec:hardconstraintfactors_onehotxor}
\paragraph{One-hot XOR (Uniqueness Quantification).}
The ``one-hot XOR" factor linked to $K\ge 1$ binary variables is defined through the following potential function:
\begin{equation}
\score_{\mathrm{XOR}}(y_1,\ldots,y_K) :=
\left\{
\begin{array}{ll}
0 & \text{if $\exists!k \in \{1,\ldots,K\}$ s.t. $y_k=1$}\\
-\infty & \text{otherwise,}
\end{array}
\right.
\end{equation}
where $\exists!$ denotes ``there is one and only one.''
The name ``one-hot XOR" stems from the following fact: for $K=2$, $\exp(\score_{\mathrm{XOR}}(.))$ is the logic
eXclusive-OR function; the prefix ``one-hot'' expresses that this generalization to $K>2$ only
accepts configurations with precisely one ``active'' input (not to be mistaken with other
XOR generalizations commonly used for parity checks).
The XOR factor can be used for binarizing a categorical variable,
and to express a statement in FOL of the form $\exists!x : R(x)$.

From \eqref{eq:marginal_polytope_hard_factor},
the marginal polytope associated with the one-hot XOR factor is
\begin{equation}\label{eq:marginalpolytope_xor}
\sett{Z}_{\mathrm{XOR}} = \conv \left\{\vectsymb{y} \in \{0,1\}^K \,\,|\,\, \text{$\exists!k \in \{1,\ldots,K\}$ s.t. $y_k=1$}\right\}
= \Delta^K
\end{equation}
as illustrated in Fig.~\ref{fig:marginalpolytopes_logicfactors}.
%\if 0
%An XOR factor can also be used to define the function ``AtMost1''
%introduced by \citet{DSmith2008}
%(which evaluates to $1$ if there is \emph{at most} one active input),
%by adding one extra input $y_{K+1}$ (which represents a {\sc null} value).
%This is illustrated in Fig.~\ref{fig:factor_atmostone}.
%\begin{figure}
%\begin{center}
%\includegraphics[width=0.2\textwidth]{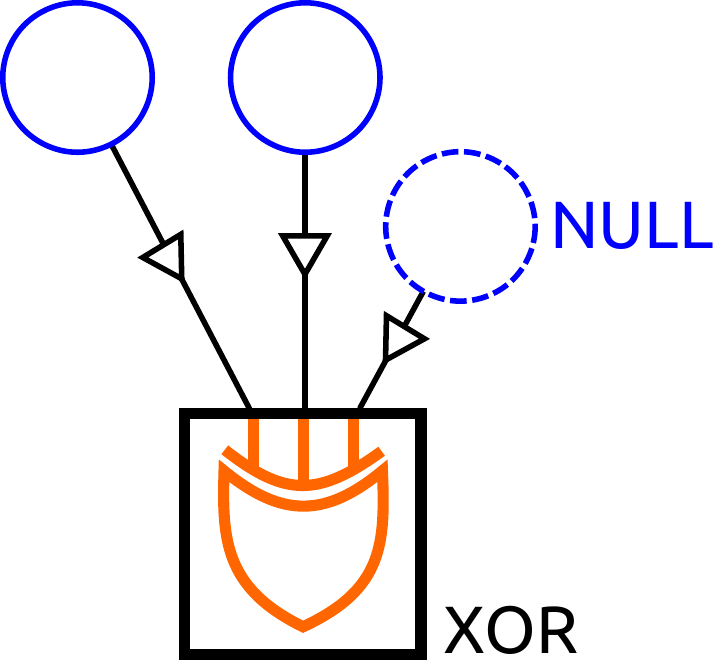}
%\caption{A logical constraint ``AtMost1'' constructed with a XOR factor with a slack variable.
%Either one of the first two variables is active,
%or none is, turning on the slack variable.}
%\label{fig:factor_atmostone}
%\end{center}
%\end{figure}
%\fi
Therefore, the AD$3$ subproblem for the XOR factor consists in projecting 
onto the probability simplex, a problem well studied in the literature 
%for which $O(K)$ and $O(K\log K)$ algorithms have been proposed
\citep{Brucker1984,Michelot1986,Duchi2008}. 
In Appendix~\ref{sec:appendix_xor}, we describe a simple $O(K\log K)$ algorithm. 
Note that there are $O(K)$ algorithms for this problem 
which are slightly more involved.
%which are they are slower in practice for problems of moderate size.

%\subsubsection{OR and Existential Quantification}\label{sec:hardconstraintfactors_or}
\paragraph{OR (Existential Quantification).}
This factor represents a disjunction of $K\ge 1$ binary variables,
\begin{equation}
\score_{\mathrm{OR}}(y_1,\ldots,y_K) :=
\left\{
\begin{array}{ll}
0 & \text{if $\exists k \in \{1,\ldots,K\}$ s.t. $y_k=1$}\\
-\infty & \text{otherwise,}
\end{array}
\right.
\end{equation}
The OR factor can be used to represent a statement in FOL of the form
$\exists x : R(x)$.

From Proposition~\ref{prop:marginalpolytopehard},
the marginal polytope associated with the OR factor is:
\begin{eqnarray}\label{eq:marginalpolytope_or}
\sett{Z}_{\mathrm{OR}} &=& \conv \left\{\vectsymb{y} \in \{0,1\}^K \,\,|\,\, \text{$\exists k \in \{1,\ldots,K\}$ s.t. $y_k=1$}\right\}\\
&=& \left\{\vectsymb{z} \in [0,1]^K \,\,\bigg|\,\, \sum_{k=1}^K z_k \ge 1\right\};
\end{eqnarray}
geometrically, it is a ``truncated'' hypercube, as depicted in Fig.~\ref{fig:marginalpolytopes_logicfactors}.
We derive a $O(K \log K)$ algorithm 
for 
projecting onto $\sett{Z}_{\mathrm{OR}}$,
using a sifting technique and a sort operation (see
Appendix~\ref{sec:appendix_or}).

%Since it behaves like a disjunction function, it is
%appealing to use an OR factor as a component of a larger
%network that encodes a more complex logical statement,
%\emph{e.g.}, involving disjunctions of predicates.
%On the other hand, it is also useful in isolation:
%there are many problems in
%relational learning  and NLP
%that make use of categorical variables that can take \emph{one or more} values.
%An OR factor can be employed to impose this constraint that they take \emph{at least one} value.

%\subsubsection{Logical Variable Assignments: XOR-with-output and OR-with-output}\label{sec:hardconstraintfactors_withoutput}
\paragraph{Logical Variable Assignments: OR-with-output.}

The two factors above define a constraint on a group of existing variables. Alternatively,
we may want to define a new variable (say, $y_{K+1}$) which is the result of an
operation involving other variables (say, $y_1,\ldots,y_{K}$). Among other things, this will
allow dealing with ``soft constraints,'' \emph{i.e.}, constraints that can be violated
but whose violation will decrease the score by some penalty.
%We start with the XOR-with-output factor, which we define as follows:
%\begin{equation}
%\score_{\mathrm{XOR-out}}(y_1,\ldots,y_{K},y_{K+1}) :=
%\left\{
%\begin{array}{ll}
%0 & \text{if $y_{K+1} = 1 \wedge \exists!k \in \{1,\ldots,K\} : y_k=1$}\\
%0 & \text{if $y_{K+1} = 0 \wedge \forall k \in \{1,\ldots,K\} : y_k=0$}\\
%-\infty & \text{otherwise.}
%\end{array}
%\right.
%\end{equation}
%In words, this factor constrains at most one of the variables $y_1,\ldots,y_{K}$ to be active;
%if one is active, it constrains $y_{K+1} = 1$; if all are inactive, then it constrains $y_{K+1} = 0$.
%This factor can be expressed using a regular XOR factor \emph{where the last variable is negated}:
%\begin{equation}
%\score_{\mathrm{XOR-out}}(y_1,\ldots,y_{K},y_{K+1}) = \score_{\mathrm{XOR}}(y_1,\ldots,y_{K},\neg y_{K+1}).
%\end{equation}
%Using Eq.~\ref{eq:marginalpolytope_negations}, we have that the marginal polytope associated with the XOR-with-output
%factor is:
%\begin{eqnarray}\label{eq:marginalpolytope_xorout}
%\sett{Z}_{\mathrm{XOR-out}} &=& \left\{\vectsymb{z} \in [0,1]^{K+1} \,\,\bigg|\,\, \sum_{k=1}^K z_k = z_{K+1}\right\}.
%\end{eqnarray}
An example is the OR-with-output factor:
\begin{equation}
\score_{\mathrm{OR-out}}(y_1,\ldots,y_{K},y_{K+1}) :=
\left\{
\begin{array}{ll}
1 & \text{if $y_{K+1} = y_1 \vee \cdots \vee y_K$}\\
0 & \text{otherwise.}
\end{array}
\right.
\end{equation}
This factor constrains the variable $y_{K+1}$
to indicate the existence of one or more active variables among
$y_1,\ldots,y_{K}$. It can be used to express the following statement in
FOL: $T(x) := \exists z : R(x,z)$.
%Unlike the XOR-with-output case,
%the OR-with-output factor cannot be built by reusing an OR or XOR factor
%with some inputs negated. %
%\footnote{It can, however, be equivalently expressed as the product of $K+1$ OR factors, since
%we have
%\begin{eqnarray*}
%\score_{\mathrm{OR-out}}(y_1,\ldots,y_{K},y_{K+1}) &=&
%\bigwedge_{k=1}^K \left( y_k \Rightarrow y_{K+1} \right) \wedge \left( y_{K+1} \Rightarrow \bigvee_{k=1}^K y_k \right)\\
%&=& \left( \prod_{k=1}^K \psi_{\mathrm{OR}}(\neg y_k,y_{K+1}) \right) \psi_{\mathrm{OR}}(y_1,\ldots,y_{K},\neg y_{K+1}),
%\end{eqnarray*}
%The reason we consider $\psi_{\mathrm{OR-out}}$ is that it may be beneficial to have a larger factor insteadequation of
%many small ones, as far as we can carry out all the necessary computations using the larger factors.}

The marginal polytope associated with the OR-with-output factor (also depicted in Fig.~\ref{fig:marginalpolytopes_logicfactors}):
\begin{eqnarray}\label{eq:marginalpolytope_orout}
\sett{Z}_{\mathrm{OR-out}} &=& \conv \left\{\vectsymb{y} \in \{0,1\}^{K+1} \,\,\bigg|\,\, \text{$y_{K+1} = y_1 \vee \cdots \vee y_K$}\right\}\\
&=& \left\{\vectsymb{z} \in [0,1]^{K+1} \,\,\bigg|\,\, \sum_{k=1}^K z_k \ge z_{K+1}, \,\, z_k \le z_{K+1}, \forall k \in \{1,\ldots,K\}\right\}.
\end{eqnarray}
Although projecting onto $\sett{Z}_{\mathrm{OR-out}}$ is slightly more complicated than
the previous  cases, in Appendix~\ref{sec:appendix_orout},
we propose (and prove correctness of) an $O(K \log K)$ algorithm for this task.

%\subsubsection{Negations and De Morgan's law}\label{sec:factors-negations}
\paragraph{Negations, De Morgan's laws, and AND-with-output.}
The three factors just presented can be extended
to accommodate \emph{negated} inputs, thus adding flexibility.
Solving the corresponding AD$^3$ subproblems can be easily done by
reusing the methods that solve the original problems.
For example, it is straightforward to handle negated conjunctions (NAND),
\begin{eqnarray}
\score_{\mathrm{NAND}}(y_1,\ldots,y_K) &:=&
\left\{
\begin{array}{ll}
-\infty & \text{if $y_k=1$, $\forall k \in \{1,\ldots,K\}$}\\
0 & \text{otherwise,}
\end{array}
\right.\nonumber\\
&=& \score_{\mathrm{OR}}(\neg y_1, \ldots, \neg y_K),
\end{eqnarray}
as well as implications (IMPLY),
\begin{eqnarray}
\score_{\mathrm{IMPLY}}(y_1,\ldots,y_K,y_{K+1}) &:=&
\left\{
\begin{array}{ll}
0 & \text{if $\left( y_1 \wedge \cdots \wedge y_K \right) \Rightarrow y_{K+1}$}\\
-\infty & \text{otherwise}
\end{array}
\right.\nonumber\\
&=& \score_{\mathrm{OR}}(\neg y_1, \ldots, \neg y_K, y_{K+1}).
\end{eqnarray}
In fact, from De Morgan's laws, $\neg \left(  Q_1(x) \wedge \cdots \wedge Q_K(x) \right)$
is equivalent to $\neg Q_1(x) \vee \cdots \vee \neg Q_K(x)$,
and $\left(Q_1 (x) \wedge \cdots \wedge Q_K(x) \right) \Rightarrow R(x)$
is equivalent to $\left( \neg Q_1(x) \vee \cdots \vee \neg Q_K(x)\right) \vee R(x)$.
Another example is the AND-with-output factor,
\begin{eqnarray}
\score_{\mathrm{AND-out}}(y_1,\ldots,y_{K},y_{K+1}) &:=&
\left\{
\begin{array}{ll}
0 & \text{if $y_{K+1} = y_1 \wedge \cdots \wedge y_K$}\\
-\infty & \text{otherwise}
\end{array}
\right.\nonumber\\
&=&
\score_{\mathrm{OR-out}}(\neg y_1,\ldots,\neg y_{K},\neg y_{K+1}),
\end{eqnarray}
which can be used to impose FOL statements of the form $T(x) := \forall z : R(x,z)$.

%By De Morgan's laws, an AND-with-output factor can be constructed from an OR-with-output by negating all the
%variables that are linked to it:
%\begin{equation}
%\score_{\mathrm{AND-out}}(y_1,\ldots,y_{K},y_{K+1}) = \score_{\mathrm{OR-out}}(\neg y_1,\ldots,\neg y_{K},\neg y_{K+1}).
%\end{equation}

Let $\alpha$ be a binary constraint factor with marginal polytope $\sett{Z}_{\alpha}$,
and $\beta$ a factor obtained from $\alpha$ by negating the $k$th variable. For notational
convenience, let $\mathrm{sym}_k:[0,1]^K \rightarrow [0,1]^K$ be defined as $(\mathrm{sym}_k(\vectsymb{z}))_k =
1-z_k$ and $(\mathrm{sym}_k(\vectsymb{z}))_i = z_i$, for $i\neq k$.
Then, the marginal polytope $\sett{Z}_{\beta}$ is 
a symmetric transformation of $\sett{Z}_{\alpha}$,
\begin{eqnarray}\label{eq:marginalpolytope_negations}
\sett{Z}_{\beta} &=& \Bigl\{\vectsymb{z} \in [0,1]^K \,\,\big|\,\, \mathrm{sym}_k(\vectsymb{z})  \in \sett{Z}_{\alpha}\Bigr\},
\end{eqnarray}
and, if $\mathrm{proj}_{\sett{Z}_{\alpha}}$ denotes the projection operator onto $\sett{Z}_{\alpha}$,
\begin{equation}
\mathrm{proj}_{\sett{Z}_{\beta}}(\vectsymb{z}) = \mathrm{sym}_k \left( \mathrm{proj}_{\sett{Z}_{\alpha}} (\mathrm{sym}_k(\vectsymb{z}))\right).
\end{equation}
Naturally, $\mathrm{proj}_{\sett{Z}_{\beta}}$ can be computed as efficiently as $\mathrm{proj}_{\sett{Z}_{\alpha}}$ and, by
induction, this procedure can be generalized to an arbitrary number of negated variables.

\if 0
%\subsubsection{AND-with-output and Expressions with Universal Quantifiers}\label{sec:hardconstraintfactors_andwithoutput}
\paragraph{AND-with-output and Expressions with Universal Quantifiers.}
All the factors described so far will be placed in a binary factor graph.
%where we want to run MAP inference algorithms.
%As we saw in previous sections, both marginal and MAP inference can be cast as optimization
%problems. If the factor graph is binary, the MAP inference problem corresponds to that
%of optimizing a function of several Boolean output variables.
Interpreting the values of the variables as memberships in a subset,
the corresponding MAP inference problem
can be seen an instance of set-valued (or \emph{pseudo-Boolean}) optimization
\citep{Boros2002} with side constraints.
We may want to boost the expressive power of the model by
%designing a score function that takes into account
%interdependencies among the variables.
%Without loss of generality, this can be done by
introducing
score terms that depend on the \emph{conjunction} of several variables (say $y_1,\ldots,y_K$),
\emph{i.e.},
of the form
$s_{1\ldots K} \times \prod_{k=1}^K y_k$.
For $K>1$, such scores are non-linear.
An important technique in pseudo-Boolean optimization is \emph{linearization}:
one first creates an additional variable (say $y_{K+1})$
which is constrained to evaluate to the
conjunction $\bigwedge_{k=1}^K y_k$,
and then replace the non-linear term by
$s_{1\ldots K} \times y_{K+1}$.
The definition of the new variable can be made
through another important logical assignment factor, AND-with-output:
\begin{equation}
\score_{\mathrm{AND-out}}(y_1,\ldots,y_{K},y_{K+1}) :=
\left\{
\begin{array}{ll}
1 & \text{if $y_{K+1} = \bigwedge_{k=1}^K y_k$}\\
0 & \text{otherwise.}
\end{array}
\right.
\end{equation}
The AND-with-output factor can be used to impose the following statement in
first-order logic: %\nascomment{I am having trouble seeing this connection}
$T(x) := \forall z : R(x,z)$.
By De Morgan's laws, an AND-with-output factor can be constructed from an OR-with-output by negating all the
variables that are linked to it:
\begin{equation}
\score_{\mathrm{AND-out}}(y_1,\ldots,y_{K},y_{K+1}) = \score_{\mathrm{OR-out}}(\neg y_1,\ldots,\neg y_{K},\neg y_{K+1}).
\end{equation}

%Several auxiliary results and proofs can be shown in Appendix~\ref{sec:ddadmm_hardconstraintfactors_appendix}.
\fi

\subsection{Potts Models and Graph Binarization}\label{sec:ddadmm_multivalfact}
Although general factors lack closed-form solutions of the corresponding AD$^3$ subproblem \eqref{eq:ddadmm_quad},
it is possible to \emph{binarize} the graph, \emph{i.e.}, to convert it into an equivalent one
that only contains binary variables and XOR factors. The procedure is as follows:
\begin{itemize}
\item For each variable node $i \in \sett{V}$, define binary variables $U_{i,y_i} \in \{0,1\}$, for each state $y_i \in \Y_i$;
link these variables to a XOR factor, imposing $\sum_{y_i \in \Y_i} p_i(y_i) = 1$.
\item For each factor $\alpha \in \sett{F}$, define binary variables
$U_{\alpha, \vectsymb{y}_{\alpha}}\in\{0,1\}$ for every $\vectsymb{y}_{\alpha}\in \Y_{\alpha}$.
For each edge $(i,\alpha) \in \sett{E}$ and each $y_i \in \Y_i$, link
variables $\{U_{\alpha, \vectsymb{y}_{\alpha}}\,\,|\,\, \vectsymb{y}_{\alpha} \sim y_i\}$
and $\neg U_{i,y_i}$ to a XOR factor; this imposes the constraint $p_i(y_i) =
\sum_{\vectsymb{y}_{\alpha} \sim y_i } q_{\alpha}(\vectsymb{y}_{\alpha})$.
\end{itemize}
The resulting binary graph is one for which we already presented the machinery
needed for solving efficiently the corresponding AD$^3$ subproblems. As an example, for
Potts models (graphs with only pairwise factors and variables that have more than two states),
the computational cost per AD$^3$ iteration on the binarized graph is asymptotically the same
as that of the projected subgradient method and other message-passing algorithms; for
details, see  \citet{Martins2012PhDThesis}.

%This is a disadvantage relative to the
%projected subgradient algorithm,
%which can easily handle factors with multi-valued variables, and even
%certain structured factors such as chains or trees,
%by using dynamic programming (or
%another combinatorial algorithm)
%for the corresponding local MAP computations.
%A similar strategy for computing an \emph{exact} solution for these special structures
%does not seem possible in AD$^3$.
%However, AD$^3$ can still tackle general factors with
%multi-valued variables by
%employing a \emph{binarization} procedure.
%The trick is to introduce a binary variable for each possible
%state of a multi-valued variable, and to use XOR and XOR-with-output
%factors for imposing the normalization and marginalization
%constraints at the unary and higher-order factors.

%After binarization, the resulting graph is a binary constrained graph
%for which we already developed the machinery necessary for solving
%the AD$^3$ subproblems.
%This transformation preserves the
%LP-MAP problem and for pairwise graphs, up to log factors, results in the same asymptotic
%complexity (per iteration) of message-passing
%algorithms in the original graph.
%We defer details for space limitations.
\if 0
We illustrate this procedure for pairwise MRFs (but the idea carries over when
higher order potentials are used): let $Y_1,\ldots,Y_N$ be the variables of the original graph,
and $\sett{E} \subseteq \{1,\ldots,N\}^2$ be the set of edges. Let $M=|\sett{E}|$ be the number of edges
and $L=|\Y_i|, \forall i$ the number of labels. Then:
\begin{itemize}
\item For each node $i$, define binary variables $U_{ik}$ for each possible  $k \in \{1,\ldots,L\}$ of $Y_i$.
Link these variables to a XOR factor, imposing $\sum_{k=1}^L \mu_i(k) = 1$, $\forall i$.
\item  For each edge $(i,j) \in \sett{E}$, define binary variables
$U_{ijkk'}$ for each value pair $(k,k') \in \{1,\ldots,L\}^2$.
Link variables $\{U_{ijkk'}\}_{k'=1}^L$ and $\neg U_{ik}$ to a XOR factor,
for each $k \in \{1,\ldots,L\}$;
and link variables $\{U_{ijkk'}\}_{k=1}^L$ and $\neg U_{jk'}$ to a XOR factor
for each $k' \in \{1,\ldots,L\}$.
These impose constraints $\mu_{i}(k) = \sum_{k'=1}^L \mu_{ij}(k,k')$, $\forall k$,
and $\mu_{j}(k') = \sum_{k=1}^l \mu_{ij}(k,k')$, $\forall k'$.
\end{itemize}
\fi
%As described in Section~\ref{sec:binarization},
%this transformation preserves the
%LP-MAP problem and, up to log factors, it results in the same asymptotic
%complexity (per iteration) of message-passing
%algorithms in the original graph.
%% is the same for both graphs.
%% This process increases %the number of variables to $NL+ML^2$ and
%% the number of factors
%% to $N+2ML$, where each is a XOR of size $L$ or $L+1$.
%% However, solving the local subproblems for each of these factors
%% only costs $O(L \log L)$ (see Section~\ref{sec:ddadmm_hardconstraintfactors}), hence the
%% overall cost per iteration of Algorithm~\ref{alg:ddadmm} is $O(ML^2\log L)$
%% if the graph is connected. Up to a log factor, this is the same
%% as message-passing algorithms or the projected subgradient method
%% when run in the original graph, which have $O(ML^2)$ cost per iteration.

\if 0
\section{Hard Constraint Factors for First Order Logic}
\label{sec:hardconstraintfactors}

\afm{this section is too long; must trim.}

Hard constraints are often used as a way of ruling out ``forbidden'' configurations.
%This is so in problems arising in error-correcting coding theory \citep{Richardson2008}, bipartite matching \citep{Duchi2007},
%computer vision \citep{Nowozin2009}, and also
%natural language processing \citep{Sutton2004,DSmith2008}.
%While constraints can in practice be enforced by %employing log-potentials with large absolute values,
%that strategy precludes exploiting the structure and %sparsity of the constraints and
%often leads to inefficient algorithms.
This is useful for several reasons:
%There are several motivations for considering %constraints:
\begin{itemize}
\item {\bf Sometimes the output set
$\sett{Y}$ is inherently constrained.}
This is so in problems arising in error-correcting coding theory \citep{Richardson2008}, bipartite matching \citep{Duchi2007},
computer vision \citep{Nowozin2009}, and also
natural language processing \citep{Sutton2004,DSmith2008}. For those problems, only certain configurations
yield valid outputs (words in a codebook, matchings, connected regions, and trees, respectively).
While constraints can in practice be enforced by employing log-potentials with large absolute values,
that strategy precludes exploiting the structure and sparsity of the constraints and
often leads to inefficient algorithms.
\item {\bf We may want to inject prior knowledge in the form of
declarative constraints.}
These could be, \emph{e.g.}, first order logic expressions written by experts.
The inclusion of such constraints yields more accurate models, and is
especially useful for learning when annotated data is scarce
\citep{Roth2004,Punyakanok2005,Riedel2006,Richardson2006,Chang2008,Poon2009,MezaRuizRiedel2009NAACL}.
\item {\bf We may want to enrich our model by adding
new variables that are logical functions of existing variables.}
This can be accomplished by adding those parts as variable
nodes in the graphical model,
and expressing the logical functions as hard constraint factors,
in order to enforce consistency with the already existing variables.
\end{itemize}

Hard constraint factors are those whose log-potential functions evaluate to $-\infty$ for some configurations.
In this section, we will focus on factors linked to
binary variables and whose log-potential
functions are of the form
\begin{equation}
\score_{\alpha}(\vectsymb{y}_{\alpha}) = \left\{
\begin{array}{ll}
0, & \text{if $\vectsymb{y}_{\alpha} \in \sett{S}_{\alpha}$}\\
-\infty, & \text{otherwise,}
\end{array}
\right.
\end{equation}
where $\sett{S}_{\alpha} \subseteq \{0,1\}^{|\sett{N}(\alpha)|}$ is an \emph{acceptance set}.
Such factors can be used for imposing logical constraints,
as we describe next.
We define the \emph{marginal polytope} $\sett{Z}_{\alpha}$ of a hard constraint factor $\alpha$ as the convex hull of its acceptance set,
\begin{equation}
\sett{Z}_{\alpha} = \conv \sett{S}_{\alpha}.
\end{equation}
This is justified by the following proposition:
\begin{proposition}\label{prop:marginalpolytopehard}
Let $\alpha$ be a binary hard constraint factor with degree $K$,
and consider the set of all possible distributions $\Prob(\vectsymb{Y}_{\alpha})$
whose support is contained in
$\sett{S}_{\alpha}$, \emph{i.e.},
which satisfy $\Prob(\vectsymb{Y}_{\alpha}=\vectsymb{y}_{\alpha}) = 0$
for every $\vectsymb{y}_{\alpha} \notin \sett{S}_{\alpha}$.
Then, the set of possible marginals realizable for some  distribution in that set is given by
\begin{equation}\label{eq:marginalpolytope_hard}
\sett{Z}_{\alpha} := \left\{(q_{1\alpha}(1),\ldots,q_{K\alpha}(1)) \,\,\bigg|\,\,
 \vectsymb{q}_{i\alpha}=\matr{M}_{i\alpha}\vectsymb{q}_{\alpha},\,\,
 \text{for some $\vectsymb{q}_{\alpha} \in \Delta^{|\Y_{\alpha}|}$ subject to
 $q_{\alpha}(\vectsymb{y}_{\alpha}) = 0, \forall{\vectsymb{y}_{\alpha} \notin \sett{S}_{\alpha}}$}\right\},
\end{equation}
and
we have $\sett{Z}_{\alpha} = \conv \sett{S}_{\alpha}$.
\end{proposition}
 \begin{proof}
 From the fact that we are constraining
 $q_{\alpha}(\vectsymb{y}_{\alpha}) = 0, \forall{\vectsymb{y}_{\alpha} \notin \sett{S}_{\alpha}}$, it follows:
 \begin{eqnarray}
 \sett{Z}_{\alpha} &=& \left\{\vectsymb{z} \ge 0 \,\,\Bigg|\,\, \exists \vectsymb{q}_{\alpha} \ge 0
 \,\,\text{subject to}\,\, \forall i \in \sett{N}({\alpha}),
 z_i = \sum_{\substack{\vectsymb{y}_{\alpha} \in \sett{S}_{\alpha}\\\vectsymb{y}_i = 1}} q_{\alpha}(\vectsymb{y}_{\alpha}) =
 1- \sum_{\substack{\vectsymb{y}_{\alpha} \in \sett{S}_{\alpha}\\\vectsymb{y}_i = 0}} q_{\alpha}(\vectsymb{y}_{\alpha})
 \right\}\nonumber\\
 &=& \left\{\vectsymb{z} \ge 0 \,\,\Bigg|\,\, \exists \vectsymb{q}_{\alpha} \ge 0,
 \sum_{\vectsymb{y}_{\alpha} \in \sett{S}_{\alpha}} q_{\alpha}(\vectsymb{y}_{\alpha}) = 1
 \,\,\text{subject to}\,\,
 \vectsymb{z} = \sum_{\vectsymb{y}_{\alpha} \in \sett{S}_{\alpha}} q_{\alpha}(\vectsymb{y}_{\alpha}) \vectsymb{y}_{\alpha} \right\} \nonumber\\
 &=& \conv \sett{S}_{\alpha}.
 \end{eqnarray}
 \end{proof}
\nascomment{could move above proof to appendix if you want to save space}

In this section, we present hard constraint factors that work
as building blocks for writing constraints in first-order logic.
Each of the factors discussed here performs a logical function, and hence we represent them
graphically through \emph{logic gates}.
%\afm{maybe say something about the subproblems that
%involve such factors?}
%We start by enumerating these factors, and later in Section~\ref{sec:constrained_message_computation} we will show
%how to compute messages and perform other calculations that are specific to each of them,
%but which will serve the same purpose---learning and inference in constrained models.

\begin{figure}
\begin{center}
\includegraphics[width=0.7\textwidth]{figs/marginalpolytopes_logicfactors}
\caption{Logic factors introduced in this thesis and their marginal polytopes.
Left: the marginal polytope of the XOR factor is the probability simplex.
Middle: the OR factor has as marginal polytope a faulty hypercube, with one vertex removed.
Right: the OR-with-output factor has a more intricate marginal polytope.}
\label{fig:marginalpolytopes_logicfactors}
\end{center}
\end{figure}

\subsection{One-hot XOR and Uniqueness Quantification}\label{sec:hardconstraintfactors_onehotxor}

The one-hot XOR factor is linked to $K\ge 1$ binary variables and is defined through the following potential function:
\begin{equation}
\score_{\mathrm{XOR}}(y_1,\ldots,y_K) :=
\left\{
\begin{array}{ll}
0 & \text{if $\exists!k \in \{1,\ldots,K\}$ s.t. $y_k=1$}\\
-\infty & \text{otherwise,}
\end{array}
\right.
\end{equation}
where the symbol $\exists!$ means ``there is one and only one.''
In words, any configuration has zero probability
unless \emph{exactly one} of the inputs (among $y_1,\ldots,y_K$) takes the value $1$.
The name XOR stems from the fact that, with $K=2$, the potential function $\exp(\score_{\mathrm{XOR}}(.))$ behaves like a logic Exclusive-OR.
Hence that potential function can be seen as a generalization of Exclusive-OR for $K \ge 2$. The prefix ``one-hot''
serves to emphasize that this generalization only
accepts configurations which have precisely one ``active'' input
(where ``active'' means having a value of $1$),
avoiding confusion with another commonly used generalization related to parity checks.%
\footnote{To be more precise, that alternative definition of Exclusive-OR for $K\ge 2$ accepts any configuration
with an odd number of active inputs. It is also a widely used factor in models for error-correcting decoding,
such as low-density parity-check codes \citep{Gallager1962,Feldman2005,Richardson2008}.}

The XOR factor can be employed for binarizing a categorical variable;
to constrain a variable to take a unique value in a set of $K$
possible values, we use $K$ binary variables and connect them all to
an XOR factor.  This factor also appeared in \citet{DSmith2008} under the name ``Exactly1,''
where it was employed in the context of dependency parsing to
constrain each word-vertex to have a single parent word-vertex.
Since the need of imposing uniqueness constraints is frequent in NLP,
it is a very useful factor in practice,
as it can express a statement in first-order logic of the form
\begin{equation}
\exists!y : R(y).
\end{equation}

The marginal polytope associated with the XOR factor is, by definition, the convex hull
of the acceptance set, which is a \nascomment{this terminology is not
  clear to me:  ``lifted version of the set'' -- does it just mean you
translate from y notation to z notation?  will that be familiar to readers?} lifted version of the set:
\begin{eqnarray}\label{eq:marginalpolytope_xor}
\sett{Z}_{\mathrm{XOR}} &=& \conv \left\{\vectsymb{y} \in \{0,1\}^K \,\,|\,\, \text{$\exists!k \in \{1,\ldots,K\}$ subject to $y_k=1$}\right\}\\
&=& \left\{\vectsymb{z} \in [0,1]^K \,\,\bigg|\,\, \sum_{k=1}^K z_k = 1\right\};
\end{eqnarray}
in other words, the marginal polytope of the XOR factor is the \emph{probability simplex}.
This is depicted graphically in Fig.~\ref{fig:marginalpolytopes_logicfactors}.

An XOR factor can also be used to define the function ``AtMost1''
introduced by \citet{DSmith2008}
(which evaluates to $1$ if there is \emph{at most} one active input),
by adding one extra input $y_{K+1}$ (which represents a {\sc null} value).
This is illustrated in Fig.~\ref{fig:factor_atmostone}.

\begin{figure}
\begin{center}
\includegraphics[width=0.2\textwidth]{figs/factor_atmost1}
\caption{A logical constraint ``AtMost1'' constructed with a XOR factor with a slack variable.
Either one of the first two variables is active,
or none is, turning on the slack variable.}
\label{fig:factor_atmostone}
\end{center}
\end{figure}

\subsection{OR and Existential Quantification}\label{sec:hardconstraintfactors_or}

The OR factor represents a disjunction of $K\ge 1$ binary variables. It is defined through the following potential function:
\begin{equation}
\score_{\mathrm{OR}}(y_1,\ldots,y_K) :=
\left\{
\begin{array}{ll}
0 & \text{if $\exists k \in \{1,\ldots,K\}$ subject to $y_k=1$}\\
-\infty & \text{otherwise,}
\end{array}
\right.
\end{equation}
where the symbol $\exists$ has the usual meaning ``there is at least one.''
In other words, all configurations are accepted except the one when all the inputs are inactive---hence the potential function behaves like a logic OR.

The difference with respect to the XOR factor is that
OR imposes existence but not uniqueness.
It can be used to represent a statement in first-order logic of the form
\begin{equation}
\exists y : R(y).
\end{equation}

The marginal polytope associated with the OR factor is a lifted version of the set:
\begin{eqnarray}\label{eq:marginalpolytope_or}
\sett{Z}_{\mathrm{OR}} &=& \conv \left\{\vectsymb{y} \in \{0,1\}^K \,\,|\,\, \text{$\exists k \in \{1,\ldots,K\}$ subject to $y_k=1$}\right\}\\
&=& \left\{\vectsymb{z} \in [0,1]^K \,\,\bigg|\,\, \sum_{k=1}^K z_k \ge 1\right\};
\end{eqnarray}
geometrically, it is a ``faulty'' hypercube,
\emph{i.e.}, a hypercube which was carved by removing one vertex (in this case, the origin).
This is depicted in Fig.~\ref{fig:marginalpolytopes_logicfactors}.

Since it behaves like a disjunction function, it is
appealing to use an OR factor as a component of a larger
network that encodes a more complex logical statement,
\emph{e.g.}, involving disjunctions of predicates.
On the other hand, it is also useful in isolation:
there are many problems in
relational learning  and NLP
that make use of categorical variables that can take \emph{one or more} values.
An OR factor can be employed to impose this constraint that they take \emph{at least one} value.

%This is illustrated in Fig.~\ref{}.

\subsection{Negations and De Morgan's law}\label{sec:factors-negations}

We extend the two factors above---as well as the ones we will present in the sequel---%
to accommodate \emph{negated} inputs.
Later, we will see that all computations that involve the original factors can
be easily extended to allow negated inputs while reusing the
same black box that solves the original problems.
The ability to handle negated variables adds a great degree of flexibility.
For example, it allows us to handle negated conjunctions (NAND;
discussed in the case $K=2$ by \citealt{DSmith2008}):
\begin{eqnarray}
\score_{\mathrm{NAND}}(y_1,\ldots,y_K) &:=&
\left\{
\begin{array}{ll}
-\infty & \text{if $y_k=1$, $\forall k \in \{1,\ldots,K\}$}\\
0 & \text{otherwise,}
\end{array}
\right.\nonumber\\
&=& \score_{\mathrm{OR}}(\neg y_1, \ldots, \neg y_K)
\end{eqnarray}
as well as implications (IMPLY):
\begin{eqnarray}
\score_{\mathrm{IMPLY}}(y_1,\ldots,y_K,y_{K+1}) &:=&
\left\{
\begin{array}{ll}
0 & \text{if $\left( \bigwedge_{k=1}^K y_k \right) \Rightarrow y_{K+1}$}\\
-\infty & \text{otherwise,}
\end{array}
\right.\nonumber\\
&=& \score_{\mathrm{OR}}(\neg y_1, \ldots, \neg y_K, y_{K+1}).
\end{eqnarray}
This is so because,
from De Morgan's laws,  we have
that $\neg \left( \bigwedge_{k=1}^K Q_k(x) \right)$
is equivalent to $\bigvee_{k=1}^K \neg Q_k(x)$,
and
that $\bigwedge_{i=1}^K Q_k(x) \Rightarrow R(x)$
is equivalent to $\bigvee_{k=1}^n \neg Q_k(x) \vee R(x)$.

Let $\alpha$ be a binary constrained factor with marginal polytope $\sett{Z}_{\alpha}$,
and $\beta$ be a factor obtained from $\alpha$ by negating the $k$th variable.
Then, the marginal polytope
associated with the factor $\beta$, which we denote by $\sett{Z}_{\beta}$, is a simple
symmetric transformation of $\sett{Z}_{\alpha}$:
\begin{eqnarray}\label{eq:marginalpolytope_negations}
\sett{Z}_{\beta} &=& \left\{\vectsymb{z} \in [0,1]^K \,\,\bigg|\,\, (z_1,\ldots,z_{k-1},1-z_k,z_{k+1},\ldots,z_K) \in \sett{Z}_{\alpha}\right\}.
\end{eqnarray}
By induction, this can be generalized to an arbitrary number of negated variables.

\subsection{Logical Variable Assignments: XOR-with-output and OR-with-output}\label{sec:hardconstraintfactors_withoutput}

All cases seen above
%The OR and XOR factors seen above both
involve taking a group of existing variables and defining a constraint. Alternatively,
we may want to define a new variable (say, $y_{K+1}$)
which is the result of an operation involving other variables (say, $y_1,\ldots,y_{K}$). %,
%as in Fig.~\ref{fig:constrainedgraphmodels} (right).
Among other things, this will allow us to deal with ``soft constraints,''
\emph{i.e.}, constraints that can be violated but whose violation will decrease the score by some penalty.

We start with the XOR-with-output factor, which we define as follows:
\begin{equation}
\score_{\mathrm{XOR-out}}(y_1,\ldots,y_{K},y_{K+1}) :=
\left\{
\begin{array}{ll}
0 & \text{if $y_{K+1} = 1 \wedge \exists!k \in \{1,\ldots,K\} : y_k=1$}\\
0 & \text{if $y_{K+1} = 0 \wedge \forall k \in \{1,\ldots,K\} : y_k=0$}\\
-\infty & \text{otherwise.}
\end{array}
\right.
\end{equation}
In words, this factor constrains at most one of the variables $y_1,\ldots,y_{K}$ to be active;
if one is active, it constrains $y_{K+1} = 1$; if all are inactive, then it constrains $y_{K+1} = 0$.
Interestingly,
this factor can be expressed using a regular XOR factor \emph{where the last variable is negated}:
\begin{equation}
\score_{\mathrm{XOR-out}}(y_1,\ldots,y_{K},y_{K+1}) = \score_{\mathrm{XOR}}(y_1,\ldots,y_{K},\neg y_{K+1}).
\end{equation}
Using Eq.~\ref{eq:marginalpolytope_negations}, we then have that the marginal polytope associated with the XOR-with-output
factor is a lifted version of the set:
\begin{eqnarray}\label{eq:marginalpolytope_xorout}
\sett{Z}_{\mathrm{XOR-out}} &=& \left\{\vectsymb{z} \in [0,1]^{K+1} \,\,\bigg|\,\, \sum_{k=1}^K z_k = z_{K+1}\right\}.
\end{eqnarray}

Another important logical assignment factor is OR-with-output:
\begin{equation}
\score_{\mathrm{OR-out}}(y_1,\ldots,y_{K},y_{K+1}) :=
\left\{
\begin{array}{ll}
1 & \text{if $y_{K+1} = \bigvee_{k \in \{1,\ldots,K\}} y_k$}\\
0 & \text{otherwise.}
\end{array}
\right.
\end{equation}
This factor constrains the variable $y_{K+1}$
to indicate the existence (not necessarily uniqueness) of an active variable among
$y_1,\ldots,y_{K}$.
It can be used to impose the following statement in
first-order logic:
\begin{equation}
T(x) := \exists z : R(x,z).
\end{equation}
Unlike the XOR-with-output case,
the OR-with-output factor cannot be built by reusing an OR or XOR factor
with some inputs negated.%
\footnote{It can, however, be equivalently expressed as the product of $K+1$ OR factors, since
we have
\begin{eqnarray*}
\score_{\mathrm{OR-out}}(y_1,\ldots,y_{K},y_{K+1}) &=&
\bigwedge_{k=1}^K \left( y_k \Rightarrow y_{K+1} \right) \wedge \left( y_{K+1} \Rightarrow \bigvee_{k=1}^K y_k \right)\\
&=& \left( \prod_{k=1}^K \psi_{\mathrm{OR}}(\neg y_k,y_{K+1}) \right) \psi_{\mathrm{OR}}(y_1,\ldots,y_{K},\neg y_{K+1}),
\end{eqnarray*}
The reason we consider $\psi_{\mathrm{OR-out}}$ is that it may be beneficial to have a larger factor instead of
many small ones, as far as we can carry out all the necessary computations using the larger factors.}

The marginal polytope associated with the OR factor is a lifted version of the set:
\begin{eqnarray}\label{eq:marginalpolytope_orout}
\sett{Z}_{\mathrm{OR-out}} &=& \conv \left\{\vectsymb{y} \in \{0,1\}^{K+1} \,\,\bigg|\,\, \text{$y_{K+1} = \bigvee_{k \in \{1,\ldots,K\}} y_k$}\right\}\\
&=& \left\{\vectsymb{z} \in [0,1]^{K+1} \,\,\bigg|\,\, \sum_{k=1}^K z_k \ge z_{K+1}, \,\, z_k \le z_{K+1}, \forall k \in \{1,\ldots,K\}\right\}.
\end{eqnarray}
This is also depicted graphically in Fig.~\ref{fig:marginalpolytopes_logicfactors}.

\paragraph{Soft constraints in first order logic.}
As mentioned above, factors that represent logical functions with output variables allow us
to deal with soft constraints. For example, suppose we want a soft OR constraint
involving the variables $Y_1,\ldots,Y_K$, whose violation will affect the score by a penalty $q$.
One needs to do the following:
\begin{enumerate}
\item Introduce a new ``slack'' variable (call it $Y_{K+1}$) for representing the event that the constraint is violated;
assign a log-potential of $-q$ to this variable, which will be the amount of penalty for the violation.
\item Introduce an OR-with-output factor connecting inputs $Y_1,\ldots,Y_K$ and output $Y_{K+1}$.
\end{enumerate}

\subsection{AND-with-output and Expressions with Universal Quantifiers}\label{sec:hardconstraintfactors_andwithoutput}

All the factors described so far will be placed in a binary factor graph.
%where we want to run MAP inference algorithms.
%As we saw in previous sections, both marginal and MAP inference can be cast as optimization
%problems. If the factor graph is binary, the MAP inference problem corresponds to that
%of optimizing a function of several Boolean output variables.
Interpreting the values of the variables as memberships in a subset,
the corresponding MAP inference problem
can be seen an instance of set-valued (or \emph{pseudo-Boolean}) optimization
\citep{Boros2002} with side constraints.
We may want to boost the expressive power of the model by
%designing a score function that takes into account
%interdependencies among the variables.
%Without loss of generality, this can be done by
introducing
score terms that depend on the \emph{conjunction} of several variables (say $y_1,\ldots,y_K$),
\emph{i.e.},
of the form
$s_{1\ldots K} \times \prod_{k=1}^K y_k$.
For $K>1$, such scores are non-linear.
An important technique in pseudo-Boolean optimization is \emph{linearization}:
one first creates an additional variable (say $y_{K+1})$
which is constrained to evaluate to the
conjunction $\bigwedge_{k=1}^K y_k$,
and then replace the non-linear term by
$s_{1\ldots K} \times y_{K+1}$.
The definition of the new variable can be made
through another important logical assignment factor, AND-with-output:
\begin{equation}
\score_{\mathrm{AND-out}}(y_1,\ldots,y_{K},y_{K+1}) :=
\left\{
\begin{array}{ll}
1 & \text{if $y_{K+1} = \bigwedge_{k=1}^K y_k$}\\
0 & \text{otherwise.}
\end{array}
\right.
\end{equation}
The AND-with-output factor can be used to impose the following statement in
first-order logic: \nascomment{I am having trouble seeing this connection}
\begin{equation}
T(x) := \forall z : R(x,z).
\end{equation}
By De Morgan's laws, an AND-with-output factor can be constructed from an OR-with-output by negating all the
variables that are linked to it:
\begin{equation}
\score_{\mathrm{AND-out}}(y_1,\ldots,y_{K},y_{K+1}) = \score_{\mathrm{OR-out}}(\neg y_1,\ldots,\neg y_{K},\neg y_{K+1}).
\end{equation}

\nascomment{it might make sense to move the section above to later in
  the paper; I don't think it's essential for understanding the next
  section(s) ... maybe merge into  Section~\ref{sec:ad3_subproblems}?}
\fi

\section{An Active Set Method For Solving the AD$^3$ Subproblems}\label{sec:ad3_generalfactors}
When dealing with arbitrary factors, an alternative to binarization is to use an
\emph{inexact} algorithm that becomes increasingly accurate as AD$^3$ proceeds
(see Proposition~\ref{prop:ddadmm2}); this increasing accuracy can be
achieved by warm-starting subproblem solvers with the solutions obtained in the
previous iteration. We next describe such an approach, which, remarkably, only requires
a black-box solving the local MAP subproblems (functions {\sc ComputeMAP} in Algorithm~\ref{alg:ddsubgrad}).
This makes AD$^3$ applicable to a wide range of problems, by invoking specialized combinatorial
algorithms for computing the MAP for factors that impose structural constraints.

The key to our approach is the conversion of the original
quadratic program \eqref{eq:ddadmm_quad} into a sequence of linear problems, accomplished
via an \emph{active set method} \citep[Section~16.4]{Nocedal1999}.
Let us start by writing the local subproblem in \eqref{eq:ddadmm_quad} in the
compact form
\begin{align}\label{eq:ad3_subproblem_primal}
\text{minimize} \quad & \frac{1}{2}\|\vectsymb{u} - \vectsymb{a}\|^2
- \vectsymb{b}^{\top} \vectsymb{v}\\
\text{with respect to} \quad & \vectsymb{u} \in \set{R}^{\sum_{i\in \sett{N}(\alpha)}|\Y_i|}, \quad
\vectsymb{v} \in \set{R}^{|\Y_{\alpha}|}\nonumber\\
\text{subject to} \quad & \vectsymb{u} = \matr{M} \vectsymb{v}, \quad \vect{1}^{\top}\vectsymb{v} = 1, \quad \vectsymb{v} \ge 0,\nonumber
\end{align}
where $\vectsymb{u} := (\vectsymb{q}_{i\alpha})_{i\in \sett{N}(\alpha)}$ collects all the variable marginals
and $\vectsymb{v} := \vectsymb{q}_{\alpha}$ is the factor marginal, and similarly for $\vectsymb{a} := (\vectsymb{a}_i)_{i\in \sett{N}(\alpha)}$
and  $\vectsymb{b} := \vectsymb{b}_{\alpha}$; finally, $\matr{M} := (\matr{M}_{i\alpha})_{i \in \sett{N}(\alpha)}$,
denotes a matrix with $\sum_i |\Y_i|$ rows and $|\Y_{\alpha}|$ columns. The next crucial proposition (proved in Appendix~\ref{sec:proof_ad3_subproblem_primal_sparse}) states that
problem \eqref{eq:ad3_subproblem_primal} always admits a \emph{sparse solution}.
\begin{proposition}\label{prop:ad3_subproblem_primal_sparse}
Problem \eqref{eq:ad3_subproblem_primal} admits a solution $\vectsymb{v}^*\in \set{R}^{|\Y_{\alpha}|}$ with at most $\sum_{i\in \sett{N}(\alpha)} |\Y_i| - \sett{N}(\alpha) + 1$ non-zero components.
\end{proposition}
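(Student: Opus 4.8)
The plan is to reduce the quadratic program \eqref{eq:ad3_subproblem_primal} to a linear program over a bounded polytope whose vertices are demonstrably sparse, and then invoke the fundamental theorem of linear programming.

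First I would eliminate $\vectsymb{u}$ using $\vectsymb{u}=\matr{M}\vectsymb{v}$, so that the problem becomes the minimization of $\phi(\vectsymb{v}) := \tfrac{1}{2}\|\matr{M}\vectsymb{v}-\vectsymb{a}\|^2 - \vectsymb{b}^{\top}\vectsymb{v}$ over the simplex $\Delta^{|\Y_{\alpha}|}$. Since $\phi$ is continuous and $\Delta^{|\Y_{\alpha}|}$ is compact, a minimizer $\vectsymb{v}^{\circ}$ exists; put $\vectsymb{u}^{*} := \matr{M}\vectsymb{v}^{\circ}$ and $\gamma^{*} := \vectsymb{b}^{\top}\vectsymb{v}^{\circ}$. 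The objective at any $\vectsymb{v}$ with $\matr{M}\vectsymb{v}=\vectsymb{u}^{*}$ equals $\tfrac{1}{2}\|\vectsymb{u}^{*}-\vectsymb{a}\|^2 - \vectsymb{b}^{\top}\vectsymb{v}$, so optimality of $\vectsymb{v}^{\circ}$ forces $\vectsymb{v}^{\circ}$ to also solve the linear program
\[
\text{maximize } \vectsymb{b}^{\top}\vectsymb{v} \quad\text{subject to}\quad \vectsymb{v}\in\Delta^{|\Y_{\alpha}|},\ \ \matr{M}\vectsymb{v}=\vectsymb{u}^{*}.
\]
Its feasible set is a nonempty bounded polytope, so the maximum is attained at a vertex $\vectsymb{v}^{*}$; because $\matr{M}\vectsymb{v}^{*}=\vectsymb{u}^{*}$ and $\vectsymb{b}^{\top}\vectsymb{v}^{*}=\gamma^{*}$, we get $\phi(\vectsymb{v}^{*})=\phi(\vectsymb{v}^{\circ})$, so $(\matr{M}\vectsymb{v}^{*},\vectsymb{v}^{*})$ is an optimal solution of \eqref{eq:ad3_subproblem_primal}.

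It then remains to bound the support of the vertex $\vectsymb{v}^{*}$. The feasible polytope above is $\{\vectsymb{v}\ge\vect{0} : \matr{M}_{i\alpha}\vectsymb{v}=\vectsymb{u}^{*}_i \ \forall i\in\sett{N}(\alpha),\ \vect{1}^{\top}\vectsymb{v}=1\}$, and at a vertex the columns of the equality-constraint matrix indexed by $\{j : v^{*}_j>0\}$ are linearly independent, hence $|\{j : v^{*}_j>0\}|$ is at most the rank of that matrix. That matrix has $\sum_{i\in\sett{N}(\alpha)}|\Y_i|+1$ rows but is rank-deficient: since each column of $\matr{M}_{i\alpha}$ has exactly one nonzero entry, equal to $1$, we have $\vect{1}^{\top}\matr{M}_{i\alpha}=\vect{1}^{\top}$, so for every $i\in\sett{N}(\alpha)$ the sum of the rows of block $i$ equals the normalization row $\vect{1}^{\top}$, i.e.\ (sum of block-$i$ rows) $-$ (normalization row) is a linear dependency. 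These $|\sett{N}(\alpha)|$ dependencies are linearly independent (restricting to the rows of block $i$ isolates the $i$-th one), so the rank is at most $\sum_{i\in\sett{N}(\alpha)}|\Y_i|+1-|\sett{N}(\alpha)|$, which is exactly the claimed bound on the number of nonzero components.

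The main obstacle is the rank count in the last step: one must verify carefully that the $|\sett{N}(\alpha)|$ row-sum relations are genuinely linearly independent dependencies of the equality system — and note that it is enough to establish an \emph{upper} bound on the rank, so no further accounting of dependencies is needed — and one should also check the degenerate small cases, such as $|\sett{N}(\alpha)|=1$, where the bound reduces to $|\Y_i|$ and is vacuous. Everything else is routine bookkeeping with the strictly convex quadratic term together with standard facts about vertices of polyhedra.
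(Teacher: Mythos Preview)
Your proposal is correct and follows essentially the same route as the paper: fix the optimal marginals $\vectsymb{u}^*=\matr{M}\vectsymb{v}^{\circ}$, observe that the remaining problem in $\vectsymb{v}$ is a bounded feasible linear program whose vertex solutions have support bounded by the rank of the equality-constraint matrix, and bound that rank via the row-sum redundancies $\vect{1}^{\top}\matr{M}_{i\alpha}=\vect{1}^{\top}$. The only cosmetic difference is that the paper first discards the normalization row $\vect{1}^{\top}\vectsymb{v}=1$ as redundant and bounds $\rank(\matr{M})\le \sum_i|\Y_i|-(|\sett{N}(\alpha)|-1)$, whereas you keep that row and bound the rank of the augmented matrix by $\sum_i|\Y_i|+1-|\sett{N}(\alpha)|$; the two counts coincide.
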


The fact that the solution lies in a low dimensional subspace, makes
active set methods appealing, since they only keep track of an \emph{active set} of variables,
that is, the non-zero components. Proposition \ref{prop:ad3_subproblem_primal_sparse} shows that
such an algorithm only needs to maintain at most $O(\sum_i |\Y_i|)$ elements in the active
set---note the \emph{additive}, rather than multiplicative, dependency
on the number of values of the variables. This alleviates eventual concerns about memory and storage.

\paragraph{Lagrangian and Dual Problem.}
Problem \eqref{eq:ad3_subproblem_primal} has $O(\Y_{\alpha})$ variables and constraints, a
number that grows exponentially with $|\sett{N}(\alpha)|$. We next
derive a dual formulation with only $O(\sum_{i}|\Y_i|)$ variables.
The Lagrangian of \eqref{eq:ad3_subproblem_primal} is
\begin{equation}
L(\vectsymb{u},\vectsymb{v},\vectsymb{w},\tau,\vectsymb{\lambda}) =
\frac{1}{2}\|\vectsymb{u} - \vectsymb{a}\|^2
- \vectsymb{b}^{\top} \vectsymb{v}
- \vectsymb{w}^{\top} (\matr{M}\vectsymb{v} - \vectsymb{u})
- \tau (1 - \vect{1}^{\top}\vectsymb{v})
- \vectsymb{\lambda}^{\top}\vectsymb{v}.
\end{equation}
Equating $\nabla_{\vectsymb{u}} L(\vectsymb{u},\vectsymb{v},\vectsymb{w},\tau,\vectsymb{\lambda})$ and
$\nabla_{\vectsymb{v}} L(\vectsymb{u},\vectsymb{v},\vectsymb{w},\tau,\vectsymb{\lambda})$ to zero leads to the equations:
\begin{eqnarray}
\vectsymb{u} &=& \vectsymb{a} - \vectsymb{w}\\
\matr{M}^{\top} \vectsymb{w} + \vectsymb{b} &=& \tau \vect{1} - \vectsymb{\lambda}.
\end{eqnarray}
Since the Lagrange variables $\vectsymb{\lambda}$ are constrained to be non-negative,
the dual problem takes the form (after replacing the maximization with a minimization and subtracting a constant),
%% \begin{align}\label{eq:dual0}
%% \text{maximize} \quad & -\frac{1}{2}\|\vectsymb{w} - \vectsymb{a}\|^2 - \tau + \frac{1}{2}\|\vectsymb{a}\|^2\\
%% \text{with respect to} \quad & \vectsymb{w} \in \set{R}^{\sum_i |\Y_i|}, \quad
%% {\tau} \in \set{R}\nonumber\\
%% \text{subject to} \quad &
%% \matr{M}^{\top} \vectsymb{w} + \vectsymb{b} \le \tau \vect{1},
%% \end{align}
%% or replacing the maximization with a minimization and subtracting a constant to the objective:
\begin{align}\label{eq:ad3_subproblem_dual}
\text{minimize} \quad & \frac{1}{2}\|\vectsymb{w} - \vectsymb{a}\|^2 + \tau\\
\text{with respect to} \quad & \vectsymb{w} \in \set{R}^{\sum_i |\Y_i|}, \quad
{\tau} \in \set{R}\nonumber\\
\text{subject to} \quad &
\matr{M}^{\top} \vectsymb{w} + \vectsymb{b} \le \tau \vect{1},\nonumber
\end{align}
which can be seen as a ``projection with slack'' onto $\{\vectsymb{w} \,\,|\,\, \matr{M}^{\top} \vectsymb{w} + \vectsymb{b} \le \vect{0}\}$.
We now apply the active set method of \citet[Section~16.4]{Nocedal1999} to this problem. Letting $\vectsymb{m}_r$ denote the
$r$th column of matrix $\matr{M}$, this method keeps track of a working set of constraints assumed to be \emph{active}:
\begin{equation}\label{eq:ad3_workingset}
\sett{W} := \left\{r \in \{1,\ldots,|\Y_{\alpha}|\}\,\,|\,\, \vectsymb{m}_r^{\top}\vectsymb{w} + b_r - \tau = 0\right\};
\end{equation}
by complementary slackness, at the optimum, this set contains the support of $\vectsymb{v}^*$.

%:$\sett{W} \supseteq \big\{r \in \{1,\ldots,|\Y_{\alpha}|\}\,\,|\,\, v_r > 0\big\}.$

%In the next section, we discuss the related issue of warm starting each QP by hinting the set of active constraints (e.g., using the active set from the problem
%solved at the previous iteration of DD-ADMM).

\paragraph{KKT conditions.}

%Each step in the active set method hints the set of active constraints (\emph{e.g.}, using the active set from the problem
%solved at the previous iteration of AD$^3$). If this set is correct, the problem becomes a QP with equality constraints,
%and can be solved by solving a KKT system.
%Let us first write the KKT equations of \eqref{eq:ad3_subproblem_dual}:
The KKT equations of \eqref{eq:ad3_subproblem_dual} are:
\begin{align}
\vectsymb{u} - \vectsymb{a} + \vectsymb{w} = 0 &\quad (\nabla_{\vectsymb{u}} L = 0)\\
\matr{M}^{\top}\vectsymb{w} + \vectsymb{b} = \tau \vect{1} - \vectsymb{\lambda} &\quad (\nabla_{\vectsymb{v}} L = 0) \\
\matr{M} \vectsymb{v} = \vectsymb{u} &\quad \text{(Primal feasibility)} \\
\vect{1}^{\top} \vectsymb{v} = 1 &\quad \text{(Primal feasibility)} \\
\vectsymb{y} \ge \vect{0} &\quad \text{(Primal feasibility)} \\
\vectsymb{\lambda} \ge \vect{0} &\quad \text{(Dual feasibility)} \\
\vectsymb{\lambda}^{\top}\vectsymb{v}  = \vect{0} &\quad \text{(Complementary slackness)}.
\end{align}
We can eliminate variables $\vectsymb{u}$ and $\vectsymb{w}$ and
reduce the above set of equations to
\begin{eqnarray}
\matr{M}^{\top}\matr{M} \vectsymb{v} + \tau \vect{1}  &=& \matr{M}^{\top}\vectsymb{a} + \vectsymb{b} + \vectsymb{\lambda}
\label{eq:ad3_subproblem_kkt_start}\\
\vect{1}^{\top} \vectsymb{v} &=& 1  \\
\vectsymb{v} &\ge& \vect{0} \\
\vectsymb{\lambda} &\ge& \vect{0}  \\
\vectsymb{\lambda}^{\top}\vectsymb{v}  &=& \vect{0}\label{eq:ad3_subproblem_kkt_end}.
\end{eqnarray}
Let $\sett{I}$ denote the, obviously unknown, support of $\vectsymb{v}$.
The algorithm maintains the active set $\sett{W}$ \eqref{eq:ad3_workingset}, which is a guess of $\sett{I}$.
Let $\vectsymb{v}_{\sett{I}}$ and $\vectsymb{\lambda}_{\sett{I}}$ be the subvectors indexed by $\sett{I}$
and  $\matr{M}_{\sett{I}}$ be the submatrix of $\matr{M}$ with column indices in $\sett{I}$.
Equations \eqref{eq:ad3_subproblem_kkt_start}--\eqref{eq:ad3_subproblem_kkt_end}  imply
$\vectsymb{\lambda}_{\sett{I}} = \vect{0}$ and the system of equations
%% set of equalities:
%% \begin{eqnarray}
%% \matr{M}_{\sett{I}}^{\top}\matr{M}_{\sett{I}} \vectsymb{v}_{\sett{I}} + \tau \vect{1}  &=& \matr{M}_{\sett{I}}^{\top}\vectsymb{a} + \vectsymb{b}_{\sett{I}}\\
%% \vect{1}^{\top} \vectsymb{v}_{\sett{I}} &=& 1,
%% \end{eqnarray}
%% In matricial form, we have the system
\begin{equation}\label{eq:ad3_subproblem_kktsystem}
\left[
\begin{array}{cc}
\matr{M}_{\sett{I}}^{\top}\matr{M}_{\sett{I}} & \vect{1}\\
\vect{1}^{\top} & 0
\end{array}
\right]
\left[
\begin{array}{c}
 \vectsymb{v}_{\sett{I}}\\
 \tau
\end{array}
\right] =
\left[
\begin{array}{c}
\matr{M}_{\sett{I}}^{\top}\vectsymb{a} + \vectsymb{b}_{\sett{I}}\\
1
\end{array}
\right].
\end{equation}
If the matrix in the left-hand side is non-singular,
this system has a unique solution $(\widehat{\vectsymb{v}}_{\sett{I}},\widehat{\tau})$.%
\footnote{If this matrix is singular, any vector in its null space can be used as a solution.} %
If $\widehat{\vectsymb{v}}$ (given by zero-padding $\widehat{\vectsymb{v}}_{\sett{I}}$) and $\widehat{\tau}$ satisfy
the KKT conditions (\eqref{eq:ad3_subproblem_kkt_start}--\eqref{eq:ad3_subproblem_kkt_end}),
\emph{i.e.}, if we have $\widehat{\vectsymb{v}} \ge \vect{0}$ and
\begin{eqnarray}\label{eq:ad3_subproblem_kkt}
\matr{M}^{\top}\matr{M} \widehat{\vectsymb{v}} + \widehat{\tau} \vect{1}  &\ge& \matr{M}^{\top}\vectsymb{a} + \vectsymb{b},
\end{eqnarray}
then the set $\sett{I}$ is correct and $\widehat{\vectsymb{v}}$ is a solution of \eqref{eq:ad3_subproblem_primal}.
To use this rationale algorithmically, the following two operations need to be performed:
\begin{itemize}
\item Solving the KKT system \eqref{eq:ad3_subproblem_kktsystem}, \emph{i.e.}, inverting
a $|\sett{I}| \times |\sett{I}|$  matrix. Since this operation is repeated after inserting or removing an
element from the active set, it can be done efficiently (namely, in time $O(|\sett{I}|^2)$) by keeping
track of the inverse matrix. From Proposition~\ref{prop:ad3_subproblem_primal_sparse},
this runtime is manageable, since the size of the active set is limited by $ O(\sum_i|\Y_i|)$.
Note also that adding a new configuration $\vectsymb{y}_{\alpha}$ to the active set,
corresponds to inserting a new column in $\matr{M}_{\sett{I}}$ thus the matrix inversion requires
updating $\matr{M}_{\sett{I}}^{\top}\matr{M}_{\sett{I}}$.
From the definition of $\matr{M}$ and simple algebra, the $(\vectsymb{y}_{\alpha},\vectsymb{y}'_{\alpha})$ entry in
$\matr{M}^{\top}\matr{M}$ is simply the \emph{number of common values} between the configurations
$\vectsymb{y}_{\alpha}$ and $\vectsymb{y}'_{\alpha}$. Hence, when a new configuration
$\vectsymb{y}_{\alpha}$ is added to the active set $\sett{W}$, that configuration needs to
be compared with all the others already in $\sett{W}$.
\item Checking if the constraints \eqref{eq:ad3_subproblem_kkt} are satisfied;
by setting $\widehat{\vectsymb{u}} = \matr{M} \widehat{\vectsymb{v}}$ and $\widehat{\vectsymb{w}} = \vectsymb{a}-\widehat{\vectsymb{u}}$,
the constraints can be written as
$\matr{M}^{\top}\widehat{\vectsymb{w}} + \vectsymb{b} \le \widehat{\tau} \vect{1}$,
and they hold if and only if $\max_{r \in \Y_{\alpha}} \vectsymb{m}_r^{\top} \widehat{\vectsymb{w}} + \vectsymb{b}_r \le \widehat{\tau}.$
Hence, checking the constraints only involves computing this maximum, which, interestingly, coincides
with finding the MAP configuration of factor $\alpha$ with variable log-potentials  $\widehat{\vectsymb{w}}$
and factor log-potentials $\vectsymb{b}$. This can be obtained through the function {\sc ComputeMAP}.
\end{itemize}

%This procedure is extremely efficient, since the cost of inverting the matrix above is $O(|\sett{I}|^3)$,
%which is manageable if the number of nonzeros is small (which is guaranteed from Proposition~\ref{prop:ad3_subproblem_primal_sparse}).

\paragraph{Active set algorithm.}
Algorithm~\ref{alg:ad3_subproblem_activeset} formally describes the complete procedure.
At each iteration $s$, an active set $\sett{W}_{s}$ stores the current guess
of the support of $\vectsymb{v}$.
The active set is initialized arbitrarily: \emph{e.g.}, in the first outer iteration of AD$^3$,
it is initialized with the single output $\vectsymb{y}_{\alpha}$, which is the MAP configuration
corresponding to log-potentials $\vectsymb{a}$ and $\vectsymb{b}$; in subsequent AD$^3$ iterations,
it is warm-started with the solution obtained in the previous iteration.
At each inner iteration, the KKT system \eqref{eq:ad3_subproblem_kktsystem} is solved
given the current active set.

If the solution is the same as in the previous round,
a black-box {\sc ComputeMAP} is invoked to check for KKT violations;
if there are none, the algorithm returns; otherwise, it adds the most
violated constraint to the active set.
If the solution of the KKT system $\widehat{\vectsymb{v}}$ is different from the previous one
($\widehat{\vectsymb{v}}_s$), the algorithm sets $\vectsymb{v}_{s+1}
:= (1-\alpha_s)\vectsymb{v}_{s} + \alpha_s \widehat{\vectsymb{v}}$, where the step size
$\alpha_s$ is chosen  by line search to yield the largest improvement in the objective, while
keeping the constraints satisfied; this has a closed form solution \citep[p.~457]{Nocedal1999}.

\begin{algorithm}[t]
   \caption{Active Set Algorithm for Solving a General AD$^3$ Subproblem \label{alg:ad3_subproblem_activeset}}
\begin{algorithmic}[1]
\STATE {\bfseries input:} Parameters $\vectsymb{a},\vectsymb{b},\matr{M}$,
	starting point $\vectsymb{v}_0$
   \STATE initialize $\sett{W}_0$ as the support of $\vectsymb{v}_0$
	\FOR{$s=0,1,2,\ldots$}
	\STATE solve the KKT system and obtain $\widehat{\vectsymb{v}}$ and $\widehat{\tau}$ (Eq.~\ref{eq:ad3_subproblem_kktsystem})
	\IF{$\widehat{\vectsymb{v}} = \vectsymb{v}_s$}
	\STATE compute $\widehat{\vectsymb{u}} := \matr{M} \widehat{\vectsymb{v}}$ and
	$\widehat{\vectsymb{w}} := \vectsymb{a}-\widehat{\vectsymb{u}}$
	%\STATE obtain the tighter constraint $r$ via $\vectsymb{e}_r = \text{{\sc
    %        ComputeMAP}}(\widehat{\vectsymb{w}},{\vectsymb{b}})$
	\STATE obtain the tighter constraint $r$ via $\vectsymb{e}_r = \text{{\sc
            ComputeMAP}}({\vectsymb{b}} + \matr{M}^{\top}\widehat{\vectsymb{w}})$
	\IF{$\vectsymb{m}_r^{\top} \widehat{\vectsymb{w}}  + \vectsymb{b}_r \le \widehat{\tau}$}
	\STATE return solution $\widehat{\vectsymb{u}}$ and $\widehat{\vectsymb{v}}$
	\ELSE
	\STATE  add the most violated constraint to the active set: $\sett{W}_{s+1} := \sett{W}_{s} \cup \{r\}$
	\ENDIF
	\ELSE
	\STATE compute the interpolation constant $\alpha_s := \min \left\{1, \min_{r \in \sett{W}_{s} \,\, \text{s.t.}\,\, v_{s,r} > \widehat{v}_{r}}
\frac{v_{s,r}}{v_{s,r} - \widehat{v}_{r}} \right\}$
	\STATE set $\vectsymb{v}_{s+1} := (1-\alpha_s)\vectsymb{v}_{s} + \alpha_s \widehat{\vectsymb{v}}$
	\IF{there are blocking constraints}
	\STATE pick a blocking constraint $r$
	\STATE remove that blocking constraint from the active set: $\sett{W}_{s+1} := \sett{W}_{s} \setminus \{r\}$
	\ENDIF
	\ENDIF
	\ENDFOR	
   \STATE \textbf{output:} $\widehat{\vectsymb{u}}$ and $\widehat{\vectsymb{v}}$
\end{algorithmic}
\end{algorithm}

Each iteration of Algorithm~\ref{alg:ad3_subproblem_activeset} always improves the dual objective
and, with a suitable strategy to prevent cycles and stalling, the algorithm is guaranteed to stop after
a finite number of steps \citep[Theorem~16.5]{Nocedal1999}. In practice, since it is run as a subroutine of AD$^3$,
Algorithm~\ref{alg:ad3_subproblem_activeset} does not need to be run to optimality, which is particularly
convenient in early iterations of AD$^3$ (this is supported by Proposition~\ref{prop:ddadmm2}).
The ability to warm-start each outer AD$^3$ iteration with the solution from the
previous round is very useful in practice: we have observed that, thanks to this warm-starting strategy,
very few inner iterations are typically necessary for the correct active set to be  identified.
We will provide some empirical evidence in Section~\ref{sec:exp_dependency_parsing}.

%\afm{say something about caching?}

%\paragraph{Runtime and caching strategies.}
%Considerable speed-ups are achieved by caching the subproblems \citep{Koo2010EMNLP}.
%After a few iterations, many variables $\vectsymb{p}_i$ reach a consensus
%(\emph{i.e.},  $\vectsymb{p}_i^{(t)}=\vectsymb{q}_{i\alpha}^{(t+1)}, \forall \alpha \in \sett{N}(i)$)
%and enter an idle state: they are left unchanged by the $\vectsymb{p}$-update \eqref{eq:ddadmm_zeta_update2},
%and so do the Lagrange variables $\vectsymb{\lambda}^{(t+1)}_{i\alpha}$ \eqref{eq:ddadmm_lambda}).
%If at iteration $t$ all variables in a subproblem at factor $\alpha$ are idle,
%then $\vectsymb{q}_{\alpha}^{(t+1)}=\vectsymb{q}_{\alpha}^{(t)}$, hence the corresponding subproblem
%does not need to be solved.  Typically, many variables and subproblems enter this idle state
%after the first few rounds.

\section{Experiments}\label{sec:experiments}

\subsection{Baselines and Problems}
In this section, we provide an empirical comparison between
AD$^3$ (Algorithm~\ref{alg:ad3}) and four other algorithms:
Star-MSD \citep{Sontag2011OPT}, which is an accelerated version of
the max-sum diffusion algorithm \citep{Kovalevsky1975,Werner2007};
generalized MPLP \citep{Globerson2008}; the projected subgradient
algorithm of \citet{Komodakis2007} (Algorithm~\ref{alg:ddsubgrad})
and its accelerated version introduced by \citet{Jojic2010} (mentioned in Subsection~\ref{sec:weak}).
All these algorithms address the LP-MAP problem; the first are message-passing methods performing
block coordinate descent in the dual, whereas the last two are based on dual decomposition.
All the baselines have the same  algorithmic complexity per iteration, which is
asymptotically the same as that of the AD$^3$ applied to a binarized graph,
but different from that of AD$^3$ with the active set method.

We compare the performance of the algorithms above in several datasets,
including
synthetic Ising and Potts models, protein design problems,
and two problems in natural language processing:
frame-semantic parsing and non-projective dependency parsing.
The graphical models associated with these problems
are quite diverse, containing pairwise binary factors (AD$^3$ subproblems
solved as described in Section~\ref{sec:ddadmm_binpairfact}),
first-order logic factors (addressed using
the tools of Section~\ref{sec:ddadmm_hardconstraintfactors}),
dense factors, and structured factors (tackled with the active set method
of Section~\ref{sec:ad3_generalfactors}).

%\ericx{Give an overview of the comparison paradigm: problem, data, etc.}

%\ericx{In each of the section bellow, please describe the data, either simulation or real, in detail. And I don't think we need 5 problems. 3 should be enough.}

\subsection{Synthetic Ising and Potts Models}

\paragraph{Ising models.}
Fig.~\ref{fig:exp_pairwisemrf} reports experiments with random Ising models,
with single-node log-potentials chosen as $\score_i(1)-\score_i(0) \sim \sett{U}[-1,1]$
and random edge couplings in $\sett{U}[-\rho,\rho]$, where $\rho \in \{0.5, 1, 1.5, 2\}$.
Decompositions are edge-based for all methods. For MPLP and Star-MSD,
primal feasible solutions $(\widehat{y}_i)_{i\in\sett{V}}$ are obtained by decoding the
single node messages \citep{Globerson2008}; for the dual decomposition methods,
$\widehat{y}_i = \argmax_{y_i} p_i(y_i)$.

We observe that the projected subgradient is the slowest, taking a long time to find a
``good'' primal feasible solution, arguably due to the large number of components.
The accelerated dual decomposition method \citep{Jojic2010} is also not competitive in
this setting, as it takes many iterations to reach a near-optimal region.
MPLP performs slightly better than Star-MSD and both are comparable to AD$^3$ in terms
of convergence of the dual objective. However, AD$^3$ outperforms all competitors at
obtaining a ``good'' feasible primal solution in early iterations (it retrieved the exact
MAP in all cases, in no more than $200$ iterations). We conjecture that this rapid
progress in the primal is due to the penalty term in the augmented Lagrangian
(Eq.~\ref{eq:ddsubgrad_augmlagrangian}), which pushes for a feasible primal
solution of the relaxed LP.

\begin{figure}[t]
\centering{\includegraphics[scale=0.45]{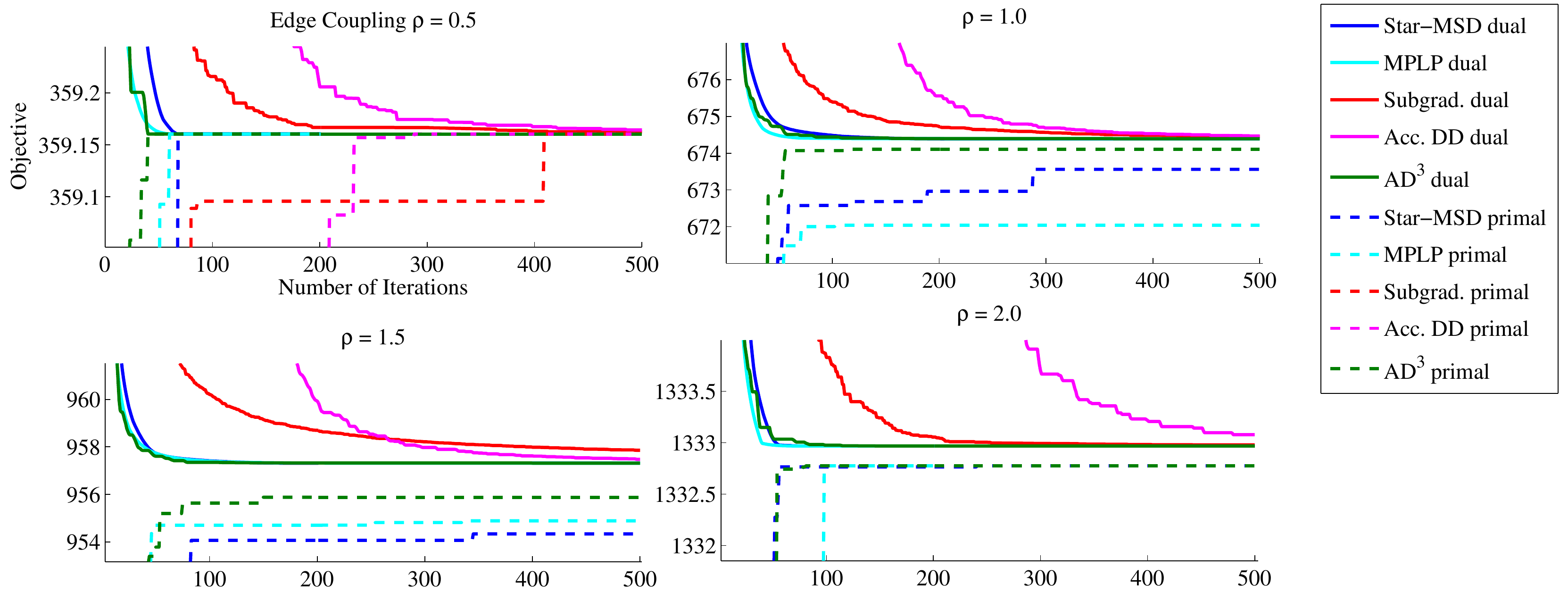}}
\caption{Evolution of the dual objective and the best primal feasible
one in the experiments with $30\times 30$ random Ising models,
generated as described in the main text. For the subgradient method, the
step sizes are $\eta_t \! =\! \eta_0/t$, with $\eta_0$ adjusted (with halving steps)
to yield the maximum dual improvement in $10$ iterations (those steps are not plotted).
For accelerated dual decomposition,  the most favorable $\epsilon\! \in \! \{0.1,1,10,100\}$ is chosen.
AD$^3$ uses $\eta\! =\! 5.0$.}
\label{fig:exp_pairwisemrf}
\end{figure}

\paragraph{Potts models.}
The effectiveness of AD$^3$ in the non-binary case is assessed using
random Potts models, with single-node log-potentials chosen as $\score_i(y_i) \sim \sett{U}[-1,1]$
and pair-wise log-potentials as $\score_{ij}(y_i,y_j) \sim \sett{U}[-10,10]$ if $y_i = y_j$ and $0$ otherwise.
The two baselines (MPLP and accelerated dual decomposition) use the same edge decomposition as
before, since they handle multi-valued variables; for AD$^3$, the graph is binarized (see
Section~\ref{sec:ddadmm_multivalfact}). As observed in the left plot of Fig.~\ref{fig:exp_multi_pairwisemrf},
MPLP decreases the objective very rapidly in the beginning and then slows down;
the accelerated dual decomposition algorithm, although slower in early iterations,
converging faster. AD$^3$ converges as fast as the accelerated dual decomposition algorithm,
and is almost as fast as MPLP in early iterations. The right plot of Fig.~\ref{fig:exp_multi_pairwisemrf}
shows that AD$^3$ with the subproblems solved by the active set method
(see Section~\ref{sec:ad3_generalfactors}) clearly outperforms the binarization-based version.
Notice that since AD$^3$ with the active set method involves more computation per iteration,
we plot the objective values with respect to runtime.

\begin{figure}[t]
\centering
 \includegraphics[scale=0.48]{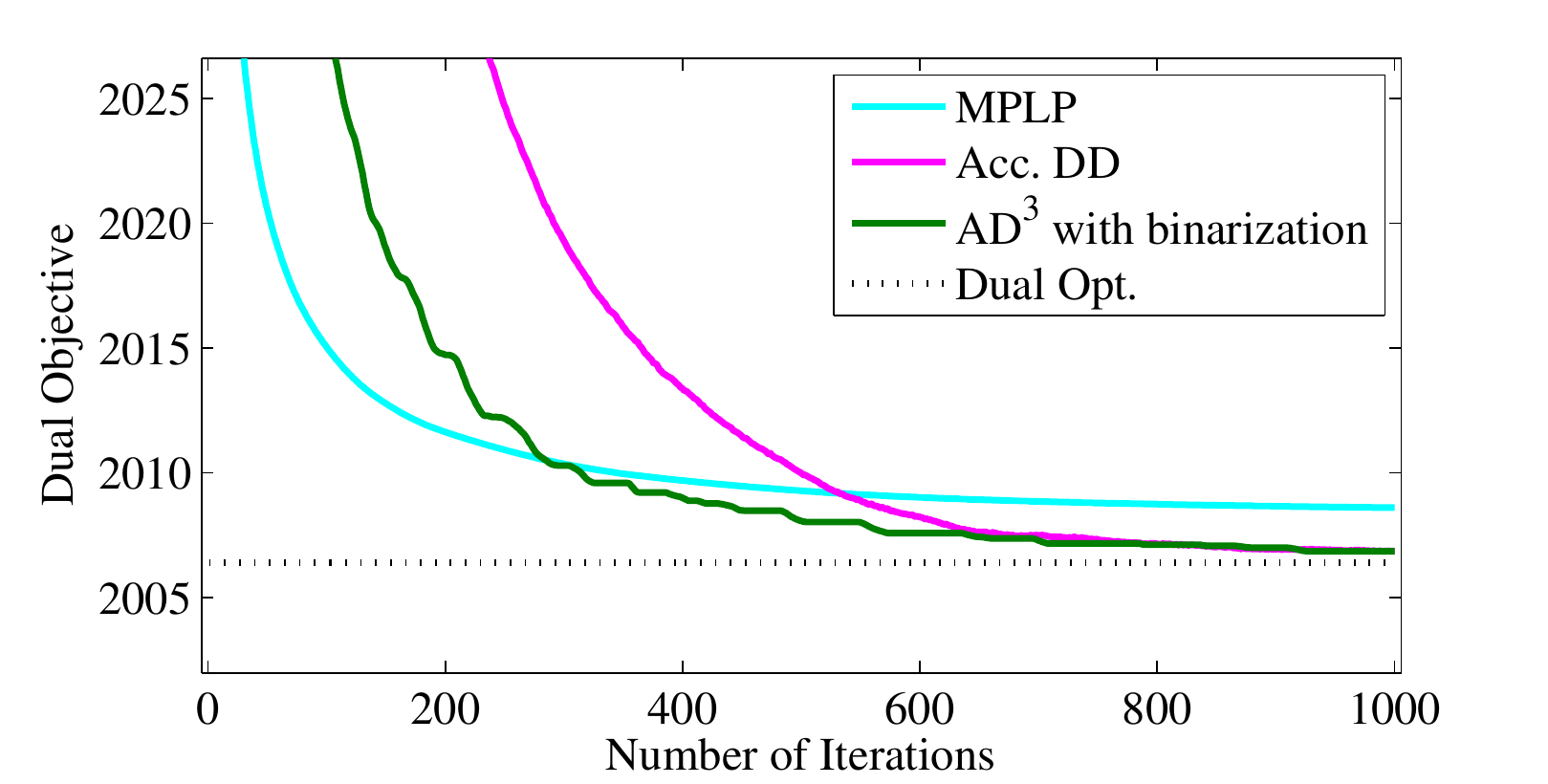}
 \includegraphics[scale=0.45]{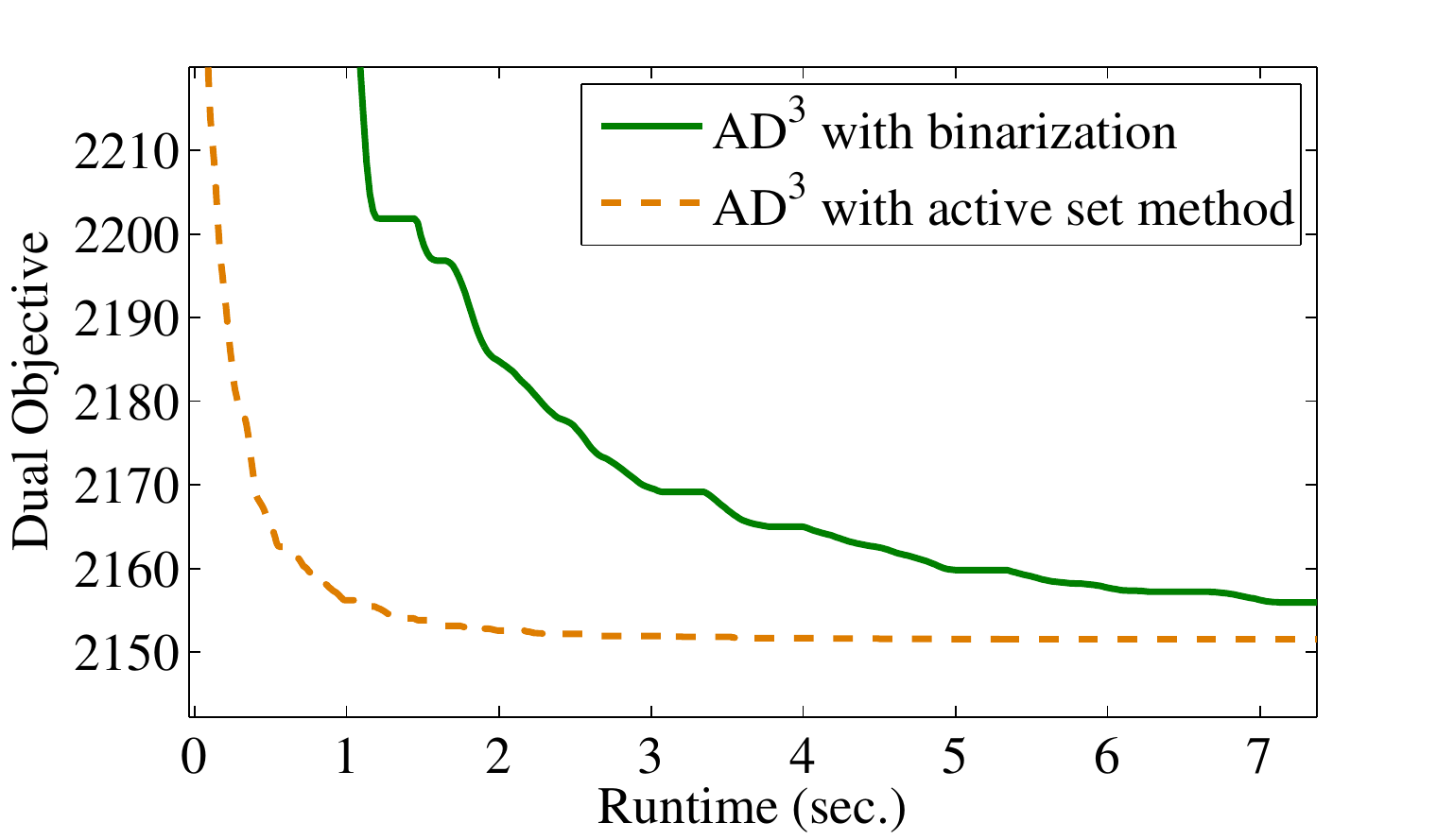}
\caption{Evolution of the dual objective in the experiments with
random $20\times 20$ Potts models with $8$-valued nodes, generated
as described in the main text. For the accelerated dual decomposition
algorithm, $\epsilon=1.0$. For AD$^3$, the left plot used
$\eta = 0.5$ and the right one $\eta=1$ (found to be the most favorable
choice in $\{0.01,0.1,1,10\}$). In the active set method,
the maximum number of inner iterations is $10$.}
 \label{fig:exp_multi_pairwisemrf}
 \end{figure}

%% \begin{figure}[t]
%% \centering
%%  \includegraphics[scale=0.6]{figs/potts_binary_vs_activeset}
%% \caption{Comparison between AD$^3$ with the active set method, and
%% AD$^3$ run on the binarized graph, for a $20\times 20$ Potts grid
%% with $8$ states per node. In both cases, we set $\eta=1$,
%% the most favourable choice in $\{0.01,0.1,1,10\}$. For the active set method,
%% we limited the maximum number of inner iterations to $10$.}
%%  \label{fig:potts_binary_vs_activeset}
%%  \end{figure}

\subsection{Protein Design}
We compare AD$^3$ with the MPLP implementation\footnote{Available at
\url{http://cs.nyu.edu/~dsontag/code}; that code includes a ``tightening''
procedure for retrieving the exact MAP, which we don't use, since we are interested in
the LP-MAP relaxation (which is what AD$^3$ addresses).} of \citet{Sontag2008} in the
benchmark protein design problems\footnote{Available at \url{http://www.jmlr.org/papers/volume7/yanover06a/}.}
of \citet{Yanover2006}. In these problems, the input is a three-dimensional shape,
and the goal is to find the most stable sequence of amino acids in that shape.
The problems can be represented as a pairwise factor graphs, whose variables correspond to
the identity of amino acids and rotamer configurations, thus having hundreds of possible states.
Fig.~\ref{fig:exp_protein} plots the evolution of the dual objective over runtime,
for two of the largest problem instances, \emph{i.e.}, those with 3167 (1fbo) and 1163
(1kw4) factors. These plots are representative of the typical performance obtained in
other instances. In both cases, MPLP steeply decreases the objective at early iterations,
but then reaches a plateau with no further significant improvement. AD$^3$ rapidly
surpasses MPLP in obtaining a better dual objective.
Finally, observe that although earlier iterations of AD$^3$ take longer than those
of MPLP, this cost is amortized in later iterations, by warm-starting the active set method.

% %\footnote{Note however that the ultimate goal of this task is to get a good primal solution. Experimentally, we
%observed that MPLP often gets better primal solutions than AD$^3$, regardless being consistently
%worse in optimizing the dual. However, the ability of quickly decreasing the dual is very desirable in tightening
%approaches, where larger factors are iteratively added to the model to tighten the relaxation. It is likely
%that a tightened AD$^3$ would obtain a competitive performance in this task.} %

\begin{figure}[h]
\centering
 \includegraphics[scale=0.48]{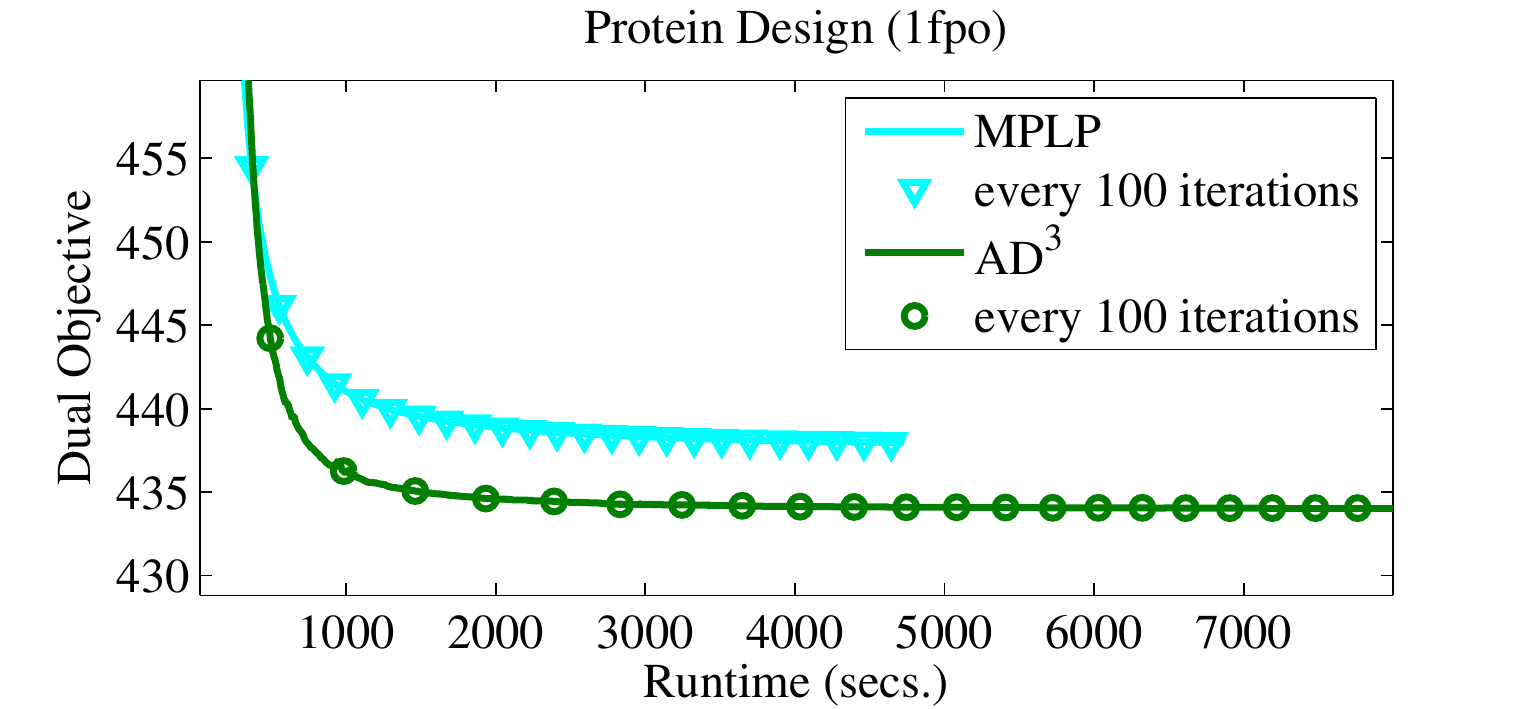}
 \includegraphics[scale=0.48]{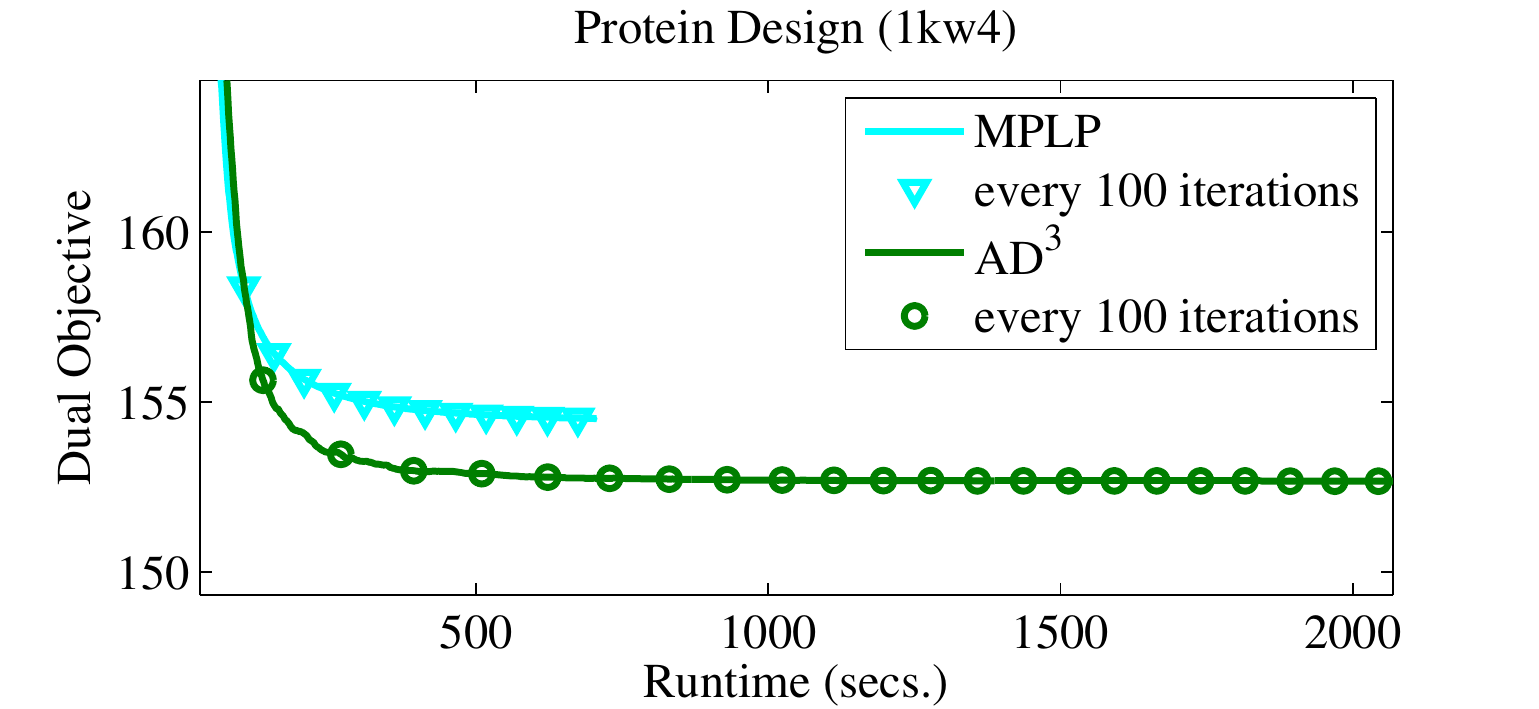}
\caption{Protein design experiments (see main text for details).
In AD$^3$, $\eta$ is adjusted as proposed by \citet{Boyd2011},
initialized at $\eta = 1.0$ and the subproblems are solved
by the proposed active set method, which shows better performance
that the binarization-based version. Although the plots are
with respect to runtime, they also
indicate iteration counts.
}
 \label{fig:exp_protein}
\end{figure}

%\afm{maybe other experiments with Markov logic networks, like a toy example involving logic constraints?}
\subsection{Frame-Semantic Parsing}\label{sec:ad3_framenet}

\begin{figure}[t]
\centering{\includegraphics[scale=0.45]{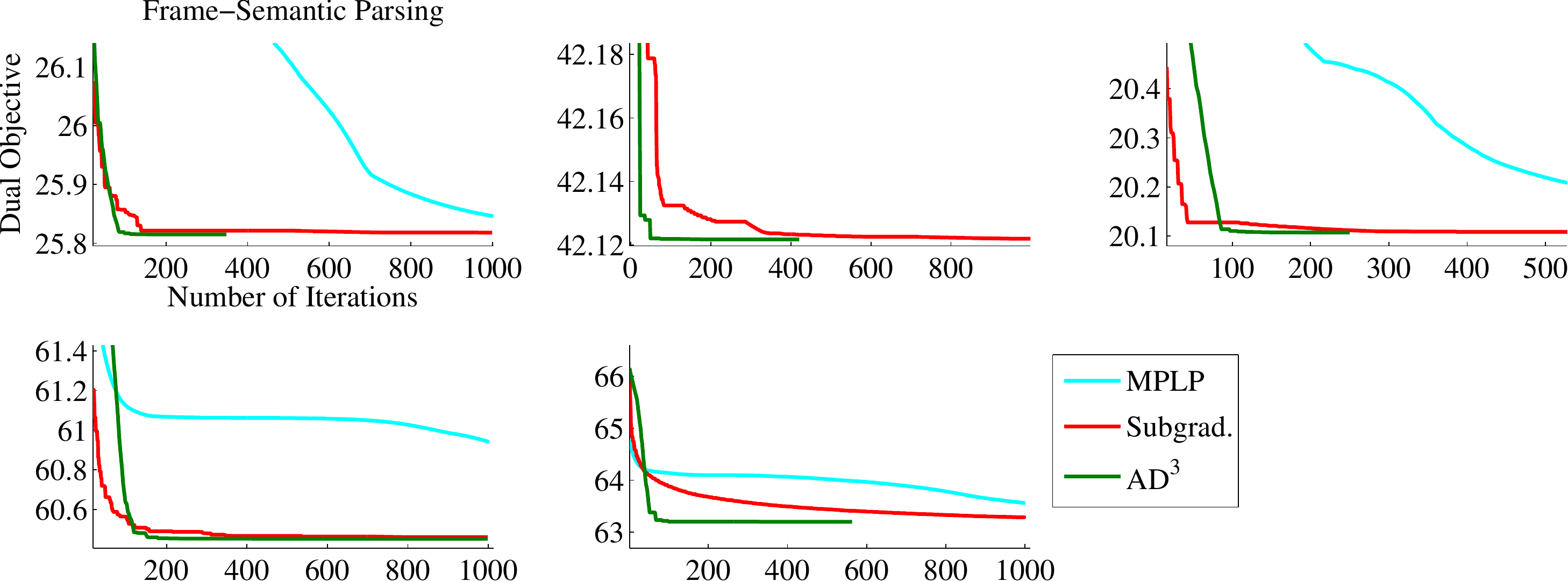}}
\caption{Experiments in five frame-semantic parsing problems
\citep[Section~5.5]{Dipanjan2012Thesis}.
The projected subgradient uses $\eta_t = \eta_0/t$, with
$\eta_0 = 1.0$ (found to be the best choice for all examples).
In AD$^3$, $\eta$ is adjusted as proposed by \citet{Boyd2011},
initialized at $\eta = 1.0$.}
\label{fig:exp_srl}
\end{figure}

We now report experiments on a natural language processing task
involving logic constraints: \emph{frame-semantic parsing}, using the FrameNet lexicon \citep{Fillmore1976}.
The goal is to predict the set of arguments and roles for a predicate word in a sentence,
while respecting several constraints about the frames that can be evoked.
The resulting graphical models are binary constrained factor graphs with FOL
constraints (see \citet{Das2012StarSEM} for details about this task).
%(see
%\citet[Section~5.5]{Dipanjan2012Thesis} for details.)
Fig.~\ref{fig:exp_srl} shows the
results of AD$^3$, MPLP, and projected subgradient on the five most difficult problems
(which have between $321$ and $884$ variables, and between $32$ and $59$ factors),
the ones in which the LP relaxation is not tight. Unlike MPLP and projected subgradient,
which did not converge after 1000 iterations, AD$^3$ achieves convergence in a few hundreds
of iterations for all but one example. Since these examples have a fractional LP-MAP solution,
we applied the branch-and-bound procedure described in Section~\ref{sec:ad3_bnb} to obtain the
exact MAP for these examples. The whole dataset contains 4,462 instances, which were parsed
by this exact variant of the AD$^3$ algorithm  in only $4.78$ seconds,
against $43.12$ seconds of CPLEX, a state-of-the-art commercial ILP solver.

%\mtfcomment{If Section ~\ref{sec:ad3_bnb} is removed, the previous paragraph has to be modified.}

%In the next chapter (Sections~\ref{sec:deppars_ddadmm}--\ref{sec:deppars_experiments}),
%we will illustrate the application of AD$^3$ for \emph{dependency parsing},
%which we tackle as a constrained inference problem. We defer to Section~\ref{sec:deppars_experiments}
%the detailed empirical analysis of AD$^3$ in that task.
%We will also
%analyze runtime and describe how one may greatly benefit from caching and warm-starting the subproblems.

\subsection{Dependency Parsing}\label{sec:exp_dependency_parsing}

The final set of experiments assesses the ability of AD$^3$ to
handle problems with structured factors. The task is \emph{dependency parsing} (illustrated
in the left part of Fig.~\ref{fig:example_nonproj}), an important problem in
natural language processing \citep{Eisner1996,McDonald2005b}, to which dual decomposition has
been recently applied \citep{Koo2010EMNLP}. We use an English dataset derived from the Penn
Treebank (PTB)\citep{Marcus1993}, converted to dependencies by applying the head rules of
\citet{Yamada2003}; we follow the common procedure of training in
sections \S02--21 (39,832 sentences), using \S22 as validation data (1,700 sentences),
and testing on \S23 (2,416 sentences). We ran a part-of-speech tagger on the validation
and test splits, and devised a linear model using various features depending on
words, part-of-speech tags, and arc direction and length. Our features decompose over
the parts illustrated in the right part of Fig.~\ref{fig:example_nonproj}.
We consider two different models in our experiments: a \emph{second order model}
with scores for arcs, consecutive siblings, and grandparents; a \emph{full model}, which
also has scores for arbitrary siblings and head bigrams.

\begin{figure}[t]
\centering
\includegraphics[scale=0.65]{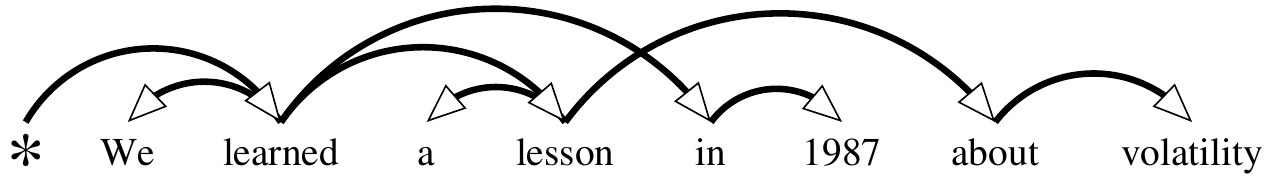}\quad
\includegraphics[scale=0.35]{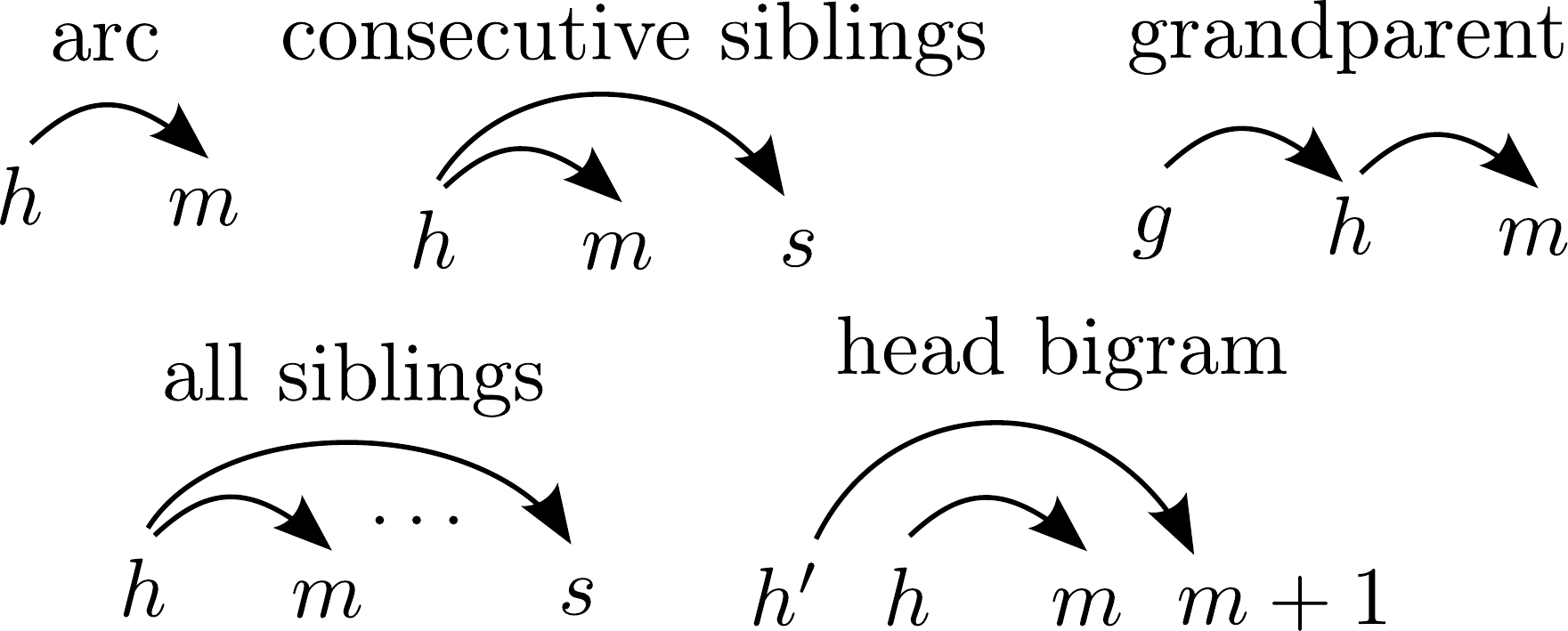}
\caption{Left: example of a sentence (input)
and its dependency parse tree (output to be predicted);
this is a directed spanning tree where each arc $(h,m)$
represent a syntactic relationships between a \emph{head} word
$h$ and the a \emph{modifier} word $m$. Right: the parts used in our models.
\emph{Arcs} are the basic parts: any dependency tree can be
``read out'' from its arcs.  \emph{Consecutive siblings} and \emph{grandparent} parts
introduce horizontal and vertical Markovization. %\citep{McDonald2006CoNLL,Carreras2007}.
We break the horizontal Markovianity via \emph{all siblings} parts (which look
at arbitrary pairs of siblings, not necessarily consecutive). Inspired by transition-based parsers,
we also adopt \emph{head bigram} parts,
which look at the heads attached to consecutive words.}
%tree-based factor graph corresponding to a second-order dependency
%parsing model with sibling and grandparent features, including a {\sc tree} hard constraint factor.
%Bottom: the flow-based factor graph is an alternative representation for the same model,
%in which extra flow and path variables are added, and the {\sc tree} factor is replaced by smaller
%{\sc xor}, {\sc or} and {\sc or-out}. See \citet{Martins2010EMNLP} for details.}
\label{fig:example_nonproj}
\end{figure}

If only scores for arcs were used,  the problem of obtaining a parse tree with maximal score could be
solved efficiently with a maximum directed spanning tree algorithm \citep{Chu1965,Edmonds1967,McDonald2005b};
the addition of any of the other scores makes the problem NP-hard \citep{McDonald2007}.
A factor graph representing the second order model, proposed by \citet{DSmith2008}
and \citet{Koo2010EMNLP}, contains binary variables representing the candidate arcs,
a hard-constraint factor imposing the tree constraint, and head automata factors
modeling the sequences of consecutive siblings and grandparents.
The full model has additional binary pairwise factors for each possible pair of
siblings (significantly increasing the number of factors), and a sequential
factor modeling the sequence of heads.%
\footnote{In previous work \citep{Martins2011bEMNLP},
we implemented a similar model with a more complex factor graph based on a
multi-commodity flow formulation, requiring only the FOL factors described in
Section~\ref{sec:ddadmm_hardconstraintfactors}. % \citep{Martins2009ACL,Martins2010EMNLP}.
This resulted in a factor graph with many overlapping factors
(in the order of tens of thousands), for which
we have shown that the subgradient algorithm became too slow.
In the current paper, we consider a smaller graph with structured factors,
for which the subgradient algorithm is a stronger competitor.
This smaller graph was also highly beneficial for AD$^3$,
which along with the active set method led to a  significant speed-up.} %
We have shown in previous work \citep{Martins2011ICML} that MPLP and the accelerated dual decomposition
methods are not competitive with AD$^3$ for this task,
hence we leave them out of this experiment.

\begin{figure}[tb]
\centering
\includegraphics[scale=0.55]{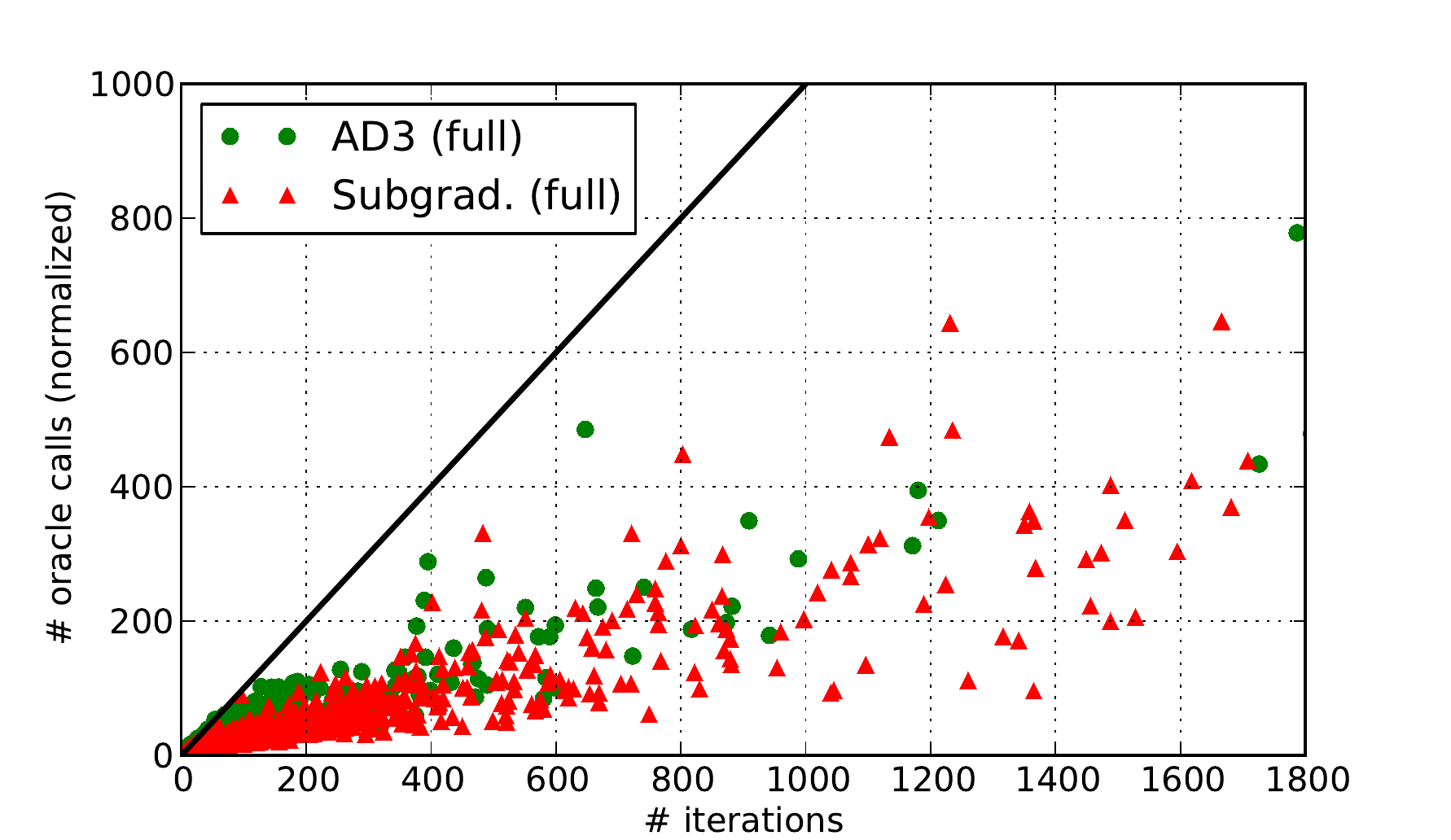}%
\caption{Number of calls to {\sc ComputeMAP} for AD$^3$ and
projected subgradient, as a function of the number of iterations. The number of calls is
normalized by dividing by the number of factors: in the subgradient method, this number
would equal the number of iterations if there was no caching (black line);
each iteration of AD$^3$ runs $10$ iterations of the active set method, thus
without caching or warm-starting the normalized number of calls would be ten times the
number of AD$^3$ iterations. Yet, it is clear that both algorithms make significantly fewer calls.
Remarkably, after just a few iterations, the number of calls made by the AD$^3$ and the
subgradient algorithms are comparable, which means that the number of active set iterations is
quickly amortized during the execution of AD$^3$.}
\label{fig:oracle_calls}
\end{figure}

Fig.~\ref{fig:oracle_calls} illustrates the remarkable speed-ups that
the caching and warm-starting  procedures bring to both the AD$^3$ and projected subgradient algorithms.
A similar conclusion was obtained by \citet{Koo2010EMNLP} for projected subgradient and
by \citet{Martins2011bEMNLP} for AD$^3$ in a different factor graph.
Fig.~\ref{fig:deppars_experiments} shows average runtimes for both algorithms,
as a function of the sentence length, and plots the percentage of instances for
which the exact solution was obtained, along with a certificate of optimality.
For the second-order model, AD$^3$ was able to solve all the instances to optimality,
and in 98.2\% of the cases, the LP-MAP was exact. For the full model, AD$^3$ solved 99.8\%
of the instances to optimality, being exact in 96.5\% of the cases.
For the second order model, we obtained in the test set (PTB \S23) a parsing speed of
1200 tokens per second and an unlabeled attachment score of 92.48\% (fraction of correct
dependency attachments excluding punctuation). For the full model, we obtained a speed of ~900
tokens per second and a score of 92.62\%. All speeds were measured in a desktop PC with
Intel Core i7 CPU 3.4 GHz and 8GB RAM. The parser is publicly available as an open-source project
in \url{http://www.ark.cs.cmu.edu/TurboParser}.

\begin{figure}[tb]
\centering
\includegraphics[scale=0.45]{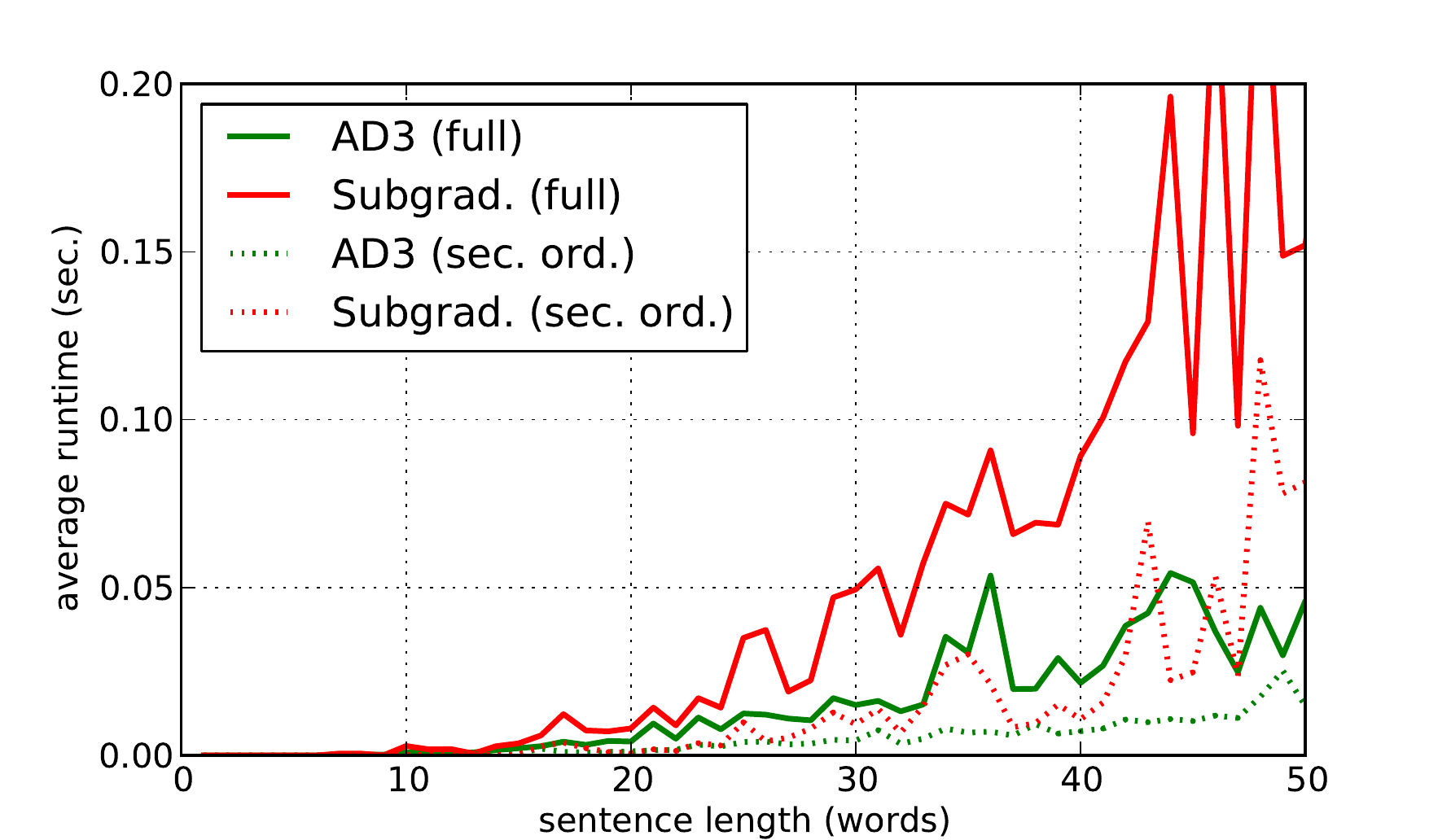}%
\includegraphics[scale=0.45]{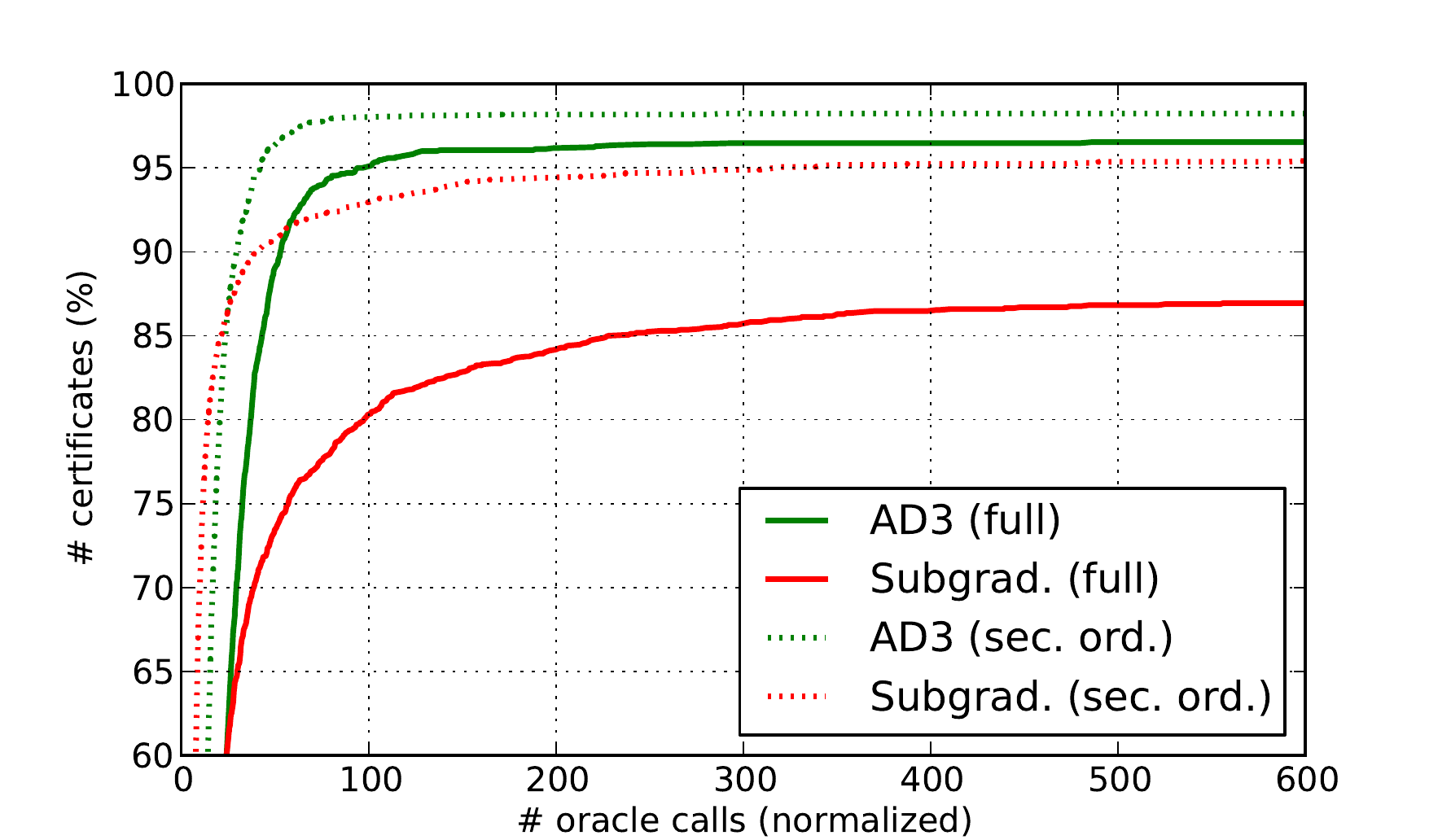}%
\caption{Left: average runtime in PTB \S22, as a function of sentence length.
Right: percentage of instances, as a function of the normalized number of {\sc ComputeMAP}
calls (see the caption of Fig.~\ref{fig:oracle_calls}), for which
the exact solution was obtained along with a certificate of optimality.
The maximum number of iterations is 2000 for both methods.}
\label{fig:deppars_experiments}
\end{figure}

\if 0
\paragraph{Convergence speed and optimality.}
Fig.~\ref{fig:deppars_experiments} compares the performance of AD$^3$ and the projected subgradient algorithms
in the validation section of the PTB.%
\footnote{The learning rate in the subgradient method was set
as $\eta_t = \eta_0/(1+N_{\mathrm{incr}}(t))$, as in \citet{Koo2010EMNLP},
where $N_{\mathrm{incr}}(t)$ is the number of dual increases up to
the $t$th iteration, and $\eta_0$ is chosen to maximize dual decrease after $20$ iterations
(on a per sentence basis).
Those preliminary iterations are not plotted in Fig.~\ref{fig:deppars_experiments}.} %
For the
second order model, the factor graph in which we
ran the subgradient method
has a hard-constraint factor
imposing the tree constraint (whose MAP subproblems
involve finding a minimum spanning tree) and
binary pairwise factors for the all-sibling parts,
yielding an average number of $310.5$ and a maximum of $4310$ slaves.
These numbers are still manageable,
and we observe that a high accuracy is achieved relatively quickly.
%(although the score after 1000 iterations is still 0.3\% less than the ADMM-based method). \afm{check}
The factor graph in
which we ran the
AD$^3$ method has many more slaves due to the multicommodity flow constraints (average $1870.8$, maximum $65446$),
yet
it attains optimality sooner, as can be observed in the right plot.
For the full model, the subgradient-based method becomes extremely slow,
and the accuracies per iteration are significantly lower (after 1000 iterations it is 2\% less than the one
obtained with AD$^3$, with very few instances having been solved to optimality).
The reason is the number of factors: in this configuration and dataset the average number
of factors per instance is 3327.4, and the largest number is 113207.
On the contrary, AD$^3$ keeps a robust performance, with a large fraction of optimality certificates
in early iterations.

\begin{figure*}[tb]
\centering
	\includegraphics[scale=0.45]{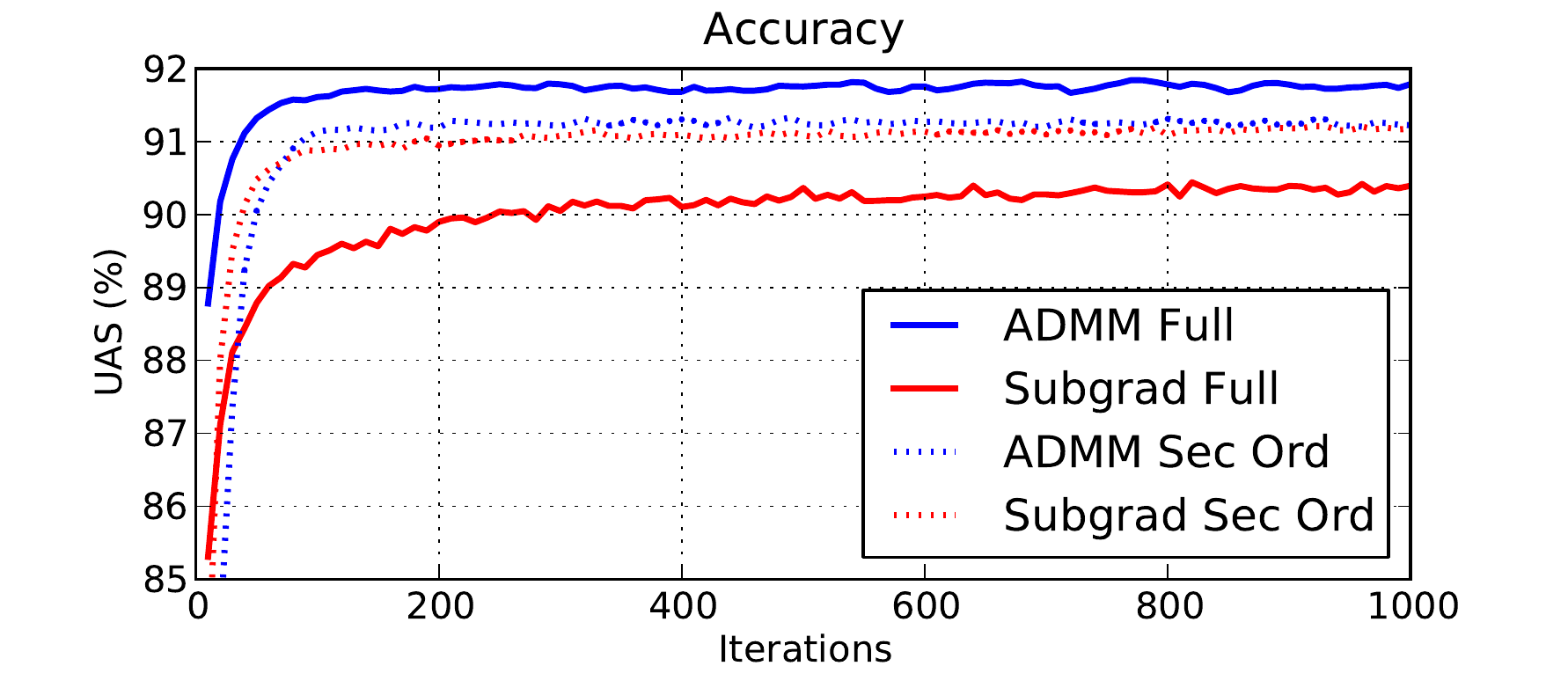}%
\includegraphics[scale=0.45]{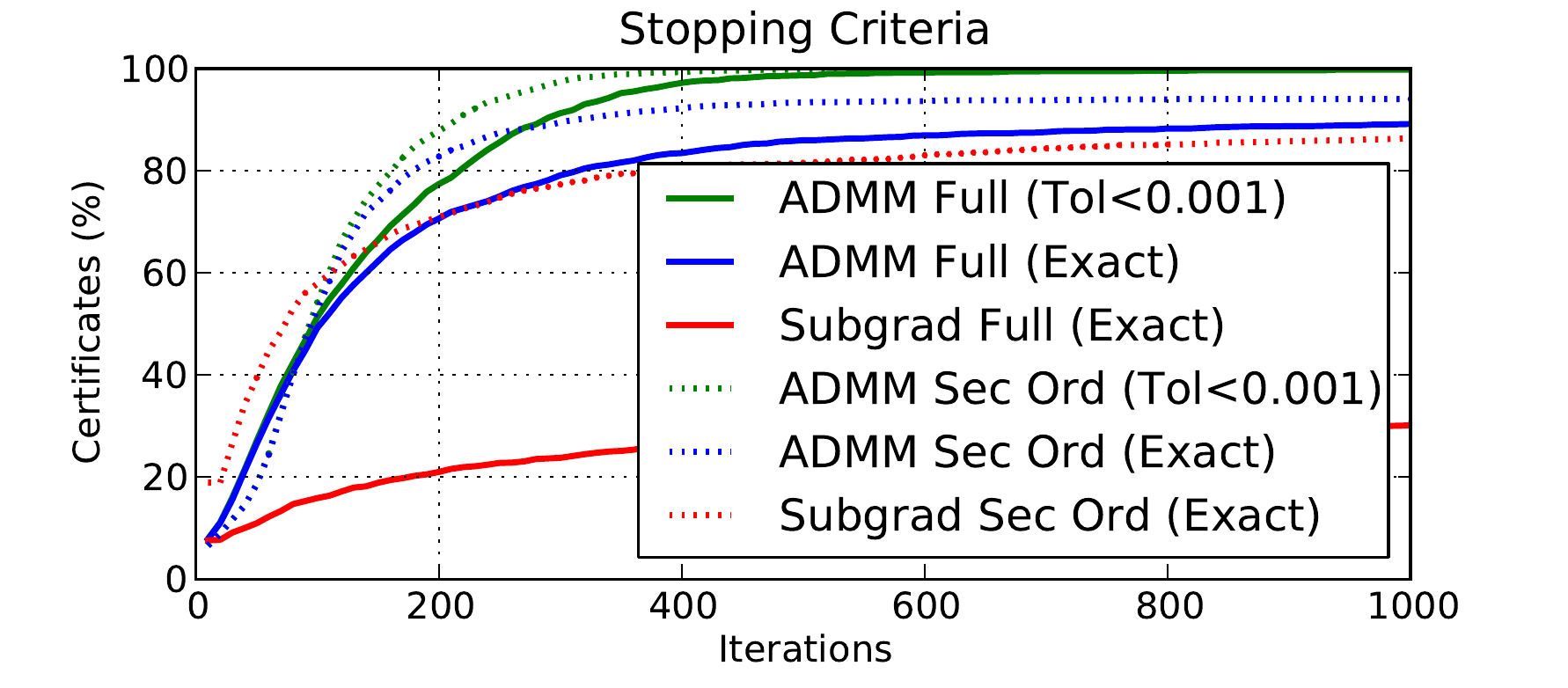}\\%
\includegraphics[scale=0.45]{runtime}%
\includegraphics[scale=0.45]{certificates}\\%
\includegraphics[scale=0.45]{oracle_calls}%
\caption{UAS including punctuation (left) and fraction of optimality certificates (right)
accross iterations of the subgradient and AD$^3$ algorithms, in PTB \S22.
``Full'' is our full model;
``Sec Ord'' is a second-order model with grandparents and all siblings,
for which the subgradient method uses a coarser decomposition with the tree-based factor graph.
Since subgradient and AD$^3$ are solving the same problems,
the solid lines (as the dashed ones) would meet in the limit,
however subgradient converges very slowly for the full model.
The right plot shows optimality certificates for both methods,
indicating that
an exact solution of $P$ has been found;
for AD$^3$ we also plot the fraction of instances that converged  to an accurate solution of $P'$
(primal and dual residuals
$<10^{-3}$) and hence can be stopped. \afm{correct ADMM to AD$^3$ in the caption}}
\label{fig:deppars_experiments}
\end{figure*}
\fi

\section{Recent Related Work}\label{sec:relatedwork}
During the preparation of this paper, and following our earlier work \citep{Martins2010bNIPS,Martins2011ICML},
a few related methods have appeared. \citet{Meshi2011} also applied ADMM to MAP inference in graphical models, 
although addressing the dual problem (the one underlying the MPLP algorithm) rather than the primal.
\citet{Yedidia2011} proposed the \emph{divide-and-concur algorithm}
for LDPC (low-density parity check) decoding,  which shares aspects of  AD$^3$, and
can be seen as an instance of non-convex ADMM.
More recently, \citet{Barman2011}
proposed an algorithm analogous to AD$^3$ for the same LDPC decoding problem;
in that algorithm, the subproblems correspond to projections onto the parity polytope,
for which they have derived an efficient $O(L \log L)$ algorithm,
where $L$ is the number of parity check bits. 

\section{Conclusions}\label{sec:conclusions}

We introduced AD$^3$, a new LP-MAP inference algorithm\footnote{Available at \url{http://www.ark.cs.cmu.edu/AD3}}
based on the alternating directions method of multipliers (ADMM) 
\citep{Glowinski75,Gabay1976}. 

AD$^3$ enjoys the modularity of dual decomposition methods,
but achieves faster consensus,
by penalizing, for each subproblem,
deviations from the current global solution.

Blending older and newer results for 
ADMM \citep{Glowinski1989,Eckstein1992,He2011,Wang2012ICML},
we showed that AD$^3$ converges to an $\epsilon$-accurate
solution with an iteration bound of $O(1/\epsilon)$. 

AD$^3$ can handle factor graphs with hard constraints in first-order logic, 
using efficient procedures for projecting onto the marginal polytopes of
the corresponding hard constraint factors. This opens the door for 
using AD$^3$ in problems with declarative constraints \citep{Roth2004,Richardson2006}.
Up to a logarithmic term, the asymptotic cost of projecting onto those polytopes is 
the same as that of message passing. A closed-form solution of the AD$^3$ subproblem 
was also derived for pairwise binary factors.

We introduced a new \emph{active set method} for solving the AD$^3$ subproblems
for arbitrary factors. This method requires only a local MAP oracle, as the  
projected subgradient method (see Section~\ref{sec:dualdecomposition}).
The active set method is particulary suitable for these problems,
since it can take advantage of warm starting and it deals well with sparse 
solutions---which are guaranteed by Proposition~\ref{prop:ad3_subproblem_primal_sparse}.
We also show how AD$^3$ can be wrapped in a branch-and-bound procedure to retrieve the 
exact MAP.

Experiments with synthetic and real-world datasets have shown that
AD$^3$ is able to solve the LP-MAP problem more efficiently than other methods
for a variety of problems, including MAP inference in Ising and Potts models, 
protein design,  frame-semantic parsing, and dependency parsing.

Our contributions open  several directions for future research.
One possible extension is to replace the quadratic (Euclidean) 
penalty of ADMM by a general Bregman divergence (notice that an entropic penalty would \emph{not}
lead to the same subproblems as in \citet{Jojic2010}). 
Although extensions of ADMM to Bregman penalties have been considered in the literature,
to the best of our knowledge, convergence has only been shown for 
quadratic penalties. 
%Quadratic penalties also have the important advantage that they lead to
%sparse solutions, which can be exploited algorithmically
%(as we do in our active set method).
The convergence proofs, however, can be trivially extended
to Mahalanobis distances, since they correspond to
an affine transformation of the subspace defined by the equality constraints
of Eq.~\ref{eq:convexproblemequalityconstraintschanges}.
Simple operations, such as scaling these constraints, do not affect the algorithms 
that are used to solve the subproblems, thus AD$^3$ can be generalized by 
including scaling parameters.

%% Quadratic problems were also recently considered
%% in a sequential algorithm \citep{Ravikumar2010}; however, that algorithm
%% tackles the \emph{primal} formulation and only pairwise models
%% are considered. A similar cyclic projection  can be
%% adopted in AD$^3$ to approximately solve {\sc quad} for larger slaves.

Since the AD$^3$ subproblems can be solved in parallel, significant speed-ups 
may be obtained in multi-core architectures or using GPU programming.
This has been shown to be very useful for large-scale
message-passing inference in graphical models
\citep{Low2010}. %\footnote{Available at \url{http://graphlab.org}.}
%As seen in Section~\ref{sec:experiments},
%a significant amount of computation can be
%saved by \emph{caching} the subproblems,
%which tend to become more and more similar across later iterations.
%We will see instances of this in Chapter~\ref{chap:turboparsers}.

The branch-and-bound algorithm for obtaining the exact MAP deserves
further experimental study. An advantage of AD$^3$ is its ability 
to quickly produce sharp upper bounds, useful for embedding in a branch-and-bound
procedure. For many problems, there are effective rounding procedures
that can also produce lower bounds, which can be exploited for
guiding the search. There are also alternatives to branch-and-bound,
such as tightening procedures \citep{Sontag2008}, which progressively add 
larger factors to decrease the duality gap. The variant of AD$^3$ with 
the active set method can be used to handle these larger factors.

% Acknowledgements should go at the end, before appendices and references

\paragraph*{Acknowledgments.}
{A.~M.~was supported by a FCT/ICTI grant through the
CMU-Portugal Program, and also by Priberam.
This work was partially supported by the FET programme (EU FP7), under the SIMBAD project (contract 213250),
and by a FCT grant PTDC/EEA-TEL/72572/2006.
N.~S.~was supported by NSF CAREER IIS-1054319.
E.~X.~was supported by AFOSR FA9550010247, ONR N000140910758, NSF CAREER DBI-0546594, NSF IIS-0713379, and an Alfred P. Sloan Fellowship.}

\appendix

\section{Proof of Convergence Rate of AD$^3$}\label{chap:ad3rate}

In this appendix, we show the $O(1/\epsilon)$ convergence bound of the ADMM algorithm. 
We use a recent result established by \citet{Wang2012ICML} 
regarding convergence in a variational setting, 
from which we derive the convergence of ADMM in the dual objective.
We then consider the special case of AD$^3$, interpreting the constants
in the bound in terms of properties of the graphical model.

We start with the following proposition,
which states the variational inequality associated with the Lagrangian saddle point 
problem associated with \eqref{eq:convexproblemequalityconstraints},
\begin{equation}\label{eq:lagrangian-saddlepoint}
\min_{\vectsymb{\lambda} \in \Lambda} \max_{\vectsymb{q} \in \sett{Q}, \vectsymb{p} \in \sett{P}} L(\vectsymb{q},\vectsymb{p},\vectsymb{\lambda}),
\end{equation}
where 
$L(\vectsymb{q},\vectsymb{p},\vectsymb{\lambda}) := f_1(\vectsymb{q}) + f_2(\vectsymb{p}) + \DP{\vectsymb{\lambda}}{(\matr{A}\vectsymb{q} + \matr{B}\vectsymb{p} - \vectsymb{c})}$
%$L := L_0$ 
is the standard Lagrangian, 
%(the expression in \eqref{eq:augmented-lagrangian} with $\eta = 0$), 
and
$$\Lambda := \{\vectsymb{\lambda} \,\,|\,\, \max_{\vectsymb{q} \in \sett{Q}, \vectsymb{p} \in \sett{P}} L(\vectsymb{q},\vectsymb{p},\vectsymb{\lambda}) < \infty\}.$$

\begin{proposition}[Variational inequality]\label{prop:admm_variational}
Let $\sett{W} := \sett{Q} \times \sett{P} \times \Lambda$.
Given $\vectsymb{w} = (\vectsymb{q}, \vectsymb{p}, \vectsymb{\lambda}) \in \sett{W}$,
define $h(\vectsymb{w}) := f_1(\vectsymb{q}) + f_2(\vectsymb{p})$
and $F(\vectsymb{w}) := (\matr{A}^{\top}\vectsymb{\lambda}, \matr{B}^{\top}\vectsymb{\lambda}, -(\matr{A}\vectsymb{q} + \matr{B}\vectsymb{p} - \vectsymb{c}))$.
Then, $\vectsymb{w}^* := (\vectsymb{q}^*,\vectsymb{p}^*,\vectsymb{\lambda}^*) \in \sett{W}$ is a primal-dual solution of
Eq.~\ref{eq:lagrangian-saddlepoint} if and only if:
\begin{equation}\label{eq:ddadmm_vi_main}
\forall \vectsymb{w} \in \sett{W}, \quad h(\vectsymb{w}) - h(\vectsymb{w}^*) + \DP{(\vectsymb{w} - \vectsymb{w}^*)}{F(\vectsymb{w}^*)} \le 0.
\end{equation}
\end{proposition}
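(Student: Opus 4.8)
The plan is to observe that the variational inequality \eqref{eq:ddadmm_vi_main} is nothing more than a term-by-term rewriting of the two inequalities that define a saddle point of the un-augmented Lagrangian $L$, so the whole proof reduces to one elementary algebraic identity.

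First I would invoke the classical fact that $\vectsymb{w}^* = (\vectsymb{q}^*,\vectsymb{p}^*,\vectsymb{\lambda}^*) \in \sett{W}$ is a primal-dual solution of \eqref{eq:lagrangian-saddlepoint} if and only if it is a saddle point of $L$ over $(\sett{Q}\times\sett{P})\times\Lambda$, i.e.
\[
L(\vectsymb{q},\vectsymb{p},\vectsymb{\lambda}^*) \;\le\; L(\vectsymb{q}^*,\vectsymb{p}^*,\vectsymb{\lambda}^*) \;\le\; L(\vectsymb{q}^*,\vectsymb{p}^*,\vectsymb{\lambda}),
\qquad \forall\ \vectsymb{q}\in\sett{Q},\ \vectsymb{p}\in\sett{P},\ \vectsymb{\lambda}\in\Lambda .
\]
Concavity of $f_1+f_2$ and convexity of $\sett{Q},\sett{P}$ are not needed for \emph{this} equivalence; they enter only to guarantee that such a $\vectsymb{w}^*$ exists. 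Next I would record, by directly expanding $F(\vectsymb{w}^*) = (\matr{A}^{\top}\vectsymb{\lambda}^*,\ \matr{B}^{\top}\vectsymb{\lambda}^*,\ -(\matr{A}\vectsymb{q}^*+\matr{B}\vectsymb{p}^*-\vectsymb{c}))$, recalling $h(\vectsymb{w}) = f_1(\vectsymb{q}) + f_2(\vectsymb{p})$, and using the adjoint identities $\DP{(\vectsymb{q}-\vectsymb{q}^*)}{\matr{A}^{\top}\vectsymb{\lambda}^*} = \DP{\vectsymb{\lambda}^*}{\matr{A}(\vectsymb{q}-\vectsymb{q}^*)}$ (and similarly for $\matr{B}$), the key identity valid for every $\vectsymb{w} = (\vectsymb{q},\vectsymb{p},\vectsymb{\lambda}) \in \sett{W}$:
\[
h(\vectsymb{w}) - h(\vectsymb{w}^*) + \DP{(\vectsymb{w}-\vectsymb{w}^*)}{F(\vectsymb{w}^*)}
\;=\; \bigl( L(\vectsymb{q},\vectsymb{p},\vectsymb{\lambda}^*) - L(\vectsymb{q}^*,\vectsymb{p}^*,\vectsymb{\lambda}^*) \bigr)
\;+\; \bigl( L(\vectsymb{q}^*,\vectsymb{p}^*,\vectsymb{\lambda}^*) - L(\vectsymb{q}^*,\vectsymb{p}^*,\vectsymb{\lambda}) \bigr).
\]

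With this identity in hand, the ``if'' direction is immediate: assuming \eqref{eq:ddadmm_vi_main}, I would put $\vectsymb{w} = (\vectsymb{q},\vectsymb{p},\vectsymb{\lambda}^*)$, which kills the second parenthesized term and yields $L(\vectsymb{q},\vectsymb{p},\vectsymb{\lambda}^*) \le L(\vectsymb{q}^*,\vectsymb{p}^*,\vectsymb{\lambda}^*)$ for all $\vectsymb{q}\in\sett{Q},\vectsymb{p}\in\sett{P}$; and $\vectsymb{w} = (\vectsymb{q}^*,\vectsymb{p}^*,\vectsymb{\lambda})$, which kills the first term (since $h$ does not depend on $\vectsymb{\lambda}$) and yields $L(\vectsymb{q}^*,\vectsymb{p}^*,\vectsymb{\lambda}^*) \le L(\vectsymb{q}^*,\vectsymb{p}^*,\vectsymb{\lambda})$ for all $\vectsymb{\lambda}\in\Lambda$ — that is, $\vectsymb{w}^*$ is a saddle point, hence a primal-dual solution. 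For the ``only if'' direction I would run this backwards: from the two saddle-point inequalities, the first parenthesized term is $\le 0$ for every $\vectsymb{q},\vectsymb{p}$ and the second is $\le 0$ for every $\vectsymb{\lambda}$; since these two blocks range independently over $\sett{Q}\times\sett{P}$ and $\Lambda$, their sum is $\le 0$ for every $\vectsymb{w}\in\sett{W}$, and by the identity this sum equals the left-hand side of \eqref{eq:ddadmm_vi_main}.

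The argument is essentially bookkeeping and I do not expect any genuine obstacle. The two points deserving a line of care are: (i) in the first step, reading ``primal-dual solution of \eqref{eq:lagrangian-saddlepoint}'' as ``saddle point of the un-augmented Lagrangian over $(\sett{Q}\times\sett{P})\times\Lambda$'', which is the standard correspondence; and (ii) in the ``only if'' step, the observation that the variational inequality over all of $\sett{W}$ is equivalent to its two ``slices'' $\vectsymb{w}=(\vectsymb{q},\vectsymb{p},\vectsymb{\lambda}^*)$ and $\vectsymb{w}=(\vectsymb{q}^*,\vectsymb{p}^*,\vectsymb{\lambda})$, which works precisely because the $(\vectsymb{q},\vectsymb{p})$-block and the $\vectsymb{\lambda}$-block decouple in the identity above.
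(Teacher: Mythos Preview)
Your proposal is correct and follows essentially the same approach as the paper: both directions rest on the identity $h(\vectsymb{w}) - h(\vectsymb{w}^*) + \DP{(\vectsymb{w}-\vectsymb{w}^*)}{F(\vectsymb{w}^*)} = L(\vectsymb{q},\vectsymb{p},\vectsymb{\lambda}^*) - L(\vectsymb{q}^*,\vectsymb{p}^*,\vectsymb{\lambda})$, and the ``if'' direction is recovered by the same two slices $\vectsymb{w} = (\vectsymb{q},\vectsymb{p},\vectsymb{\lambda}^*)$ and $\vectsymb{w} = (\vectsymb{q}^*,\vectsymb{p}^*,\vectsymb{\lambda})$. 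The only cosmetic difference is that you isolate the identity first and then apply it symmetrically, whereas the paper expands $L(\vectsymb{q},\vectsymb{p},\vectsymb{\lambda}^*) - L(\vectsymb{q}^*,\vectsymb{p}^*,\vectsymb{\lambda})$ inline for the forward direction.
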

\begin{proof}
Assume $\vectsymb{w}^*$ is a primal-dual solution of
Eq.~\ref{eq:lagrangian-saddlepoint}. Then,
from the saddle point conditions, we have for every $\vectsymb{w} := (\vectsymb{q},\vectsymb{p},\vectsymb{\lambda}) \in \sett{W}$:
\begin{equation}
L(\vectsymb{q},\vectsymb{p},\vectsymb{\lambda}^*) \le L(\vectsymb{q}^*,\vectsymb{p}^*,\vectsymb{\lambda}^*) \le
L(\vectsymb{q}^*,\vectsymb{p}^*,\vectsymb{\lambda}).
\end{equation}
Hence:
\begin{eqnarray}
0 &\ge& L(\vectsymb{q},\vectsymb{p},\vectsymb{\lambda}^*) - L(\vectsymb{q}^*,\vectsymb{p}^*,\vectsymb{\lambda})\nonumber\\
&=& f_1(\vectsymb{q}) + f_2(\vectsymb{p}) + \DP{\vectsymb{\lambda}^*}{(\matr{A}\vectsymb{q} + \matr{B}\vectsymb{p} - \vectsymb{c})}
- f_1(\vectsymb{q}^*) - f_2(\vectsymb{p}^*) - \DP{\vectsymb{\lambda}}{(\matr{A}\vectsymb{q}^* + \matr{B}\vectsymb{p}^* - \vectsymb{c})}\nonumber\\
&=& h(\vectsymb{w}) - h(\vectsymb{w}^*) + \DP{\vectsymb{q}}{\matr{A}^{\top}\vectsymb{\lambda}^*} + \DP{\vectsymb{p}}{\matr{B}^{\top}\vectsymb{\lambda}^*}
- \DP{(\vectsymb{\lambda} - \vectsymb{\lambda}^*)}{(\matr{A}\vectsymb{q}^* + \matr{B}\vectsymb{p}^* - \vectsymb{c})}
- \DP{\vectsymb{\lambda}^*}{(\matr{A}\vectsymb{q}^* + \matr{B}\vectsymb{p}^*)}\nonumber\\
&=& h(\vectsymb{w}) - h(\vectsymb{w}^*) + \DP{(\vectsymb{w} - \vectsymb{w}^*)}{F(\vectsymb{w}^*)}.
\end{eqnarray}
Conversely, let $\vectsymb{w}^*$ satisfy Eq.~\ref{eq:ddadmm_vi_main}. Taking
$\vectsymb{w} = (\vectsymb{q}^*, \vectsymb{p}^*, \vectsymb{\lambda})$, we obtain
$L(\vectsymb{q}^*,\vectsymb{p}^*,\vectsymb{\lambda}^*) \le L(\vectsymb{q}^*,\vectsymb{p}^*,\vectsymb{\lambda})$.
Taking
$\vectsymb{w} = (\vectsymb{q}, \vectsymb{p}, \vectsymb{\lambda}^*)$, we obtain
$L(\vectsymb{q},\vectsymb{p},\vectsymb{\lambda}^*) \le L(\vectsymb{q}^*,\vectsymb{p}^*,\vectsymb{\lambda}^*)$.
Hence $(\vectsymb{q}^*,\vectsymb{p}^*,\vectsymb{\lambda}^*)$ is a saddle point,
and therefore a primal-dual solution.
\end{proof}

\bigskip

The next result, due to \citet{Wang2012ICML}
and 
related to previous work by \citet{He2011},
concerns the convergence rate of ADMM in terms of the variational
inequality stated above. %in Proposition~\ref{prop:admm_variational}.
%The proof relies itself on variational inequalities associated with the
%updates in the ADMM algorithm (see \citealt{Facchinei2003} for a comprehensive overview
%of variational inequalities and complementarity problems).

\begin{proposition}[Variational convergence rate]\label{prop:convergence_admm_vi}
Assume the conditions in Proposition~\ref{prop:ddadmm1}. 
%and $\vectsymb{\lambda}^{0} = \vect{0}$.
Let $\bar{\vectsymb{w}}_T = \frac{1}{T} \sum_{t=1}^T \vectsymb{w}^t$,
where $\vectsymb{w}^t := (\vectsymb{q}^t,\vectsymb{p}^t,\vectsymb{\lambda}^t)$ are the
ADMM iterates with  
$\vectsymb{\lambda}^{0} = \vect{0}$. Then, after $T$ iterations:
%\begin{equation}
%\forall \vectsymb{w} \in \sett{W}, \quad h(\bar{\vectsymb{w}}_T) - h(\vectsymb{w}) + \DP{(\bar{\vectsymb{w}}_T - \vectsymb{w})}{F(\bar{\vectsymb{w}}_T)} \le
%\frac{C}{T},
%\end{equation}
\begin{equation}\label{eq:convergence_admm_vi_constant}
\forall \vectsymb{w} \in \sett{W}, \quad h(\vectsymb{w}) - h(\bar{\vectsymb{w}}_T) + \DP{(\vectsymb{w} - \bar{\vectsymb{w}}_T)}{F(\bar{\vectsymb{w}}_T)} \le
\frac{C}{T},
\end{equation}
where
$C = \frac{\eta}{2}
\|\matr{A}\vectsymb{q} + \matr{B}\vectsymb{p}^{0} - \vectsymb{c}\|^2
+
\frac{1}{2\eta}
\|\vectsymb{\lambda}\|^2$ is independent of $T$.
\end{proposition}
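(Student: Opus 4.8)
The plan is to follow the argument of \citet{He2011} and \citet{Wang2012ICML}: produce a \emph{per-iteration} inequality whose right-hand side is a difference of two quadratics (so that it telescopes when summed), then turn the time-averaged inequality into the variational bound \eqref{eq:convergence_admm_vi_constant}. Write $\vectsymb{w}^{t} := (\vectsymb{q}^{t},\vectsymb{p}^{t},\vectsymb{\lambda}^{t})$ for the ADMM iterates, where $\vectsymb{q}^{t+1}$ solves \eqref{eq:admm_u_update}, $\vectsymb{p}^{t+1}$ solves \eqref{eq:admm_v_update}, and $\vectsymb{\lambda}^{t+1} = \vectsymb{\lambda}^{t} - \eta(\matr{A}\vectsymb{q}^{t+1} + \matr{B}\vectsymb{p}^{t+1} - \vectsymb{c})$. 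Two structural facts will be used throughout: $h = f_1 + f_2$ is concave, and the linear part of the map $F$ in Proposition~\ref{prop:admm_variational} is skew-symmetric, so that
\[
\DP{(\vectsymb{w} - \vectsymb{w}')}{F(\vectsymb{w}')} = \DP{(\vectsymb{w} - \vectsymb{w}')}{F(\vectsymb{w})}, \qquad \forall \vectsymb{w}, \vectsymb{w}' \in \sett{W}.
\]

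\textbf{Step 1: per-iteration inequality.} Since $f_1$ is concave and $\sett{Q}$ convex, optimality of $\vectsymb{q}^{t+1}$ in \eqref{eq:admm_u_update} gives, for every $\vectsymb{q} \in \sett{Q}$,
\[
f_1(\vectsymb{q}) - f_1(\vectsymb{q}^{t+1}) + \DP{(\vectsymb{q} - \vectsymb{q}^{t+1})}{\matr{A}^{\top}\big(\vectsymb{\lambda}^{t} - \eta(\matr{A}\vectsymb{q}^{t+1} + \matr{B}\vectsymb{p}^{t} - \vectsymb{c})\big)} \le 0,
\]
and optimality of $\vectsymb{p}^{t+1}$ in \eqref{eq:admm_v_update} gives, for every $\vectsymb{p} \in \sett{P}$,
\[
f_2(\vectsymb{p}) - f_2(\vectsymb{p}^{t+1}) + \DP{(\vectsymb{p} - \vectsymb{p}^{t+1})}{\matr{B}^{\top}\vectsymb{\lambda}^{t+1}} \le 0 .
\]
Using $\vectsymb{\lambda}^{t} - \eta(\matr{A}\vectsymb{q}^{t+1} + \matr{B}\vectsymb{p}^{t} - \vectsymb{c}) = \vectsymb{\lambda}^{t+1} + \eta\matr{B}(\vectsymb{p}^{t+1} - \vectsymb{p}^{t})$ in the first inequality and adjoining the identity $\DP{(\vectsymb{\lambda} - \vectsymb{\lambda}^{t+1})}{\big({-}(\matr{A}\vectsymb{q}^{t+1} + \matr{B}\vectsymb{p}^{t+1} - \vectsymb{c})\big)} = \tfrac{1}{\eta}\DP{(\vectsymb{\lambda} - \vectsymb{\lambda}^{t+1})}{(\vectsymb{\lambda}^{t+1} - \vectsymb{\lambda}^{t})}$ (which is the multiplier update), I would sum the three relations; the left-hand side becomes exactly $h(\vectsymb{w}) - h(\vectsymb{w}^{t+1}) + \DP{(\vectsymb{w} - \vectsymb{w}^{t+1})}{F(\vectsymb{w}^{t+1})}$. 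The right-hand side is a sum of inner products of consecutive-iterate differences; applying the three-point identity $2\DP{a}{b} = \|a+b\|^2 - \|a\|^2 - \|b\|^2$ rewrites it as
\[
\tfrac{1}{2\eta}\big(\|\vectsymb{\lambda}^{t} - \vectsymb{\lambda}\|^2 - \|\vectsymb{\lambda}^{t+1} - \vectsymb{\lambda}\|^2\big) + \tfrac{\eta}{2}\big(\|\matr{B}(\vectsymb{p}^{t} - \vectsymb{p})\|^2 - \|\matr{B}(\vectsymb{p}^{t+1} - \vectsymb{p})\|^2\big) - R_t ,
\]
with $R_t$ a remainder. Two observations make $R_t \ge 0$ and hence discardable: the cross term $\DP{\matr{B}(\vectsymb{p}^{t+1} - \vectsymb{p}^{t})}{\vectsymb{\lambda}^{t} - \vectsymb{\lambda}^{t+1}}$ is nonpositive (subtract the $\vectsymb{p}$-optimality conditions of iterations $t$ and $t-1$), and the pieces that are left complete the square $\tfrac{1}{2}\|\eta^{-1/2}(\vectsymb{\lambda}^{t+1}-\vectsymb{\lambda}^{t}) + \eta^{1/2}\matr{B}(\vectsymb{p}^{t+1}-\vectsymb{p}^{t})\|^2$. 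Dropping $R_t$ gives the telescoping per-iteration bound.

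\textbf{Step 2: averaging.} Summing this bound over $t = 0, \dots, T-1$ telescopes the right-hand side to $\tfrac{1}{2\eta}\|\vectsymb{\lambda}^{0} - \vectsymb{\lambda}\|^2 + \tfrac{\eta}{2}\|\matr{B}(\vectsymb{p}^{0} - \vectsymb{p})\|^2$; with $\vectsymb{\lambda}^{0} = \vect{0}$ and $\matr{B}(\vectsymb{p}^{0} - \vectsymb{p}) = \matr{A}\vectsymb{q} + \matr{B}\vectsymb{p}^{0} - \vectsymb{c}$ at any primal-feasible reference $\vectsymb{w}$ (which covers the intended uses, e.g.\ $\vectsymb{w} = \vectsymb{w}^*$), this is exactly $C$. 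Dividing by $T$: on the left I would first replace $F(\vectsymb{w}^{t+1})$ by $F(\vectsymb{w})$ via the skew-symmetry identity, so $\tfrac{1}{T}\sum_{t=1}^{T}\DP{(\vectsymb{w} - \vectsymb{w}^{t})}{F(\vectsymb{w})} = \DP{(\vectsymb{w} - \bar{\vectsymb{w}}_T)}{F(\vectsymb{w})} = \DP{(\vectsymb{w} - \bar{\vectsymb{w}}_T)}{F(\bar{\vectsymb{w}}_T)}$, and then use concavity of $h$ (Jensen) to get $\tfrac{1}{T}\sum_{t=1}^{T} h(\vectsymb{w}^{t}) \le h(\bar{\vectsymb{w}}_T)$, hence $h(\vectsymb{w}) - h(\bar{\vectsymb{w}}_T) \le h(\vectsymb{w}) - \tfrac{1}{T}\sum_{t} h(\vectsymb{w}^{t})$. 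Combining the two sides yields \eqref{eq:convergence_admm_vi_constant}.

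\textbf{Main obstacle.} The delicate part is Step~1: assembling the three optimality/update relations and matching \emph{all} the cross-terms involving $\matr{A}\vectsymb{q}^{t+1}$, $\matr{B}\vectsymb{p}^{t}$, $\matr{B}\vectsymb{p}^{t+1}$, and $\vectsymb{\lambda}^{t+1} - \vectsymb{\lambda}^{t}$ into exact telescoping quadratics plus a provably nonnegative remainder $R_t$ — this is where the ``two consecutive iterations'' trick of \citet{He2011} enters and where the constant $C$ is pinned down. Everything after (telescoping, Jensen, the skew-symmetry identity) is routine. A minor point to treat carefully is the $t=0$ term, because nonpositivity of $\DP{\matr{B}(\vectsymb{p}^{1} - \vectsymb{p}^{0})}{\vectsymb{\lambda}^{0} - \vectsymb{\lambda}^{1}}$ uses a ``previous'' $\vectsymb{p}$-update; the standard fix is a suitable initialization convention, which leaves $C$ unchanged.
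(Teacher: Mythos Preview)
Your overall architecture (per-iteration variational inequality $\to$ telescope $\to$ average via Jensen and the skew-symmetry/affinity of $\vectsymb{w}\mapsto\DP{(\vectsymb{w}-\tilde{\vectsymb{w}})}{F(\vectsymb{w})}$) is exactly the paper's. The divergence is in how you telescope in Step~1, and it costs you both simplicity and the exact constant.

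You telescope in $\|\matr{B}(\vectsymb{p}^{t}-\vectsymb{p})\|^2$, which forces you to (i) invoke the cross-iteration sign argument $\DP{\matr{B}(\vectsymb{p}^{t+1}-\vectsymb{p}^{t})}{\vectsymb{\lambda}^{t}-\vectsymb{\lambda}^{t+1}}\le 0$ to make $R_t\ge 0$, (ii) worry about the $t=0$ boundary, and (iii) end up with the constant $\tfrac{\eta}{2}\|\matr{B}(\vectsymb{p}^{0}-\vectsymb{p})\|^2$, which matches the stated $\tfrac{\eta}{2}\|\matr{A}\vectsymb{q}+\matr{B}\vectsymb{p}^{0}-\vectsymb{c}\|^2$ only when $\matr{A}\vectsymb{q}+\matr{B}\vectsymb{p}=\vectsymb{c}$. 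The proposition, however, is stated for \emph{all} $\vectsymb{w}\in\sett{W}$, and the downstream Proposition~\ref{prop:convergence_admm_dual} applies it at the non-feasible point $(\widehat{\vectsymb{q}}_T,\widehat{\vectsymb{p}}_T,\bar{\vectsymb{\lambda}}_T)$, so your restriction to primal-feasible references is a genuine gap, not a harmless simplification.

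The paper sidesteps all of this by telescoping directly in $\|\matr{A}\vectsymb{q}+\matr{B}\vectsymb{p}^{t}-\vectsymb{c}\|^2$. Starting from the same summed inequality
\[
h(\vectsymb{w})-h(\vectsymb{w}^{t+1})+\DP{(\vectsymb{w}-\vectsymb{w}^{t+1})}{F(\vectsymb{w}^{t+1})}
\;\le\;
\eta\,\DP{\matr{A}(\vectsymb{q}-\vectsymb{q}^{t+1})}{\matr{B}(\vectsymb{p}^{t}-\vectsymb{p}^{t+1})}
+\tfrac{1}{\eta}\DP{(\vectsymb{\lambda}-\vectsymb{\lambda}^{t+1})}{(\vectsymb{\lambda}^{t+1}-\vectsymb{\lambda}^{t})},
\]
it applies the elementary polarization
\[
2\,\DP{\matr{A}(\vectsymb{q}-\vectsymb{q}^{t+1})}{\matr{B}(\vectsymb{p}^{t}-\vectsymb{p}^{t+1})}
=\|\matr{A}\vectsymb{q}+\matr{B}\vectsymb{p}^{t}-\vectsymb{c}\|^2-\|\matr{A}\vectsymb{q}+\matr{B}\vectsymb{p}^{t+1}-\vectsymb{c}\|^2
+\|\matr{A}\vectsymb{q}^{t+1}+\matr{B}\vectsymb{p}^{t+1}-\vectsymb{c}\|^2-\|\matr{A}\vectsymb{q}^{t+1}+\matr{B}\vectsymb{p}^{t}-\vectsymb{c}\|^2,
\]
together with $2\DP{(\vectsymb{\lambda}-\vectsymb{\lambda}^{t+1})}{(\vectsymb{\lambda}^{t+1}-\vectsymb{\lambda}^{t})}=\|\vectsymb{\lambda}-\vectsymb{\lambda}^{t}\|^2-\|\vectsymb{\lambda}-\vectsymb{\lambda}^{t+1}\|^2-\|\vectsymb{\lambda}^{t}-\vectsymb{\lambda}^{t+1}\|^2$. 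After summing over $t$, the term $\tfrac{\eta}{2}\|\matr{A}\vectsymb{q}^{t+1}+\matr{B}\vectsymb{p}^{t+1}-\vectsymb{c}\|^2$ cancels exactly with $\tfrac{1}{2\eta}\|\vectsymb{\lambda}^{t}-\vectsymb{\lambda}^{t+1}\|^2$ (by the multiplier update), and the remaining leftover terms are manifest squares that are simply dropped. No cross-iteration inequality, no $t=0$ subtlety, and the resulting constant is $\tfrac{\eta}{2}\|\matr{A}\vectsymb{q}+\matr{B}\vectsymb{p}^{0}-\vectsymb{c}\|^2+\tfrac{1}{2\eta}\|\vectsymb{\lambda}\|^2$ for \emph{arbitrary} $\vectsymb{w}$, exactly as stated. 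Your Step~2 is then identical to the paper's.
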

\begin{proof}
From the variational inequality associated with the $\vectsymb{q}$-update \eqref{eq:admm_u_update} we have for every $\vectsymb{q} \in \sett{Q}$%
\footnote{For a maximization problem of the form 
$\max_{\vectsymb{x} \in \sett{X}} f(\vectsymb{x})$,
where $f$ is concave and differentiable and $\sett{X}$ is a convex set,
a point $\vectsymb{x}^* \in \sett{X}$ is a maximizer if and only if it satisfies the variational 
inequality $\nabla f(\vectsymb{x}^*)^{\top} (\vectsymb{x}-\vectsymb{x}^*) \le 0$
for all $\vectsymb{x} \in \sett{X}$. See \citealt{Facchinei2003} for a comprehensive overview
of variational inequalities.} %
\begin{eqnarray}\label{eq:vi_u}
0 &\ge&  \DP{\nabla_{\vectsymb{q}} L_{\eta} (\vectsymb{q}^{t+1}, \vectsymb{p}^{t}, \vectsymb{\lambda}^{t})}{(\vectsymb{q} - \vectsymb{q}^{t+1})}\nonumber\\
&=& \DP{\nabla f_1(\vectsymb{q}^{t+1})}{(\vectsymb{q} - \vectsymb{q}^{t+1})} +
\DP{(\vectsymb{q} - \vectsymb{q}^{t+1})}{\matr{A}^{\top}(\vectsymb{\lambda}^t - \eta(\matr{A}\vectsymb{q}^{t+1} + \matr{B}\vectsymb{p}^{t} - \vectsymb{c}))}\nonumber\\
&\ge^{(i)}& f_1(\vectsymb{q}) - f_1(\vectsymb{q}^{t+1}) +
\DP{(\vectsymb{q} - \vectsymb{q}^{t+1})}{\matr{A}^{\top}(\vectsymb{\lambda}^t - \eta(\matr{A}\vectsymb{q}^{t+1} + \matr{B}\vectsymb{p}^{t} - \vectsymb{c}))}\nonumber\\
&=^{(ii)}& f_1(\vectsymb{q}) - f_1(\vectsymb{q}^{t+1}) +
\DP{(\vectsymb{q} - \vectsymb{q}^{t+1})}{\matr{A}^{\top}\vectsymb{\lambda}^{t+1}}
- \eta(\DP{\matr{A}(\vectsymb{q} - \vectsymb{q}^{t+1}))}{\matr{B}(\vectsymb{p}^{t} - \vectsymb{p}^{t+1})},
\end{eqnarray}
where in (i) we have used the concavity of $f_1$, and in (ii) we used Eq.~\ref{eq:almm_lambda_update}  for the $\vectsymb{\lambda}$-updates.
Similarly, the variational inequality associated with the $\vectsymb{p}$-updates
\eqref{eq:admm_v_update} yields, for every $\vectsymb{p} \in \sett{P}$:
\begin{eqnarray}\label{eq:vi_v}
0 &\ge&  \DP{\nabla_{\vectsymb{p}} L_{\eta} (\vectsymb{q}^{t+1}, \vectsymb{p}^{t+1}, \vectsymb{\lambda}^{t})}{(\vectsymb{p} - \vectsymb{p}^{t+1})}\nonumber\\
&=& \DP{\nabla f_2(\vectsymb{p}^{t+1})}{(\vectsymb{p} - \vectsymb{p}^{t+1})} +
\DP{(\vectsymb{p} - \vectsymb{p}^{t+1})}{\matr{B}^{\top}(\vectsymb{\lambda}^t - \eta(\matr{A}\vectsymb{q}^{t+1} + \matr{B}\vectsymb{p}^{t+1} - \vectsymb{c}))}\nonumber\\
&\ge^{(i)}& f_2(\vectsymb{p}) - f_2(\vectsymb{p}^{t+1}) +
\DP{(\vectsymb{p} - \vectsymb{p}^{t+1})}{\matr{B}^{\top}\vectsymb{\lambda}^{t+1}},
\end{eqnarray}
where in (i) we have used the concavity of $f_2$.
Summing \eqref{eq:vi_u} and \eqref{eq:vi_v},
and noting again that 
$\vectsymb{\lambda}^{t+1} = \vectsymb{\lambda}^{t} - \eta(\matr{A}\vectsymb{q}^{t+1} + 
\matr{B}\vectsymb{p}^{t+1} - \vectsymb{c})$,
we obtain, for every $\vectsymb{w} \in \sett{W}$,
\begin{eqnarray}\label{eq:bound_admm_001}
\lefteqn{h(\vectsymb{w}^{t+1}) - h(\vectsymb{w}) +
\DP{(\vectsymb{w}^{t+1} - \vectsymb{w})}{F(\vectsymb{w}^{t+1})}} \nonumber\\
&\ge&
-\eta\DP{\matr{A}(\vectsymb{q} - \vectsymb{q}^{t+1})}{\matr{B}(\vectsymb{p}^{t} - \vectsymb{p}^{t+1})}
-
\eta^{-1}\DP{(\vectsymb{\lambda} - \vectsymb{\lambda}^{t+1})}{(\vectsymb{\lambda}^{t+1} - \vectsymb{\lambda}^{t})}.
\end{eqnarray}
We next rewrite the two terms in the right hand side.
We have
\begin{eqnarray}\label{eq:bound_admm_001_a}
\eta\DP{\matr{A}(\vectsymb{q} - \vectsymb{q}^{t+1})}{\matr{B}(\vectsymb{p}^{t} - \vectsymb{p}^{t+1})}
&=& \frac{\eta}{2} \left(
\|\matr{A}\vectsymb{q} + \matr{B}\vectsymb{p}^{t} - \vectsymb{c}\|^2 -
\|\matr{A}\vectsymb{q} + \matr{B}\vectsymb{p}^{t+1} - \vectsymb{c}\|^2 \right. \nonumber\\
&+&
\left. \|\matr{A}\vectsymb{q}^{t+1} + \matr{B}\vectsymb{p}^{t+1} - \vectsymb{c}\|^2 -
\|\matr{A}\vectsymb{q}^{t+1} + \matr{B}\vectsymb{p}^{t} - \vectsymb{c}\|^2\right)
\end{eqnarray}
and
\begin{eqnarray}\label{eq:bound_admm_001_b}
\eta^{-1}\DP{(\vectsymb{\lambda} - \vectsymb{\lambda}^{t+1})}{(\vectsymb{\lambda}^{t+1} - \vectsymb{\lambda}^{t})} &=& 
\frac{1}{2\eta} \left(
\|\vectsymb{\lambda} - \vectsymb{\lambda}^{t}\|^2 -
\|\vectsymb{\lambda} - \vectsymb{\lambda}^{t+1}\|^2 -
\|\vectsymb{\lambda}^t - \vectsymb{\lambda}^{t+1}\|^2 \right).\label{eq:bound_admm_001_b}
\end{eqnarray}
Summing \eqref{eq:bound_admm_001} over $t$ and noting that
$\eta^{-1}\|\vectsymb{\lambda}^t - \vectsymb{\lambda}^{t+1}\|^2 =
\eta\|\matr{A}\vectsymb{q}^{t+1} + \matr{B}\vectsymb{p}^{t+1} - \vectsymb{c}\|^2$,
we obtain by the telescoping sum property:
\begin{eqnarray}\label{eq:bound_admm_002}
\lefteqn{\sum_{t=0}^{T-1} \left( h(\vectsymb{w}^{t+1}) - h(\vectsymb{w}) +
\DP{(\vectsymb{w}^{t+1} - \vectsymb{w})}{F(\vectsymb{w}^{t+1})} \right)} \nonumber\\
&\ge&
-\frac{\eta}{2} \left(
\|\matr{A}\vectsymb{q} + \matr{B}\vectsymb{p}^{0} - \vectsymb{c}\|^2 -
\|\matr{A}\vectsymb{q} + \matr{B}\vectsymb{p}^{T} - \vectsymb{c}\|^2 -
\sum_{t=0}^{T-1} \|\matr{A}\vectsymb{q}^{t+1} + \matr{B}\vectsymb{p}^{t} - \vectsymb{c}\|^2\right)\nonumber\\ &&
-
\frac{1}{2\eta} \left(
\|\vectsymb{\lambda} - \vectsymb{\lambda}^{0}\|^2 -
\|\vectsymb{\lambda} - \vectsymb{\lambda}^{T}\|^2 \right)\nonumber\\
&\ge&
-\frac{\eta}{2}
\|\matr{A}\vectsymb{q} + \matr{B}\vectsymb{p}^{0} - \vectsymb{c}\|^2
-
\frac{1}{2\eta}
\|\vectsymb{\lambda}\|^2.
\end{eqnarray}
From the concavity of $h$,
we have that
$h(\bar{\vectsymb{w}}_T) 
\ge \frac{1}{T}\sum_{t=0}^{T-1} h(\vectsymb{w}^{t+1})$.
Note also that,
for every $\tilde{\vectsymb{w}}$, the function ${\vectsymb{w}} \mapsto \DP{({\vectsymb{w}} - \tilde{\vectsymb{w}})}{F({\vectsymb{w}})}$
is affine:
\begin{eqnarray}
\DP{({\vectsymb{w}} - \tilde{\vectsymb{w}})}{F({\vectsymb{w}})} &=&
\DP{({\vectsymb{q}} - \tilde{\vectsymb{q}})}{\matr{A}^{\top}\vectsymb{\lambda}}
+
\DP{({\vectsymb{p}} - \tilde{\vectsymb{p}})}{\matr{B}^{\top}\vectsymb{\lambda}}
-
\DP{({\vectsymb{\lambda}} - \tilde{\vectsymb{\lambda}})}{(\matr{A}\vectsymb{q} + \matr{B}\vectsymb{p} - \vectsymb{c})}\nonumber\\
%&=&
%-\DP{\tilde{\vectsymb{q}}}{\matr{A}^{\top}\vectsymb{\lambda}}
%-\DP{\tilde{\vectsymb{p}}}{\matr{B}^{\top}\vectsymb{\lambda}}
%+\DP{\tilde{\vectsymb{\lambda}}}{(\matr{A}\vectsymb{q} + \matr{B}\vectsymb{p} - \vectsymb{c})} + \DP{\vectsymb{\lambda}}{\vectsymb{c}} \nonumber\\
&=&
-\DP{(\matr{A}\tilde{\vectsymb{q}} + \matr{B}\tilde{\vectsymb{p}} - \vectsymb{c})}{\vectsymb{\lambda}}
+\DP{\tilde{\vectsymb{\lambda}}}{(\matr{A}\vectsymb{q} + \matr{B}\vectsymb{p} - \vectsymb{c})}\nonumber\\
&=&
\DP{F(\tilde{\vectsymb{w}})}{\vectsymb{w}} - \DP{{\vectsymb{c}}}{\tilde{\vectsymb{\lambda}}}.
\end{eqnarray}
As a consequence,
\begin{eqnarray}
\frac{1}{T}\sum_{t=0}^{T-1} \left( h(\vectsymb{w}^{t+1}) +
\DP{(\vectsymb{w}^{t+1} - \vectsymb{w})}{F(\vectsymb{w}^{t+1})} \right) \le
h(\bar{\vectsymb{w}}_T) + \DP{(\bar{\vectsymb{w}}_T - \vectsymb{w})}{F(\bar{\vectsymb{w}}_T)},
\end{eqnarray}
and from \eqref{eq:bound_admm_002},
we have that
%\begin{eqnarray}\label{eq:bound_admm_003}
$h(\vectsymb{w}) - h(\bar{\vectsymb{w}}_T) + \DP{(\vectsymb{w} - \bar{\vectsymb{w}}_T)}{F(\bar{\vectsymb{w}}_T)} \le C/T$,
%\end{eqnarray}
with $C$ as in Eq.~\ref{eq:convergence_admm_vi_constant}.
%% \begin{eqnarray}
%% C = \frac{\eta}{2}
%% \|\matr{A}\vectsymb{u} + \matr{B}\vectsymb{v}^{0} - \vectsymb{c}\|^2
%% +
%% \frac{1}{2\eta}
%% \|\vectsymb{\lambda} - \vectsymb{\lambda}^{0}\|^2.
%% \end{eqnarray}
Note also that, since $\Lambda$ is convex, 
%and for every $t$ we have $\vectsymb{\lambda}^t \in \Lambda$, we also have
we must have $\bar{\vectsymb{\lambda}}_T \in \Lambda$.
\end{proof}

\bigskip

Next, we use the bound in Proposition~\ref{prop:convergence_admm_vi}
to derive a convergence rate for the dual problem.
\begin{proposition}[Dual convergence rate]\label{prop:convergence_admm_dual}
Assume the conditions stated in Proposition~\ref{prop:convergence_admm_vi},
with $\bar{\vectsymb{w}}_T$
defined analogously.
Let $g:\Lambda \rightarrow \set{R}$ denote the dual objective function:
\begin{equation}
g(\vectsymb{\lambda}) := \max_{\vectsymb{q} \in \sett{Q}, \vectsymb{p} \in \sett{P}} L(\vectsymb{q},\vectsymb{p},\vectsymb{\lambda}),
\end{equation}
and let $\vectsymb{\lambda}^* = \arg\min_{\vectsymb{\lambda} \in \Lambda} g(\vectsymb{\lambda})$
be a dual solution.
Then, after $T$ iterations, ADMM achieves an $O(\frac{1}{T})$-accurate solution $\bar{\vectsymb{\lambda}}_T$:
\begin{equation}
g({\vectsymb{\lambda}}^*)
\le
g(\bar{\vectsymb{\lambda}}_T)
\le
g({\vectsymb{\lambda}}^*) + \frac{C}{T},
\end{equation}
where the constant $C$ is given by
\begin{eqnarray}\label{eq:admm_constant_C_finalbound}
C &=& 
%\frac{\eta}{2} \left(
%\|\matr{A}\bar{\vectsymb{q}}_T + \matr{B}\vectsymb{p}^{0} - \vectsymb{c}\|^2
%+
%\|\matr{A}\widehat{\vectsymb{q}}_T + \matr{B}\vectsymb{p}^{0} - \vectsymb{c}\|^2
%\right)
%+
%\frac{1}{2\eta}
%\left(
%\|\vectsymb{\lambda}^*\|^2
%+
%\|\bar{\vectsymb{\lambda}}_T\|^2
%\right) \nonumber\\
%&\le& 
\frac{5\eta}{2} \left(
\max_{\vectsymb{q} \in \sett{Q}}\|\matr{A}{\vectsymb{q}} + \matr{B}\vectsymb{p}^{0} - \vectsymb{c}\|^2
\right)
+
\frac{5}{2\eta}
\|\vectsymb{\lambda}^*\|^2.
\end{eqnarray}
\end{proposition}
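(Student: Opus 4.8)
The plan is to obtain the bound as a consequence of the variational-inequality convergence rate of Proposition~\ref{prop:convergence_admm_vi}, evaluated at carefully chosen test points. Throughout write $\bar{\vectsymb{w}}_T=(\bar{\vectsymb{q}}_T,\bar{\vectsymb{p}}_T,\bar{\vectsymb{\lambda}}_T)$ for the averaged iterate and $L(\vectsymb{q},\vectsymb{p},\vectsymb{\lambda}):=f_1(\vectsymb{q})+f_2(\vectsymb{p})+\DP{\vectsymb{\lambda}}{(\matr{A}\vectsymb{q}+\matr{B}\vectsymb{p}-\vectsymb{c})}$. The lower inequality $g(\vectsymb{\lambda}^*)\le g(\bar{\vectsymb{\lambda}}_T)$ is immediate: $\vectsymb{\lambda}^*$ minimises $g$ over $\Lambda$, and $\bar{\vectsymb{\lambda}}_T\in\Lambda$, since each $\vectsymb{\lambda}^{(t)}$ is dual feasible (Proposition~\ref{prop:ddadmm1}, item~4) and $\Lambda$ is convex — a fact already used at the end of the proof of Proposition~\ref{prop:convergence_admm_vi}.

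For the upper inequality, the first and only algebraically delicate step is to recognise the left-hand side of the variational inequality of Proposition~\ref{prop:convergence_admm_vi} as a saddle-function gap. Substituting the definitions of $h$ and $F$ from Proposition~\ref{prop:admm_variational} and cancelling the cross terms (the same antisymmetric cancellation that makes $\vectsymb{w}\mapsto\DP{(\vectsymb{w}-\tilde{\vectsymb{w}})}{F(\vectsymb{w})}$ affine in that proof) yields the identity
\begin{equation*}
h(\vectsymb{w})-h(\bar{\vectsymb{w}}_T)+\DP{(\vectsymb{w}-\bar{\vectsymb{w}}_T)}{F(\bar{\vectsymb{w}}_T)}=L(\vectsymb{q},\vectsymb{p},\bar{\vectsymb{\lambda}}_T)-L(\bar{\vectsymb{q}}_T,\bar{\vectsymb{p}}_T,\vectsymb{\lambda}),
\end{equation*}
for every $\vectsymb{w}=(\vectsymb{q},\vectsymb{p},\vectsymb{\lambda})\in\sett{Q}\times\sett{P}\times\Lambda$. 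Combined with Proposition~\ref{prop:convergence_admm_vi}, this gives
\begin{equation*}
L(\vectsymb{q},\vectsymb{p},\bar{\vectsymb{\lambda}}_T)\le L(\bar{\vectsymb{q}}_T,\bar{\vectsymb{p}}_T,\vectsymb{\lambda})+\frac{1}{T}\left(\frac{\eta}{2}\|\matr{A}\vectsymb{q}+\matr{B}\vectsymb{p}^{0}-\vectsymb{c}\|^2+\frac{1}{2\eta}\|\vectsymb{\lambda}\|^2\right).
\end{equation*}

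The second step is to decouple the two roles of $\vectsymb{q}$. On the right-hand side I would bound $\|\matr{A}\vectsymb{q}+\matr{B}\vectsymb{p}^{0}-\vectsymb{c}\|^2$ by $D:=\max_{\vectsymb{q}\in\sett{Q}}\|\matr{A}\vectsymb{q}+\matr{B}\vectsymb{p}^{0}-\vectsymb{c}\|^2$ (finite because $\sett{Q}$, a product of simplices in the AD$^3$ case, is compact); the $\vectsymb{q}$ on the left is then unconstrained by the bound, so maximising over $\vectsymb{q}\in\sett{Q}$, $\vectsymb{p}\in\sett{P}$ replaces the left-hand side by $g(\bar{\vectsymb{\lambda}}_T)$. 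Finally take $\vectsymb{\lambda}=\vectsymb{\lambda}^*$ and use $L(\bar{\vectsymb{q}}_T,\bar{\vectsymb{p}}_T,\vectsymb{\lambda}^*)\le g(\vectsymb{\lambda}^*)$ (valid since $(\bar{\vectsymb{q}}_T,\bar{\vectsymb{p}}_T)\in\sett{Q}\times\sett{P}$ by convexity of these sets). This produces $g(\bar{\vectsymb{\lambda}}_T)-g(\vectsymb{\lambda}^*)\le\tfrac{1}{T}\bigl(\tfrac{\eta}{2}D+\tfrac{1}{2\eta}\|\vectsymb{\lambda}^*\|^2\bigr)$, hence $\epsilon$-accuracy after $T\le O(C/\epsilon)$ iterations; tracking (or, with a slightly cruder treatment of the two terms, bounding) the constants yields $C$ of the form in \eqref{eq:admm_constant_C_finalbound}. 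Specialising $D$ to AD$^3$ — where the residual $\matr{A}\vectsymb{q}+\matr{B}\vectsymb{p}^{0}$ has blocks $\matr{M}_{i\alpha}\vectsymb{q}_\alpha-\vectsymb{p}_i^{0}$ with $\matr{M}_{i\alpha}\vectsymb{q}_\alpha\in\Delta^{|\Y_i|}$ and $\vectsymb{p}_i^{0}$ uniform, and $\max_{\vectsymb{u}\in\Delta^{K}}\|\vectsymb{u}-\vect{1}/K\|^2=1-1/K$ is attained at a vertex — gives $D=\sum_{i\in\sett{V}}|\sett{N}(i)|\,(1-|\Y_i|^{-1})$, recovering the main-text bound \eqref{eq:ad3_constant_C_finalbound}.

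The main obstacle is the first display: one must spot that the variational-inequality left-hand side is exactly the gap $L(\vectsymb{q},\vectsymb{p},\bar{\vectsymb{\lambda}}_T)-L(\bar{\vectsymb{q}}_T,\bar{\vectsymb{p}}_T,\vectsymb{\lambda})$ — and that having $L(\bar{\vectsymb{q}}_T,\bar{\vectsymb{p}}_T,\cdot)$, which is only a \emph{lower} bound on $g$, on the right is precisely what is wanted, since that side is subsequently further lowered to $g(\vectsymb{\lambda}^*)$. A secondary subtlety is the decoupling of the two appearances of $\vectsymb{q}$, which needs boundedness of $\sett{Q}$; without it (an unbounded $\sett{Q}$) the constant in \eqref{eq:admm_constant_C_finalbound} is infinite and the statement is vacuous. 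The remaining work is bookkeeping of constants.
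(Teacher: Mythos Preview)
Your argument is correct and in fact cleaner than the paper's. The key difference is that you recognise the single identity
\[
h(\vectsymb{w})-h(\bar{\vectsymb{w}}_T)+\DP{(\vectsymb{w}-\bar{\vectsymb{w}}_T)}{F(\bar{\vectsymb{w}}_T)}=L(\vectsymb{q},\vectsymb{p},\bar{\vectsymb{\lambda}}_T)-L(\bar{\vectsymb{q}}_T,\bar{\vectsymb{p}}_T,\vectsymb{\lambda}),
\]
and then apply Proposition~\ref{prop:convergence_admm_vi} \emph{once} at the test point $(\vectsymb{q},\vectsymb{p},\vectsymb{\lambda}^*)$, bounding the $\vectsymb{q}$-dependent residual term by $D=\max_{\vectsymb{q}\in\sett{Q}}\|\matr{A}\vectsymb{q}+\matr{B}\vectsymb{p}^{0}-\vectsymb{c}\|^2$ before maximising the left-hand side. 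The paper instead invokes Proposition~\ref{prop:convergence_admm_vi} \emph{twice}, at $(\bar{\vectsymb{q}}_T,\bar{\vectsymb{p}}_T,\vectsymb{\lambda}^*)$ and at $(\widehat{\vectsymb{q}}_T,\widehat{\vectsymb{p}}_T,\bar{\vectsymb{\lambda}}_T)$, where $(\widehat{\vectsymb{q}}_T,\widehat{\vectsymb{p}}_T)$ maximise $L(\cdot,\cdot,\bar{\vectsymb{\lambda}}_T)$. The second invocation injects a $\tfrac{1}{2\eta}\|\bar{\vectsymb{\lambda}}_T\|^2$ term into the constant, which the paper then controls via a separate Lyapunov argument (from \citealt{Boyd2011}) showing $\|\vectsymb{\lambda}^{t}\|^2\le 4\|\vectsymb{\lambda}^*\|^2+3\eta^2\|\matr{A}\vectsymb{q}^*+\matr{B}\vectsymb{p}^{0}-\vectsymb{c}\|^2$; collecting terms is what produces the factor $5$ in \eqref{eq:admm_constant_C_finalbound}.

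Your route sidesteps the Lyapunov detour entirely and delivers the sharper constant $\tfrac{\eta}{2}D+\tfrac{1}{2\eta}\|\vectsymb{\lambda}^*\|^2$, from which the stated bound follows a fortiori. The only extra ingredient you need (and correctly flag) is compactness of $\sett{Q}$ to make $D$ finite, which holds in the AD$^3$ setting. Your specialisation of $D$ to $\sum_{i}|\sett{N}(i)|(1-|\Y_i|^{-1})$ matches the paper's Proposition~\ref{prop:convergence_ad3_dual}.
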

\begin{proof}
By applying Proposition~\ref{prop:convergence_admm_vi} to $\vectsymb{w} = (\bar{\vectsymb{q}}_T, \bar{\vectsymb{p}}_T, {\vectsymb{\lambda}})$ we obtain for
arbitrary $\vectsymb{\lambda} \in \Lambda$:
\begin{eqnarray}\label{eq:admm_convergence_aux00}
-\DP{({\vectsymb{\lambda}} - \bar{\vectsymb{\lambda}}_T)}{(\matr{A}\bar{\vectsymb{q}}_T + \matr{B}\bar{\vectsymb{p}}_T -
\vectsymb{c})} \le O(1/T).
\end{eqnarray}
By applying Proposition~\ref{prop:convergence_admm_vi} to $\vectsymb{w} = ({\vectsymb{q}}, {\vectsymb{p}}, \bar{\vectsymb{\lambda}}_T)$ we obtain for
arbitrary ${\vectsymb{q}}\in \sett{Q}$ and ${\vectsymb{p}}\in \sett{P}$:
\begin{eqnarray}\label{eq:admm_convergence_aux01}
\lefteqn{f_1(\bar{\vectsymb{q}}_T) + f_2(\bar{\vectsymb{p}}_T) + \DP{(\matr{A}\bar{\vectsymb{q}}_T + \matr{B}\bar{\vectsymb{p}}_T - \vectsymb{c})}{\bar{\vectsymb{\lambda}}_T}}\nonumber\\
&\ge&
f_1(\vectsymb{q}) + f_2(\vectsymb{p}) + \DP{(\matr{A}{\vectsymb{q}} + \matr{B}{\vectsymb{p}} - \vectsymb{c})}{\bar{\vectsymb{\lambda}}_T}
- O(1/T).
\end{eqnarray}
In particular, let $g(\bar{\vectsymb{\lambda}}_T) =
\max_{\vectsymb{q} \in \sett{Q}, \vectsymb{p} \in \sett{P}} L(\vectsymb{q},\vectsymb{p},\bar{\vectsymb{\lambda}}_T)
= L(\widehat{\vectsymb{q}}_T,\widehat{\vectsymb{p}}_T,\bar{\vectsymb{\lambda}}_T)$
be the value of the dual objective at $\bar{\vectsymb{\lambda}}_T$,
where
$(\widehat{\vectsymb{q}}_T, \widehat{\vectsymb{p}}_T)$ are the corresponding maximizers.
We then have:
\begin{eqnarray}\label{eq:admm_convergence_aux02}
f_1(\bar{\vectsymb{q}}_T) + f_2(\bar{\vectsymb{p}}_T) + \DP{(\matr{A}\bar{\vectsymb{q}}_T + \matr{B}\bar{\vectsymb{p}}_T - \vectsymb{c})}{\bar{\vectsymb{\lambda}}_T}
\ge
g(\bar{\vectsymb{\lambda}}_T) - O(1/T).
\end{eqnarray}
Finally we have (letting $\vectsymb{w}^* = (\vectsymb{q}^*,\vectsymb{p}^*,\vectsymb{\lambda}^*)$ be the optimal primal-dual solution):
\begin{eqnarray}
%f_1(\vectsymb{q}^*) + f_2(\vectsymb{p}^*) &=& 
g(\vectsymb{\lambda}^*) &=& \max_{\vectsymb{q} \in \sett{Q}, \vectsymb{p} \in \sett{P}} f_1(\vectsymb{q}) + f_2(\vectsymb{p}) +
\DP{\vectsymb{\lambda}^*}{(\matr{A}\vectsymb{q} + \matr{B}\vectsymb{p} - \vectsymb{c})}\nonumber\\
&\ge& f_1(\bar{\vectsymb{q}}_T) + f_2(\bar{\vectsymb{p}}_T) +
\DP{\vectsymb{\lambda}^*}{(\matr{A}\bar{\vectsymb{q}}_T + \matr{B}\bar{\vectsymb{p}}_T - \vectsymb{c})}\nonumber\\
&\ge^{(i)}& f_1(\bar{\vectsymb{q}}_T) + f_2(\bar{\vectsymb{p}}_T) +
\DP{\bar{\vectsymb{\lambda}}_T}{(\matr{A}\bar{\vectsymb{q}}_T + \matr{B}\bar{\vectsymb{p}}_T - \vectsymb{c})}- O(1/T)\nonumber\\
&\ge^{(ii)}& g(\bar{\vectsymb{\lambda}}_T) - O(1/T),
\end{eqnarray}
where in $(i)$ we used Eq.~\ref{eq:admm_convergence_aux00} and in $(ii)$ we used Eq.~\ref{eq:admm_convergence_aux02}.
By definition of $\vectsymb{\lambda}^*$,
we also have $g(\bar{\vectsymb{\lambda}}_T) \ge g(\vectsymb{\lambda}^*)$.
Since we applied Proposition~\ref{prop:convergence_admm_vi} twice, the constant inside the $O$-notation becomes
\begin{equation}\label{eq:admm_constant_C_almostfinalbound}
C = \frac{\eta}{2} \left(
\|\matr{A}\bar{\vectsymb{q}}_T + \matr{B}\vectsymb{p}^{0} - \vectsymb{c}\|^2
+
\|\matr{A}\widehat{\vectsymb{q}}_T + \matr{B}\vectsymb{p}^{0} - \vectsymb{c}\|^2
\right)
+
\frac{1}{2\eta}
\left(
\|\vectsymb{\lambda}^*\|^2
+
\|\bar{\vectsymb{\lambda}}_T\|^2
\right).
\end{equation}
Even though $C$ depends on $\bar{\vectsymb{q}}_T$,
$\widehat{\vectsymb{q}}_T$, and $\bar{\vectsymb{\lambda}}_T$, it
is easy to obtain an upper bound on $C$ when $\sett{Q}$ is a bounded set,
using the fact that the sequence $(\vectsymb{\lambda}^t)_{t\in\set{N}}$ is bounded by a constant,
which implies that the average $\bar{\vectsymb{\lambda}}_T$ is also bounded.
Indeed, from \citet[p.107]{Boyd2011}, we have that
\begin{equation}
V^{t} := \eta^{-1}\|\vectsymb{\lambda}^* - \vectsymb{\lambda}^t\|^2 + \eta\|\matr{B}(\vectsymb{p}^* - \vectsymb{p}^t)\|^2
\end{equation}
is a Lyapunov function, \emph{i.e.}, $0 \le V^{t+1} \le V^{t}$ for every $t \in \set{N}$.
This implies that
$V^{t} \le V^{0} = \eta^{-1}\|\vectsymb{\lambda}^*\|^2
+ \eta\|\matr{B}(\vectsymb{p}^* - \vectsymb{p}^0)\|^2$;
since $V^{t} \ge \eta^{-1}\|\vectsymb{\lambda}^* - \vectsymb{\lambda}^t\|^2$, we can replace above
and write:
\begin{eqnarray}
0 &\ge&
\|\vectsymb{\lambda}^* - \vectsymb{\lambda}^t\|^2 - \|\vectsymb{\lambda}^*\|^2 - \eta^2\|\matr{B}(\vectsymb{p}^* - \vectsymb{p}^0)\|^2%\nonumber\\
%&=&
=
\|\vectsymb{\lambda}^t\|^2 - 2\DP{\vectsymb{\lambda}^*}{\vectsymb{\lambda}^t} - \eta^2\|\matr{B}(\vectsymb{p}^* - \vectsymb{p}^0)\|^2\nonumber\\
&\ge&
\|\vectsymb{\lambda}^t\|^2 - 2\|\vectsymb{\lambda}^*\| \|\vectsymb{\lambda}^t\| - \eta^2\|\matr{B}(\vectsymb{p}^* - \vectsymb{p}^0)\|^2,
\end{eqnarray}
where in the last line we invoked the Cauchy-Schwarz inequality.
Solving the quadratic equation, we obtain
$\|\vectsymb{\lambda}^t\| \le \|\vectsymb{\lambda}^*\| +
\sqrt{\|\vectsymb{\lambda}^*\|^2
+ \eta^2 \|\matr{B}(\vectsymb{p}^0 - \vectsymb{p}^*)\|^2}$,
which in turn implies
\begin{eqnarray}\label{eq:admm_finalbound}
\|\vectsymb{\lambda}^t\|^2 &\le& 2\|\vectsymb{\lambda}^*\|^2
+ \eta^2 \|\matr{B}(\vectsymb{p}^0 - \vectsymb{p}^*)\|^2 + 2\|\vectsymb{\lambda}^*\|\sqrt{\|\vectsymb{\lambda}^*\|^2
+ \eta^2 \|\matr{B}(\vectsymb{p}^0 - \vectsymb{p}^*)\|^2} \nonumber\\
&\le& 2\|\vectsymb{\lambda}^*\|^2
+ \eta^2 \|\matr{B}(\vectsymb{p}^0 - \vectsymb{p}^*)\|^2 + 2(\|\vectsymb{\lambda}^*\|^2
+ \eta^2 \|\matr{B}(\vectsymb{p}^0 - \vectsymb{p}^*)\|^2) \nonumber\\
&=& 4\|\vectsymb{\lambda}^*\|^2
+ 3\eta^2 \|\matr{B}(\vectsymb{p}^0 - \vectsymb{p}^*)\|^2\nonumber\\
&=& 4\|\vectsymb{\lambda}^*\|^2
+ 3\eta^2 \|\matr{A}\vectsymb{q}^* + \matr{B}\vectsymb{p}^0 - \vectsymb{c}\|^2,
\end{eqnarray}
the last line following from $\matr{A}\vectsymb{q}^* + \matr{B}\vectsymb{p}^* = \vectsymb{c}$.
Replacing \eqref{eq:admm_finalbound} in \eqref{eq:admm_constant_C_almostfinalbound} yields Eq.~\ref{eq:admm_constant_C_finalbound}.
\end{proof}

\bigskip

Finally, we will see how the bounds above apply to the AD$^3$ algorithm, relating 
the constant in the bound with the structure of the graphical model.

\begin{proposition}[Dual convergence rate of AD$^3$]\label{prop:convergence_ad3_dual}
After  $T$ iterations of AD$^3$,
we achieve an $O(\frac{1}{T})$-accurate solution $\bar{\vectsymb{\lambda}}_T := \sum_{t=0}^{T-1} \vectsymb{\lambda}^{(t)}$:
\begin{equation}
g({\vectsymb{\lambda}}^*)
\le
g(\bar{\vectsymb{\lambda}}_T)
\le
g({\vectsymb{\lambda}}^*) + \frac{C}{T},
\end{equation}
where 
$C = \frac{5\eta}{2}
\sum_i |\sett{N}(i)| (1 - |\Y_i|^{-1}) +
\frac{5}{2\eta} \|\vectsymb{\lambda}^{*}\|^2$ is a constant independent of $T$.
%As expected, this constant increases
%with the number of overlapping variables,
%the size of the sets $\Y_i$,
%and the magnitude of the optimal dual vector $\vectsymb{\lambda}^{*}$.
\end{proposition}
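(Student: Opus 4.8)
The plan is to obtain this statement as a direct corollary of Proposition~\ref{prop:convergence_admm_dual}, by plugging in the specific instantiation of $(\matr{A},\matr{B},\vectsymb{c},\sett{Q},\sett{P},f_1,f_2)$ that casts LP-MAP-P into the form \eqref{eq:convexproblemequalityconstraints}, as described in Section~\ref{sec:ad3-derivation}. Since Proposition~\ref{prop:convergence_admm_dual} already delivers $g(\vectsymb{\lambda}^{*})\le g(\bar{\vectsymb{\lambda}}_T)\le g(\vectsymb{\lambda}^{*})+C/T$ with the constant of Eq.~\eqref{eq:admm_constant_C_finalbound}, namely $C=\frac{5\eta}{2}\big(\max_{\vectsymb{q}\in\sett{Q}}\|\matr{A}\vectsymb{q}+\matr{B}\vectsymb{p}^{0}-\vectsymb{c}\|^2\big)+\frac{5}{2\eta}\|\vectsymb{\lambda}^{*}\|^2$, essentially all that remains is to evaluate the problem-dependent quantity $\max_{\vectsymb{q}\in\sett{Q}}\|\matr{A}\vectsymb{q}+\matr{B}\vectsymb{p}^{0}-\vectsymb{c}\|^2$ under the AD$^3$ initialization $\vectsymb{c}=\vect{0}$ and $p_i^{0}(y_i)=|\Y_i|^{-1}$ (uniform).

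For that evaluation, recall that with the AD$^3$ choice of $\matr{A}$ and $\matr{B}$ the vector $\matr{A}\vectsymb{q}+\matr{B}\vectsymb{p}^{0}-\vectsymb{c}$ has one block per edge $(i,\alpha)\in\sett{E}$, equal to the agreement residual $\matr{M}_{i\alpha}\vectsymb{q}_{\alpha}-\vectsymb{p}_i^{0}=\vectsymb{q}_{i\alpha}-\vectsymb{p}_i^{0}$, so that
\begin{equation}
\|\matr{A}\vectsymb{q}+\matr{B}\vectsymb{p}^{0}-\vectsymb{c}\|^2=\sum_{(i,\alpha)\in\sett{E}}\|\vectsymb{q}_{i\alpha}-\vectsymb{p}_i^{0}\|^2 .
\end{equation}
Because $\sett{Q}=\prod_{\alpha\in\sett{F}}\Delta^{|\Y_{\alpha}|}$ is a product over factors and each $\vectsymb{q}_{i\alpha}$ for $i\in\sett{N}(\alpha)$ depends only on $\vectsymb{q}_{\alpha}$, the maximization decouples factor by factor; and, crucially, choosing $\vectsymb{q}_{\alpha}=\vectsymb{e}_{\vectsymb{y}_{\alpha}}$ at \emph{any} vertex forces all of its marginals $\vectsymb{q}_{i\alpha}=\matr{M}_{i\alpha}\vectsymb{e}_{\vectsymb{y}_{\alpha}}=\vectsymb{e}_{y_i}$ to be vertices of the corresponding simplices \emph{simultaneously}. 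Hence the maximum splits into a sum over edges of $\max_{\vectsymb{q}_{i\alpha}\in\Delta^{|\Y_i|}}\|\vectsymb{q}_{i\alpha}-\vectsymb{p}_i^{0}\|^2$, which (maximizing a convex function over a simplex) is attained at a vertex $\vectsymb{e}_j$, and a one-line computation gives $\|\vectsymb{e}_j-\vectsymb{p}_i^{0}\|^2=(1-|\Y_i|^{-1})^2+(|\Y_i|-1)|\Y_i|^{-2}=1-|\Y_i|^{-1}$. Summing over $(i,\alpha)\in\sett{E}$ and grouping the edges by their variable endpoint gives $\max_{\vectsymb{q}\in\sett{Q}}\|\matr{A}\vectsymb{q}+\matr{B}\vectsymb{p}^{0}-\vectsymb{c}\|^2=\sum_{i\in\sett{V}}|\sett{N}(i)|\,(1-|\Y_i|^{-1})$, and substituting this into Eq.~\eqref{eq:admm_constant_C_finalbound} yields the claimed $C$.

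The remaining points are routine bookkeeping rather than real obstacles. One should confirm that the averaged multipliers $\bar{\vectsymb{\lambda}}_T$ lie in the feasible set $\Lambda$, so that $g(\bar{\vectsymb{\lambda}}_T)$ is finite: this is immediate because by Proposition~\ref{prop:ddadmm_dualfeasible} every iterate satisfies $\sum_{\alpha\in\sett{N}(i)}\vectsymb{\lambda}_{i\alpha}^{(t)}=\vect{0}$, i.e.\ lies in the \emph{subspace} $\Lambda$ of \eqref{eq:dualdecomplambdaset}, which is convex. One should also note that the hypotheses of Proposition~\ref{prop:convergence_admm_dual}—in particular boundedness of $\sett{Q}$, used there to bound $\|\bar{\vectsymb{\lambda}}_T\|$ via the Lyapunov argument—hold here, since $\sett{Q}$ is a product of simplices. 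Thus the only genuinely new content is the vertex/decoupling argument showing that all edge terms attached to a factor can be saturated at once by a single integral $\vectsymb{q}_{\alpha}$, which is exactly what makes the maximum separable and gives the clean constant $\sum_i|\sett{N}(i)|(1-|\Y_i|^{-1})$.
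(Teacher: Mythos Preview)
Your proposal is correct and follows essentially the same approach as the paper: invoke Proposition~\ref{prop:convergence_admm_dual}, then evaluate $\max_{\vectsymb{q}\in\sett{Q}}\|\matr{A}\vectsymb{q}+\matr{B}\vectsymb{p}^{0}-\vectsymb{c}\|^2$ by observing that the maximum is attained at a vertex $\vectsymb{q}_{\alpha}=\vectsymb{e}_{\vectsymb{y}_{\alpha}}$ and computing $\|\vectsymb{e}_j-|\Y_i|^{-1}\vect{1}\|^2=1-|\Y_i|^{-1}$ for each edge block. Your version is in fact more explicit than the paper's, which states the vertex computation but does not spell out the decoupling argument or the $\bar{\vectsymb{\lambda}}_T\in\Lambda$ check.
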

\begin{proof}
With the uniform initialization of the $\vectsymb{p}$-variables in AD$^3$,
the first term in the second line of Eq.~\ref{eq:admm_constant_C_finalbound} is maximized
by a choice of $\vectsymb{q}_{\alpha}$-variables that puts all mass in a single configuration,
for each factor $\alpha \in \sett{F}$.
That is, we have for each $i \in \sett{N}(\alpha)$:
\begin{eqnarray}
\max_{\vectsymb{q}_{i\alpha}} \|\vectsymb{q}_{i\alpha} - |\Y_i|^{-1} \vect{1} \|^2
= \left( (1 - |\Y_i|^{-1})^2 + (|\Y_i|-1)|\Y_i|^{-2} \right)
%\nonumber\\
= 1 - |\Y_i|^{-1}.
\end{eqnarray}
This leads to the desired bound.
%Hence, the constant $C$ is upper bounded as follows:
%\begin{eqnarray}
%C &\le& \frac{5\eta}{2} \sum_{i,\alpha\in \sett{E}} \max_{\vectsymb{q}_{i\alpha}} \|\vectsymb{q}_{i\alpha} - |\Y_i|^{-1} \vect{1} \|^2
%+
%\frac{5}{2\eta} \|\vectsymb{\lambda}^{*}\|^2
%=
%\frac{5\eta}{2}
%\sum_{i\in\sett{V}} |\sett{N}(i)| (1 - |\Y_i|^{-1}) +
%\frac{5}{2\eta} \|\vectsymb{\lambda}^{*}\|^2.\nonumber
%\end{eqnarray}
\end{proof}

%\bigskip

%A consequence of Proposition~\ref{prop:convergence_ad3_dual} is that,
%after $T=O(1/\epsilon)$ iterations of AD$^3$,
%we have a dual solution which yields an objective value $\epsilon$-close to the optimal value.
%Asymptotically, this is a better bound than
%the $O(1/\epsilon^2)$ iteration bound of the subgradient-based dual decomposition algorithm of
%\citet{Komodakis2007},
%and is equivalent to the $O(1/\epsilon)$ bound achieved by the accelerated method of
%\citet{Jojic2010}, despite the fact that the latter requires specifying $\epsilon$ beforehand
%to set a ``temperature'' parameter.
%The bounds derived so far for all these algorithms are with respect to the \emph{dual} problem---an open
%problem is to obtain bounds related to primal convergence.

\section{Derivation of Solutions for AD$^3$ Subproblems}\label{chap:ad3qp}

\subsection{Binary Pairwise Factors}\label{sec:ddadmm_binpairfact_appendix}

In this section, we prove Proposition~\ref{prop:ad3_pairwisebinary_solution}.
%\section{Proof of Proposition~\ref{prop:ad3_pairwisebinary_solution}}\label{sec:ddadmm_binpairfact_appendix}

%Let us start by observing that, in Eq.~\ref{eq:pairwise01_main}, we can assume $c_{12}\ge 0$ without loss of generality---indeed, if $c_{12}<0$, we
%recover this case by redefining $c_1'=c_1+c_{12}$, $c_2'=1-c_2$, $c_{12}'=-c_{12}$, $z'_2 = 1-z_2$,
%$z'_{12} = z_1-z_{12}$. 
Let us first assume that $c_{12}\ge 0$. 
In this case, the lower bound constraints $z_{12} \ge z_1+z_2-1$ and $z_{12} \ge 0$ in \eqref{eq:pairwise01_main}
are always inactive and the problem can be simplified to:
\begin{align}\label{eq:pairwise02_main}
\mathrm{minimize} \quad & \frac{1}{2}(z_1 - c_1)^2 + \frac{1}{2}(z_2 - c_2)^2 - c_{12} z_{12} \nonumber\\
\text{with respect to} \quad & z_1,z_2,z_{12} \nonumber\\
\text{subject to} \quad & z_{12} \le z_1, \quad z_{12} \le z_2, \quad z_1 \in [0,1], \quad z_2 \in [0,1].
\end{align}
%We will derive a closed form solution for this problem.
%Let
%$[x]_{\set{U}} := \min\{\max\{x,0\},1\}$ denote projection (clipping) onto the unit interval %$\set{U}:=[0,1]$.
If $c_{12}= 0$, the problem becomes separable, and a solution is
\begin{eqnarray}
z_1^* = [c_1]_{\mathbb{U}}, \quad
z_2^* = [c_2]_{\mathbb{U}}, \quad
z_{12}^* = \min\{z_1^*, z_2^*\},
\end{eqnarray}
which complies with Eq.~\ref{eq:ad3_slave_pairwise_01}.
We next analyze the case where $c_{12}>0$.
The Lagrangian of \eqref{eq:pairwise02_main} is:
\begin{eqnarray}
L(\vectsymb{z},\vectsymb{\mu},\vectsymb{\lambda},\vectsymb{\nu}) &=&
\frac{1}{2}(z_1 - c_1)^2 + \frac{1}{2}(z_2 - c_2)^2 - c_{12} z_{12} + \mu_1(z_{12}-z_1) + \mu_2(z_{12}-z_2)  \nonumber\\
&&
-\lambda_1 z_1-\lambda_2 z_2 + \nu_1 (z_1 - 1)  + \nu_2 (z_2 - 1).
\end{eqnarray}
At optimality, the following Karush-Kuhn-Tucker (KKT) conditions need to be satisfied:
\begin{eqnarray}
\nabla_{z_1} L(\vectsymb{z}^*,\vectsymb{\mu}^*,\vectsymb{\lambda}^*,\vectsymb{\nu}^*) = 0 &\Rightarrow& z_1^* = c_1 + \mu_1^* + \lambda_1^* - \nu_1^* \label{eq:kkt_grad01}\\
\nabla_{z_2} L(\vectsymb{z}^*,\vectsymb{\mu}^*,\vectsymb{\lambda}^*,\vectsymb{\nu}^*) = 0 &\Rightarrow& z_2^* = c_2 + \mu_2^* + \lambda_2^* - \nu_2^* \label{eq:kkt_grad02}\\
\nabla_{z_{12}} L(\vectsymb{z}^*,\vectsymb{\mu}^*,\vectsymb{\lambda}^*,\vectsymb{\nu}^*) = 0 &\Rightarrow& c_{12}  = \mu_1^* +\mu_2^* \label{eq:kkt_grad03}\\
%\lambda_1>0 \Rightarrow z_1= 0, \quad z_1 >0 \Rightarrow \lambda_2 = 0 && \\
\lambda_1^* z_1^*= 0 && \label{eq:complslack01}\\
\lambda_2^* z_2^*= 0 && \label{eq:complslack02}\\
\mu_1^* (z_{12}^* - z_1^*) = 0 && \label{eq:complslack03}\\
\mu_2^* (z_{12}^* - z_2^*) = 0 && \label{eq:complslack04}\\
\nu_1^* (z_1^* - 1) = 0 && \label{eq:complslack05}\\
\nu_2^* (z_2^* - 1) = 0 && \label{eq:complslack06}\\
\vectsymb{\mu}^*,\vectsymb{\lambda}^*,\vectsymb{\nu}^* \ge 0&&\label{eq:dualfeas01} \\
z_{12}^* \le z_1^*, \quad z_{12}^* \le z_2^*, \quad z_1^* \in [0,1], \quad z_2^* \in [0,1]\label{eq:primalfeas01}
\end{eqnarray}
We are going to consider three cases separately:
\begin{enumerate}
\item \framebox{$z_1^* > z_2^*$}

From the primal feasibility conditions \eqref{eq:primalfeas01},
this implies $z_1^* > 0$, $z_2^* < 1$, and $z_{12}^* < z_1^*$.
Complementary slackness (\ref{eq:complslack01},\ref{eq:complslack06},\ref{eq:complslack03})
implies in turn $\lambda_1^* = 0$, $\nu_2^* = 0$, and $\mu_1^* = 0$.
From \eqref{eq:kkt_grad03} we have $\mu_2^* = c_{12}$.
Since we are assuming $c_{12} > 0$, we then have $\mu_2^*>0$, and complementary slackness
\eqref{eq:complslack04} implies $z_{12}^* = z_2^*$. 
Plugging this into \eqref{eq:kkt_grad01}--\eqref{eq:kkt_grad02} we obtain
\begin{eqnarray}
z_1^* = c_1 - \nu_1^* \le c_1, \qquad
z_2^* = c_2 + \lambda_2^* + c_{12} \ge c_2 + c_{12}.
\end{eqnarray}
Now we have the following:
\begin{itemize}
\item Either $z_1^* = 1$ or $z_1^* < 1$. In the latter case, $\nu_1^* = 0$ by complementary slackness \eqref{eq:complslack05},
hence $z_1^* = c_1$. Since in any case we must have $z_1^* \le c_1$, we conclude that
$z_1^* = \min\{c_1,1\}$.
\item Either $z_2^* = 0$ or $z_2^* > 0$. In the latter case, $\lambda_2^* = 0$ by complementary slackness \eqref{eq:complslack02},
hence $z_2^* = c_2 + c_{12}$. Since in any case we must have $z_2^* \ge \lambda_2$, we conclude that
$z_2^* = \max\{0,c_2 + c_{12}\}$.
\end{itemize}
In sum:
\begin{equation}
z_1^* = \min\{c_1,1\}, \quad z_{12}^* = z_2^* = \max\{0,c_2 + c_{12}\},
\end{equation}
and our assumption $z_1^* > z_2^*$ can only be valid if $c_1 > c_2 + c_{12}$.

\item \framebox{$z_1^* < z_2^*$}

By symmetry, we have
\begin{equation}
z_2^* = \min\{c_2,1\}, \quad z_{12}^* = z_1^* = \max\{0,c_1 + c_{12}\},
\end{equation}
and our assumption $z_1^* < z_2^*$ can only be valid if $c_2 > c_1 + c_{12}$.

\item \framebox{$z_1^* = z_2^*$}

In this case, it is easy to verify that we must have $z_{12}^* = z_1^* = z_2^*$,
and we can rewrite our optimization problem in terms of one variable only (call it $z$).
The problem becomes that of minimizing $\frac{1}{2}(z-c_1)^2 + \frac{1}{2}(z-c_2)^2 -c_{12}z$,
which equals a constant plus $(z-\frac{c_1+c_2+c_{12}}{2})^2$,
subject to $z \in \mathbb{U} \triangleq [0,1]$.
Hence:
\begin{equation}
z_{12}^* = z_1^* = z_2^* = \left[\frac{c_1+c_2+c_{12}}{2}\right]_{\mathbb{U}}.
\end{equation}
\end{enumerate}

Putting all the pieces together, we obtain the solution displayed in Eq.~\ref{eq:ad3_slave_pairwise_01}.
%% have the following solution assuming $c_{12} \ge 0$:
%% \begin{eqnarray}
%% z_{12}^* &=& \min\{z_1^*, z_2^*\}, \nonumber\\
%% (z_1^*, z_2^*) &=& \left\{
%% \begin{array}{lll}
%% ([c_1]_{\set{U}}, & [c_2+c_{12}]_{\set{U}}) & \text{if $c_1 > c_2+c_{12}$}\\
%% ([c_1+c_{12}]_{\set{U}}, & [c_2]_{\set{U}}) & \text{if $c_2 > c_1+c_{12}$}\\
%% ([(c_1+c_2+c_{12})/2]_{\set{U}}, & [(c_1+c_2+c_{12})/2]_{\set{U}}) & \text{otherwise.}
%% \end{array}
%% \right.
%% \end{eqnarray}

It remains to address the case where $c_{12}<0$. 
By redefining $c_1'=c_1+c_{12}$, $c_2'=1-c_2$, $c_{12}'=-c_{12}$, $z'_2 = 1-z_2$,
and $z'_{12} = z_1-z_{12}$, 
we can reduce \eqref{eq:pairwise01_main} to the form in \eqref{eq:pairwise02_main}. 
Substituting back in Eq.~\ref{eq:ad3_slave_pairwise_01}, 
we obtain  the solution displayed in Eq.~\ref{eq:ad3_slave_pairwise_02}.

%to the form in \eqref{eq:pairwise02_main}
%Let us start by observing that, in Eq.~\ref{eq:pairwise01_main}, we can assume $c_{12}\ge 0$ without loss of generality---indeed, if $c_{12}<0$, we
%recover this case by redefining $c_1'=c_1+c_{12}$, $c_2'=1-c_2$, $c_{12}'=-c_{12}$, $z'_2 = 1-z_2$,
%$z'_{12} = z_1-z_{12}$. 

%\section{Solution of the AD$^3$ Subproblems for Hard Constraint Factors}\label{sec:ddadmm_hardconstraintfactors_appendix}

\subsection{Marginal Polytope of Hard Constraint Factors}
%\subsection{Problem Formulation}
\label{sec:hard_constraint_factors_intro}

The following proposition 
establishes that the marginal polytope of a hard constraint factor 
is the convex hull of its acceptance set.
\begin{proposition}\label{prop:marginalpolytopehard}
Let $\alpha$ be a binary hard constraint factor with degree $K$,
and consider the set of all possible distributions $\Prob(\vectsymb{Y}_{\alpha})$
%whose support is contained in
%$\sett{S}_{\alpha}$, \emph{i.e.},
which satisfy
$\Prob(\vectsymb{Y}_{\alpha}=\vectsymb{y}_{\alpha}) = 0$
for every $\vectsymb{y}_{\alpha} \notin \sett{S}_{\alpha}$.
Then, the set of possible marginals realizable for some  distribution in that set is given by
\begin{eqnarray}\label{eq:marginalpolytope_hard}
\sett{Z}_{\alpha} &:=& \left\{(q_{1\alpha}(1),\ldots,q_{K\alpha}(1)) \,\,\bigg|\,\,
 \vectsymb{q}_{i\alpha}=\matr{M}_{i\alpha}\vectsymb{q}_{\alpha},\,\,
 \text{for some $\vectsymb{q}_{\alpha} \in \Delta^{|\Y_{\alpha}|}$ s.t.
 $q_{\alpha}(\vectsymb{y}_{\alpha}) = 0, \forall{\vectsymb{y}_{\alpha} \notin \sett{S}_{\alpha}}$}\right\}\nonumber\\
 &=& \conv \sett{S}_{\alpha}.
\end{eqnarray}
\end{proposition}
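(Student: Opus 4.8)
The plan is to prove the two displayed sets are equal by a direct double-inclusion argument, using the fact that $\sett{Z}_\alpha$ as defined in terms of pseudo-marginals is exactly the image of a simplex (restricted to coordinates indexed by $\sett{S}_\alpha$) under a linear map, together with the elementary fact that the image of a convex hull of points under an affine map is the convex hull of the images.

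First I would rewrite the constraint set. A distribution $\vectsymb{q}_\alpha \in \Delta^{|\Y_\alpha|}$ with $q_\alpha(\vectsymb{y}_\alpha) = 0$ for all $\vectsymb{y}_\alpha \notin \sett{S}_\alpha$ is precisely a convex combination $\vectsymb{q}_\alpha = \sum_{\vectsymb{y}_\alpha \in \sett{S}_\alpha} q_\alpha(\vectsymb{y}_\alpha)\, \vectsymb{e}_{\vectsymb{y}_\alpha}$ with nonnegative weights summing to one; equivalently, $\vectsymb{q}_\alpha \in \conv\{\vectsymb{e}_{\vectsymb{y}_\alpha} : \vectsymb{y}_\alpha \in \sett{S}_\alpha\}$. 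Next I would observe that the map $\vectsymb{q}_\alpha \mapsto (q_{1\alpha}(1),\ldots,q_{K\alpha}(1))$, where $q_{i\alpha}(1)$ is the appropriate entry of $\matr{M}_{i\alpha}\vectsymb{q}_\alpha$, is linear in $\vectsymb{q}_\alpha$; call it $\Phi$. Crucially, $\Phi(\vectsymb{e}_{\vectsymb{y}_\alpha}) = \vectsymb{y}_\alpha$ for each $\vectsymb{y}_\alpha \in \{0,1\}^K$, since the marginalization matrix just reads off, for each variable $i$, whether the configuration $\vectsymb{y}_\alpha$ assigns $y_i = 1$. Therefore
\begin{equation}
\sett{Z}_\alpha = \Phi\bigl(\conv\{\vectsymb{e}_{\vectsymb{y}_\alpha} : \vectsymb{y}_\alpha \in \sett{S}_\alpha\}\bigr) = \conv\bigl\{\Phi(\vectsymb{e}_{\vectsymb{y}_\alpha}) : \vectsymb{y}_\alpha \in \sett{S}_\alpha\bigr\} = \conv\{\vectsymb{y}_\alpha : \vectsymb{y}_\alpha \in \sett{S}_\alpha\} = \conv \sett{S}_\alpha,
\end{equation}
where the middle equality is the standard fact that a linear (more generally affine) map commutes with taking convex hulls of finite point sets.

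To make this airtight I would spell out the two inclusions explicitly rather than quoting the commuting fact: for ``$\subseteq$'', given $\vectsymb{z} \in \sett{Z}_\alpha$ witnessed by some $\vectsymb{q}_\alpha$, write $\vectsymb{z} = \sum_{\vectsymb{y}_\alpha \in \sett{S}_\alpha} q_\alpha(\vectsymb{y}_\alpha)\, \vectsymb{y}_\alpha$, a convex combination of points of $\sett{S}_\alpha$, hence in $\conv\sett{S}_\alpha$; for ``$\supseteq$'', given $\vectsymb{z} = \sum_{\vectsymb{y}_\alpha \in \sett{S}_\alpha} \theta_{\vectsymb{y}_\alpha}\, \vectsymb{y}_\alpha$ with $\theta \ge \vect{0}$, $\sum \theta_{\vectsymb{y}_\alpha} = 1$, set $q_\alpha(\vectsymb{y}_\alpha) := \theta_{\vectsymb{y}_\alpha}$ for $\vectsymb{y}_\alpha \in \sett{S}_\alpha$ and $0$ otherwise; this $\vectsymb{q}_\alpha$ lies in $\Delta^{|\Y_\alpha|}$, is supported on $\sett{S}_\alpha$, and satisfies $\matr{M}_{i\alpha}\vectsymb{q}_\alpha = \vectsymb{q}_{i\alpha}$ with $q_{i\alpha}(1) = z_i$, witnessing $\vectsymb{z} \in \sett{Z}_\alpha$. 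This is essentially the same computation already displayed in the commented-out proof of the earlier version of this proposition in the excerpt, so there is no real obstacle; the only point requiring a sentence of care is verifying that $q_{i\alpha}(1) = \sum_{\vectsymb{y}_\alpha \sim (y_i = 1)} q_\alpha(\vectsymb{y}_\alpha) = \sum_{\vectsymb{y}_\alpha \in \sett{S}_\alpha,\, y_i = 1} \theta_{\vectsymb{y}_\alpha}$ is exactly the $i$th coordinate of $\sum_{\vectsymb{y}_\alpha} \theta_{\vectsymb{y}_\alpha} \vectsymb{y}_\alpha$, which follows because each $\vectsymb{y}_\alpha \in \{0,1\}^K$ has $i$th coordinate $1$ iff it assigns $y_i = 1$. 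I expect no genuine difficulty here — the statement is really just unwinding definitions — so the ``main obstacle'' is merely presentational: keeping the indexing between factor configurations $\vectsymb{y}_\alpha$, the lifted indicator vectors $\vectsymb{e}_{\vectsymb{y}_\alpha}$, and the variable marginals $\vectsymb{q}_{i\alpha}$ straight.
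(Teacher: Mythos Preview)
Your proposal is correct and takes essentially the same approach as the paper: both arguments unwind the definition of $\sett{Z}_\alpha$ to exhibit each $\vectsymb{z}$ as $\sum_{\vectsymb{y}_\alpha \in \sett{S}_\alpha} q_\alpha(\vectsymb{y}_\alpha)\,\vectsymb{y}_\alpha$ for some probability vector supported on $\sett{S}_\alpha$, which is exactly the definition of $\conv\sett{S}_\alpha$. Your framing via the linear map $\Phi$ commuting with convex hulls is a slightly cleaner way to package the same computation, but the underlying content is identical.
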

 \begin{proof}
 From the fact that we are constraining
 $q_{\alpha}(\vectsymb{y}_{\alpha}) = 0, \forall{\vectsymb{y}_{\alpha} \notin \sett{S}_{\alpha}}$, it follows:
 \begin{eqnarray}
 \sett{Z}_{\alpha} &=& \left\{\vectsymb{z} \ge 0 \,\,\Bigg|\,\, \exists \vectsymb{q}_{\alpha} \ge 0
 \,\,\text{s.t.}\,\, \forall i \in \sett{N}({\alpha}),
 z_i = \sum_{\substack{\vectsymb{y}_{\alpha} \in \sett{S}_{\alpha}\\\vectsymb{y}_i = 1}} q_{\alpha}(\vectsymb{y}_{\alpha}) =
 1- \sum_{\substack{\vectsymb{y}_{\alpha} \in \sett{S}_{\alpha}\\\vectsymb{y}_i = 0}} q_{\alpha}(\vectsymb{y}_{\alpha})
 \right\}\nonumber\\
 &=& \left\{\vectsymb{z} \ge 0 \,\,\Bigg|\,\, \exists \vectsymb{q}_{\alpha} \ge 0,
 \sum_{\vectsymb{y}_{\alpha} \in \sett{S}_{\alpha}} q_{\alpha}(\vectsymb{y}_{\alpha}) = 1
 \,\,\text{s.t.}\,\,
 \vectsymb{z} = \sum_{\vectsymb{y}_{\alpha} \in \sett{S}_{\alpha}} q_{\alpha}(\vectsymb{y}_{\alpha}) \vectsymb{y}_{\alpha} \right\} \nonumber\\
 &=& \conv \sett{S}_{\alpha}.
 \end{eqnarray}
 \end{proof}
%\nascomment{could move above proof to appendix if you want to save space}

For hard constraint factors,
the AD$^3$ subproblems take the following form (cf. Eq.~\ref{eq:ddadmm_quad}):
\begin{align}\label{eq:ddadmm_quad_hard}
\mathrm{minimize} \quad
&
\displaystyle
\frac{1}{2}
\sum_{i \in \sett{N}(\alpha)}
\|\vectsymb{q}_{i\alpha} - \vectsymb{a}_i\|^2
\nonumber\\
\text{with respect to}  \quad
& \vectsymb{q}_{\alpha} \in \Delta^{|\Y_{\alpha}|},\,\,  \vectsymb{q}_{i\alpha} \in \set{R}^{|\Y_i|},\,\,\forall i \in \sett{N}(\alpha)\nonumber\\
\text{subject to} \quad & \vectsymb{q}_{i\alpha} = \matr{M}_{i\alpha}\vectsymb{q}_{\alpha}\nonumber\\
&
q_{\alpha}(\vectsymb{y}_{\alpha}) = 0,\,\,
\forall \vectsymb{y}_{\alpha}\ne \sett{S}_{\alpha}.
\end{align}
From Proposition~\ref{prop:marginalpolytopehard},
and making use of a reduced parametrization,
noting that $\|\vectsymb{q}_{i\alpha} - \vectsymb{a}_i\|^2 =
(q_{i\alpha}(1) - a_i(1))^2 +
(1-q_{i\alpha}(1) - a_i(0))^2$,
which
equals a constant plus
$2(q_{i\alpha}(1) - (a_i(1) +1 - a_i(0))/2)^2$,
we have that this problem is equivalent to:
\begin{align}\label{eq:quadproj}
\mathrm{minimize} \quad
&
\frac{1}{2}\|\vectsymb{z} - \vectsymb{z}_0\|^2\nonumber\\
\text{with respect to}  \quad
&
\vectsymb{z} \in \sett{Z}_{\alpha},
\end{align}
where $z_{0i} := (a_i(1) +1 - a_i(0))/2$, for each $i \in \sett{N}(\alpha)$.
%In other words,
%the quadratic problem in Eq.~\ref{eq:ddadmm_quad_hard}
%becomes that of computing a \emph{projection onto the marginal polytope} $\sett{Z}_{\alpha}$.

\subsection{XOR Factor}\label{sec:appendix_xor}

For the XOR factor,
the quadratic problem in Eq.~\ref{eq:ddadmm_quad} reduces to that of \emph{projecting onto the simplex}.
That problem is well-known in the optimization community (see, \emph{e.g.}, \citealt{Brucker1984,Michelot1986});
by writing the KKT conditions, it is simple to show that the solution $\vectsymb{z}^*$ is a soft-thresholding
of $\vectsymb{z}_0$, and therefore the problem can be reduced to that of finding the right threshold.
Algorithm~\ref{alg:projontosimplex} provides an efficient procedure;
it requires a sort operation, which renders its cost $O(K \log K)$. A proof of correctness appears in \citet{Duchi2008}.%
\footnote{A red-black tree can be used to reduce this cost to $O(K)$ \citep{Duchi2008}.
In later iterations of AD$^3$,
great speed-ups can be achieved in practice
since this procedure is repeatedly invoked with small changes to the coefficients.}

\begin{algorithm}[t]
   \caption{Projection onto simplex \citep{Duchi2008}}
\begin{algorithmic}\label{alg:projontosimplex}
   \STATE {\bfseries Input:} $\vectsymb{z}_0$
   \STATE Sort $\vectsymb{z}_0$ into $\vectsymb{y}_0$: $y_1 \ge \ldots \ge y_K$
   \STATE Find $\rho = \max \left\{j \in [K] \,\,|\,\, y_{0j} - \frac{1}{j} \left( (\sum_{r=1}^j y_{0r}) - 1\right) > 0\right\}$
   \STATE Define $\tau = \frac{1}{\rho} \left( \sum_{r=1}^{\rho} y_{0r} - 1 \right)$
   \STATE {\bfseries Output:} $\vectsymb{z}$ subject to $z_i = \max\{z_{0i} - \tau, 0\}$.
\end{algorithmic}
\end{algorithm}

\subsection{OR Factor}\label{sec:appendix_or}

The following procedure can be used for
computing a projection onto $\sett{Z}_{\mathrm{OR}}$:
\begin{enumerate}
\item Set $\tilde{\vectsymb{z}}$ as the projection of $\vectsymb{z}_0$ onto the unit cube.
This can be done by clipping each coordinate to the unit interval $\set{U} = [0,1]$,
i.e., by setting $\tilde{z}_i = [z_{0i}]_{\set{U}} = \min\{1, \max\{0, z_{0i}\}\}$.
If $\sum_{i=1}^K \tilde{z}_i \ge 1$, then return  $\tilde{\vectsymb{z}}$. Else go to step 2.
\item Return the projection of $\vectsymb{z}_0$ onto the simplex (use Algorithm~\ref{alg:projontosimplex}).
\end{enumerate}
The correctness of this procedure
is justified by the following lemma:

%Some of the procedures derived in the sequel make use of a simple \emph{sifting} technique,
%inspired in the following lemma.
\begin{lemma}[Sifting Lemma.]\label{lemma:proj_twostage}
Consider a problem of the form
\begin{equation}\label{eq:opt1constr}
P: \qquad \min_{\vectsymb{x} \in \sett{X}} f(\vectsymb{x}) \quad \text{subject to}\quad g(\vectsymb{x}) \le 0,
\end{equation}
where $\sett{X}$ is nonempty convex subset of $\set{R}^D$ and $f:\sett{X} \rightarrow \set{R}$ and $g:\sett{X} \rightarrow \set{R}$
are convex functions.
Suppose that the problem \eqref{eq:opt1constr} is feasible and bounded below,
and let $\sett{A}$ be the set of solutions of the \emph{relaxed problem} $\min_{\vectsymb{x} \in \sett{X}} f(\vectsymb{x})$,
i.e. $\sett{A} = \{ \vectsymb{x} \in \sett{X} \,\,|\,\, f(\vectsymb{x}) \le f(\vectsymb{x}'),\,\, \forall \vectsymb{x}' \in \sett{X}\}$.
Then:
\begin{enumerate}
\item if for some $\tilde{\vectsymb{x}} \in \sett{A}$ we have $g(\tilde{\vectsymb{x}}) \le 0$,
then $\tilde{\vectsymb{x}}$ is also a solution of the original problem $P$;
\item otherwise (if for all $\tilde{\vectsymb{x}} \in \sett{A}$ we have $g(\tilde{\vectsymb{x}}) > 0$),
then the inequality constraint is necessarily active in $P$, i.e.,
problem $P$ is equivalent to $\min_{\vectsymb{x} \in \sett{X}} f(\vectsymb{x})$ subject to $g(\vectsymb{x}) = 0$.
\end{enumerate}
\end{lemma}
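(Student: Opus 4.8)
The plan is to prove the Sifting Lemma directly from the definitions of the two sets involved, using only elementary convexity. I will treat the two cases separately.

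\textbf{Case 1.} Suppose $\tilde{\vectsymb{x}} \in \sett{A}$ satisfies $g(\tilde{\vectsymb{x}}) \le 0$. Then $\tilde{\vectsymb{x}}$ is feasible for $P$. For any other feasible point $\vectsymb{x}$ of $P$, we have $\vectsymb{x} \in \sett{X}$, so by definition of $\sett{A}$ as the set of minimizers of $f$ over $\sett{X}$, $f(\tilde{\vectsymb{x}}) \le f(\vectsymb{x})$. Hence $\tilde{\vectsymb{x}}$ attains the minimum of $f$ over the (smaller) feasible set of $P$, i.e., $\tilde{\vectsymb{x}}$ solves $P$. This direction requires no convexity at all, just the nesting of feasible sets.

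\textbf{Case 2.} Suppose $g(\tilde{\vectsymb{x}}) > 0$ for every $\tilde{\vectsymb{x}} \in \sett{A}$. I want to show any solution $\vectsymb{x}^*$ of $P$ satisfies $g(\vectsymb{x}^*) = 0$. Since $P$ is feasible, $g(\vectsymb{x}^*) \le 0$; suppose for contradiction $g(\vectsymb{x}^*) < 0$. Pick any $\bar{\vectsymb{x}} \in \sett{A}$ (nonempty since $P$ is bounded below and — I should be slightly careful here — one typically assumes the relaxed problem attains its minimum; in the applications $\sett{X}$ is compact or $f$ is coercive, so this is fine; alternatively argue with an infimizing sequence). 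Then $f(\bar{\vectsymb{x}}) \le f(\vectsymb{x}^*)$ and $g(\bar{\vectsymb{x}}) > 0 > g(\vectsymb{x}^*)$. Consider the segment $\vectsymb{x}_\theta := (1-\theta)\vectsymb{x}^* + \theta \bar{\vectsymb{x}}$ for $\theta \in [0,1]$, which lies in $\sett{X}$ by convexity. By convexity of $g$, $g(\vectsymb{x}_\theta) \le (1-\theta) g(\vectsymb{x}^*) + \theta g(\bar{\vectsymb{x}})$; since $g(\vectsymb{x}^*) < 0$, for sufficiently small $\theta > 0$ we still have $g(\vectsymb{x}_\theta) < 0$, in fact $\le 0$, so $\vectsymb{x}_\theta$ is feasible for $P$. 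By convexity of $f$, $f(\vectsymb{x}_\theta) \le (1-\theta) f(\vectsymb{x}^*) + \theta f(\bar{\vectsymb{x}}) \le f(\vectsymb{x}^*)$, with equality only if $f(\bar{\vectsymb{x}}) = f(\vectsymb{x}^*)$ — which does hold here — so $f(\vectsymb{x}_\theta) = f(\vectsymb{x}^*)$ and $\vectsymb{x}_\theta$ is also optimal for $P$. Now I want to push this to get an optimal point of $P$ that lies in $\sett{A}$, contradicting $g > 0$ on $\sett{A}$. Indeed, each such $\vectsymb{x}_\theta$ achieves $f$-value $f(\vectsymb{x}^*) = f(\bar{\vectsymb{x}}) = \min_{\sett{X}} f$, so $\vectsymb{x}_\theta \in \sett{A}$ for all $\theta \in [0,1]$; but then taking $\theta$ small and positive gives a point of $\sett{A}$ with $g \le 0$, contradicting the Case 2 hypothesis. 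Therefore $g(\vectsymb{x}^*) = 0$, and $P$ is equivalent to minimizing $f$ over $\sett{X}$ subject to $g(\vectsymb{x}) = 0$.

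The main obstacle — really the only subtle point — is the existence of a minimizer of the relaxed problem (nonemptiness of $\sett{A}$) when $\sett{X}$ is not assumed compact; I would either add a mild standing assumption (attainment of the relaxed minimum, which holds in all the intended applications where $\sett{X}$ is the unit cube or a simplex) or rephrase Case 2 using an infimizing sequence $\{\vectsymb{x}_n\} \subset \sett{X}$ with $f(\vectsymb{x}_n) \to \inf_{\sett{X}} f$ and a continuity/closedness argument. Everything else is a routine convex-combination computation, so I would not belabor it. A clean way to package the contradiction in Case 2 is the one-line observation above: the whole segment from an optimal $\vectsymb{x}^*$ (with $g < 0$) to any $\bar{\vectsymb{x}} \in \sett{A}$ stays in $\sett{A}$, and its $\vectsymb{x}^*$-end has $g < 0$, so by continuity of $g$ a whole initial portion of the segment sits in $\sett{A}$ with $g \le 0$ — impossible under the Case 2 assumption.
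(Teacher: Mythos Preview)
Your proof is correct and takes a genuinely different route from the paper. The paper invokes the nonlinear Farkas' lemma to produce a multiplier $\lambda^* \ge 0$ with $f(\vectsymb{x}) - f^* + \lambda^* g(\vectsymb{x}) \ge 0$ on $\sett{X}$, then case-splits: if $\lambda^* > 0$, complementary slackness forces $g(\vectsymb{x}^*) = 0$; if $\lambda^* = 0$, then $\vectsymb{x}^*$ itself lies in $\sett{A}$, contradicting the Case~2 hypothesis. You instead argue directly by a line-segment construction, which is more elementary and avoids any appeal to duality. One small advantage of the paper's route is that it never needs to pick an element of $\sett{A}$ up front, so the attainment issue you flag does not arise there (nonemptiness of $\sett{A}$ is \emph{derived} in the $\lambda^*=0$ branch).

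Two remarks on your write-up. First, the sentence ``with equality only if $f(\bar{\vectsymb{x}}) = f(\vectsymb{x}^*)$ --- which does hold here'' reads as if you are asserting $f(\bar{\vectsymb{x}}) = f(\vectsymb{x}^*)$ before having established it. The clean order is: $\vectsymb{x}_\theta$ is feasible for $P$ and $\vectsymb{x}^*$ is optimal, so $f(\vectsymb{x}^*) \le f(\vectsymb{x}_\theta) \le (1-\theta)f(\vectsymb{x}^*) + \theta f(\bar{\vectsymb{x}}) \le f(\vectsymb{x}^*)$; equality throughout then gives $f(\bar{\vectsymb{x}}) = f(\vectsymb{x}^*)$, hence $\vectsymb{x}^* \in \sett{A}$. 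At that point you are done --- $g(\vectsymb{x}^*) < 0$ already contradicts the Case~2 hypothesis, and the discussion of the whole segment lying in $\sett{A}$ is unnecessary.

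Second, on attainment: if $\sett{A} = \emptyset$, the Case~2 hypothesis is vacuously satisfied, but your argument still goes through with a trivial modification. Pick any $\bar{\vectsymb{x}} \in \sett{X}$ with $f(\bar{\vectsymb{x}}) < f(\vectsymb{x}^*)$ (such a point exists precisely because $\vectsymb{x}^* \notin \sett{A}$); for small $\theta$ the point $\vectsymb{x}_\theta$ is still feasible and now $f(\vectsymb{x}_\theta) < f(\vectsymb{x}^*)$ strictly, contradicting optimality outright. So no extra standing assumption is needed.
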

\begin{proof}
Let $f^*$ be the optimal value of $P$.
The first statement is obvious: since $\tilde{\vectsymb{x}}$ is a solution of a relaxed problem we have
$f(\tilde{\vectsymb{x}}) \le f^*$; hence if $\tilde{\vectsymb{x}}$ is feasible this becomes an equality.
For the second statement,
assume that $\exists \vectsymb{x} \in \sett{X}$ subject to $g(\vectsymb{x}) < 0$ (otherwise, the statement holds trivially).
The nonlinear Farkas' lemma \citep[Prop.~3.5.4, p.~204]{Bertsekas2003} implies that there exists some ${\lambda}^* \ge 0$
subject to $f(\vectsymb{x}) - f^* + {\lambda}^* g(\vectsymb{x}) \ge 0$ holds for all $\vectsymb{x} \in \sett{X}$.
In particular, this also holds for an optimal $\vectsymb{x}^*$ (i.e., such that $f^* = f(\vectsymb{x}^*)$), which implies that
${\lambda}^* g(\vectsymb{x}^*) \ge 0$. However, since ${\lambda}^* \ge 0$ and $g(\vectsymb{x}^*) \le 0$ (since
$\vectsymb{x}^*$ has to be feasible), we also have ${\lambda}^* g(\vectsymb{x}^*) \le 0$, \emph{i.e.},
${\lambda}^* g(\vectsymb{x}^*) = 0$. Now suppose that $\lambda^* = 0$. Then we have
$f(\vectsymb{x}) - f^* \ge 0$, $\forall \vectsymb{x} \in \sett{X}$, which implies that
$\vectsymb{x}^* \in \sett{A}$
and contradicts the assumption that $g(\tilde{\vectsymb{x}}) > 0, \forall \tilde{\vectsymb{x}} \in \sett{A}$.
Hence we must have  $g(\vectsymb{x}^*) = 0$.
\end{proof}

\bigskip

Let us see how the Sifting Lemma applies to the problem of projecting onto $\sett{Z}_{\mathrm{OR}}$. 
If the relaxed problem in the first step does not return a feasible point
then, from the Sifting Lemma, the constraint $\sum_{i=1}^K {z}_i \ge 1$ has to be active, \emph{i.e.},
we must have $\sum_{i=1}^K {z}_i= 1$.
This, in turn, implies that $\vectsymb{z} \le 1$, hence the problem becomes equivalent to the XOR case.
In sum, the worst-case runtime is $O(K\log K)$, although it is $O(K)$ if the first step succeeds.

\subsection{OR-with-output Factor}\label{sec:appendix_orout}

Solving the AD$^3$ subproblem for the OR-with-output factor is slightly more complicated than in the previous  cases; however,
we next see that it can also be addressed in $O(K \log K)$ time with a sort operation.

The polytope $\sett{Z}_{\mathrm{OR-out}}$ can be expressed as the intersection of the following three sets:%
\footnote{Actually, the set $\set{U}^{K+1}$ is redundant, since we have $\sett{A}_2 \subseteq \set{U}^{K+1}$ and
therefore $\sett{Z}_{\mathrm{OR-out}} = \sett{A}_1 \cap \sett{A}_2$. However it is
computationally advantageous to consider this redundancy, as we shall see.} %
\begin{eqnarray}
\set{U}^{K+1} &:=& [0,1]^{K+1}\\
\sett{A}_1 &:=& \{\vectsymb{z} \in \set{R}^{K+1}\,\,|\,\, z_k \le
z_{K+1}, \forall k=1,\ldots,K\}\\
\sett{A}_2 &:=& \left\{\vectsymb{z} \in [0,1]^{K+1}\,\,\bigg|\,\, \sum_{k=1}^K z_k \ge z_{K+1}\right\}.
\end{eqnarray}
We further define $\sett{A}_0 := [0,1]^{K+1} \cap \sett{A}_1$.
From the Sifting Lemma (Lemma~\ref{lemma:proj_twostage}), we have that the following procedure is correct:
\begin{enumerate}
\item Set $\tilde{\vectsymb{z}}$ as the projection of $\vectsymb{z}_0$ onto the unit cube.
%This can be done by clipping each coordinate to the unit interval $\set{U} = [0,1]$,
%i.e., by setting $\tilde{z}_i = [z_{0i}]_{\set{U}} = \min\{1, \max\{0, z_{0i}\}\}$.
If $\tilde{\vectsymb{z}} \in \sett{A}_1\cap \sett{A}_2$, then we are done: just return  $\tilde{\vectsymb{z}}$.
Else, if $\tilde{\vectsymb{z}} \in \sett{A}_1$ but $\tilde{\vectsymb{z}} \notin \sett{A}_2$,  go to step 3.
Otherwise, go to step 2.
\item Set $\tilde{\vectsymb{z}}$ as the projection of $\vectsymb{z}_0$ onto $\sett{A}_0$ (we will describe how to do this later).
If $\tilde{\vectsymb{z}} \in \sett{A}_2$, return $\tilde{\vectsymb{z}}$. Otherwise, go to step 3.
\item Return the projection of $\vectsymb{z}_0$ onto the set
$\{\vectsymb{z} \in [0,1]^{K+1}\,\,|\,\, \sum_{k=1}^K z_k = z_{K+1}\}$.
This set is precisely the marginal polytope of a 
XOR factor with the last output negated, 
%XOR-with-output factor (cf. Eq.~\ref{eq:marginalpolytope_xorout}),
hence the projection corresponds to the local subproblem for that factor, 
for which we can employ the procedure already described for the XOR factor 
(using Algorithm~\ref{alg:projontosimplex}).
\end{enumerate}
Note that the first step above can be omitted; however, it avoids performing step 2 (which requires a
sort) unless it is really necessary.
To completely specify the algorithm, we only need to explain how to compute the projection onto $\sett{A}_0$ (step 2):
%Recall that $\sett{A}_0 = [0,1]^{K+1} \cap \sett{A}_1$.
%Fortunately, it turns out that the following sequential projection is correct:
%This can be done by through a sequence of two projections:
\begin{procedure}\label{proc:projontocubedcone}
To project onto $\sett{A}_0 = [0,1]^{K+1} \cap \sett{A}_1$:
\begin{enumerate}
\item[2a.] Set $\tilde{\tilde{\vectsymb{z}}}$ as the projection of $\vectsymb{z}_0$ onto $\sett{A}_1$.
Algorithm~\ref{alg:projontocone} shows how to do this.
\item[2b.] Set $\tilde{\vectsymb{z}}$ as the projection of $\tilde{\tilde{\vectsymb{z}}}$ onto the unit cube (with the usual
clipping procedure).
\end{enumerate}
\end{procedure}
The proof that the composition of these two projections yields the desired projection onto $\sett{A}_0$
is a bit involved (we present it as Proposition~\ref{prop:projontocubedcone} below).%
\footnote{Note that in general,
the composition of individual projections is not equivalent to projecting onto the intersection.
In particular, commuting steps 2a and 2b would make our procedure incorrect.} %

We turn our attention 
to the problem of projecting onto $\sett{A}_1$ (step 2a). 
This can be written as the following problem:
\begin{equation}
\min_{\vectsymb{z}} \frac{1}{2}\|\vectsymb{z} - \vectsymb{z}_0\|^2 \quad \text{subject to} \quad z_k \le z_{K+1}, \,\, \forall k = 1,\ldots,K.
\end{equation}
It can be successively rewritten as:
\begin{eqnarray}
&&\min_{z_{K+1}} \frac{1}{2} (z_{K+1} - z_{0,K+1})^2 +
\sum_{k=1}^{K} \min_{z_k \le z_{K+1}} \frac{1}{2}(z_k - z_{0k})^2 \nonumber\\
&=& \min_{z_{K+1}} \frac{1}{2} (z_{K+1} - z_{0,K+1})^2 +  \sum_{k=1}^{K}\frac{1}{2}(\min\{z_{K+1}, z_{0k}\} -z_{0k})^2  \nonumber\\
&=& \min_{z_{K+1}} \frac{1}{2} (z_{K+1} - z_{0,K+1})^2 +  \frac{1}{2}\sum_{k\in \sett{I}(z_{K+1})}(z_{K+1}-z_{0k})^2. %\nonumber\\
%&=& \min_{z_m} \frac{1}{2} \sum_{i \in \sett{I}(z_m) \cup \{m\}}(z_m - z_{0i})^2.
\end{eqnarray}
where $\sett{I}(z_{K+1}) \triangleq \{k\in[K] : z_{0k}\ge z_{K+1}\}$.
Assuming that the set $\sett{I}(z_{K+1})$ is given, the previous is a sum-of-squares problem whose solution is
\begin{equation}
z_{K+1}^* = \frac{z_{0,K+1} + \sum_{k \in \sett{I}(z_{K+1})} z_{0k}}{1 + |\sett{I}(z_{K+1})|}.
\end{equation}
The set $\sett{I}(z_{K+1})$ can be determined by inspection after sorting $z_{01},\ldots,z_{0K}$.
The procedure is shown in Algorithm~\ref{alg:projontocone}.

\begin{algorithm}[t]
   \caption{Projection onto $\sett{A}_1$}
\begin{algorithmic}\label{alg:projontocone}
   \STATE {\bfseries Input:} $\vectsymb{z}_0$
   \STATE Sort ${z}_{01},\ldots,{z}_{0K}$ into $y_1 \ge \ldots \ge y_{K}$
   \STATE Find $\rho = \min \left\{j \in [K+1] \,\,|\,\, \frac{1}{j} \left( z_{0,K+1} + \sum_{r=1}^{j-1} y_{r}\right) > y_{j}\right\}$
   \STATE Define $\tau = \frac{1}{\rho} \left( z_{0,K+1} + \sum_{r=1}^{\rho-1} y_{r} \right)$
   \STATE {\bfseries Output:} $\vectsymb{z}$ subject to $z_{K+1} = \tau$ and $z_i = \min\{z_{0i}, \tau\}$, $i=1,\ldots,K$.
\end{algorithmic}
\end{algorithm}

%\subsubsection{OR-with-output Factor}\label{sec:ddadmm_quad_or-out_appendix}
%\subsection{Proof of Correctness of Procedure~\ref{proc:projontocubedcone}}\label{sec:ddadmm_quad_or-out_appendix}

%We have presented an algorithm for solving the local subproblem for the OR-with-output factor which requires,
%as an intermediate step,
%projecting onto the set $\sett{A}_0$.
%For the last projection, we suggested Procedure~\ref{proc:projontocubedcone}.
%We now prove its correctness.

\begin{algorithm}[t]
   \caption{Dykstra's algorithm for projecting onto $\bigcap_{j=1}^J \sett{C}_j$}
\begin{algorithmic}\label{alg:dijkstraproj}
   \STATE {\bfseries Input:} Point $\vectsymb{x}_0 \in \set{R}^D$, convex sets $\sett{C}_1,\ldots, \sett{C}_J$
   \STATE Initialize $\vectsymb{x}^{(0)} = \vectsymb{x}_0$, $\vectsymb{u}_j^{(0)} = \vect{0}$ for all $j=1,\ldots,J$
   \STATE $t \leftarrow 1$
   \REPEAT
   \FOR{$j=1$ {\bfseries to} $J$}
   \STATE Set $s = j + (t-1)J$
   \STATE Set $\tilde{\vectsymb{x}}_0 = \vectsymb{x}^{(s - 1)} - \vectsymb{u}_j^{(t-1)}$
   \STATE Set $\vectsymb{x}^{(s)} = \mathrm{proj}_{\sett{C}_j} (\tilde{\vectsymb{x}}_0)$, and
   $\vectsymb{u}_j^{(t)} = \vectsymb{x}^{(s)} - \tilde{\vectsymb{x}}_0$
   %\STATE Set $\vectsymb{u}_j^{(t)} = \vectsymb{x}^{(s)} - \tilde{\vectsymb{x}}_0$
   \ENDFOR
   \STATE $t \leftarrow t+1$
   \UNTIL {convergence.}
   \STATE {\bfseries Output:} $\vectsymb{x}$
\end{algorithmic}
\end{algorithm}

\begin{proposition}\label{prop:projontocubedcone}
Procedure~\ref{proc:projontocubedcone} is correct.
\end{proposition}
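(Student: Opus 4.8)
The plan is to identify $\tilde{\vectsymb{z}}$, the output of Procedure~\ref{proc:projontocubedcone}, with $\mathrm{proj}_{\sett{A}_0}(\vectsymb{z}_0)$ by verifying the two conditions that characterize a Euclidean projection onto a closed convex set: feasibility, $\tilde{\vectsymb{z}} \in \sett{A}_0$, and the variational inequality $\DP{(\vectsymb{z}_0 - \tilde{\vectsymb{z}})}{(\vectsymb{z} - \tilde{\vectsymb{z}})} \le 0$ for every $\vectsymb{z} \in \sett{A}_0$. Throughout, write $\tilde{\tilde{\vectsymb{z}}} := \mathrm{proj}_{\sett{A}_1}(\vectsymb{z}_0)$ for the intermediate point produced in step~2a and $\tilde{\vectsymb{z}} := \mathrm{proj}_{[0,1]^{K+1}}(\tilde{\tilde{\vectsymb{z}}})$ for the clipped point of step~2b, so that $\sett{A}_0 = [0,1]^{K+1} \cap \sett{A}_1$.

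First I would record the shape of $\tilde{\tilde{\vectsymb{z}}}$ coming out of Algorithm~\ref{alg:projontocone}: there is a threshold $\tau$ with $\tilde{\tilde z}_{K+1} = \tau$ and $\tilde{\tilde z}_k = \min\{z_{0k},\tau\}$ for $k = 1,\ldots,K$. Setting $\mu_k := \max\{0,\, z_{0k} - \tau\} = z_{0k} - \tilde{\tilde z}_k \ge 0$, the defining relations for $\tau$ and $\rho$ give the identity $\sum_{k=1}^K \mu_k = \tau - z_{0,K+1}$, so the residual of step~2a can be written as $\vectsymb{z}_0 - \tilde{\tilde{\vectsymb{z}}} = \sum_{k=1}^K \mu_k\,(\vectsymb{e}_k - \vectsymb{e}_{K+1})$; this is exactly the KKT (normal-cone) condition for projecting onto the polyhedral cone $\sett{A}_1$. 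Note that $\mu_k > 0$ forces $z_{0k} > \tau$, hence $\tilde{\tilde z}_k = \tau = \tilde{\tilde z}_{K+1}$.

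Next I would check feasibility and the preservation of active constraints. Since the coordinatewise clipping $[\cdot]_{\set{U}}$ is nondecreasing, $\tilde{\tilde z}_k \le \tilde{\tilde z}_{K+1}$ implies $\tilde z_k = [\tilde{\tilde z}_k]_{\set{U}} \le [\tilde{\tilde z}_{K+1}]_{\set{U}} = \tilde z_{K+1}$; together with $\tilde{\vectsymb{z}} \in [0,1]^{K+1}$ this gives $\tilde{\vectsymb{z}} \in \sett{A}_0$. Moreover, whenever $\mu_k > 0$ we have $\tilde{\tilde z}_k = \tilde{\tilde z}_{K+1} = \tau$, hence $\tilde z_k = [\tau]_{\set{U}} = \tilde z_{K+1}$: the constraints of $\sett{A}_1$ in the support of the residual remain active after clipping (this is the only place the special order structure of $\sett{A}_1$ is used). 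Now split $\DP{(\vectsymb{z}_0 - \tilde{\vectsymb{z}})}{(\vectsymb{z} - \tilde{\vectsymb{z}})} = \DP{(\vectsymb{z}_0 - \tilde{\tilde{\vectsymb{z}}})}{(\vectsymb{z} - \tilde{\vectsymb{z}})} + \DP{(\tilde{\tilde{\vectsymb{z}}} - \tilde{\vectsymb{z}})}{(\vectsymb{z} - \tilde{\vectsymb{z}})}$. The second summand is $\le 0$ because $\tilde{\vectsymb{z}}$ is the projection of $\tilde{\tilde{\vectsymb{z}}}$ onto the cube and $\vectsymb{z} \in [0,1]^{K+1}$. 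For the first summand, substitute $\vectsymb{z}_0 - \tilde{\tilde{\vectsymb{z}}} = \sum_k \mu_k(\vectsymb{e}_k - \vectsymb{e}_{K+1})$, obtaining $\sum_{k=1}^K \mu_k\big((z_k - \tilde z_k) - (z_{K+1} - \tilde z_{K+1})\big)$; using $\tilde z_k = \tilde z_{K+1}$ on the support of $\mu$, this collapses to $\sum_{k} \mu_k (z_k - z_{K+1}) \le 0$, since $\mu_k \ge 0$ and $\vectsymb{z} \in \sett{A}_0 \subseteq \sett{A}_1$ means $z_k \le z_{K+1}$. Hence the variational inequality holds and $\tilde{\vectsymb{z}} = \mathrm{proj}_{\sett{A}_0}(\vectsymb{z}_0)$, which is the claim.

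I expect the only non-routine step to be the bookkeeping in the second paragraph: reading off from Algorithm~\ref{alg:projontocone} the precise form of the residual $\vectsymb{z}_0 - \tilde{\tilde{\vectsymb{z}}}$ (its membership in the normal cone of $\sett{A}_1$, via the identity $\sum_k \mu_k = \tau - z_{0,K+1}$), together with the observation that clipping keeps those same constraints active; everything after that is routine manipulation of the projection variational inequality. An essentially equivalent alternative would be to invoke the additivity $N_{\sett{A}_0}(\tilde{\vectsymb{z}}) = N_{\sett{A}_1}(\tilde{\vectsymb{z}}) + N_{[0,1]^{K+1}}(\tilde{\vectsymb{z}})$ valid for polyhedral sets and verify that $\vectsymb{z}_0 - \tilde{\vectsymb{z}}$ lies in the right-hand side — the same idea, but the direct argument above avoids any appeal to normal-cone calculus.
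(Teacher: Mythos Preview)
Your proof is correct and takes a genuinely different route from the paper's. The paper proves the result by (i) recognising Procedure~\ref{proc:projontocubedcone} as the first full sweep of Dykstra's alternating-projections algorithm for the pair $\sett{A}_1$, $[0,1]^{K+1}$; (ii) showing in general that if the second Dykstra sweep leaves $\vectsymb{x}$ unchanged after the first projection then the iteration has converged; and (iii) verifying, by a case analysis of Algorithm~\ref{alg:projontocone}'s output, that this stopping condition is indeed met here. Your argument bypasses Dykstra entirely: you read off the residual $\vectsymb{z}_0 - \tilde{\tilde{\vectsymb{z}}} = \sum_k \mu_k(\vectsymb{e}_k - \vectsymb{e}_{K+1})$ from the threshold formula, observe that coordinatewise clipping preserves both membership in $\sett{A}_1$ and the active constraints $\tilde{\tilde z}_k = \tilde{\tilde z}_{K+1}$ on the support of $\mu$, and then split the projection variational inequality into two pieces, each handled by the variational inequality for one of the two constituent projections. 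Your approach is shorter and more self-contained (no need to introduce Dykstra or argue about its fixed points); the paper's approach has the advantage of explaining \emph{why} one might have guessed the two-step procedure in the first place, and its part~(ii) is a reusable lemma about Dykstra that could apply in other settings. Both ultimately hinge on the same structural fact: the constraints of $\sett{A}_1$ active at $\tilde{\tilde{\vectsymb{z}}}$ involve only coordinates that are equal there, and equal coordinates remain equal (hence active) after the monotone clip.
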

\begin{proof}
The proof is divided into the following parts:
\begin{enumerate}
\item We show that
Procedure~\ref{proc:projontocubedcone} corresponds to the
first iteration of Dykstra's projection algorithm \citep{Boyle1986} applied to
sets $\sett{A}_1$ and $[0,1]^{K+1}$;
\item We show that Dykstra's converges in one iteration if a specific condition is met;
\item We show that with the two sets above that condition is met.
\end{enumerate}
The first part is trivial. Dykstra's algorithm is shown as Algorithm~\ref{alg:dijkstraproj};
when $J=2$, $\sett{C}_1 = \sett{A}_1$ and $\sett{C}_2 = [0,1]^{K+1}$,
and noting that $\vectsymb{u}_1^{(1)} = \vectsymb{u}_2^{(1)} = 0$,
its first iteration is precisely Procedure~\ref{proc:projontocubedcone}.

We turn to the second part, to show that, when $J=2$, the fact that $\vectsymb{x}^{(3)} = \vectsymb{x}^{(2)}$
implies that $\vectsymb{x}^{(s)} = \vectsymb{x}^{(2)}$, $\forall s > 3$. In words, if at the second iteration $t$
of Dykstra's,
the value of $\vectsymb{x}$ does not change after computing the first projection, then it will never change,
so the algorithm has converged and $\vectsymb{x}$ is the desired projection.
To see that, consider the moment in Algorithm~\ref{alg:dijkstraproj} when $t=2$ and $j=1$.
After the projection, we update $\vectsymb{u}_1^{(2)} = \vectsymb{x}^{(3)} - (\vectsymb{x}^{(2)} - \vectsymb{u}_1^{(1)})$,
which when $\vectsymb{x}^{(3)} = \vectsymb{x}^{(2)}$ equals $\vectsymb{u}_1^{(1)}$, \emph{i.e.},
$\vectsymb{u}_1$ keeps unchanged. Then, when $t=2$ and $j=2$, one first computes
$\tilde{\vectsymb{x}}_0 = \vectsymb{x}^{(3)} - \vectsymb{u}_2^{(1)} = \vectsymb{x}^{(3)} - (\vectsymb{x}^{(2)} - \vectsymb{x}_0) =
\vectsymb{x}_0$, \emph{i.e.}, the projection is the same as the one already computed at  $t=1$, $j=2$.
Hence the result is the same, \emph{i.e.}, $\vectsymb{x}^{(4)} = \vectsymb{x}^{(2)}$, and similarly
$\vectsymb{u}_2^{(2)} = \vectsymb{u}_2^{(1)}$.
Since neither $\vectsymb{x}$, $\vectsymb{u}_1$ and $\vectsymb{u}_2$ changed in the second iteration,
and subsequent iterations only depend on these values, we have that $\vectsymb{x}$ will never change afterwards.

Finally, we are going to see that, regardless of the choice of $\vectsymb{z}_0$ in Procedure~\ref{proc:projontocubedcone}
($\vectsymb{x}_0$ in Algorithm~\ref{alg:dijkstraproj}) we always have $\vectsymb{x}^{(3)} = \vectsymb{x}^{(2)}$.
Looking at Algorithm~\ref{alg:projontocone}, we see that after $t=1$:
\begin{align}
&x^{(1)}_k = \left\{
\begin{array}{ll}
\tau, & \text{if $k = K+1$ or $x_{0k} \ge \tau$}\\
x_{0k}, & \text{otherwise,}
\end{array}
\right. \qquad
u_{1k}^{(1)} = \left\{
\begin{array}{ll}
\tau - x_{0k}, & \text{if $k = K+1$ or $x_{0k} \ge \tau$}\\
0, & \text{otherwise,}
\end{array}
\right.  \nonumber\\&
x^{(2)}_k = [x^{(1)}_k]_{\set{U}} = \left\{
\begin{array}{ll}
[\tau]_{\set{U}}, & \text{if $k = K+1$ or $x_{0k} \ge \tau$}\\
\left[x_{0k}\right]_{\set{U}}, & \text{otherwise.}
\end{array}
\right.
\end{align}
Hence in the beginning of the second iteration ($t=2$, $j=1$), we have
\begin{eqnarray}
\tilde{x}_{0k} = x^{(2)}_k - u_{1k}^{(1)} = \left\{
\begin{array}{ll}
[\tau]_{\set{U}} - \tau + x_{0k}, & \text{if $k = K+1$ or $x_{0k} \ge \tau$}\\
\left[x_{0k}\right]_{\set{U}}, & \text{otherwise.}
\end{array}
\right.
\end{eqnarray}
Now two things should be noted about Algorithm~\ref{alg:projontocone}:
\begin{itemize}
\item If a constant is added to all entries in $\vectsymb{z}_0$, the set $\sett{I}(z_{K+1})$ remains the same, and
$\tau$ and $\vectsymb{z}$ are affected by the same constant;
\item Let $\vectsymb{z}_0'$ be such that $z_{0k}' = z_{0k}$ if $k = K+1$ or $z_{0k} \ge \tau$,
and $z_{0k}' \le \tau$ otherwise. Let $\vectsymb{z}'$ be the projected point when such
$\vectsymb{z}_0'$ is given as input.
Then $\sett{I}(z_{K+1}') = \sett{I}(z_{K+1})$, $\tau'=\tau$,
$z_{k}' = z_{k}$ if $k = K+1$ or $z_{0k} \ge \tau$, and $z_k' = z_{0k}'$ otherwise.
\end{itemize}
The two facts above allow to relate the projection of $\tilde{\vectsymb{x}}_{0}$
(in the second iteration)
with that of $\vectsymb{x}_{0}$
(in the first iteration).
Using $[\tau]_{\set{U}} - \tau$ as the constant, and
noting that, for $k \ne K+1$ and $x_{0k} < \tau$,
we have $\left[x_{0k}\right]_{\set{U}} - [\tau]_{\set{U}} + \tau \ge \tau$ if  $x_{0k} < \tau$,
the two facts imply that:
\begin{eqnarray}
x^{(3)}_k = \left\{
\begin{array}{ll}
x^{(1)}_k + [\tau]_{\set{U}} - \tau = [\tau]_{\set{U}}, & \text{if $k = K+1$ or $x_{0k} \ge \tau$}\\
\left[x_{0k}\right]_{\set{U}}, & \text{otherwise;}
\end{array}
\right.
\end{eqnarray}
hence $\vectsymb{x}^{(3)} = \vectsymb{x}^{(2)}$, which concludes the proof.
\end{proof}

\section{Proof of Proposition~\ref{prop:ad3_subproblem_primal_sparse}}\label{sec:proof_ad3_subproblem_primal_sparse}

We first show that the rank of the matrix $\matr{M}$  is at most $\sum_{i\in \sett{N}(\alpha)} |\sett{Y}_i| - \sett{N}(\alpha) + 1$.
%$\matr{M}$ is a incidence matrix
For each $i \in \sett{N}(\alpha)$,
let us consider the $|\sett{Y}_i|$ rows of $\matr{M}$.
By definition of $\matr{M}$, the set of entries on these rows which have the value $1$
form a partition of $\sett{Y}_{\alpha}$, hence,
summing these rows yields the all-ones row vector,
and this happens for each $i \in \sett{N}(\alpha)$.
Hence we have at least $\sett{N}(\alpha)-1$ rows that are linearly dependent.
This shows that the rank of $\matr{M}$ is at most $\sum_{i\in \sett{N}(\alpha)} |\sett{Y}_i| - \sett{N}(\alpha) + 1$.
Let us now rewrite \eqref{eq:ad3_subproblem_primal} as
\begin{align}\label{eq:primal_sep1}
\text{minimize} \quad & \frac{1}{2}\|\vectsymb{u} - \vectsymb{a}\|^2 + g(\vectsymb{u}) \nonumber\\
\text{with respect to} \quad & \vectsymb{u} \in \set{R}^{\sum_i |\sett{Y}_i|},
\end{align}
where $g(\vectsymb{u})$ is the solution value of the following linear problem:
\begin{align}\label{eq:primal_sep2}
\text{minimize} \quad & - \vectsymb{b}^{\top} \vectsymb{v}\\
\text{with respect to} \quad &
\vectsymb{v} \in \set{R}^{|\sett{Y}_{\alpha}|}\nonumber\\
\text{subject to} \quad &
\left\{
\begin{array}{ll}
\matr{M} \vectsymb{v} = \vectsymb{u} \nonumber\\
\vect{1}^{\top}\vectsymb{v} = 1\nonumber\\
\vectsymb{v} \ge 0.
\end{array}
\right.
\end{align}
From the simplex constraints (last two lines), we have that problem \eqref{eq:primal_sep2} is bounded below
(i.e., $g(\vectsymb{u}) > -\infty$). Furthermore,
problem \eqref{eq:primal_sep2} is feasible (i.e., $g(\vectsymb{u}) < +\infty$) if and only if $\vectsymb{u}$
satisfies the non-negativity and normalization constraints for every $i \in \sett{N}(\alpha)$:
\begin{equation}\label{eq:normnonneg}
\sum_{y_i} u_i(y_i) = 1, \quad u_i(y_i) \ge 0, \,\, \forall y_i.
\end{equation}
Those constraints imply $\vect{1}^{\top}\vectsymb{v} = 1$. Hence we can
add the constraints \eqref{eq:normnonneg} to the problem in \eqref{eq:primal_sep1},
discard the constraint $\vect{1}^{\top}\vectsymb{v} = 1$ in \eqref{eq:primal_sep2},
and assume that the resulting problem (which we reproduce below) is feasible and bounded below:
\begin{align}\label{eq:primal_sep3}
\text{minimize} \quad & - \vectsymb{b}^{\top} \vectsymb{v}\\
\text{with respect to} \quad &
\vectsymb{v} \in \set{R}^{|\sett{Y}_{\alpha}|}\nonumber\\
\text{subject to} \quad &
\left\{
\begin{array}{ll}
\matr{M} \vectsymb{v} = \vectsymb{u} \nonumber\\
\vectsymb{v} \ge 0.
\end{array}
\right.
\end{align}
Problem \eqref{eq:primal_sep3} is a linear program in standard form.
Since it is feasible and bounded, it admits a solution at a vertex of the constraint set
\citep{Rockafellar1970}.
%\citep{Kantorovich1940}.
We have that a point $\widehat{\vectsymb{v}}$ is a vertex
if and only if the columns of $\matr{M}$ indexed by
$\{\vectsymb{y}_{\alpha} \,\,|\,\, v_{\alpha}(\vectsymb{y}_{\alpha}) \ne 0\}$
are linearly independent.
We cannot have more than $\sum_{i\in \sett{N}(\alpha)} |\sett{Y}_i| - \sett{N}(\alpha)+1$ of these columns,
since this is the rank of $\matr{M}$.
%hence, any vertex $\widehat{\vectsymb{v}}$ has at most $\sum_{i\in \sett{N}(\alpha)} |\sett{Y}_i| - \sett{N}(\alpha) + 1$ nonzeros.
It follows that \eqref{eq:primal_sep3} (and hence \eqref{eq:ad3_subproblem_primal}) has a solution
$\vectsymb{v}^*$ with at most $\sum_{i\in \sett{N}(\alpha)} |\sett{Y}_i| - \sett{N}(\alpha) + 1$ nonzeros.

% Manual newpage inserted to improve layout of sample file - not
% needed in general before appendices/bibliography.
% \newpage

\vskip 0.2in
\setlength{\bibsep}{7pt}
\bibliography{jmlr2012}

\end{document}